\documentclass{article}

\usepackage{microtype}
\usepackage{booktabs} 



\usepackage{hyperref}


\usepackage{floatrow} 


\usepackage[accepted]{icml2019}

\usepackage{amsfonts}
\usepackage{amsmath}
\usepackage[algo2e,plain,linesnumbered,ruled]{algorithm2e}
\usepackage{enumitem} 
\usepackage[pdftex]{graphicx}
\usepackage{epstopdf}

\usepackage{amsthm}
\usepackage{amssymb}
\usepackage{subcaption}

\usepackage{amsthm}
\usepackage{hyperref}
\usepackage{bm}

\usepackage{multirow}
\usepackage{ctable} 
\usepackage{float}
\restylefloat{table} 
\allowdisplaybreaks 

\usepackage{adjustbox}

   \newcommand{\myComment}[1]{}  




\def\diag{\text{diag}}

\def\bar{\overline}

\def\norm#1{\left\|#1\right\|}
\def\normtwo#1{|||#1|||}
\def\bar{\overline}
\def\tilde{\widetilde}

\def\<{\left<} \def\>{\right>}

\def\sym{\mathrm{sym}}
\newtheorem{theorem}{Theorem}

\newtheorem{proposition}{Proposition}
\newtheorem{corollary}{Corollary}
\newtheorem{definition}{Definition}
\newtheorem{lemma}{Lemma}

 \newenvironment{talign*}
 {\csname align*\endcsname}
 {\endalign}
 
\def\x{\mathbf x} \def\y{\mathbf y}  
\def\u{\mathbf u} \def\v{\mathbf v}  
 \def\e{\mathbf e}

\newcommand{\abs}[1]{\left|#1\right|}
\newcommand{\bchi}{\boldsymbol{\chi}}
\newcommand{\vectornorm}[1]{\|#1\|}

\def\pp{p_{\mathrm{in}}^{+}}
\def\qp{p_{\mathrm{out}}^{+}}

\def\ppm{p_{\mathrm{in}}^{-}}
\def\qm{p_{\mathrm{out}}^{-}}

\def\diag{\mathrm{diag}}
\def\one{\mathbf{1}}

\def\to{\rightarrow}

 \floatsetup[table]{
   objectset=raggedright,
   margins=raggedright,
   midcode=captionskip,
   captionskip=10pt}

\usepackage{color}

\newcommand{\SPMM}{SPM}

\usepackage{cancel}


\usepackage{appendix}

\icmltitlerunning{Spectral Clustering of Signed Graphs via Matrix Power Means}

\begin{document}

\twocolumn[
\icmltitle{Spectral Clustering of Signed Graphs via Matrix Power Means}




\begin{icmlauthorlist}
\icmlauthor{Pedro Mercado}{SB,TU}
\icmlauthor{Francesco  Tudisco}{FT}
\icmlauthor{Matthias Hein}{TU}
\end{icmlauthorlist}

\icmlaffiliation{SB}{Saarland University}
\icmlaffiliation{FT}{University of Strathclyde}
\icmlaffiliation{TU}{University of T\"{u}bingen}

\icmlcorrespondingauthor{Pedro Mercado}{pedro@cs.uni-saarland.de}

\icmlkeywords{Machine Learning, ICML}

\vskip 0.3in
]



\printAffiliationsAndNotice{}  
%

%

\begin{abstract}
Signed graphs encode positive (attractive) and negative (repulsive) relations between nodes.
We extend spectral clustering to signed graphs  via the one-parameter family of \textit{Signed Power Mean Laplacians},
defined as the matrix power mean of normalized standard and signless Laplacians of positive and negative edges. 
We provide a thorough analysis of the proposed approach in the setting of a general Stochastic Block Model that includes models such as the Labeled Stochastic Block Model and the Censored Block Model. 
We show that in expectation the signed power mean Laplacian captures the  ground truth clusters under reasonable settings where state-of-the-art approaches fail. Moreover, we prove that the eigenvalues and  eigenvector of the signed power mean Laplacian concentrate around their expectation under reasonable conditions in the general Stochastic Block Model. 
Extensive experiments on random graphs and real world datasets confirm the theoretically predicted behaviour of the signed power mean Laplacian and show that it compares favourably with state-of-the-art methods. 
\end{abstract}

\section{Introduction}\label{sec:Introduction}
The analysis of graphs has received a significant amount of attention due to their capability to encode interactions that naturally arise in social networks.
Yet, the vast majority of graph methods has been focused on the case where interactions are of the same type, leaving aside the case where different kinds of interactions are available~\cite{Leskovec:2010:SNS}.
Graphs and networks with both positive and negative edge weights arise naturally in a number of social, biological and economic contexts. Social dynamics and relationships are intrinsically positive and negative:  users of online social networks such as Slashdot and Epinions, for example, 
can express positive interactions, like friendship and trust, and negative ones, like enmity and distrust. Other important application settings are the analysis of gene expressions in biology \cite{fujita2012functional} or the analysis of financial and economic time sequences \cite{ziegler2010visual,pavlidis2006financial}, where similarity and variable dependence measures  commonly used may attain both positive and negative values (e.g.\ the Pearson correlation coefficient).  

Although the majority of the literature has focused on graphs that encode only positive interactions, the analysis of signed graphs can be traced back to social balance theory \cite{Cartwright:1956:Structural,Harary:1954:Notion,Davis:1967:Clustering},  
where the concept of a $k$-balance signed graph is introduced.  
The analysis of signed networks has been then pushed forward through the study of a variety of tasks in signed graphs, as for example
edge prediction~\cite{kumar2016wsn,Leskovec:2010:PPN,LeFalher:2017a},
node classification~\cite{Bosch:2018:nodeclassification,tang2016nodeClassification}, 
node embeddings~\cite{Chiang:2011:ELC,derr:2018:signedGraphConvolutionalNetwork,Kim:2018:SRL:3178876.3186117, Wang:2017:signedNetworkEmbedding, Yuan:2017:SNE}, 
node ranking~\cite{Chung:2013,Shahriari:2014}, 
and clustering~\cite{Chiang:2012:Scalable, Kunegis:2010:spectral, Mercado:2016:Geometric,sedoc:2017a,Doreian:2009:Partitioning,Knyazev:2018,kirkley2018balance,Cucuringu:2019:sponge,Cucuringu:2019:MB0}.
See~\cite{tang2015survey,gallier2016spectral} for recent surveys on the topic.

In this paper we present a novel extension of spectral clustering for signed graphs. Spectral clustering~\cite{Luxburg:2007:tutorial} is a well established technique for non-signed graphs, which partitions the set of the nodes based on a $k$-dimensional node embedding obtained using the first eigenvectors of the graph Laplacian. 
\textbf{Our contributions are as follows:} We introduce the family of \textit{Signed Power Mean (\SPMM) Laplacians}: a one-parameter family of graph matrices for signed graphs that blends the information from positive and negative interactions through the matrix power mean, a general class of matrix means that contains the arithmetic, geometric, and harmonic mean as special cases. 
This is inspired by recent extensions of spectral clustering which merge the information encoded by positive and negative interactions through different types of arithmetic~\cite{Chiang:2012:Scalable, Kunegis:2010:spectral} and geometric~\cite{Mercado:2016:Geometric} means of the standard and signless graph Laplacians. 
 We analyze the performance of the signed power mean Laplacian in a general Signed Stochastic Block Model. We first provide an anlysis in expectation showing that the smaller is the parameter of the signed power mean Laplacian, the less restrictive are the conditions  that ensure to recover the ground truth clusters. 
In particular, we show that the limit cases $+\infty$ and $-\infty$ are related to the boolean operators \texttt{AND} and $\texttt{OR}$, respectively, 
in the sense that for the limit case $+\infty$ clusters are recovered only if both positive \textit{and} negative interactions are informative, 
whereas for $-\infty$ clusters are recovered if positive \textit{or}
negative interactions are informative. 
This is consistent with related work in the context of unsigned multilayer graphs~\cite{Mercado:2018:powerMean}.
Second, we show that the eigenvalues and eigenvectors of the signed power mean Laplacian concentrate around their mean, so that our results hold also for the case where
one samples from the stochastic block model. 
Our result extends with minor changes to the unsigned multilayer graph setting considered in \cite{Mercado:2018:powerMean}, where just the expected case has been studied. 
To our knowledge these are the first concentration results for matrix power means under any stochastic block model for signed graphs.
Finally, we show that the signed power mean Laplacian compares favorably with state-of-the-art approaches through extensive numerical experiments on diverse real world datasets.
All the proofs have been moved to the supplementary material.

\textbf{Notation.} A signed graph is a pair ${G^\pm=(G^+,G^-)}$, where ${G^+=(V,W^+)}$ and ${G^-=(V,W^-)}$
encode positive and negative edges, respectively, with positive symmetric adjacency matrices $W^+$ and $W^-$, and a common vertex set $V=\{v_1,\ldots,v_n\}$. Note that this definition allows the simultaneous presence of both positive and negative interactions between the same two nodes. This is a major difference with respect to the alternative 
point of view where $G^\pm$ is associated to a single symmetric matrix $W$ with positive and negative entries. In this case $W=W^+-W^-$, with $W^+_{ij}=\max\{0,W_{ij}\}$ and 
$W^-_{ij}=-\min\{0,W_{ij}\}$, implying that every interaction is either positive or negative, but not both at the same time. 
We denote by $D_{ii}^+ = \sum_{j=1}^n w^+_{ij}$ and  $D_{ii}^- = \sum_{j=1}^n w^-_{ij}$
the diagonal matrix of the degrees of $G^+$ and $G^-$, respectively, 
and $\bar{D} = D^+ + D^-$.

\section{Related work}\label{sec:SpectralClusteringForSignedGraphs}
The study of clustering of signed graphs can be traced back to the theory of social balance~\cite{Cartwright:1956:Structural,Harary:1954:Notion,Davis:1967:Clustering}, where a signed graph is called $k$-balanced if the set of vertices can be partitioned into $k$ sets such that within the subsets there are only positive edges, and between them only negative. 

Inspired by the notion of $k$-balance, different approaches for signed graph clustering
have been introduced. In particular, many of them aim to extend spectral clustering to signed graphs by proposing novel signed graph Laplacians.  A related approach is correlation clustering~\cite{Bansal2004}. 
Unlike spectral clustering, where the number of clusters is fixed a-priori, correlation clustering approximates the optimal number of clusters by identifying a partition that is as close as possible to be $k$-balanced. 
In this setting, the case where the number of clusters is constrained has been considered in~\cite{Giotis:2006}.

We briefly introduce the standard and signless Laplacian and review different definitions of
Laplacians on signed graphs. The final clustering algorithm to find $k$ clusters is the same for all of them: compute the smallest $k$ eigenvectors of the corresponding Laplacian, use the eigenvectors to 
embed the nodes into $\mathbb{R}^k$, obtain the final clustering by doing $k$-means in the embedding space. However, we will see below that in some cases we have to slightly deviate
from this generic principle by using the $k-1$ smallest eigenvectors instead. 
%
%
%
%
%
%

\textbf{Laplacians of Unsigned Graphs:} 
In the following all weight matrices are non-negative and symmetric.
Given an assortative graph $G=(V,W)$, standard spectral clustering is based on the Laplacian and its normalized version defined as:

$\,$\hfill  $L = D - W \quad\qquad\,\, L_\sym = D^{-1/2}LD^{-1/2}$ \hfill $\,$

where $D_{ii} = \sum_{j=1}^n w_{ij}$ is the diagonal matrix of the degrees of $G$.
Both Laplacians are symmetric positive semidefinite and the multiplicity of the eigenvalue $0$ is
equal to the number of connected components in $G$.

For disassortative graphs, i.e.\ when edges carry only dissimilarity information, the goal is to identify clusters such that the amount of edges between clusters is larger than the one inside clusters. Spectral clustering is extended to this setting by considering the signless Laplacian matrix and its normalized version (see e.g.\ \cite{Liu2015,Mercado:2016:Geometric}), defined as:

$\,$\hfill  $Q = D + W \quad\qquad\,\, Q_\sym = D^{-1/2}QD^{-1/2}$ \hfill $\,$

Both Laplacians  are positive semi-definite, and the smallest eigenvalue is zero if and only if the graph has a bipartite component~\cite{Desai:1994:characterization}. 

\textbf{Laplacians of Signed Graphs:} 
Signed graphs encode both positive and negative interactions. In the ideal $k$-balanced case positive interactions present an assortative behaviour, whereas negative interactions present a disassortative behaviour. With this in mind, several novel definitions of \textit{signed Laplacians} have been proposed. We briefly review them for later reference.

In~\cite{Chiang:2012:Scalable} the balance ratio Laplacian and its normalized version are defined as:

$\,$\hfill  $L_{BR} = D^+ - W^+ + W^-, \quad\,\, L_{BN} = \bar{D}^{-1/2} L_{BR} \bar{D}^{-1/2}$ \hfill $\,$

whereas in~\cite{Kunegis:2010:spectral} the signed ratio Laplacian and its normalized version have been defined as:

$\,$\hfill  $L_{SR} = \bar{D} - W^+ + W^-, \quad\,\, L_{SN} = \bar{D}^{-1/2} L_{SR} \bar{D}^{-1/2}$ \hfill $\,$

The signed Laplacians $L_{BR}$ and $L_{BN}$ need not be positive semidefinite, while the signed Laplacians $L_{SR}$ and $L_{SN}$ are positive semidefinite with eigenvalue zero if and only if the graph is 2-balanced.

In the context of correlation clustering, in~\cite{saade:2015} the Bethe Hessian matrix is defined as:

$\,$\hfill  $H = (\alpha-1)I - \sqrt{\alpha}(W^+ - W^-) + \overline{D}$ \hfill $\,$

where $\alpha$ is the average node degree $\alpha=\frac{1}{n}\sum_{i=1}^n \overline{D}_{ii}$. 
The Bethe Hessian $H$ need not be positive definite. In fact, eigenvectors with negative eigenvalues bring information of clustering structure~\cite{saade:2014}.

Let  $L^+=D^+-W^+$ and $Q^- = D^-+W^-$ be the Laplacian and signless Laplacian of $G^+$ and $G^-$, respectively. As noted in \cite{Mercado:2016:Geometric}, $L_{SR}=L^+ + Q^-$ i.e.\ it coincides with twice the arithmetic mean of $L^+$ and $Q^-$. Note that the same holds for $H$ when the average degree $\alpha$ is equal to one, i.e.\ $H=L_{SR}$ when $\alpha=1$. In \cite{Mercado:2016:Geometric}, the arithmetic mean and geometric mean of the normalized Laplacian and its signless version are used to define new Laplacians for signed graphs:

$\,$\hfill  $L_{AM} = L^+_\sym + Q^-_\sym, \quad\,\, L_{GM} = L^+_\sym \# Q^-_\sym$ \hfill $\,$

where $A\#B=A^{-1/2}(A^{1/2}BA^{1/2})^{1/2}A^{-1/2}$ 
is the geometric mean of $A$ and $B$, 
$L^+_\sym=(D^+)^{-1/2}L^+(D^+)^{-1/2}$ and $Q^-_\sym=(D^-)^{-1/2}Q^-(D^-)^{-1/2}$.
While the computation of $L_{GM}$ is more challenging, in~\cite{Mercado:2016:Geometric} it is shown that the clustering assignment obtained with the geometric mean Laplacian $L_{GM}$ outperforms all other signed Laplacians.

Both the arithmetic and the geometric means are special cases of a much richer one-parameter family of means known as power means. Based on this observation, we introduce 
the \textit{Signed Power Mean Laplacian} in Section~\ref{subsec:MatrixMeansAndSpectralClusteringForSignedGraphs}, 
defined via a matrix version of the family of power means which we briefly review below.

\subsection{Matrix Power Means}\label{subsec:MatrixMeansForSignedGraphs}

The scalar power mean of two non-negative scalars $a,b$ is a one-parameter family of means defined for $p\in\mathbb{R}$ as $m_p(a,b) = \big(\frac{a^p+b^p}{2}\big)^{1/p}$. 
Particular cases are the arithmetic, geometric and harmonic means, as shown in Table~\ref{table:powerMeans}.
Moreover, the scalar power mean is monotone in the parameter $p$, 
i.e.\  $m_{p}(a,b)\leq m_{q}(a,b)$ when $p\leq q$ (see \cite{bullen2013handbook} , Ch.~3, Thm.~1), which yields the well known arithmetic-geometric-harmonic mean inequality $m_{-1}(a,b)\leq m_{0}(a,b) \leq m_{1}(a,b)$.
As matrices do not commute, several matrix extensions of the scalar power mean have been introduced, which typically agree if the matrices commute, see e.g.\ Chapter 4 in~\cite{bhatia2009positive}. 
We consider the following matrix extension of the scalar power mean:
\begin{definition}[\cite{bhagwat_subramanian_1978}]\label{definition:MatrixPowerMean}
Let $A, B$ be symmetric positive definite matrices, and $p\in\mathbb{R}$.
The matrix power mean of $A, B$ with exponent $p$ is
\begin{equation*}\label{eq:MatrixPowerMean}
 M_{p}(A, B)=\left({\dfrac  {A^{p} + B^{p}}{2}}  \right)^{ 1/p  } 
\end{equation*}
where $Y^{1/p}$ is the unique positive definite solution of the matrix equation $X^p = Y$.
\end{definition}
Please note that this definition can be extended to positive semidefinite matrices~\cite{bhagwat_subramanian_1978} for $p>0$, as $M_{p}(A,B)$ exists, whereas for $p<0$ a diagonal shift is necessary to ensure that the matrices $A,B$ are positive definite.
\setlength{\textfloatsep}{5pt}
\begin{table}
\centering
\small
\begin{tabular}{c|c|c}
\specialrule{1.5pt}{.1pt}{.1pt}                                               
       $p$       & $m_p(a,b)$     & $name$ \\ 
\specialrule{1.5pt}{.1pt}{.1pt}                                               
$p\to \infty$    & $\max \{a,b\}$ & maximum   \\
$p=1$            & $(a+b)/2$      & arithmetic mean   \\
$p\to 0$         & $\sqrt{ab}$    & geometric mean   \\
$p=-1$           & $2\big(\frac 1 a + \frac 1 b\big)^{-1}$      & harmonic mean   \\
$p\to -\infty$    & $\min \{a,b\}$ & minimum  
\end{tabular}
\caption{Particular cases of scalar power means}
\label{table:powerMeans}
\end{table}
\subsection{The Signed Power Mean Laplacian}\label{subsec:MatrixMeansAndSpectralClusteringForSignedGraphs}
Given a signed graph ${G^\pm=(G^+,G^-)}$ we define the Signed Power Mean (\SPMM) Laplacian $L_p$ of $G^\pm$ as
\begin{equation}\label{eq:signedMatrixMeanLaplacian}
 L_p=M_p(L^+_\sym, Q^-_\sym).
\end{equation}
For the case $p<0$ the matrix power mean requires positive definite matrices, hence we use in this case the matrix power mean of diagonally shifted Laplacians, i.e.\ $L^+_\sym + \varepsilon I$ and $Q^-_\sym + \varepsilon I$. Our following theoretical analysis holds for all possible shifts $\varepsilon>0$, whereas we discuss in the supplementary material the numerical robustness with respect to $\varepsilon$. The clustering algorithm for identifying $k$ clusters in signed graphs is given in Algorithm~\ref{alg:powerMean}.  Please note that for $p\geq 1$ we deviate from the usual scheme and use the first $k-1$ 
eigenvectors rathen than the first $k$.
%
%
The reason is a result of the analysis in the stochastic block model in Section~\ref{sec:SBM}. In general, the main influence of the parameter $p$ of the power mean is on the ordering
of the eigenvalues.
In Section~\ref{sec:SBM} we will see that this significantly influences the performance of  
different instances of \SPMM{} Laplacians, in particular, 
the arithmetic and geometric mean discussed in  \cite{Mercado:2016:Geometric} are suboptimal for the recovery of the ground truth clusters. For the computation of the matrix power mean we adapt the scalable Krylov subspace-based algorithm proposed in \cite{Mercado:2018:powerMean}.

\begin{algorithm2e}[t]
  \DontPrintSemicolon
	\caption{\small Spectral clustering of signed graphs with $L_p$}\label{alg:powerMean}
	\small
	\KwIn{Symmetric matrices $W^{+},W^{-}$, number $k$ of clusters to construct. }
	\KwOut{Clusters $C_1,\ldots,C_k$.}
	Let $k'=k-1$ if $p\geq 1$ and $k'=k$ if $p< 1$.
	\;
	 Compute eigenvectors $\u_1, \ldots,\u_{k'}$ corresponding to the $k'$ smallest eigenvalues of~$L_p$.\;
	 Set $U=(\u_1, \ldots,\u_{k'})$ and cluster the rows of $U$ with $k$-means into clusters $C_1,\ldots,C_k$.	\;
\end{algorithm2e}

\section{Stochastic Block Model Analysis of the Signed Power Mean Laplacian}\label{sec:SBM}
In this section we analyze the signed power mean Laplacian $L_p$ under a general Signed Stochastic Block Model. Our results here are twofold. First, we derive new conditions in expectation that guarantee 
that the eigenvectors corresponding to the smallest eigenvalues of $L_p$ recover the ground truth clusters. These conditions reveal that, in this setting, the state-of-the-art signed graph matrices are suboptimal as compared to $L_p$ for negative values of $p$. Second, we show that our result in expectation transfer to sampled graphs as we prove conditions that ensure that both eigenvalues and eigenvectors of $L_p$ concentrate around their expected value with high probability. We verify our results by several experiments where the clustering performance of state-of-the-art matrices and $L_p$ are compared on random graphs following the Signed Stochastic Block Model.

All proofs hold for an arbitrary diagonal shift $\varepsilon >0$, whereas the shift is set to $\varepsilon=\log_{10}(1+\abs{p})+10^{-6}$ in the numerical experiments. Numerical robustness with respect to $\varepsilon$ is discussed in the supplementary material.

The Stochastic Block Model (\textbf{SBM}) is a well-established generative model for graphs and a canonical tool for studying clustering methods~\cite{Holland:SBM:1983,rohe2011spectral,Abbe:2018:recentDevelopments}. Graphs drawn from the SBM show a prescribed clustering structure, as the probability of an edge between two nodes depends only on the clustering membership of each node.
We introduce our SBM for signed Graphs (\textbf{SSBM}): we consider $k$ ground truth clusters $\mathcal{C}_1, \ldots, \mathcal{C}_k$, all of them of size $\abs{\mathcal{C}}=\frac{n}{k}$, and parameters $\pp,\qp,\ppm,\qm\in[0,1]$ where 
$\pp$ (resp. $\ppm$) is the probability of observing an edge inside clusters in $G^+$ (resp. $G^-$)
and $\qp$ (resp. $\qm$) is the probability of observing an edge between clusters in $G^+$ (resp. $G^-$).
Calligraphic letters are used for the expected adjacency matrices:  
$\mathcal{W^{+}}$ and $\mathcal{W^{-}}$ are the expected adjacency matrix of $G^+$ and $G^-$, respectively, where 
$\mathcal{W}^+_{i,j}=\pp$ and $\mathcal{W}^-_{i,j}=\ppm$ if $v_i,v_j$ belong to the same cluster, whereas
$\mathcal{W}^+_{i,j}=\qp$ and $\mathcal{W}^-_{i,j}=\qm$ if $v_i,v_j$ belong to different  clusters.

Other extensions of the  SBM to the signed setting have been considered. Particularly relevant examples are the Labelled Stochastic Block Model (\textbf{LSBM})~\cite{heimlicher2012community} and the Censored Block Model (\textbf{CBM})~\cite{Abbe:2014:Decoding}. 
In the context of signed graphs, both LSBM and CBM assume that an observed edge can be either positive or negative, but not both. 
Our SSBM, instead, allows the simultaneous presence of both positive and negative edges between the same pair of nodes, as the parameters $\pp,\qp,\ppm,\qm$ in SSBM are independent. Moreover, the edge probabilities defining both the LSBM and the CBM can be recovered as special cases of the SSBM. In particular, the LSBM corresponds to  the  SSBM for the choices
 \begin{itemize}[topsep=-3pt,leftmargin=*]\setlength\itemsep{-3pt}
 \item[]$\pp=\overline{p}_{\mathrm{in}}\mu^+, \quad\,\,\,\, \ppm=\overline{p}_{\mathrm{in}}\mu^- \hfill (\text{within clusters})$
 \item[]$\qp=\overline{p}_{\mathrm{out}}\nu^+, \,\, \qm=\overline{p}_{\mathrm{out}}\nu^- \hfill  (\text{between clusters})$
 \end{itemize}
where $\overline{p}_{\mathrm{in}}$ and $\overline{p}_{\mathrm{out}}$ are edge probabilities within and between clusters, respectively, whereas $\mu^+$ and $\mu^- = 1-\mu^+$ (resp. $\nu^+$ and $\nu^-=1-\nu^+$) are the probabilities of assigning a positive and negative label to an edge within (resp. between) clusters.
Similarly, the CBM corresponds to the SSBM for the particular choices $\overline{p}_{\mathrm{in}}=\overline{p}_{\mathrm{out}}$, 
$\mu^+=\nu^-=(1-\eta)$ and $\mu^-=\nu^+=\eta$ where $\eta$ is a noise parameter.

Our goal is to identify conditions in terms of $k,\pp,\qp,\ppm$, and $\qm$, such that $\mathcal{C}_1, \ldots, \mathcal{C}_k$ are recovered by the smallest eigenvectors of the signed power mean Laplacian.
Consider the following $k$~vectors:
\begin{itemize}[topsep=-3pt,leftmargin=*]\setlength\itemsep{-5pt}
 \item[] \centering $\boldsymbol \chi_1  = \one, \qquad \boldsymbol \chi_i = (k-1)\one_{\mathcal{C}_i}-\one_{\overline{\mathcal{ C}_i}}$\, . 
\end{itemize}
$i=2,\ldots,k$.
The node embedding given by $\{\boldsymbol \chi_i\}_{i=1}^{k}$ is informative
in the sense that applying $k$-means on $\{\boldsymbol \chi_i\}_{i=1}^{k}$ trivially recovers the ground truth  clusters $\mathcal{C}_1, \ldots, \mathcal{C}_k$
as all nodes of a cluster are mapped to the same point. Note that the constant vector $\boldsymbol \chi_1$ could be omitted as it does not add clustering information.
We derive conditions for the 
SSBM 
such that $\{\boldsymbol \chi_i\}_{i=1}^{k}$ are the smallest eigenvectors of the signed power mean Laplacian in expectation.
%
%
%
%
%

\begin{theorem}\label{theorem:mp_in_expectation}
 Let $\mathcal{L}_p=M_p(\mathcal L^{+}_\sym , \mathcal Q^{-}_\sym)$ and let $\varepsilon>0$ be the diagonal shift. 
 \begin{itemize}[topsep=-3pt,leftmargin=*]\setlength\itemsep{-3pt}
 \item If $p\geq 1$, then 
 $\{ \boldsymbol \chi_i \}_{i=2}^{k}$ correspond to the $(k$-$1)$-smallest eigenvalues of 
 $\mathcal{L}_p$ if and only if $m_p(\rho^+_\varepsilon,\rho^-_\varepsilon)<1+\varepsilon$;
 \item If $p< 1$, then 
 $\{ \boldsymbol \chi_i \}_{i=1}^{k}$ correspond to the $k$-smallest eigenvalues of 
 $\mathcal{L}_p$ if and only if $m_p(\rho^+_\varepsilon,\rho^-_\varepsilon)<1+\varepsilon$;
 \end{itemize}
 with $\rho^+_\varepsilon=1-(p_{\mathrm{in}}^+ - p_{\mathrm{out}}^+)/(p_{\mathrm{in}}^+ + (k -1)p_{\mathrm{out}}^+)+\varepsilon$ and $\rho^-_\varepsilon=1+(p_{\mathrm{in}}^- - p_{\mathrm{out}}^-)/(p_{\mathrm{in}}^- + (k -1)p_{\mathrm{out}}^-)+\varepsilon$.
\end{theorem}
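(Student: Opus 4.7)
The plan is to diagonalize $\mathcal{L}_p$ explicitly by exploiting the block structure of the SSBM and then reduce the matrix-mean question to a comparison of three scalar numbers. Because every node has the same expected degree within each cluster, $\mathcal{D}^\pm = d^\pm I$ with $d^\pm = |\mathcal{C}|(p_{\mathrm{in}}^\pm + (k-1)p_{\mathrm{out}}^\pm)$, so the symmetric normalization is simply a scalar rescaling of $\mathcal{W}^\pm$. A direct computation on the block-constant matrices $\mathcal{W}^\pm$ then shows that $\one$ is an eigenvector with eigenvalue $d^\pm$, each $\boldsymbol\chi_i$ with $i\geq 2$ is an eigenvector with eigenvalues $|\mathcal{C}|(\pp-\qp)$ and $|\mathcal{C}|(\ppm-\qm)$ respectively, and every vector that sums to zero on each cluster lies in the kernel (this uses that $\mathcal{W}^\pm$ has rank at most $k$). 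Pushing this through to $\mathcal{L}^+_\sym + \varepsilon I$ and $\mathcal{Q}^-_\sym + \varepsilon I$, the eigenvalue pairs on $\one$, on each $\boldsymbol\chi_i$ ($i\geq 2$), and on the $(n-k)$-dimensional orthogonal complement of $\mathrm{span}(\one_{\mathcal{C}_1},\ldots,\one_{\mathcal{C}_k})$ turn out to be exactly $(\varepsilon, 2+\varepsilon)$, $(\rho^+_\varepsilon, \rho^-_\varepsilon)$, and $(1+\varepsilon, 1+\varepsilon)$.

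Because both shifted matrices share this common orthonormal eigenbasis they commute, so Definition~1 reduces to ordinary functional calculus: $\mathcal{L}_p$ is co-diagonal in the same basis, and its eigenvalues are obtained by applying the scalar power mean $m_p$ componentwise to the pairs above. Thus the spectrum of $\mathcal{L}_p$ consists of just three distinct numbers: $m_p(\varepsilon, 2+\varepsilon)$ on $\one$, $m_p(\rho^+_\varepsilon, \rho^-_\varepsilon)$ with multiplicity $k-1$ on $\boldsymbol\chi_2, \ldots, \boldsymbol\chi_k$, and $1+\varepsilon$ with multiplicity $n-k$ on the complement. This structural reduction is the crux of the argument.

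The iff then follows from strict monotonicity of $p\mapsto m_p(\varepsilon, 2+\varepsilon)$ at the non-trivial pair $\varepsilon \neq 2+\varepsilon$: one has $m_p(\varepsilon, 2+\varepsilon) > 1+\varepsilon$ for $p>1$, equality at $p=1$, and $m_p(\varepsilon, 2+\varepsilon) < 1+\varepsilon$ for $p<1$. For $p\geq 1$ the eigenvalue on $\one$ is therefore never among the smallest, so the $(k-1)$ smallest eigenvalues of $\mathcal{L}_p$ coincide with those of $\boldsymbol\chi_2,\ldots,\boldsymbol\chi_k$ if and only if $m_p(\rho^+_\varepsilon,\rho^-_\varepsilon)$ is strictly below the complement value $1+\varepsilon$. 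For $p<1$, the eigenvalue on $\one$ is automatically below $1+\varepsilon$, so the $k$ smallest eigenvalues are those of $\{\boldsymbol\chi_i\}_{i=1}^k$ iff the same strict inequality $m_p(\rho^+_\varepsilon,\rho^-_\varepsilon) < 1+\varepsilon$ holds. Both directions of the iff drop out because the remaining $(n-k)$ eigenvalues are all exactly $1+\varepsilon$.

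The main technical point I expect to need care with is the bookkeeping on the orthogonal complement: verifying that $\mathcal{W}^\pm$ really annihilates it, so that $\mathcal{L}^+_\sym$ and $\mathcal{Q}^-_\sym$ contribute eigenvalue $1$ (rather than some stray value) on the whole $(n-k)$-dimensional subspace. This is a straightforward rank computation on block-constant matrices but is what licenses the clean three-number spectrum. A secondary, standard subtlety is justifying that $M_p$ in Definition~1 really equals the spectral functional $m_p$ applied componentwise on the shared eigenbasis of commuting positive-definite matrices; this is routine once both matrices are simultaneously unitarily diagonalized, but should be stated explicitly to make the reduction from matrix means to scalar means rigorous.
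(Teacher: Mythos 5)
Your proposal is correct and follows essentially the same route as the paper's proof: exhibit $\one$ and $\boldsymbol\chi_2,\ldots,\boldsymbol\chi_k$ as common eigenvectors of the block-constant matrices $\mathcal{W}^\pm$, use regularity of the expected graph to reduce normalization to a scalar rescaling, pass to the scalar power mean on the shared eigenbasis (the paper invokes Lemma~\ref{lemma:eigenvalues_and_eigenvectors_of_generalized_mean_V2} where you invoke functional calculus on commuting matrices), and compare the three eigenvalues $m_p(\varepsilon,2+\varepsilon)$, $m_p(\rho^+_\varepsilon,\rho^-_\varepsilon)$, $1+\varepsilon$ via strict monotonicity of $p\mapsto m_p$. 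No gaps.
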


\myComment{
\begin{proof}
 Please see Section~\ref{appendix:theorem:mp_in_expectation-PROOF}.
\end{proof}
}

Note that Theorem~\ref{theorem:mp_in_expectation} is the reason why Alg.~\ref{alg:powerMean} uses only the first $k-1$ eigenvectors for $p\geq 1$. The problem is that the constant eigenvector need not be among the first $k$ eigenvectors in the SSBM for $p\geq 1$. However, as it is constant and thus uninformative in the embedding, this does not lead to any loss of information.
The following Corollary shows that the limit cases of $L_p$ are related to the boolean operators \texttt{AND} and \texttt{OR}.
\begin{corollary}\label{corollary:mp_limit_cases}
Let $\mathcal{L}_p=M_p(\mathcal L^{+}_\sym , \mathcal Q^{-}_\sym)$.
\begin{itemize}[topsep=-3pt,leftmargin=*]\setlength\itemsep{-3pt}
\item $\{ \boldsymbol \chi_i \}_{i=2}^{k}$ correspond to the $(k$-$1)$-smallest eigenvalues of 
 $\mathcal{L}_\infty$ if and only if 
$\pp > \qp$ \textbf{and}  $\ppm < \qm$,
\item $\{ \boldsymbol \chi_i \}_{i=1}^{k}$ correspond to the $k$-smallest eigenvalues of 
 $\mathcal{L}_{-\infty}$ if and only if
$\pp > \qp$ \textbf{or}  $\ppm < \qm$.
\end{itemize}
\end{corollary}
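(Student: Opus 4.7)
The corollary is a direct consequence of Theorem~\ref{theorem:mp_in_expectation} by sending $p\to\pm\infty$, so my plan is essentially to carry out that limit carefully and then translate the resulting inequalities on $\rho^+_\varepsilon,\rho^-_\varepsilon$ into the AND/OR statements about $\pp,\qp,\ppm,\qm$. First I would note that $\mathcal L^+_\sym$ and $\mathcal Q^-_\sym$ in expectation share the eigenbasis $\{\boldsymbol\chi_i\}_{i=1}^k$ (together with orthogonal complements that need not concern us), so $M_p$ applied to them acts eigenvalue-wise, and the definition of $M_p$ extends coherently to the limits $p\to\pm\infty$ via $m_\infty(a,b)=\max\{a,b\}$ and $m_{-\infty}(a,b)=\min\{a,b\}$, as recorded in Table~\ref{table:powerMeans}. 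In particular, the two relevant eigenvalues of $\mathcal L_p$ along $\boldsymbol\chi_i$ (for $i\geq 2$) converge to $\max\{\rho^+_\varepsilon,\rho^-_\varepsilon\}$ and $\min\{\rho^+_\varepsilon,\rho^-_\varepsilon\}$ respectively, while the eigenvalue along the constant vector $\boldsymbol\chi_1$ is $\varepsilon$ in both limits (since $\mathcal L^+_\sym\one=0$ and $\mathcal Q^-_\sym\one=2\one$ give the same shifted power-mean eigenvalue $1+\varepsilon$... wait, for $Q^-_\sym$ on $\one$ the eigenvalue is $2$, so one should be careful and simply invoke Theorem~\ref{theorem:mp_in_expectation} directly rather than re-derive the spectrum).

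Second, I would apply Theorem~\ref{theorem:mp_in_expectation} with $p=+\infty$ (in the regime $p\geq 1$) and with $p=-\infty$ (in the regime $p<1$). The iff condition $m_p(\rho^+_\varepsilon,\rho^-_\varepsilon)<1+\varepsilon$ becomes, respectively,
\begin{equation*}
\max\{\rho^+_\varepsilon,\rho^-_\varepsilon\}<1+\varepsilon \quad\text{and}\quad \min\{\rho^+_\varepsilon,\rho^-_\varepsilon\}<1+\varepsilon,
\end{equation*}
i.e.\ ``$\rho^+_\varepsilon<1+\varepsilon$ \textbf{and} $\rho^-_\varepsilon<1+\varepsilon$'' versus ``$\rho^+_\varepsilon<1+\varepsilon$ \textbf{or} $\rho^-_\varepsilon<1+\varepsilon$''.

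Third, I would unpack each of the two scalar inequalities. From the definition,
\begin{equation*}
\rho^+_\varepsilon<1+\varepsilon \iff \frac{\pp-\qp}{\pp+(k-1)\qp}>0 \iff \pp>\qp,
\end{equation*}
\begin{equation*}
\rho^-_\varepsilon<1+\varepsilon \iff \frac{\ppm-\qm}{\ppm+(k-1)\qm}<0 \iff \ppm<\qm,
\end{equation*}
where in each case the denominator is strictly positive (it equals $\pp+(k-1)\qp>0$ or $\ppm+(k-1)\qm>0$ in any nontrivial SSBM), so the sign of the ratio is determined by the numerator. Substituting these equivalences into the AND/OR statements from the previous step yields exactly the two items of the corollary.

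The only genuinely delicate point is the justification of taking $p\to\pm\infty$ inside Theorem~\ref{theorem:mp_in_expectation}: one must be sure that the iff statement, which is formulated for $p\in\mathbb R$, passes to the limit. Because $\mathcal L^+_\sym$ and $\mathcal Q^-_\sym$ commute in expectation, the eigenvalues of $\mathcal L_p$ restricted to the relevant $k$-dimensional invariant subspace are continuous (in fact monotone) functions of $p$ converging to the max/min at $\pm\infty$, so the ordering of eigenvalues, which is what Theorem~\ref{theorem:mp_in_expectation} characterizes, stabilizes in the limit. Making this continuity argument explicit is the main (and only) technical step; once it is in place, the algebraic manipulation above finishes the proof.
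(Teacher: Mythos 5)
Your proposal is correct and follows essentially the same route as the paper: apply Theorem~\ref{theorem:mp_in_expectation}, replace $m_p$ by $\max$ (resp.\ $\min$) in the limit $p\to\pm\infty$, and observe that $\rho^+_\varepsilon<1+\varepsilon \iff \pp>\qp$ and $\rho^-_\varepsilon<1+\varepsilon \iff \ppm<\qm$, turning the max/min condition into the \texttt{AND}/\texttt{OR} statements. The continuity-in-$p$ point you flag is handled implicitly in the paper by working directly with the common eigenbasis of the expected (commuting) Laplacians, exactly as you suggest.
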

\myComment{
\begin{proof}
 Please see Section~\ref{corollary:mp_limit_cases-PROOF}.
\end{proof}
}

The conditions for $\mathcal{L}_{\infty}$ are the most conservative ones, as they require that $G^+$ \texttt{\underline{and}} $G^-$ are informative, i.e.\ $G^+$ has to be assortative \texttt{\underline{and}} $G^-$ disassortative. Under these conditions every clustering method for signed graphs should be able to identify the ground truth clusters in expectation.
On the other hand, the less restrictive conditions for the recovery of the ground truth clusters correspond to the limit case $\mathcal{L}_{-\infty}$. If $G^+$ \texttt{\underline{or}} $G^-$ are informative, then the ground truth clusters are recovered, that is,  $\mathcal{L}_{-\infty}$ only requires that $G^+$ is assortative \texttt{\underline{or}} $G^-$ is disassortative.
In particular, the following corollary shows that smaller values of $p$ require less restrictive conditions to ensure the identification of the informative eigenvectors.

\begin{corollary}\label{corollary:contention}
Let $q\leq p$. If $\{\bchi_i\}_{i=\theta(p)}^k$ correspond to the $k$-smallest eigenvalues of $\mathcal L_{p}$, then
$\{\bchi_i\}_{i=\theta(q)}^k$ correspond to the $k$-smallest eigenvalues of $\mathcal L_{q}$,
where $\theta(x)=1$ if $x\leq 0$ and $\theta(x)=2$ if $x>0$.
\end{corollary}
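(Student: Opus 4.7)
The plan is to reduce the spectral statement to a scalar inequality via Theorem~\ref{theorem:mp_in_expectation}, and then exploit the monotonicity of the scalar power mean in its exponent. Theorem~\ref{theorem:mp_in_expectation} characterises the position of $\{\bchi_i\}_{i=\theta(p)}^{k}$ among the eigenvectors of $\mathcal L_p$ by a single inequality of the form $m_p(\rho^+_\varepsilon,\rho^-_\varepsilon)<1+\varepsilon$, and crucially the scalars $\rho^+_\varepsilon$ and $\rho^-_\varepsilon$ depend only on $\pp,\qp,\ppm,\qm$ and $\varepsilon$, not on the exponent. This independence of the exponent is what lets us transfer information between different values of $p$.

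Concretely, first I would apply Theorem~\ref{theorem:mp_in_expectation} to the hypothesis to obtain $m_p(\rho^+_\varepsilon,\rho^-_\varepsilon)<1+\varepsilon$. Next, I would invoke the well-known scalar monotonicity of the power mean, $m_q(a,b)\le m_p(a,b)$ for $q\le p$ and $a,b>0$ (see \cite{bullen2013handbook}, Ch.~3, Thm.~1), applied to $a=\rho^+_\varepsilon$ and $b=\rho^-_\varepsilon$, to deduce
\[
m_q(\rho^+_\varepsilon,\rho^-_\varepsilon)\;\le\;m_p(\rho^+_\varepsilon,\rho^-_\varepsilon)\;<\;1+\varepsilon.
\]
Finally, a second application of Theorem~\ref{theorem:mp_in_expectation}, now with exponent $q$ and in the reverse direction, would conclude that $\{\bchi_i\}_{i=\theta(q)}^{k}$ are the smallest eigenvectors of $\mathcal L_q$.

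The only nuance is the case analysis induced by the piecewise definition of $\theta$. The relevant observation is that in expectation the constant vector $\bchi_1$ has eigenvalue $m_p(\varepsilon,2+\varepsilon)$ in $\mathcal L_p$, which by the same scalar monotonicity is non-increasing in $p$. Hence, as the exponent decreases from $p$ to $q$, $\bchi_1$ can only move into the set of smallest eigenvectors, never out of it, which is consistent with $\theta(q)\le\theta(p)$ whenever $q\le p$. I therefore expect no genuine obstacle: the corollary is essentially an immediate consequence of Theorem~\ref{theorem:mp_in_expectation} combined with the monotonicity of the scalar power mean, and the only mildly technical step is the case analysis around the jump in $\theta$.
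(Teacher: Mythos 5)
Your proposal matches the paper's proof exactly: both reduce the hypothesis to the scalar inequality $m_p(\rho^+_\varepsilon,\rho^-_\varepsilon)<1+\varepsilon$ via Theorem~\ref{theorem:mp_in_expectation}, invoke the monotonicity of the scalar power mean in its exponent to get $m_q(\rho^+_\varepsilon,\rho^-_\varepsilon)\leq m_p(\rho^+_\varepsilon,\rho^-_\varepsilon)$, and then apply Theorem~\ref{theorem:mp_in_expectation} again for $q$. One small slip in your closing remark: the eigenvalue $m_p(\varepsilon,2+\varepsilon)$ of $\bchi_1$ is non-\emph{decreasing} in $p$ (not non-increasing), which is in fact exactly what your stated conclusion --- that $\bchi_1$ can only enter the bottom of the spectrum as the exponent decreases --- requires.
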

To better understand the different conditions we have derived, we visualize them in Fig.~\ref{fig:SBM:inExpectation}, 
where the $x$-axis corresponds to how assortative $G^+$ is, while the $y$-axis corresponds to how disassortative $G^-$ is. The conditions of the limit case $\mathcal{L}_{\infty}$, i.e. 
the case where $G^+$ \texttt{\underline{and}} $G^-$ have to be informative, correspond to the upper-right region, dark blue region in Fig.~\ref{fig:SBM:inExpectation:AND}, and correspond to  the $25\%$ of all possible configurations of the SBM. The conditions for the limit case $\mathcal{L}_{-\infty}$, i.e. the case where $G^+$ \texttt{\underline{or}} $G^-$ has to be informative, instead correspond to all possible configurations of the SBM except for the bottom-left region.
This is depicted in Fig.~\ref{fig:SBM:inExpectation:OR} and corresponds to the $75\%$ of all possible configurations under the SBM.

\begin{figure*}[!th]
\floatbox[{\capbeside\thisfloatsetup{capbesideposition={right,top},capbesidewidth=.25\textwidth}}]{figure}[\FBwidth]%
  {
  \hfill
 \begin{subfigure}[b]{0.22\textwidth}
 \includegraphics[width=1\textwidth,trim=160 40 10 60]{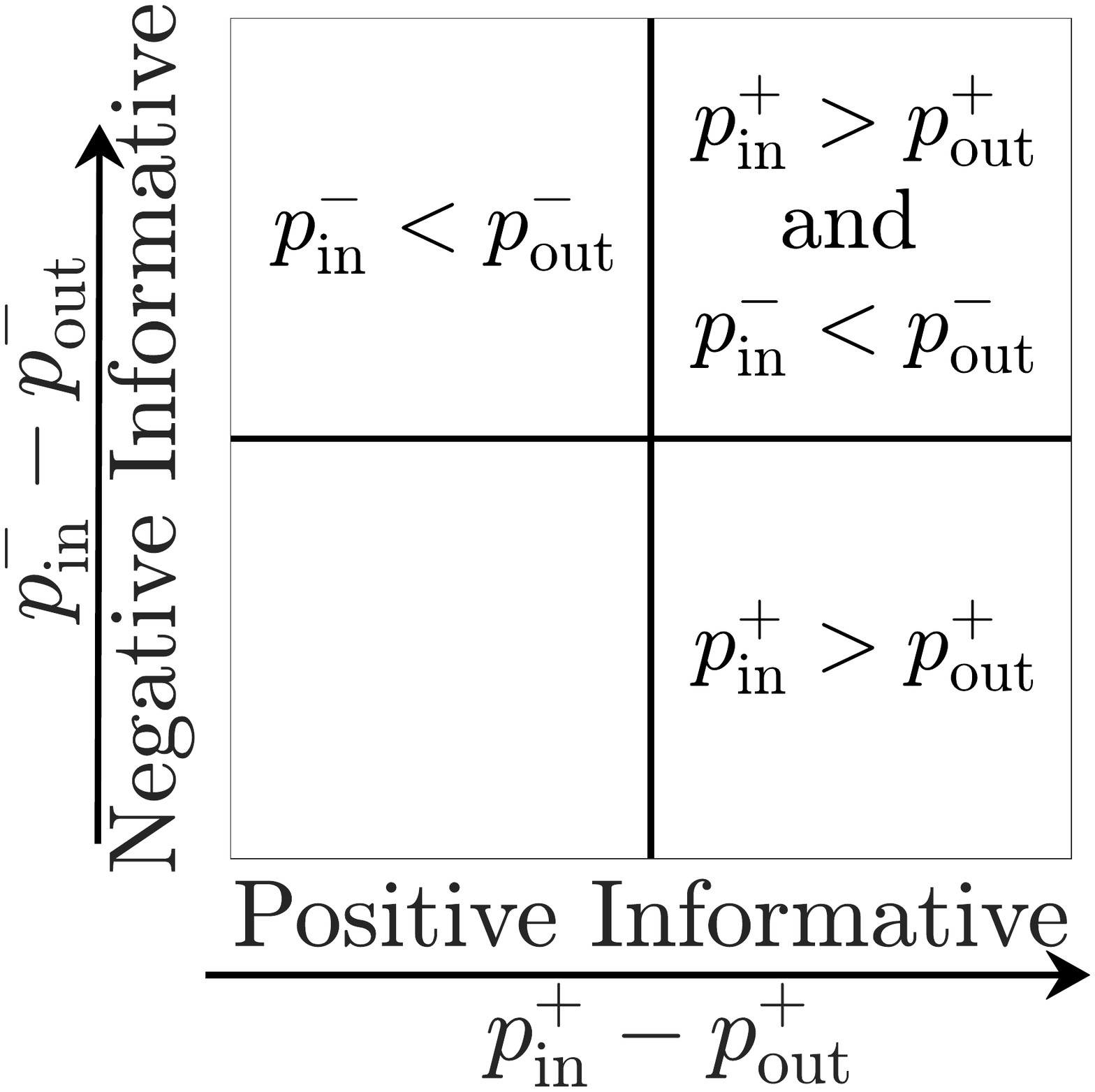}
\caption{SBM Diagram\hspace{30pt} }
 \label{fig:SBM:inExpectation:Diagram}
 \end{subfigure}
  \hfill
 \begin{subfigure}[b]{0.22\textwidth}
 \includegraphics[width=1\textwidth,trim=160 40 10 60]{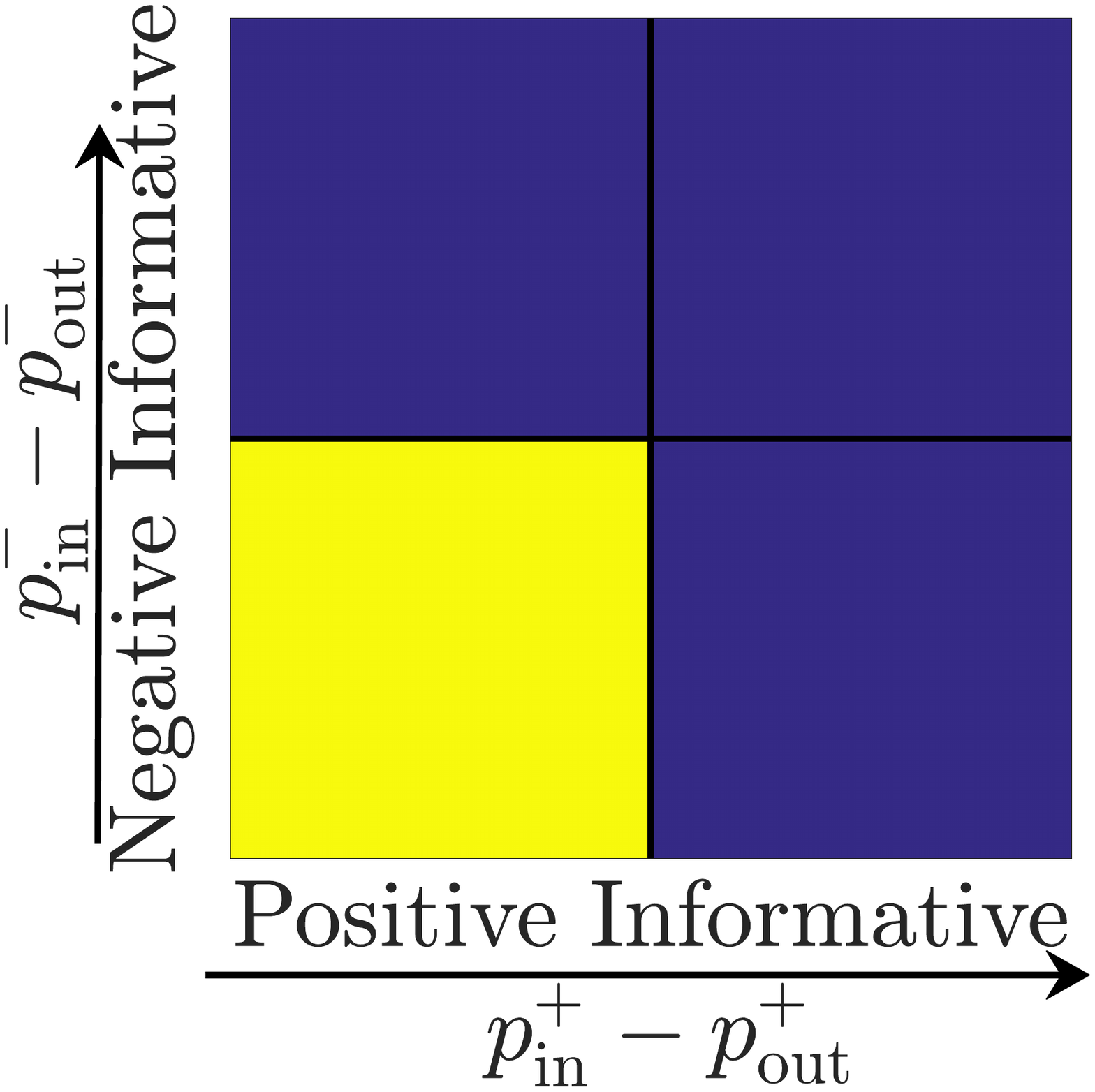}
 \caption{$L_{-\infty}$ (\texttt{OR})\hspace{30pt} }
 \label{fig:SBM:inExpectation:OR}
 \end{subfigure}
  \hfill
  \begin{subfigure}[b]{0.22\textwidth}
 \includegraphics[width=1\textwidth,trim=160 40 10 60]{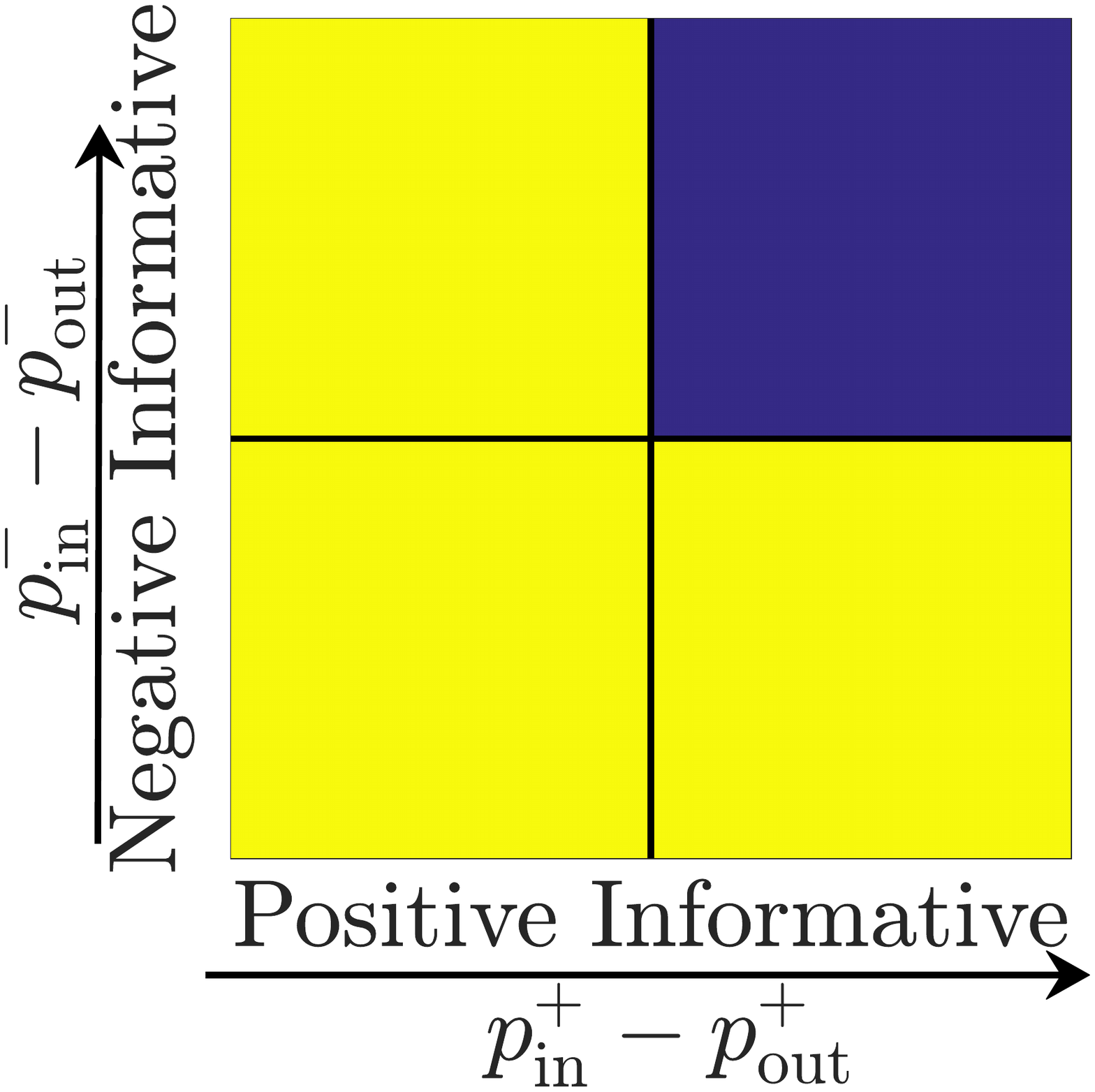}
 \caption{$L_{\infty}$ (\texttt{AND})\hspace{30pt} }
 \label{fig:SBM:inExpectation:AND}
 \end{subfigure}
    }
    {
\caption{
  Stochastic Block Model (SBM) for signed graphs. From left to right:
  Fig.~\ref{fig:SBM:inExpectation:Diagram} SBM Diagram.
  Fig.~\ref{fig:SBM:inExpectation:OR} SBM for $L_{-\infty}$(\texttt{OR}),
  Fig.~\ref{fig:SBM:inExpectation:AND} SBM for $L_{\infty}$(\texttt{AND}).
  according to Corollary~\ref{corollary:mp_limit_cases}.
  }\label{fig:SBM:inExpectation}
  }
\end{figure*}

\begin{figure*}[!htb]
 \centering
\textbf{Recovery of Clusters in Expectation\\}
\includegraphics[width=0.4\linewidth, clip,trim=0 50 0 480]{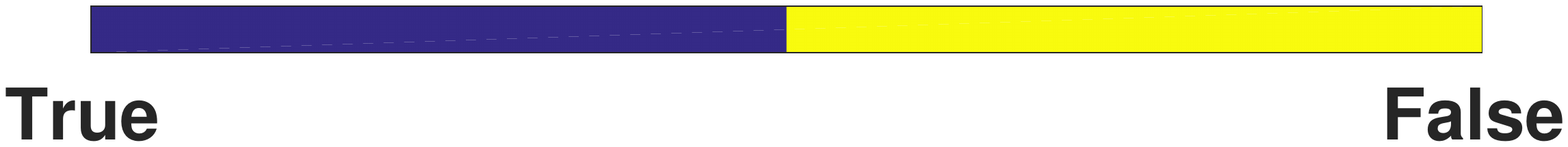}\\
 \begin{subfigure}[b]{0.19\textwidth}
 \includegraphics[width=1\textwidth,trim=160 40 10 60]{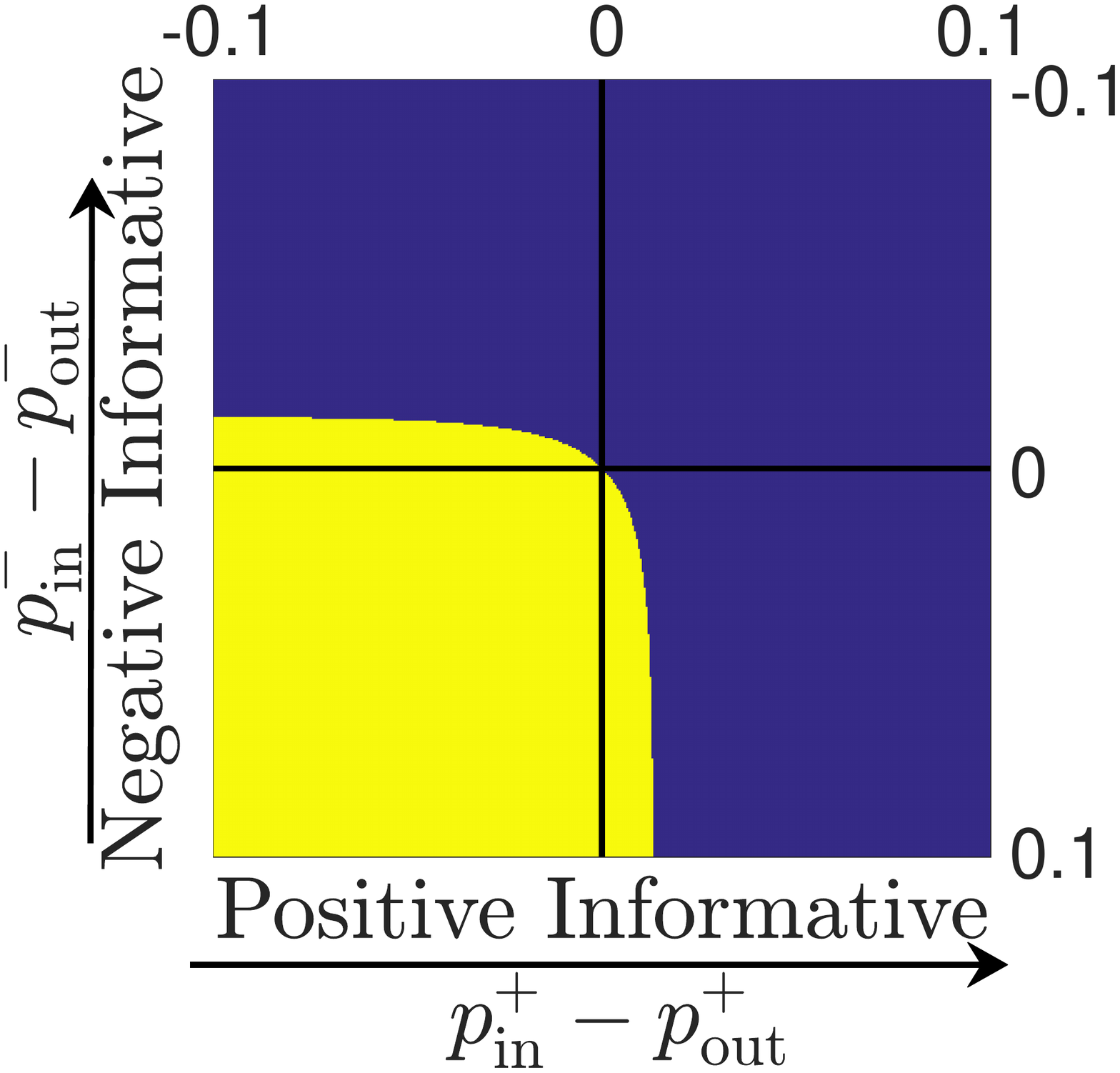}
 \caption{$\mathcal L_{-10}$\hspace{30pt} }
 \label{fig:SBM:inExpectation:Minus10}
 \vspace{2.5pt}
 \end{subfigure}
 \hfill
 \begin{subfigure}[b]{0.19\textwidth}
 \includegraphics[width=1\textwidth,trim=160 40 10 60]{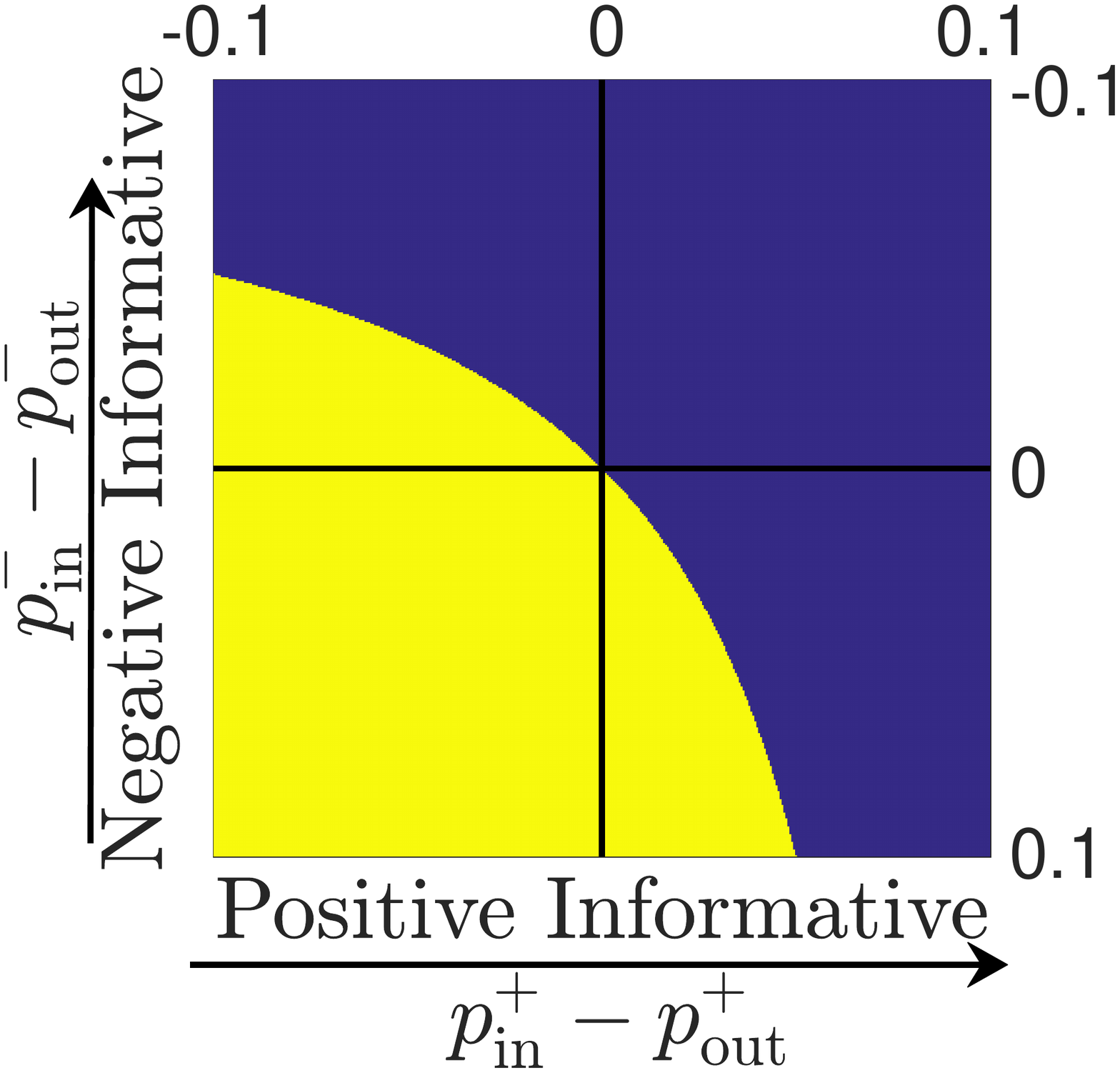}
 \caption{$\mathcal L_{-1}$\hspace{30pt} }
 \label{fig:SBM:inExpectation:Minus1}
 \vspace{2.5pt}
 \end{subfigure}
 \hfill
 \begin{subfigure}[b]{0.19\textwidth}
 \includegraphics[width=1\textwidth,trim=160 40 10 60]{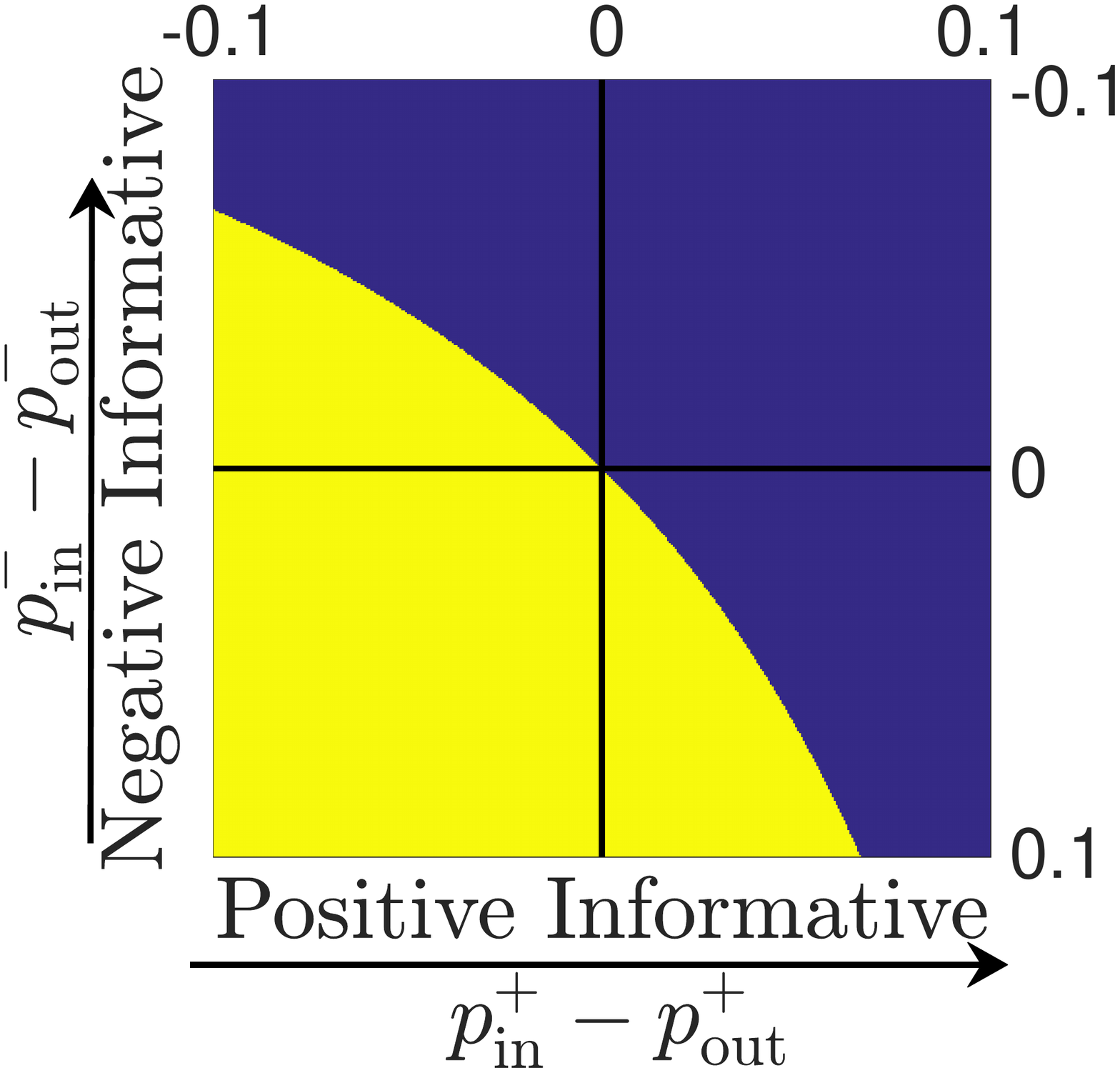}
 \caption{$\mathcal L_{0}$\hspace{30pt} }
 \label{fig:SBM:inExpectation:Zero}
 \vspace{2.5pt}
 \end{subfigure}
 \hfill
 \begin{subfigure}[b]{0.19\textwidth}
 \includegraphics[width=1\textwidth,trim=160 40 10 60]{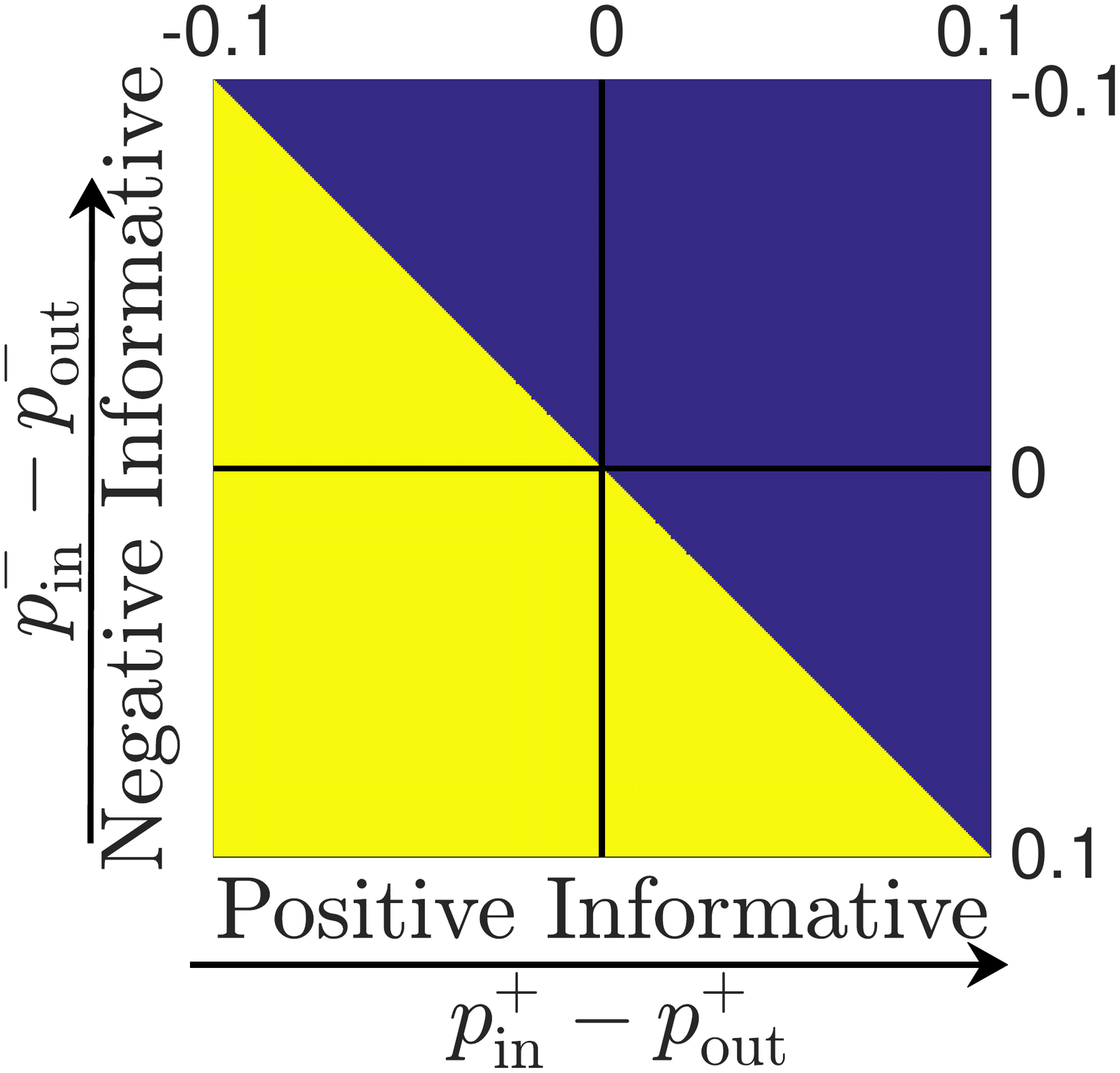}
 \caption{$\mathcal L_{1}$\hspace{30pt} }
 \label{fig:SBM:inExpectation:Plus1}
 \vspace{2.5pt}
 \end{subfigure}
 \hfill
 \begin{subfigure}[b]{0.19\textwidth}
 \includegraphics[width=1\textwidth,trim=160 40 10 60]{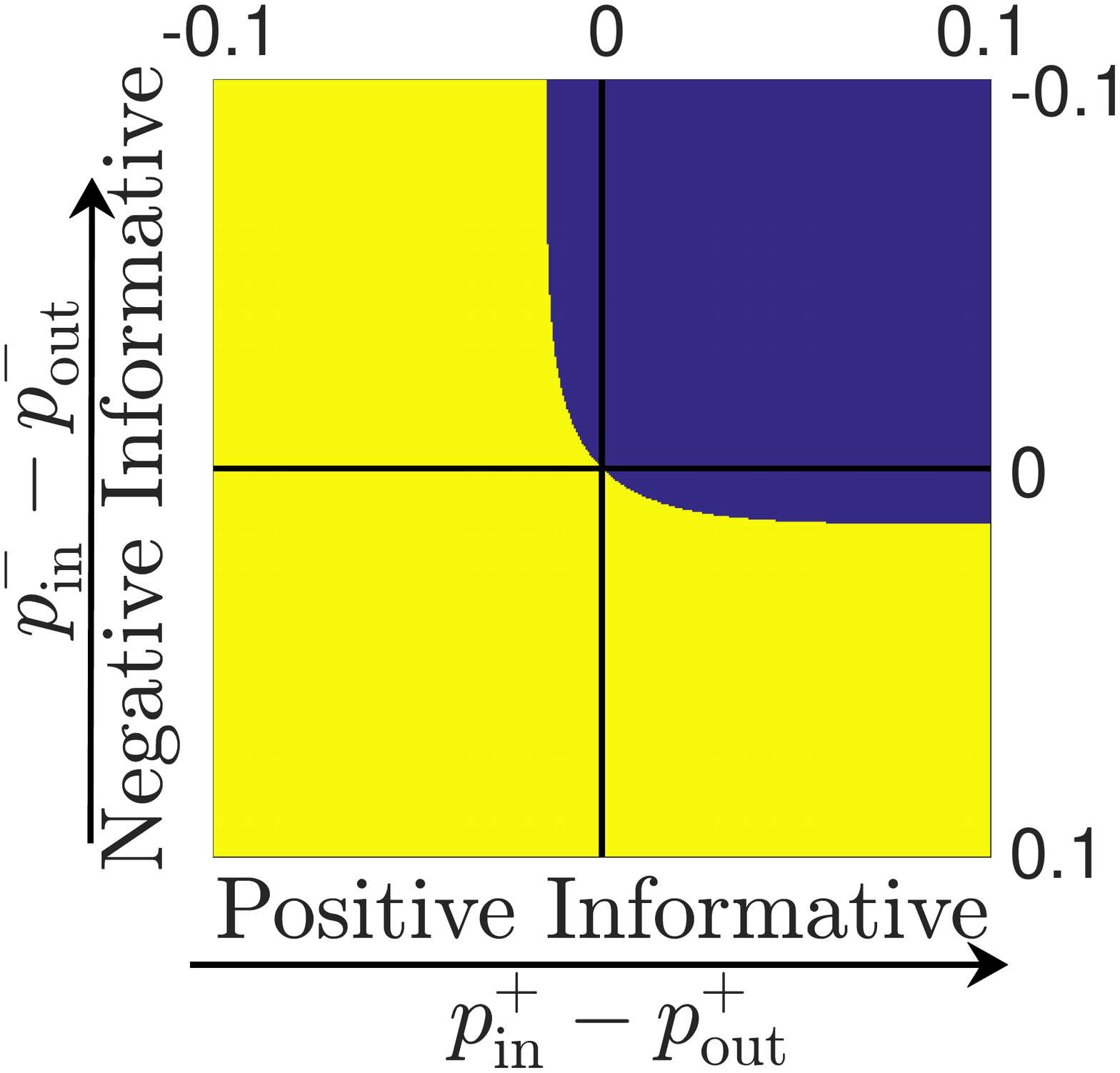}
 \caption{$\mathcal L_{10}$\hspace{30pt} }
 \label{fig:SBM:inExpectation:Plus10}
 \vspace{2.5pt}
 \end{subfigure}
\textbf{Clustering Error\\}
\includegraphics[width=0.4\linewidth, clip,trim=0 50 0 480]{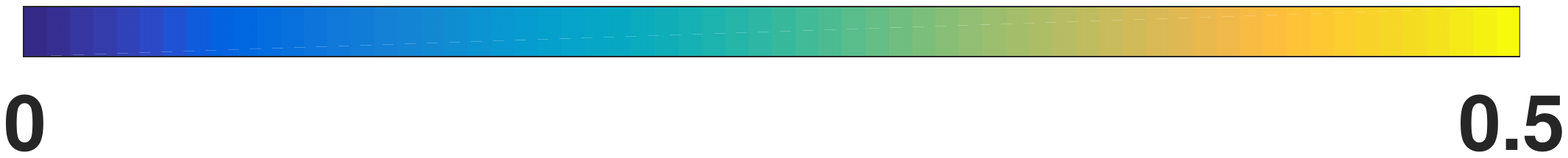}\\
 \vspace{5pt}
 \begin{subfigure}[b]{0.19\textwidth}
 \includegraphics[width=1\textwidth,trim=160 40 10 60]{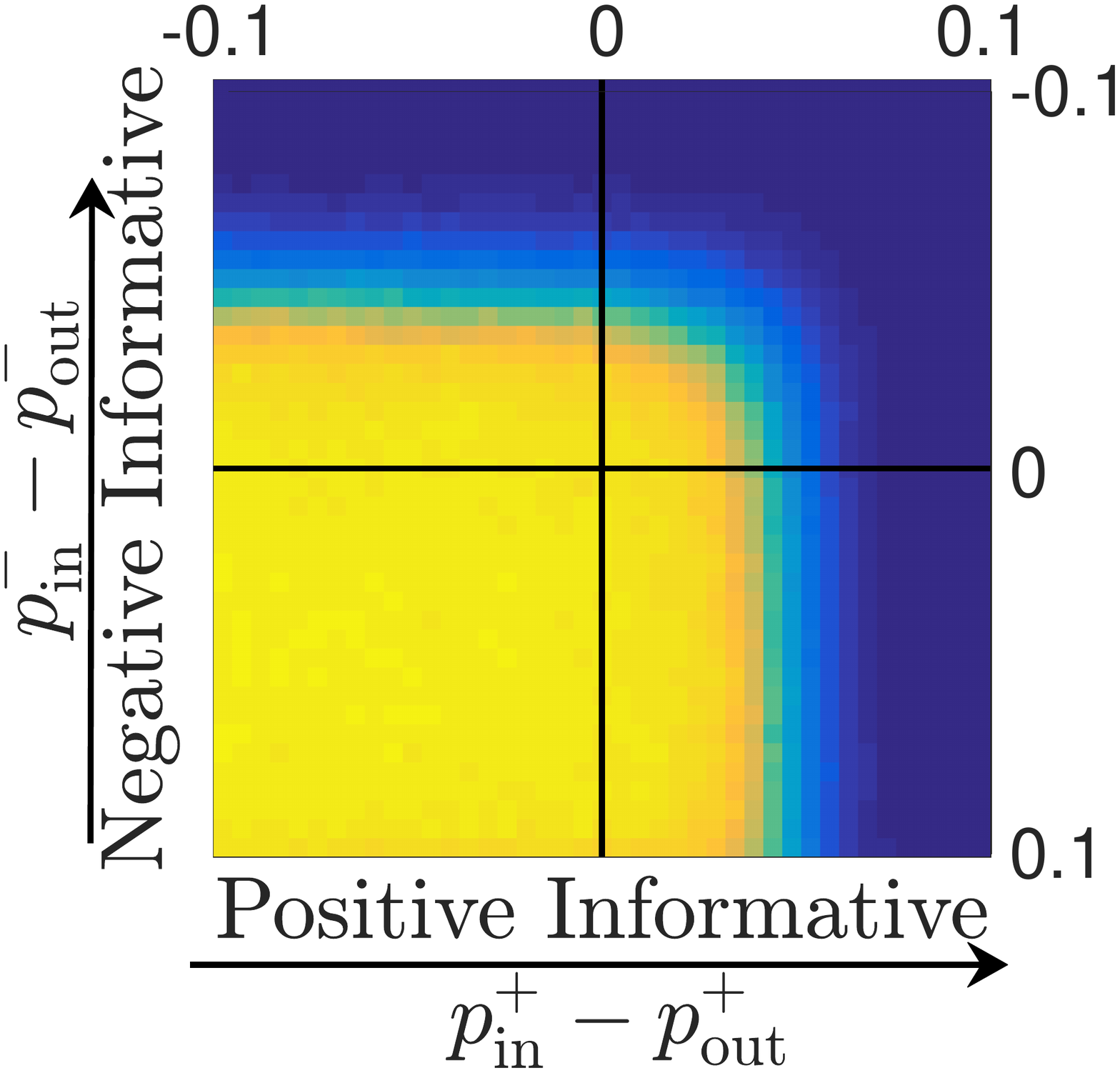}
 \caption{$L_{-10}$\hspace{30pt} }
 \label{fig:SBM:sparse:Minus10}
 \end{subfigure}
 \hfill
 \begin{subfigure}[b]{0.19\textwidth}
 \includegraphics[width=1\textwidth,trim=160 40 10 60]{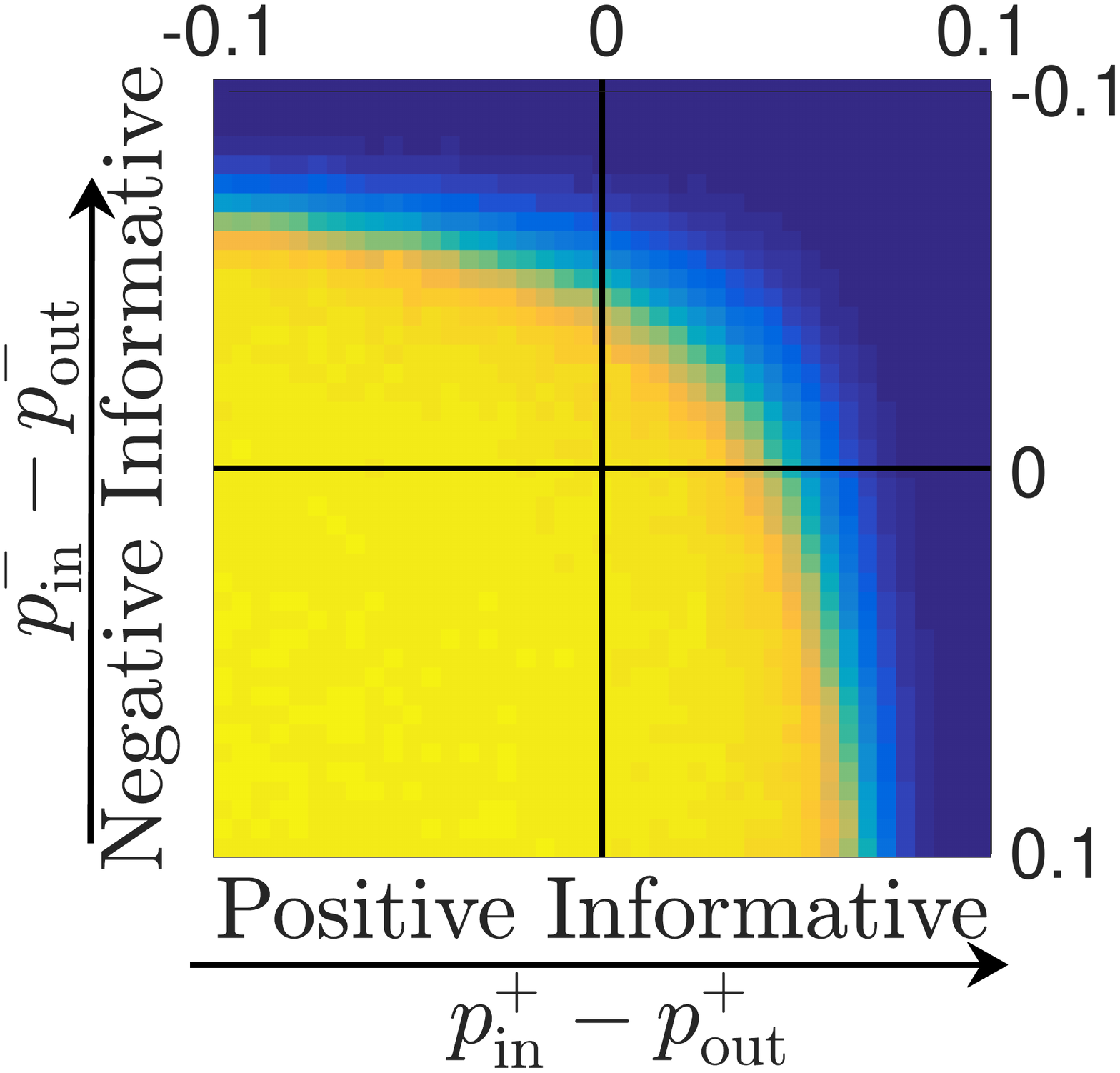}
 \caption{$L_{-1}$\hspace{30pt} }
 \label{fig:SBM:sparse:Minus1}
 \end{subfigure}
 \hfill
 \begin{subfigure}[b]{0.19\textwidth}
 \includegraphics[width=1\textwidth,trim=160 40 10 60]{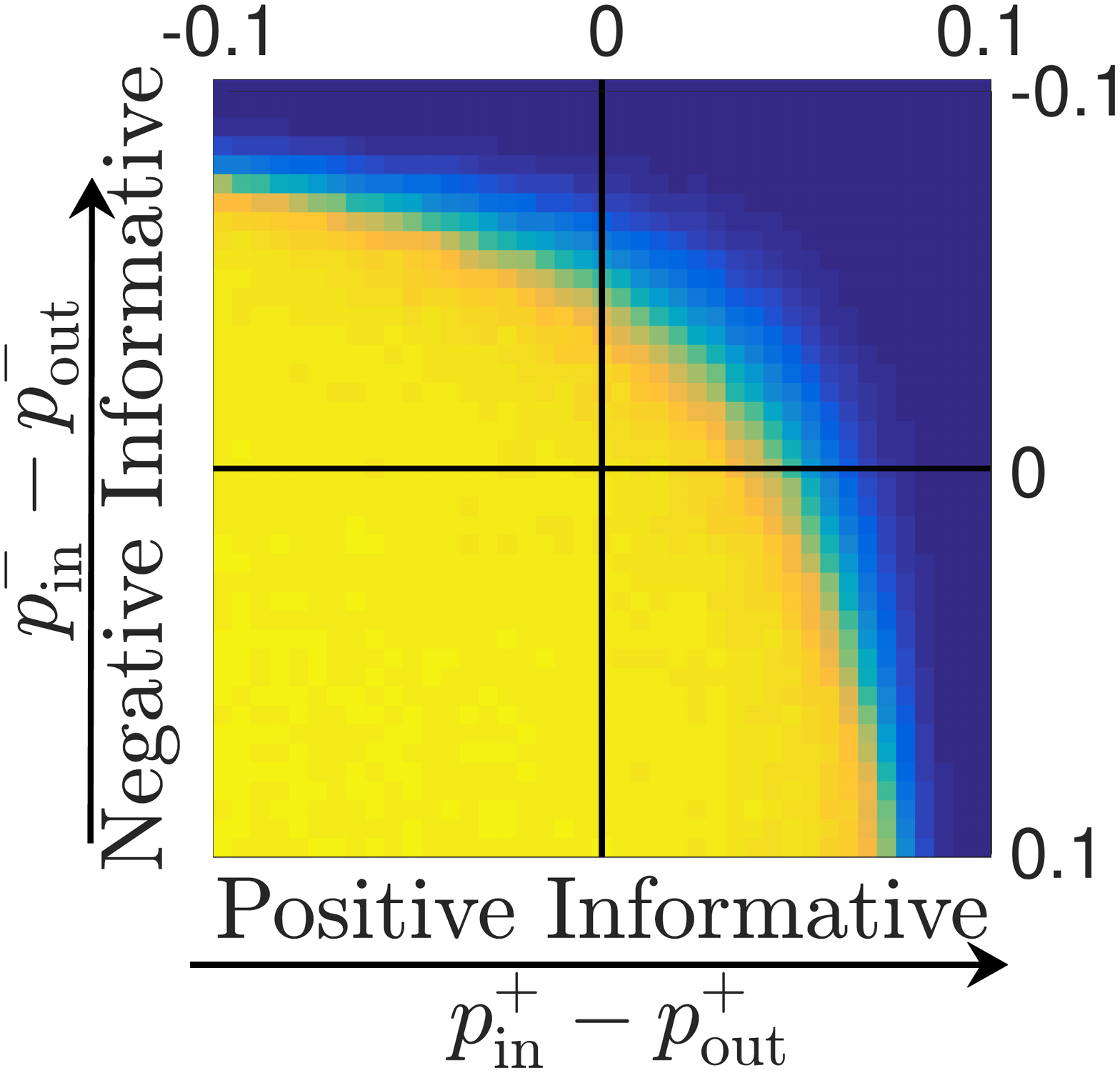}
 \caption{$L_{0}$\hspace{30pt} }
 \label{fig:SBM:sparse:Zero}
 \end{subfigure}
 \hfill
 \begin{subfigure}[b]{0.19\textwidth}
 \includegraphics[width=1\textwidth,trim=160 40 10 60]{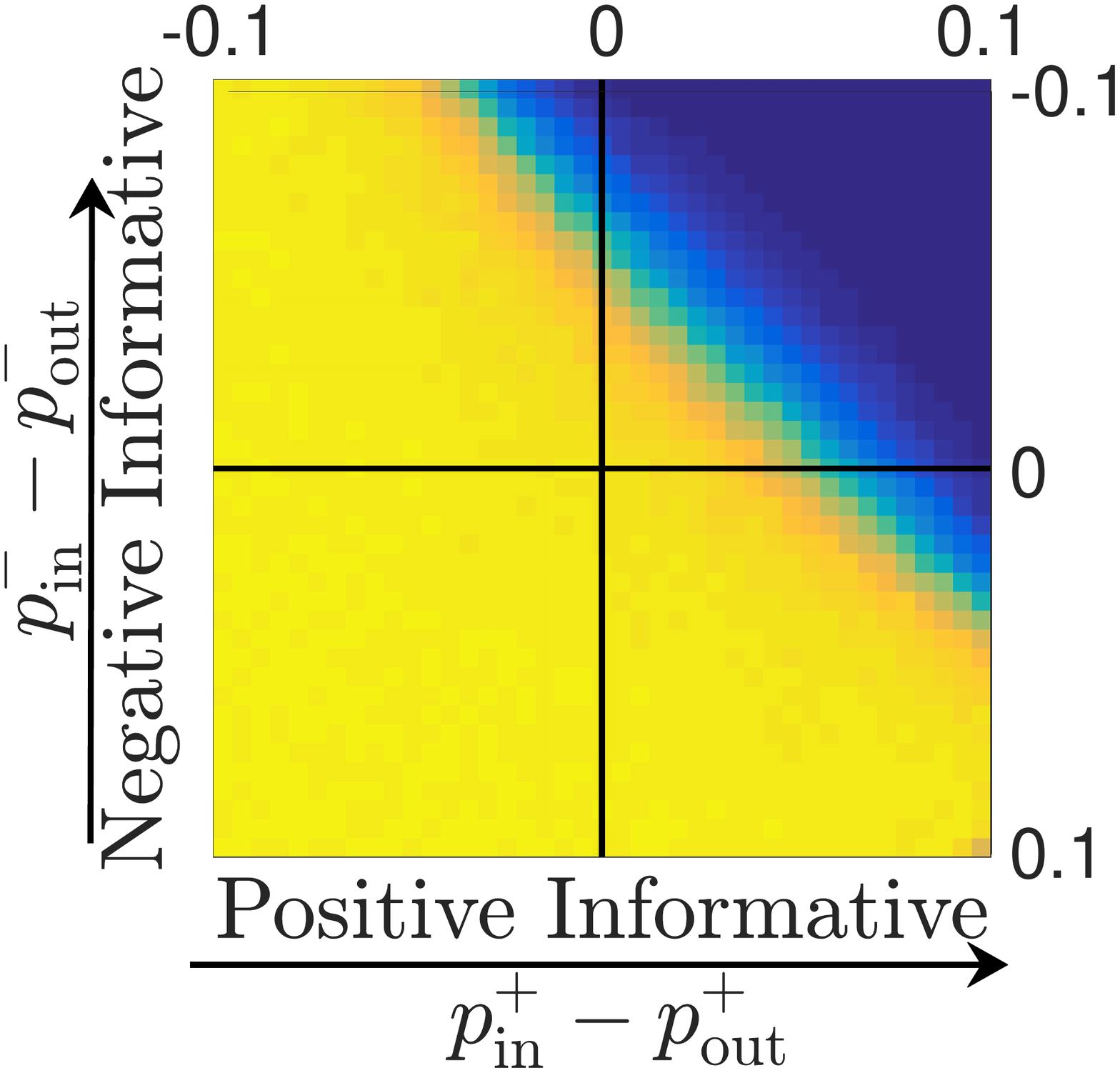}
 \caption{$L_{1}$ \hspace{30pt} }
 \label{fig:SBM:sparse:Plus1}
 \end{subfigure}
 \hfill
 \begin{subfigure}[b]{0.19\textwidth}
 \includegraphics[width=1\textwidth,trim=160 40 10 60]{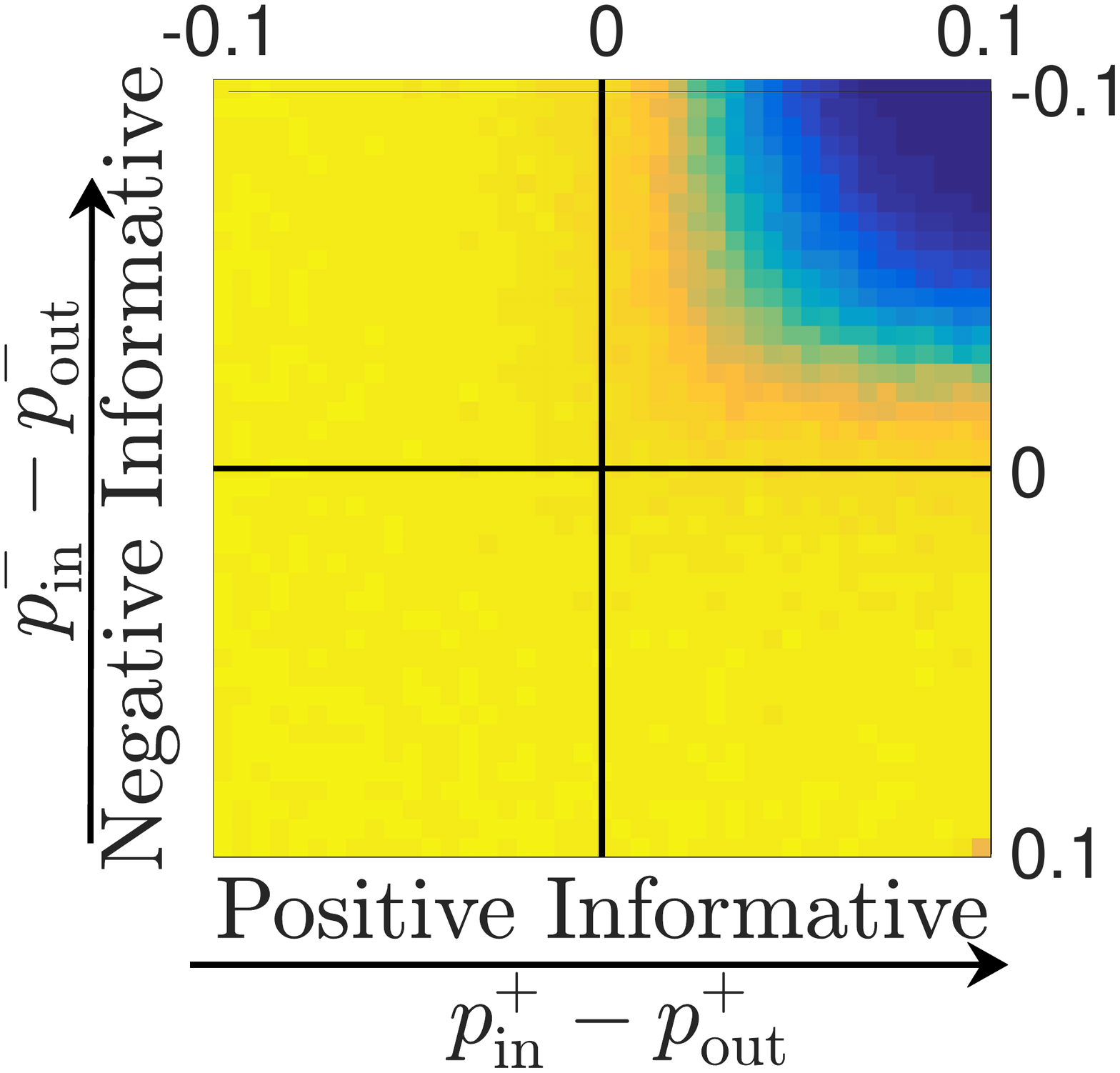}
 \caption{$L_{10}$ \hspace{30pt} }
 \label{fig:SBM:sparse:Plus10}
 \end{subfigure}
\hfill
\\
 \vspace{5pt}
 \begin{subfigure}[b]{0.19\textwidth}
 \includegraphics[width=1\textwidth,trim=160 40 10 60]{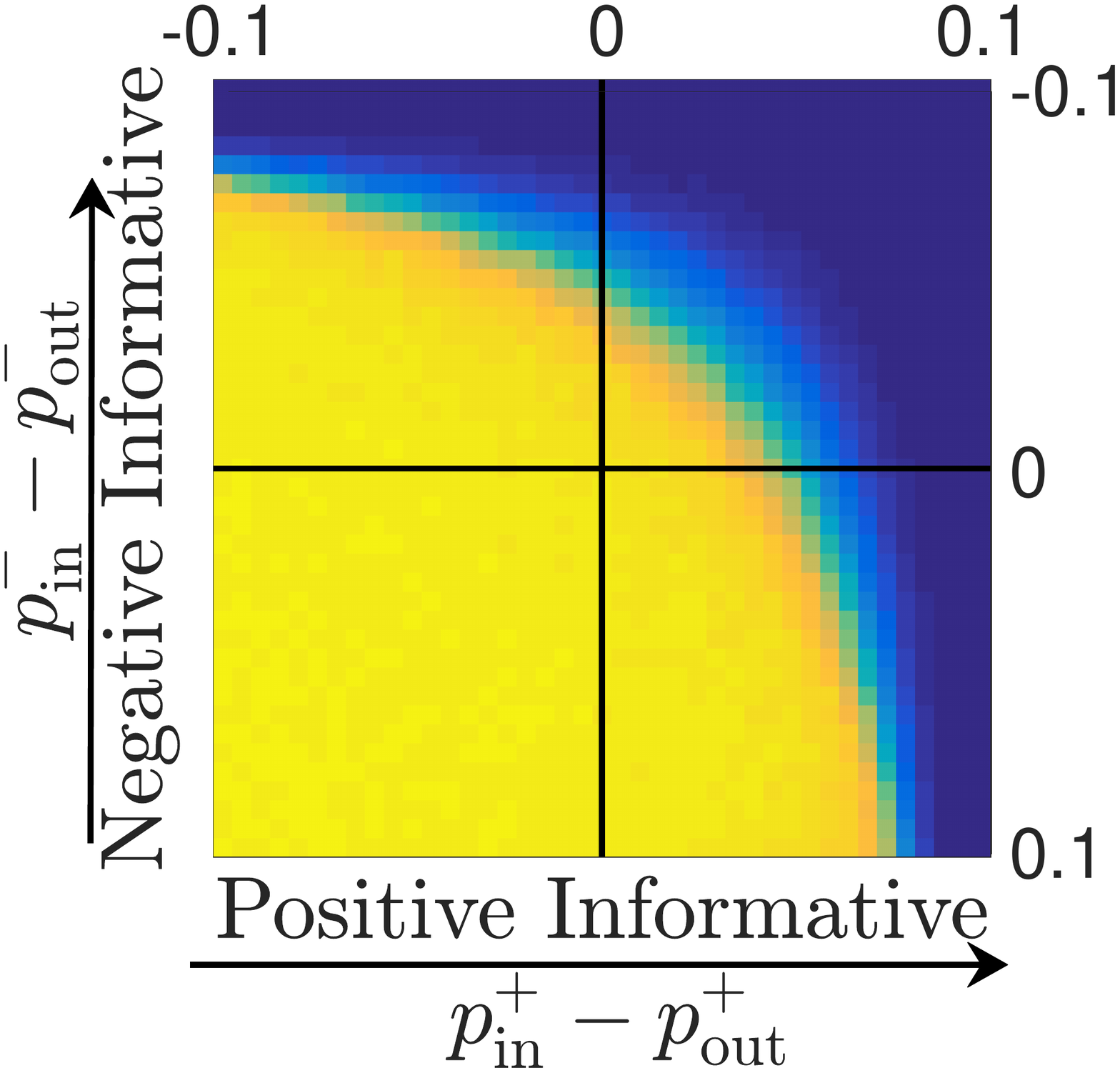}
 \caption{$L_{GM}$\hspace{30pt} }
 \label{fig:SBM:sparse:L_GM}
 \end{subfigure}
 \hfill
 \begin{subfigure}[b]{0.19\textwidth}
 \includegraphics[width=1\textwidth,trim=160 40 10 60]{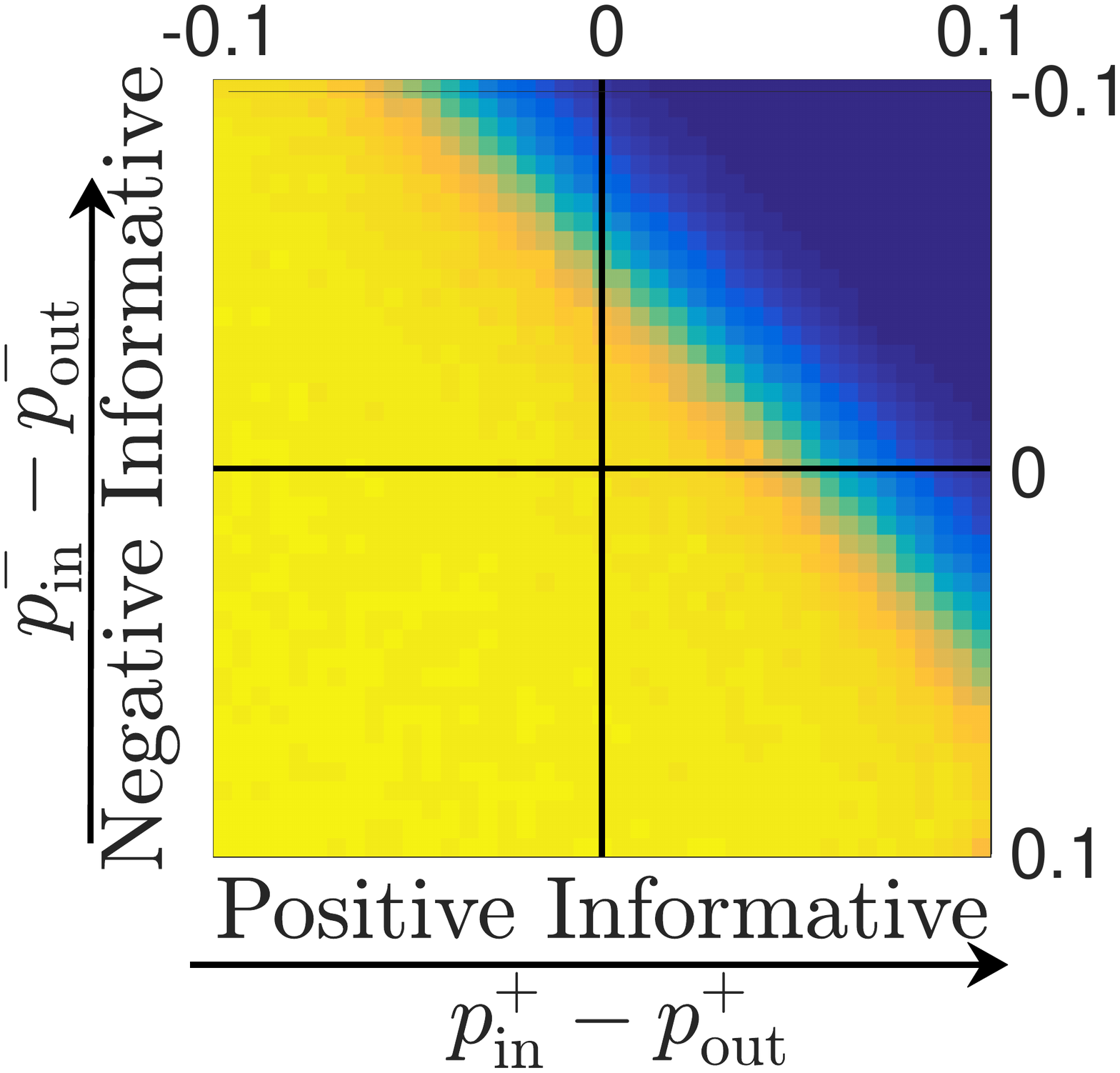}
 \caption{$L_{SN}$\hspace{30pt} }
 \label{fig:SBM:sparse:L_SN}
 \end{subfigure}
 \hfill
 \begin{subfigure}[b]{0.19\textwidth}
 \includegraphics[width=1\textwidth,trim=160 40 10 60]{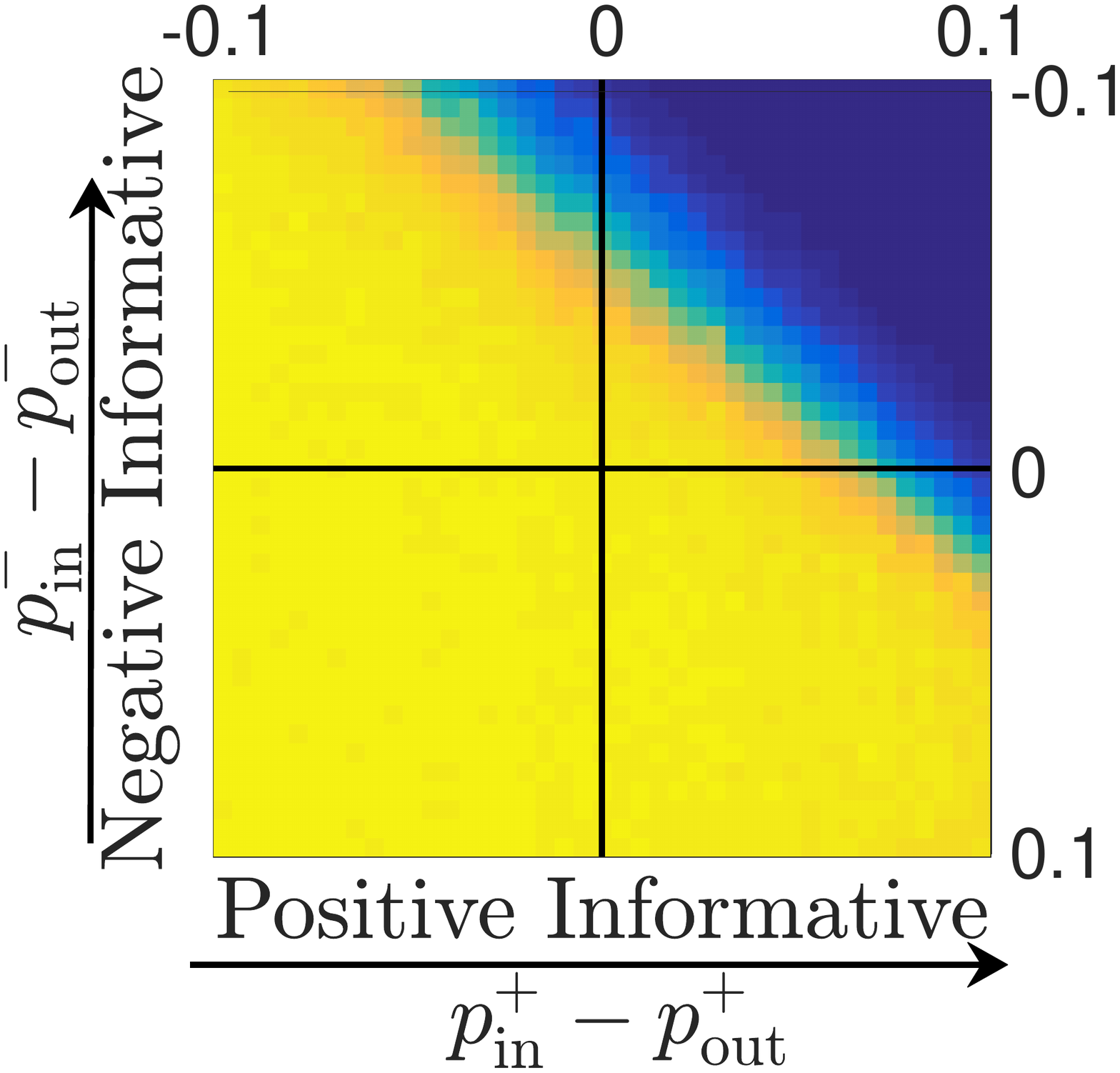}
 \caption{$L_{BN}$\hspace{30pt} }
 \label{fig:SBM:sparse:L_BN}
 \end{subfigure}
 \hfill
 \begin{subfigure}[b]{0.19\textwidth}
 \includegraphics[width=1\textwidth,trim=160 40 10 60]{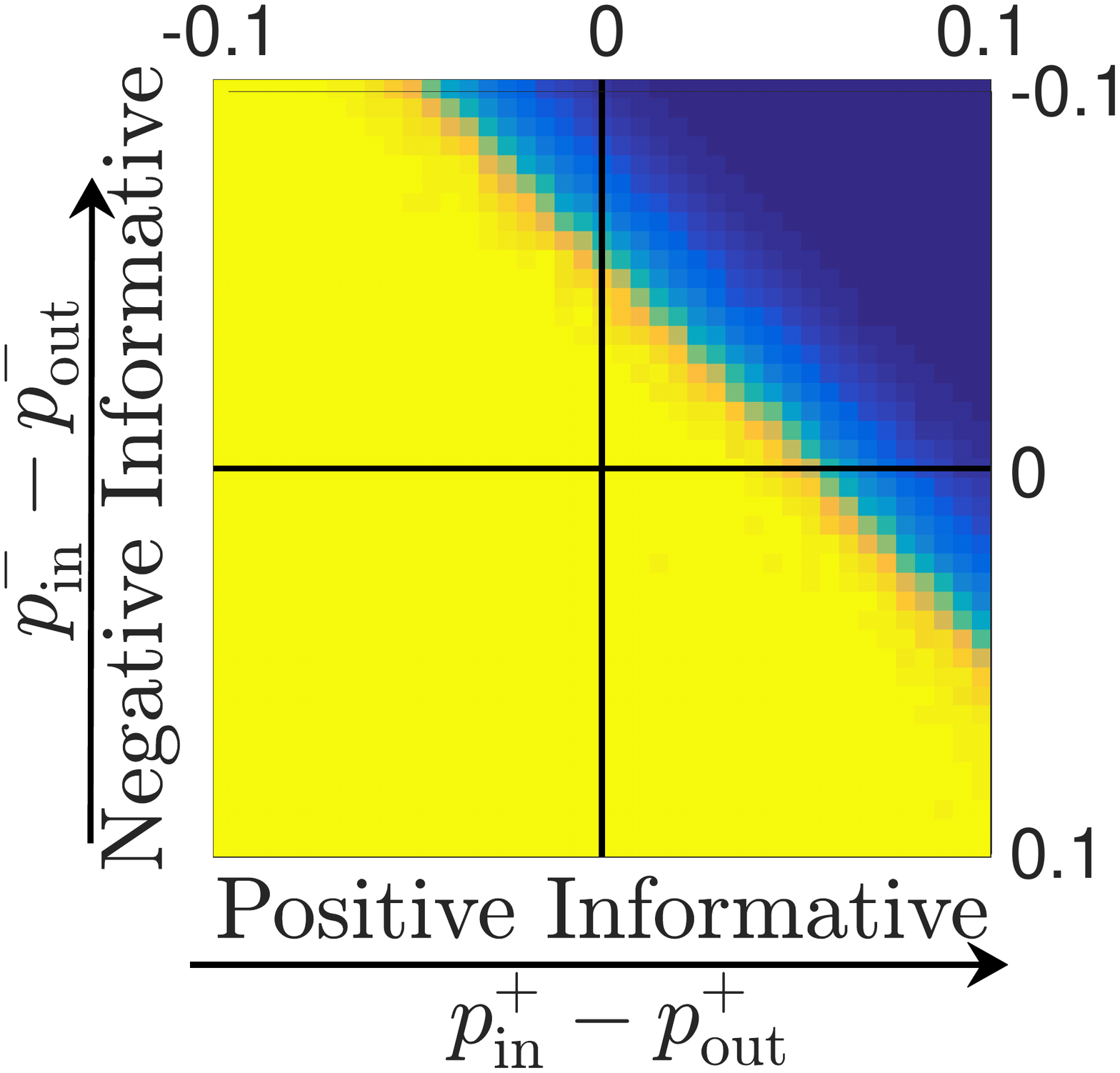}
 \caption{$H$\hspace{30pt} }
 \label{fig:SBM:sparse:BH}
 \end{subfigure}
  \hfill
 \begin{subfigure}[b]{0.19\textwidth}
 \color{white}
 \includegraphics[draft ,width=1\textwidth,trim=160 40 10 60]{random_bethe_hessian.pdf}
 \color{black}
 \label{fig:SBM:sparse:BHH}
 \end{subfigure}
 \hfill
 \vspace{-2.5pt}
\caption{
Performance visualization for two clusters for different parameters of the SBM.
\textbf{Top row}: In dark blue the settings where the signed power mean Laplacians $\mathcal L_p$ identify the ground truth clusters in expectation for the SBM, see Theorem~\ref{theorem:mp_in_expectation}, whereas yellow indicates failure.
\textbf{Middle/Bottom row}: average clustering error (dark blue: small error, yellow: large error) of the signed power mean Laplacian $L_p$ and $L_{GM},L_{SN},L_{BN},H$  for 50 samples from the
SBM. 
 \vspace{-10pt}
}
 \label{fig:SBM:inExpectation:GENERAL}
\end{figure*}

In Fig.~\ref{fig:SBM:inExpectation:GENERAL} we present the corresponding conditions for recovery in expectation for the cases $p\in\{-10,-1,0,1,10\}$.
We can visually verify that the larger the value of $p$ the smaller is the region where the conditions of Theorem~\ref{theorem:mp_in_expectation} hold. In particular, one can compare the change of conditions as one moves from the signed harmonic ($\mathcal{L}_{-1}$), geometric ($\mathcal{L}_0$), to the arithmetic ($\mathcal{L}_1$) mean Laplacians verifying the ordering described in~Corollary~\ref{corollary:contention}. Moreover, we clearly observe that $\mathcal{L}_{-10}$ and $\mathcal{L}_{10}$  are already quite close to the conditions necessary for the limit cases $\mathcal{L}_{-\infty}$ and $\mathcal{L}_{\infty}$, respectively.

In the  middle row of Fig.~\ref{fig:SBM:inExpectation:GENERAL} we show the average clustering error for each power mean Laplacian when sampling 50 times from the SSBM following the diagram presented in Fig.~\ref{fig:SBM:inExpectation:Diagram} and fixing the sparsity of $G^+$ and $G^-$ by setting $\pp+\qp=0.1$ and $\ppm+\qm=0.1$ with two clusters each of size 100. We observe that the areas with low clustering error qualitatively match the regions where in expectation we have recovery of the clusters. However, due to the sampling which can make one of the graphs $G^+$ and $G^-$ quite sparse and as we just consider graphs with 200 nodes, the area of low clustering error is smaller in comparison to the region of guaranteed recovery in expectation due to the sampling variance in the stochastic block model.

In the bottom row of Fig.~\ref{fig:SBM:inExpectation:GENERAL} we show the clustering error for the state of the art methods $L_{GM},L_{SN},L_{BM}$ and $H$. We can see that $L_{GM}$ presents a similar performance as the signed power mean  Laplacian $L_{0}$. 
The next Theorem shows that the geometric mean Laplacian $\mathcal L_{GM}$ and the limit $p\rightarrow 0$ of the signed power mean Laplacian agree in expectation for the SSBM. This implies via~Corollary~\ref{corollary:contention} that this operator is inferior to the signed power mean Laplacian for $p<0$. 
This is why we use in the experiments on real world graphs later on always $p<0$. 

\begin{theorem}\label{theorem:geometricMeanLaplacian}
Let $\mathcal L_{GM}=\mathcal{L}_{\sym}^+ \# \mathcal{Q}_\sym^-$  and $\mathcal L_{0}$ be the signed power mean Laplacian with $p\to 0$ of the expected signed graph. Then, $\mathcal L_{0} = \mathcal L_{GM}$.
\end{theorem}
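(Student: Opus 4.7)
The plan is to exploit the very strong symmetry of the expected matrices under the SSBM: in expectation every node has the same positive (resp.\ negative) degree, and the two expected weight matrices $\mathcal W^+$ and $\mathcal W^-$ share the same block structure. I will argue that these two facts force $\mathcal L^+_\sym$ and $\mathcal Q^-_\sym$ to commute, and once two positive definite matrices commute, every sensible matrix mean of them coincides with the joint functional calculus applied to their eigenvalues; in particular the $p\to 0$ limit of the matrix power mean and the matrix geometric mean $\#$ both reduce to $A^{1/2}B^{1/2}$.

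First I would compute the expected degrees. Since $|\mathcal C|=n/k$, every node has the same expected positive degree $d^+ = |\mathcal C|\,\pp + (n-|\mathcal C|)\,\qp$ and the same expected negative degree $d^- = |\mathcal C|\,\ppm + (n-|\mathcal C|)\,\qm$, so $\mathcal D^+ = d^+ I$ and $\mathcal D^- = d^- I$. Consequently
\[
\mathcal L^+_\sym = I - (d^+)^{-1}\mathcal W^+,
\qquad
\mathcal Q^-_\sym = I + (d^-)^{-1}\mathcal W^-.
\]
Next I observe that, by construction, $\mathcal W^+$ and $\mathcal W^-$ both lie in the two-dimensional commutative algebra spanned by the all-ones matrix $J$ and the block-diagonal matrix $B=\mathrm{blkdiag}(J_{|\mathcal C|},\dots,J_{|\mathcal C|})$: indeed $\mathcal W^\pm = (p_{\mathrm{in}}^\pm - p_{\mathrm{out}}^\pm)B + p_{\mathrm{out}}^\pm J$ (up to the zero diagonal correction, which is the same for both and so does not break the argument). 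Since $J$ and $B$ commute, $\mathcal L^+_\sym$ and $\mathcal Q^-_\sym$ commute as well and admit a common orthonormal eigenbasis, for instance an orthonormal completion of the informative vectors $\{\bchi_i\}_{i=1}^k$ together with a basis of the within-cluster mean-zero subspace.

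The second step is purely analytic. For any two commuting positive definite matrices $A,B$ with common spectral decomposition $A=\sum_i\alpha_i P_i$, $B=\sum_i\beta_i P_i$ one has
\[
M_p(A,B) = \sum_i m_p(\alpha_i,\beta_i)\, P_i.
\]
Letting $p\to 0$ in the scalar power mean gives $m_p(\alpha_i,\beta_i)\to\sqrt{\alpha_i\beta_i}$, so $\lim_{p\to 0}M_p(A,B)=\sum_i\sqrt{\alpha_i\beta_i}\,P_i = A^{1/2}B^{1/2}$. On the other hand, using $A^{1/2}B^{1/2}=B^{1/2}A^{1/2}$ in the defining formula of the geometric mean yields
\[
A\#B = A^{-1/2}(A^{1/2}BA^{1/2})^{1/2}A^{-1/2} = A^{-1/2}(AB)^{1/2}A^{-1/2} = A^{1/2}B^{1/2},
\]
so both operations give the same matrix. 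Applying this with $A=\mathcal L^+_\sym$ and $B=\mathcal Q^-_\sym$ yields $\mathcal L_0 = \mathcal L_{GM}$.

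The only mild obstacle is making the commutativity argument completely rigorous when $p<0$ requires the diagonal shift $\varepsilon I$: since the shift is a multiple of the identity it does not affect commutation, and it drops out in the $p\to 0$ limit. Everything else is a short computation once the block symmetry and the equal-degree property are in place.
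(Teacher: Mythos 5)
Your proof is correct and follows essentially the same route as the paper's: both arguments rest on the observation that $\mathcal L^+_\sym$ and $\mathcal Q^-_\sym$ commute in expectation (the paper gets this from the explicit common eigenvectors computed in the proof of Theorem~\ref{theorem:mp_in_expectation}, you get it from the two-dimensional commutative algebra spanned by $J$ and the block matrix $B$), and then both reduce $M_0$ and $\#$ to the scalar geometric mean $\sqrt{\alpha\beta}$ on each common eigenspace, the paper by citing Lemma~\ref{lemma:eigenvalues_and_eigenvectors_of_generalized_mean_V2} and a result of Mercado et al.\ (2016), you by a direct computation. One small algebra note: with the formula $A\#B=A^{-1/2}(A^{1/2}BA^{1/2})^{1/2}A^{-1/2}$ as printed in the paper, your chain $A^{-1/2}(AB)^{1/2}A^{-1/2}=A^{1/2}B^{1/2}$ does not actually follow (for commuting matrices it yields $A^{-1/2}B^{1/2}$); the identity you want holds for the standard geometric mean $A^{1/2}(A^{-1/2}BA^{-1/2})^{1/2}A^{1/2}$, which is clearly what is intended, so the conclusion stands.
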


 \myComment{
\begin{proof}
Please see Section~\ref{theorem:geometricMeanLaplacian-PROOF}.
\end{proof}
}

\begin{figure*}[!t]
\centering
\begin{subfigure}[]{0.24\linewidth}
\includegraphics[width=1\linewidth, clip,trim=130 40 170 40]{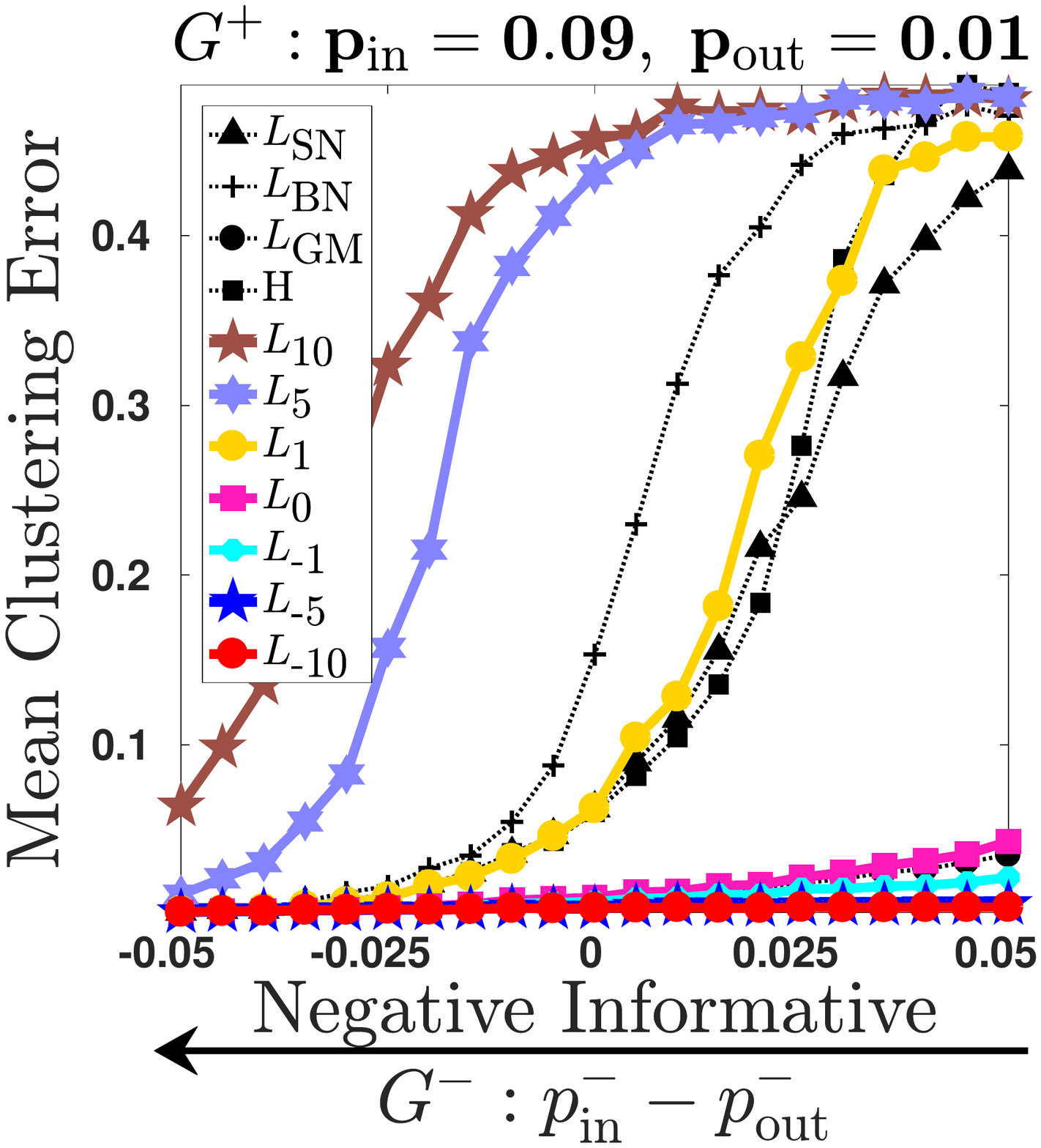}\hspace*{\fill}
 \vspace{-2.5pt}
\caption{}
\label{subfig:Wpos_fixed}
\end{subfigure}%
%
%
\begin{subfigure}[]{0.24\linewidth}
 \includegraphics[width=1\linewidth, clip,trim=130 40 170 40]{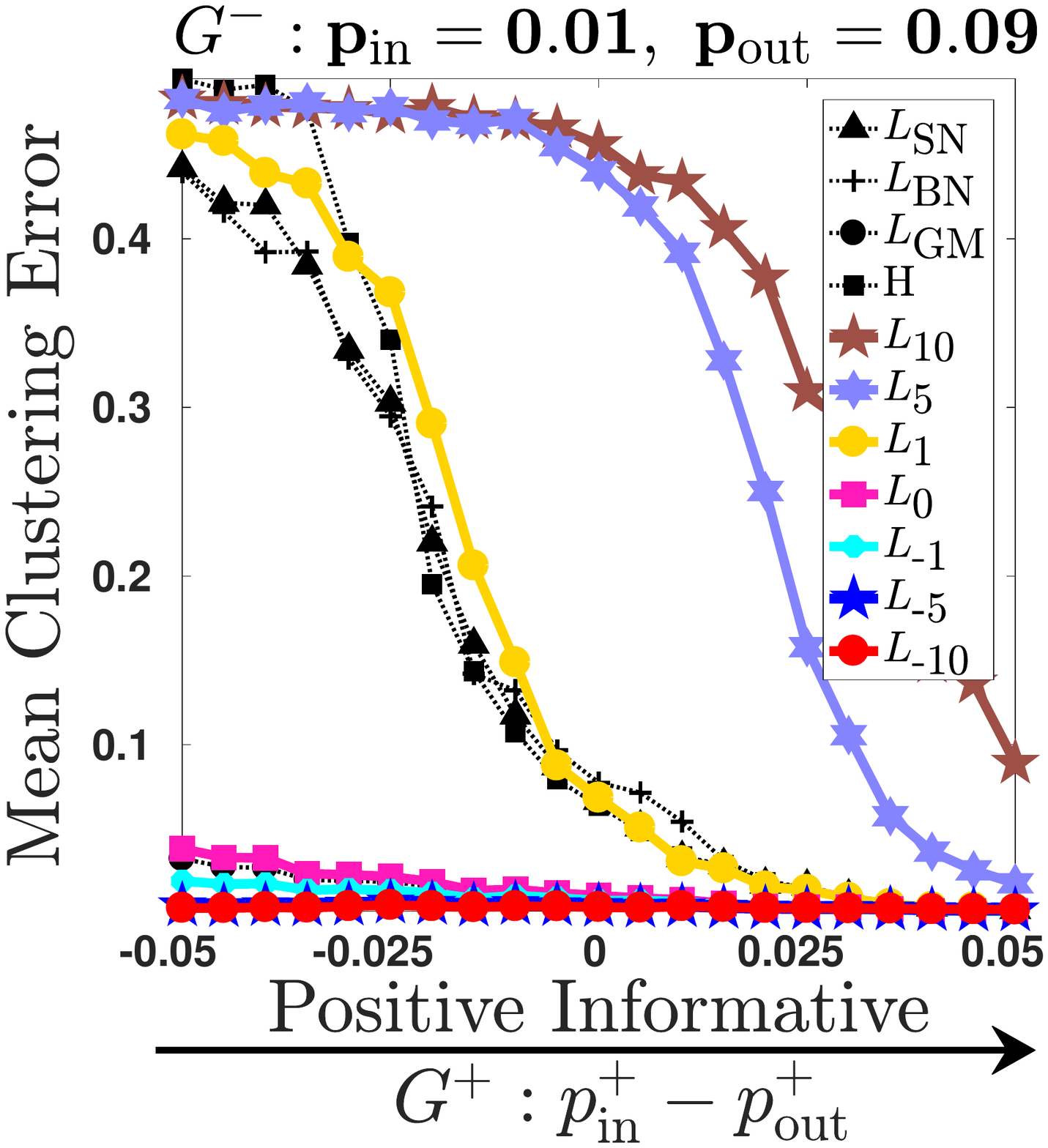}\hspace*{\fill}
 \vspace{-2.5pt}
\caption{}
\label{subfig:Wneg_fixed}
\end{subfigure}%
\hfill
\begin{minipage}{.5\textwidth}
\begin{subfigure}[b]{0.24\textwidth}
 \includegraphics[width=1\textwidth,trim=160 90 125 60,clip]{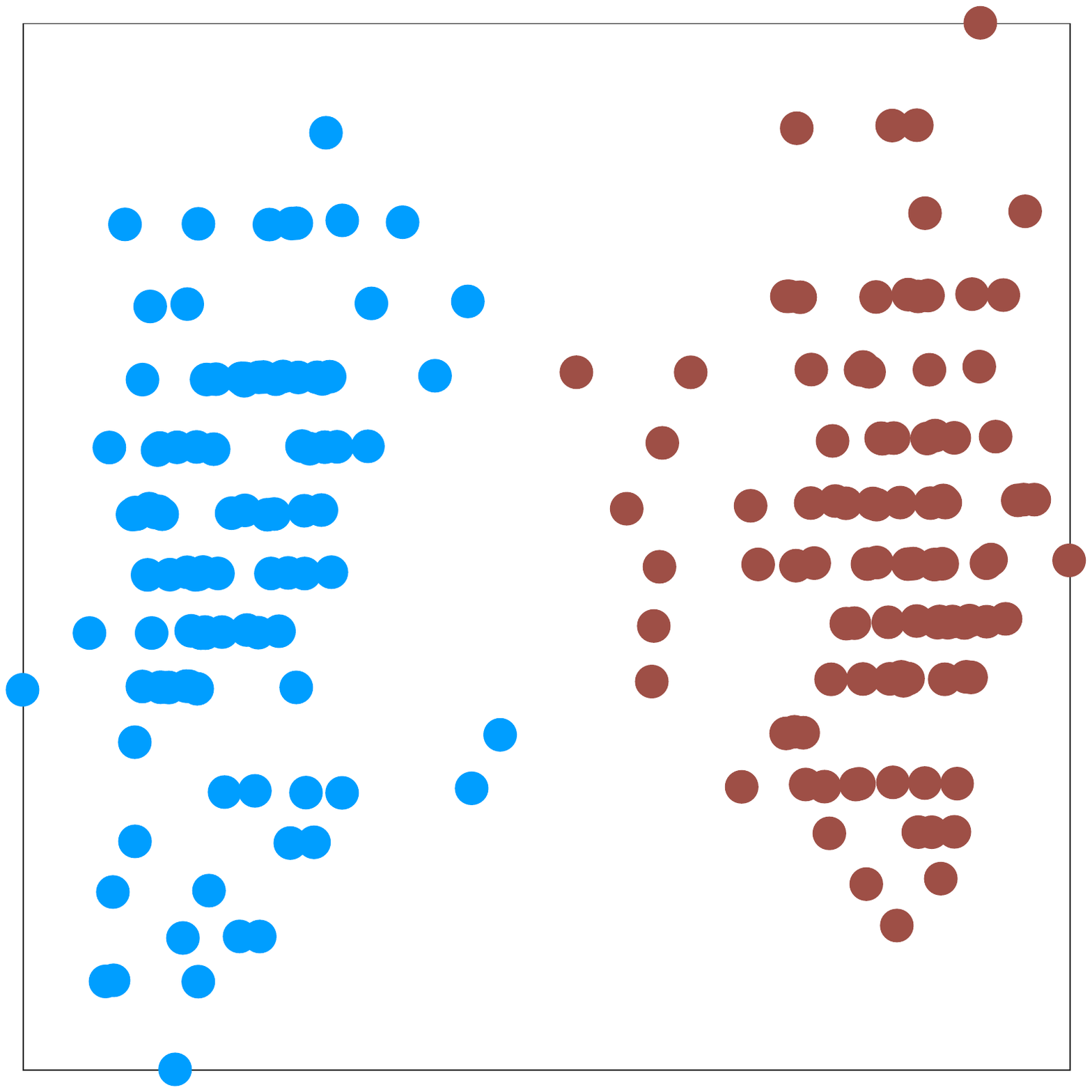}
 \caption{$L_{-10}$}
 \label{fig:SBM:sparse:Minus10:embedding}
 \end{subfigure}
 \begin{subfigure}[b]{0.24\textwidth}
 \includegraphics[width=1\textwidth,trim=160 90 125 60,clip]{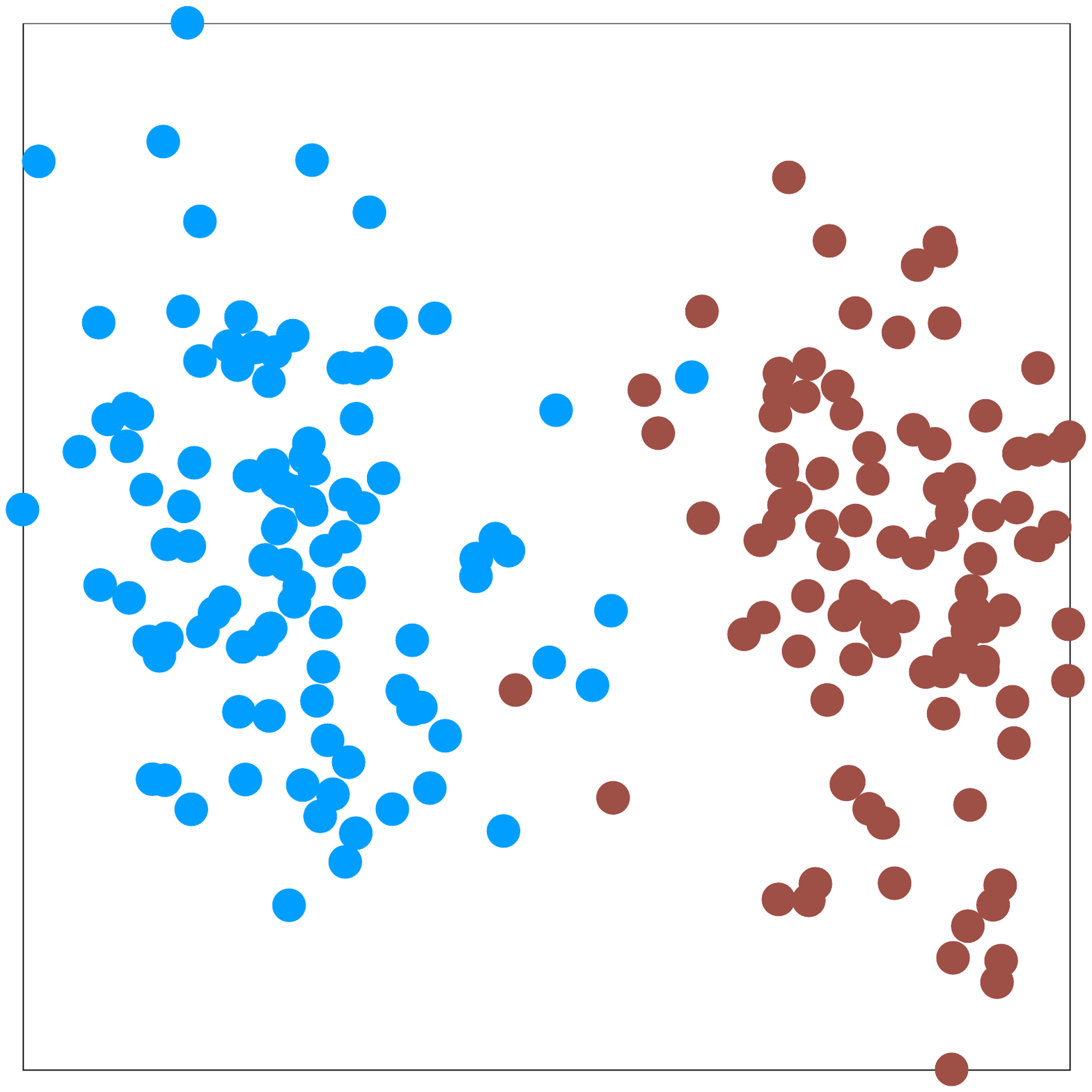}
 \caption{$L_{-1}$}
 \label{fig:SBM:sparse:Minus1:embedding}
 \end{subfigure}
 \begin{subfigure}[b]{0.24\textwidth}
 \includegraphics[width=1\textwidth,trim=160 90 125 60,clip]{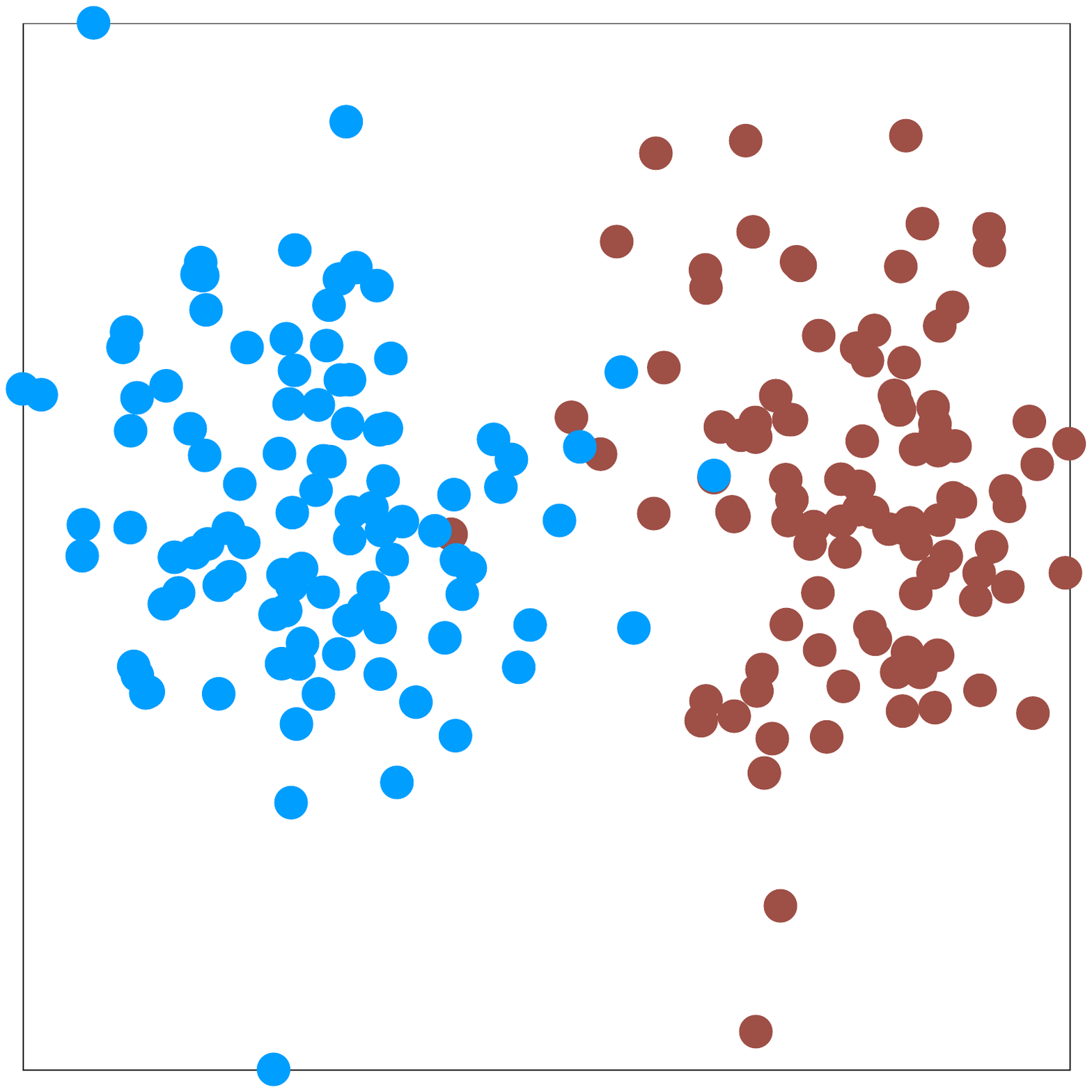}
 \caption{$L_{0}$}
 \label{fig:SBM:sparse:Zero:embedding}
 \end{subfigure}
 \begin{subfigure}[b]{0.24\textwidth}
 \includegraphics[width=1\textwidth,trim=160 90 125 60,clip]{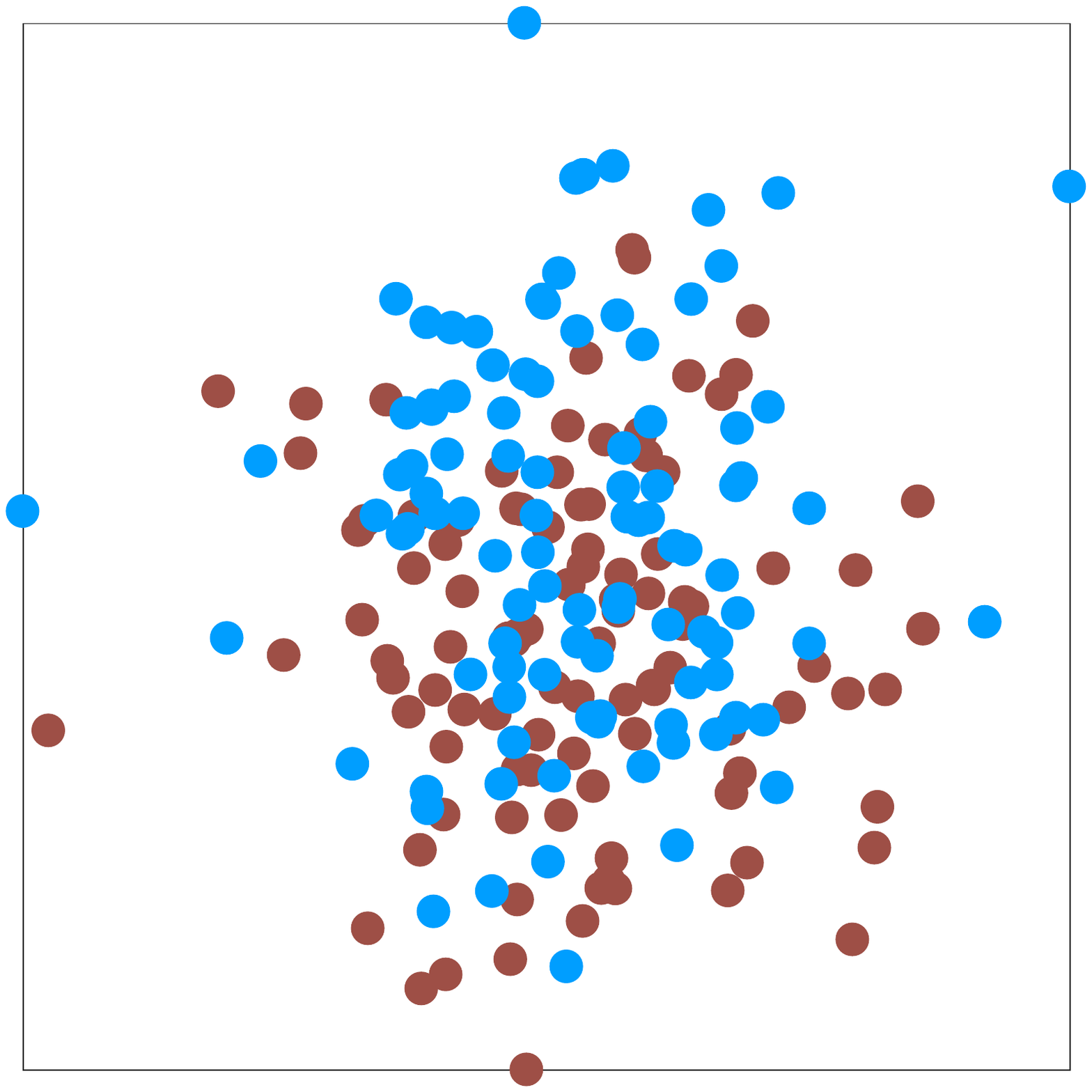}
 \caption{$L_{1}$}
 \label{fig:SBM:sparse:Plus1:embedding}
 \end{subfigure}
 \\
  \begin{subfigure}[b]{0.24\textwidth}
 \includegraphics[width=1\textwidth,trim=160 90 125 60,clip]{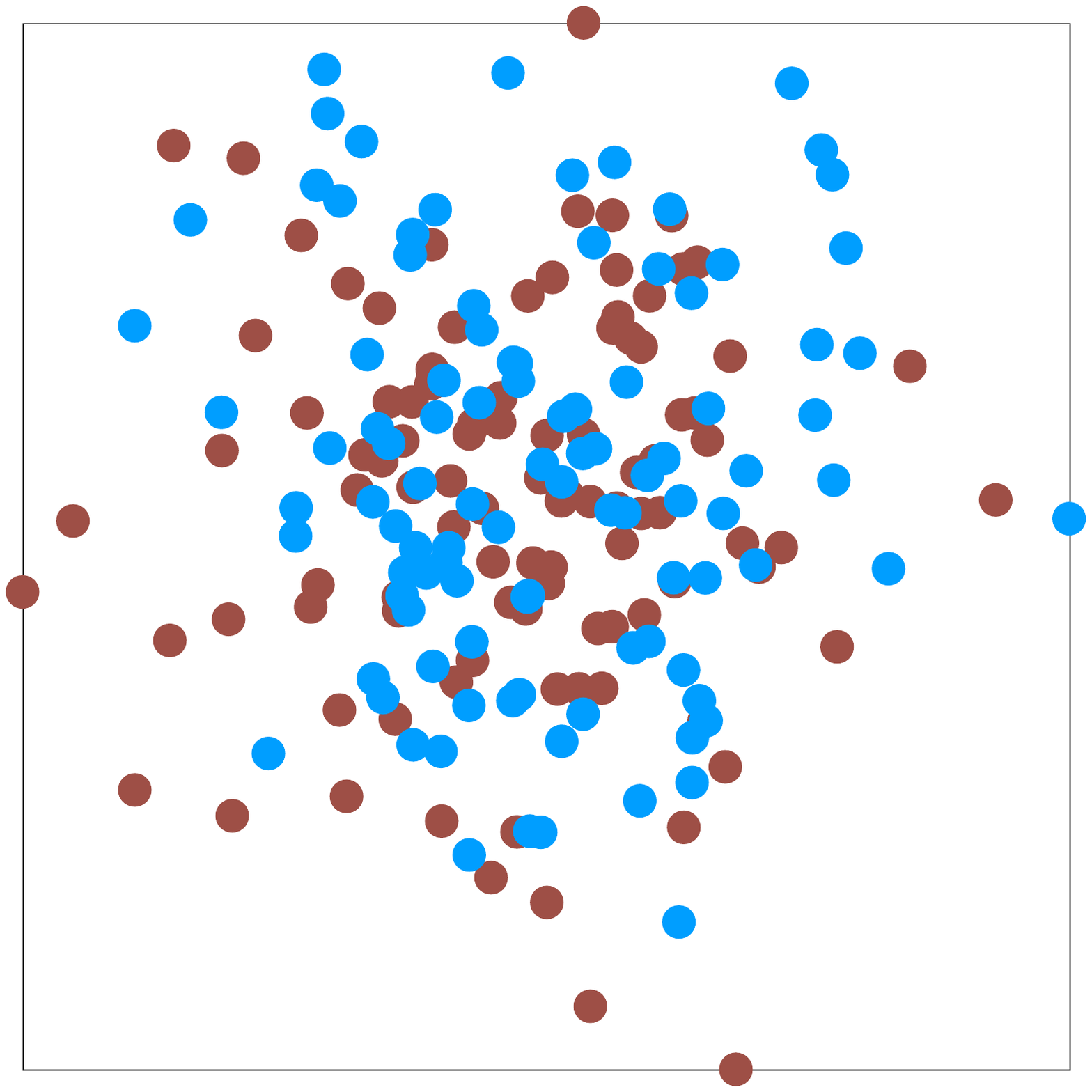}
 \caption{$L_{10}$}
 \label{fig:SBM:sparse:Plus10:embedding}
 \end{subfigure}
 \begin{subfigure}[b]{0.24\textwidth}
 \includegraphics[width=1\textwidth,trim=160 90 125 60,clip]{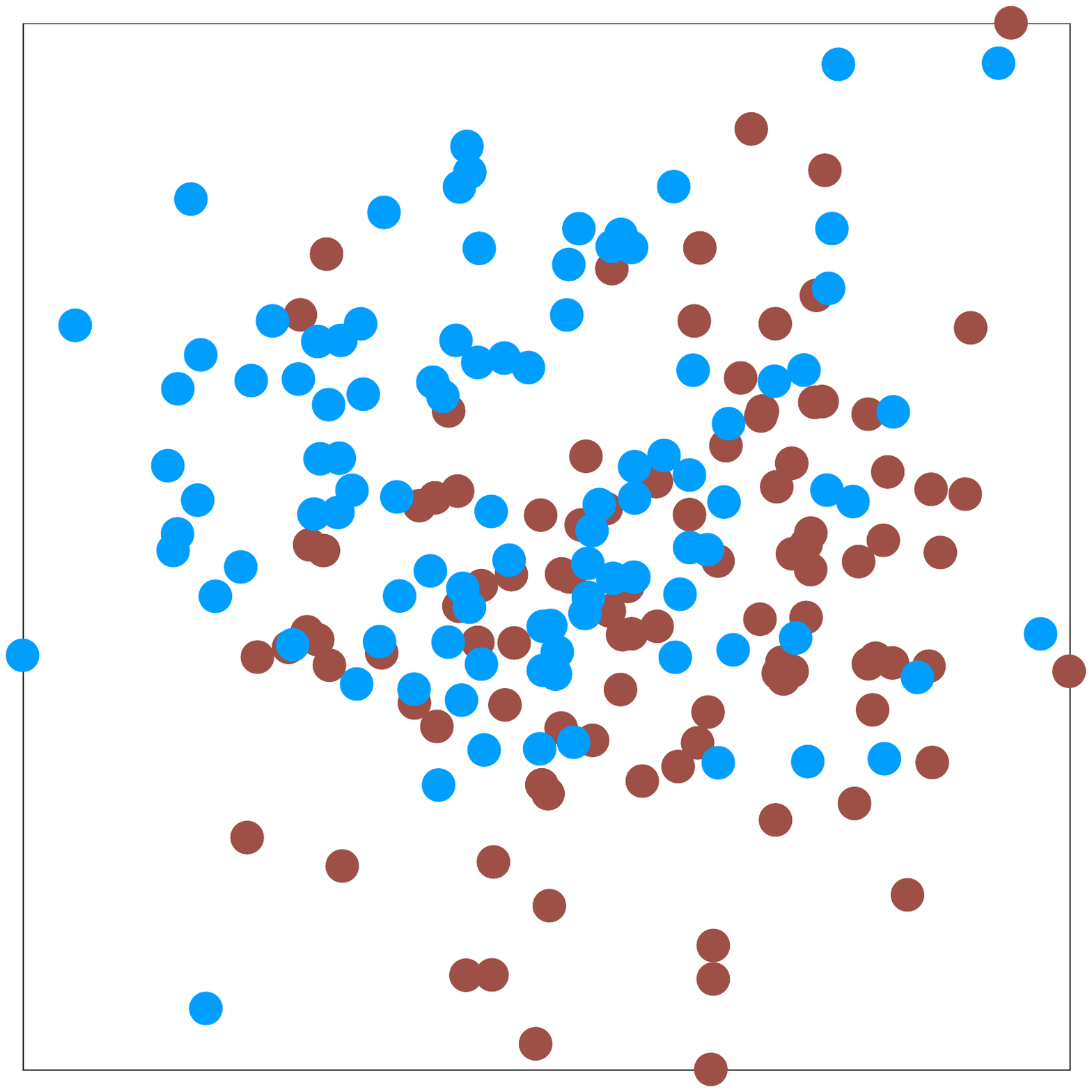}
 \caption{$L_{SN}$}
 \label{fig:SBM:sparse:LSN:embedding}
 \end{subfigure}
 \begin{subfigure}[b]{0.24\textwidth}
 \includegraphics[width=1\textwidth,trim=160 90 125 60,clip]{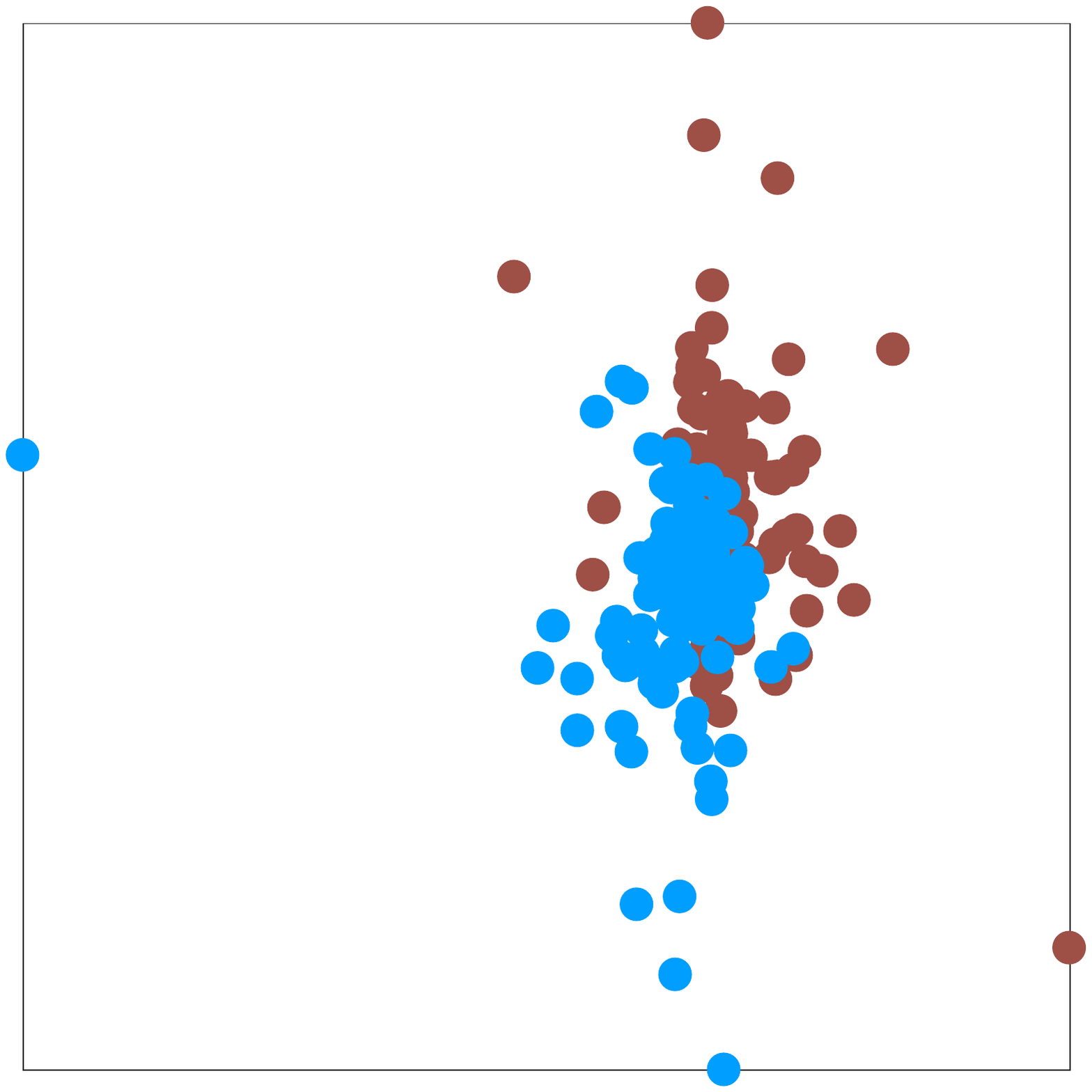}
 \caption{$L_{BN}$}
 \label{fig:SBM:sparse:LBN:embedding}
 \end{subfigure}
 \begin{subfigure}[b]{0.24\textwidth}
 \includegraphics[width=1\textwidth,trim=160 90 125 60,clip]{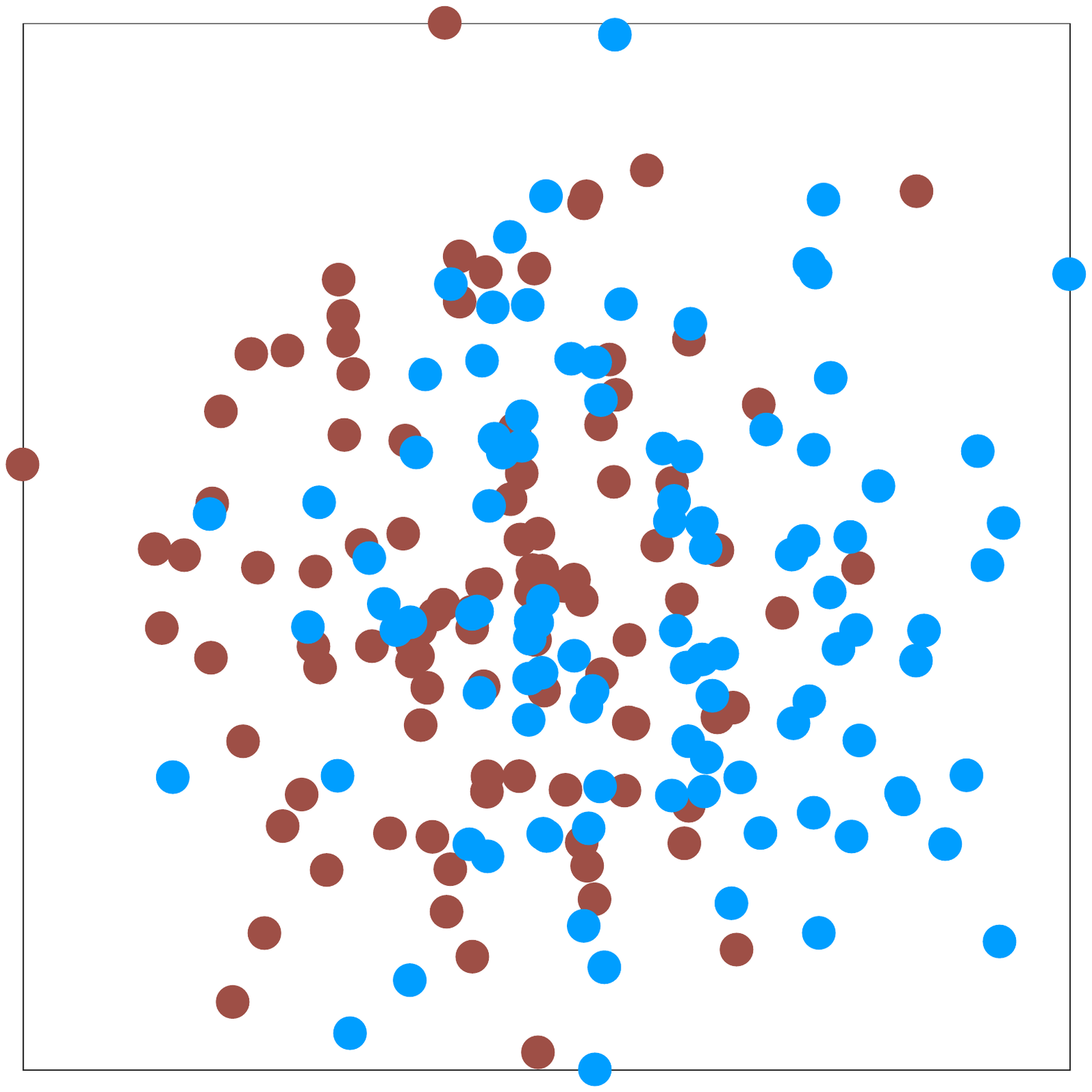}
 \caption{$H$}
 \label{fig:SBM:sparse:H:embedding}
 \end{subfigure}
  \end{minipage}
\vspace{-10pt}
\caption{
\textbf{Left}: Mean clustering error under the SSBM,
with two clusters of size 100 and 50 runs.
In Fig.~\ref{subfig:Wpos_fixed}: $G^+$ is informative, i.e. assortative with $\pp=0.09$ and $\qp=0.01$.
In Fig.~\ref{subfig:Wneg_fixed}: $G^-$ is informative, i.e. disassortative with $\ppm=0.01$ and $\qm=0.09$. 
\textbf{Right}: Node embeddings induced by eigenvectors of different signed Laplacians for a random graph drawn from SSBM for 2 clusters of size 100, 
$\pp=0.025,\qp=0.075,\ppm=0.01,\qm=0.09$.
\vspace{-1.5em}}
\label{fig:Wpos-Wneg-fixed}
\end{figure*}

In the bottom row of Fig.~\ref{fig:SBM:inExpectation:GENERAL} we can observe that $L_{SN},L_{BN}$ and $H$ present a similar behaviour to the arithmetic mean Laplacian $L_1$.
A quick computation shows that for the case where both $G^+,G^-$ have the same node degree in expectation,
the conditions of Theorem~\ref{theorem:mp_in_expectation} for $\mathcal{L}_1$ reduce to $\ppm+\qp < \pp+\qm$. It turns out that this condition is also required by $\mathcal L_{SN},\mathcal L_{BN}$ and $\mathcal H$, as the following shows.
\begin{theorem}[\cite{Mercado:2016:Geometric}]\label{theorem:signedLaplacians}
Let $\mathcal L_{BN}$ and $\mathcal L_{SN}$ be the balanced normalized Laplacian and signed normalized Laplacian of the expected signed graph. The following statements are equivalent:
 \begin{itemize}[topsep=-3pt,leftmargin=*]\setlength\itemsep{-3pt}
   \item $\{ \boldsymbol \chi_i \}_{i=1}^{k}$ are the eigenvectors corresponding to the $k$-smallest eigenvalues of $\mathcal{L}_{BN}$.
   \item $\{ \boldsymbol \chi_i \}_{i=1}^{k}$ are the eigenvectors corresponding to the $k$-smallest eigenvalues of $\mathcal{L}_{SN}$.
   \item  inequalities $\ppm+(k-1)\qm < \pp+(k-1)\qp$ and $\ppm+\qp < \pp+\qm$ hold.
 \end{itemize}
 \end{theorem}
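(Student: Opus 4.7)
The plan is to exploit the highly symmetric block structure of the expected adjacency matrices: since every node has the same degree in expectation, $\mathcal{D}^+ = \frac{n}{k}(\pp + (k-1)\qp)\,I$ and $\mathcal{D}^- = \frac{n}{k}(\ppm + (k-1)\qm)\,I$, so $\bar{\mathcal{D}}$ is a scalar multiple of the identity. This immediately implies that $\mathcal{L}_{BN}$ and $\mathcal{L}_{BR}$ share eigenvectors (their eigenvalues differ only by a global rescaling), and the same holds for $\mathcal{L}_{SN}$ and $\mathcal{L}_{SR}$. Moreover, from $\mathcal{L}_{SR} = \mathcal{L}_{BR} + \mathcal{D}^-$ and the fact that $\mathcal{D}^-$ is scalar on the identity, $\mathcal{L}_{BR}$ and $\mathcal{L}_{SR}$ share eigenvectors with the same relative order. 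So the three equivalences collapse to characterizing when $\{\boldsymbol\chi_i\}_{i=1}^k$ are the $k$ smallest eigenvectors of $\mathcal{L}_{BR}$.

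Next I would diagonalize $\mathcal{L}_{BR}$ on the direct-sum decomposition $\mathbb{R}^n = V_{\mathrm{clust}} \oplus V_{\mathrm{clust}}^\perp$, where $V_{\mathrm{clust}} = \mathrm{span}\{\mathbf{1}_{\mathcal{C}_1},\ldots,\mathbf{1}_{\mathcal{C}_k}\}$. For any $\mathbf{v}\in V_{\mathrm{clust}}^\perp$, i.e.\ any vector whose restriction to each cluster sums to zero, we have $\mathcal{W}^+\mathbf{v}=\mathcal{W}^-\mathbf{v}=0$, so $\mathcal{L}_{BR}\mathbf{v} = \frac{n}{k}(\pp+(k-1)\qp)\,\mathbf{v}$. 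This produces $n-k$ eigenvalues all equal to $\frac{n}{k}(\pp+(k-1)\qp)$. On $V_{\mathrm{clust}}$ the operator acts via a $k\times k$ matrix that is invariant under permutations of the clusters, and by direct substitution $\mathcal{L}_{BR}\boldsymbol\chi_1 = \frac{n}{k}(\ppm+(k-1)\qm)\,\boldsymbol\chi_1$, while for $i\geq 2$ a short calculation using the cluster sums of $\boldsymbol\chi_i$ (which vanish off $\mathcal{C}_i$ and $\overline{\mathcal{C}_i}$) gives the common eigenvalue $\mathcal{L}_{BR}\boldsymbol\chi_i = \frac{n}{k}(k\qp + \ppm - \qm)\,\boldsymbol\chi_i$.

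With these three eigenvalues in hand, the characterization reduces to requiring that the two cluster-constant eigenvalues fall below the bulk eigenvalue. Comparing $\frac{n}{k}(\ppm+(k-1)\qm) < \frac{n}{k}(\pp+(k-1)\qp)$ yields $\ppm + (k-1)\qm < \pp + (k-1)\qp$, while $\frac{n}{k}(k\qp + \ppm - \qm) < \frac{n}{k}(\pp + (k-1)\qp)$ simplifies to $\ppm + \qp < \pp + \qm$, which are exactly the two inequalities in the statement. The equivalence for $\mathcal{L}_{BN}$ then follows by the rescaling argument above, and the equivalence for $\mathcal{L}_{SN}$ follows because $\mathcal{L}_{SR}$ shifts every eigenvalue by the same constant $\frac{n}{k}(\ppm+(k-1)\qm)$, preserving the ordering.

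The only delicate point is checking that $\boldsymbol\chi_1$ together with the $k-1$ vectors $\boldsymbol\chi_i$ ($i\geq 2$) actually span $V_{\mathrm{clust}}$ and are mutually orthogonal (or at least linearly independent), so that they really do exhaust the non-bulk eigenspace; this is straightforward from the definition $\boldsymbol\chi_i = (k-1)\mathbf{1}_{\mathcal{C}_i} - \mathbf{1}_{\overline{\mathcal{C}_i}}$. The main obstacle is really just bookkeeping: being careful about whether the comparisons are strict or allow equality (which determines whether the informative eigenvectors are uniquely identified or only lie within a degenerate eigenspace), and handling the degeneracy among $\boldsymbol\chi_2,\ldots,\boldsymbol\chi_k$, which all share the same eigenvalue so any orthonormal basis of their span would also work.
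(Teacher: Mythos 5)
Your proof is correct: the reduction of $\mathcal{L}_{BN}$ and $\mathcal{L}_{SN}$ to $\mathcal{L}_{BR}$ via the scalar expected degree matrices, the three eigenvalues $d^-$, $\frac{n}{k}(k\qp+\ppm-\qm)$, and $d^+=\frac{n}{k}(\pp+(k-1)\qp)$, and the two resulting strict inequalities all check out. The paper itself does not prove this statement (it is imported from Mercado et al.\ 2016), but your argument is exactly the same machinery the paper uses in its appendix proof of Theorem~\ref{theorem:mp_in_expectation} — the $\boldsymbol\chi_i$ as common eigenvectors of $\mathcal{W}^+$ and $\mathcal{W}^-$ with constant expected degrees — so nothing further is needed.
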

 
Finally, we present conditions in expectation for the Bethe Hessian to identify the ground truth clustering.
\begin{theorem}\label{theorem:bethe_hessian_V1}
 Let $\mathcal{H}$ be the Bethe Hessian of the expected signed graph. Then 
 $\{ \boldsymbol \chi_i \}_{i=2}^{k}$ are the eigenvectors corresponding to the $(k-1)$-smallest negative eigenvalues of $\mathcal{H}$ if and only if the following conditions hold:
 \begin{enumerate}[topsep=-3pt,leftmargin=*]\setlength\itemsep{-3pt}
  \item  $\max\{ 0,\frac{2(d^+ + d^-)-1}{\sqrt{d^+ + d^-}\abs{\mathcal{C}}}\}  < (\pp-\qp) - (\ppm-\qm)$
  \item  $\qp<\qm$ 
 \end{enumerate}
 Moreover, for the limit case $\abs{V}\rightarrow\infty$ the first condition reduces to
 $\ppm + \qp < \pp + \qm$.
\end{theorem}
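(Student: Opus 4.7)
The plan is to diagonalise the expected Bethe Hessian $\mathcal H$ in closed form using the common SBM structure of $\mathcal W^+$ and $\mathcal W^-$, read off the full spectrum, and convert the statement about smallest negative eigenvalues into inequalities between these eigenvalues. The starting observation is that in the SSBM every node has the same expected total degree $\overline{\mathcal D}_{ii}=d^++d^-$, so in expectation the scalar $\alpha$ appearing in the Bethe Hessian formula equals $d^++d^-$ and $\overline{\mathcal D}=\alpha I$. Substituting into the definition gives
\begin{equation*}
\mathcal H=(\alpha-1)I+\alpha I-\sqrt{\alpha}(\mathcal W^+-\mathcal W^-)=(2\alpha-1)I-\sqrt{\alpha}(\mathcal W^+-\mathcal W^-),
\end{equation*}
so the eigenvectors of $\mathcal H$ coincide with those of $\mathcal W^+-\mathcal W^-$ and the eigenvalues are related by $\mu\mapsto 2\alpha-1-\sqrt{\alpha}\mu$.

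For the diagonalisation I write both expected adjacency matrices in the common form $\mathcal W^{\pm}=(p^{\pm}_{\mathrm{in}}-p^{\pm}_{\mathrm{out}})P+p^{\pm}_{\mathrm{out}}J$ with $P=\sum_m\one_{\mathcal C_m}\one_{\mathcal C_m}^T$ the cluster-indicator projection; any diagonal correction adds a scalar multiple of $I$ and does not affect orderings. Using $P\one=|\mathcal C|\one$, $J\one=n\one$, $P\bchi_i=|\mathcal C|\bchi_i$ and $J\bchi_i=0$ for $i\ge 2$, together with $Pv=Jv=0$ for $v$ orthogonal to $\sp\{\bchi_1,\ldots,\bchi_k\}$, one obtains exactly three eigenvalue groups of $\mathcal H$:
\begin{equation*}
\begin{aligned}
\sigma_1 &= 2\alpha-1-\sqrt{\alpha}(d^+-d^-), \quad \text{on } \bchi_1=\one,\\
\sigma_c &= 2\alpha-1-\sqrt{\alpha}((\pp-\qp)-(\ppm-\qm))|\mathcal C|, \quad \text{on each } \bchi_i,\ i\ge 2,\\
\sigma_\perp &= 2\alpha-1, \quad \text{on the orthogonal complement}.
\end{aligned}
\end{equation*}

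Since all $\bchi_i$ with $i\ge 2$ share the same eigenvalue $\sigma_c$, requiring that they realise the $(k{-}1)$ smallest negative eigenvalues of $\mathcal H$ is equivalent to $\sigma_c<0$, $\sigma_c<\sigma_\perp$ and $\sigma_c<\sigma_1$. The first two combine into $\sigma_c<\min\{0,2\alpha-1\}$ which, after dividing by $\sqrt{\alpha}|\mathcal C|>0$, rearranges into condition~1. For the third, substituting $d^+-d^-=|\mathcal C|[(\pp-\ppm)+(k-1)(\qp-\qm)]$ and $((\pp-\qp)-(\ppm-\qm))|\mathcal C|=|\mathcal C|[(\pp-\ppm)-(\qp-\qm)]$, the common $(\pp-\ppm)$ contributions cancel and the inequality reduces to $k(\qp-\qm)<0$, i.e.\ $\qp<\qm$, which is condition~2. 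The ``only if'' direction follows by reversing these equivalences; in particular, even when $\sigma_1\ge 0$, so that $\sigma_c<\sigma_1$ holds automatically from $\sigma_c<0$, the same algebraic rearrangement still forces $\qp<\qm$.

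In the dense limit $|V|\to\infty$ with fixed probabilities, $\alpha$ grows linearly in $|\mathcal C|$, so $(2\alpha-1)/(\sqrt{\alpha}|\mathcal C|)\sim\sqrt{\alpha}/|\mathcal C|\to 0$ and condition~1 collapses to $(\pp-\qp)-(\ppm-\qm)>0$, equivalently $\ppm+\qp<\pp+\qm$. The main obstacle I foresee is only bookkeeping: the $\max\{0,\cdot\}$ in condition~1 is precisely what combines $\sigma_c<0$ and $\sigma_c<\sigma_\perp=2\alpha-1$, and one must verify that both inequalities have to be imposed separately because the binding constraint flips depending on whether $\alpha\ge 1/2$ or $\alpha<1/2$.
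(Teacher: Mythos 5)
Your proposal is correct and follows essentially the same route as the paper's proof: both diagonalise $\mathcal H=(2\widehat\alpha-1)I-\sqrt{\widehat\alpha}\,(\mathcal W^+-\mathcal W^-)$ on the common eigenbasis $\{\bchi_i\}$ of $\mathcal W^\pm$, obtain the same three eigenvalue groups, and translate $\sigma_c<0$, $\sigma_c<\sigma_\perp$, $\sigma_c<\sigma_1$ into the $\max$-condition and $\qp<\qm$, with the same vanishing of $(2\widehat\alpha-1)/(\sqrt{\widehat\alpha}\abs{\mathcal C})$ in the limit. The only cosmetic difference is your use of the projector decomposition $\mathcal W^\pm=(p^\pm_{\mathrm{in}}-p^\pm_{\mathrm{out}})P+p^\pm_{\mathrm{out}}J$ in place of the paper's direct computation of $\mathcal W^\pm\bchi_i$.
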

Please see the supplementary material for a further analysis in expectation.
We can observe that 
the 
first condition in Theorem~\ref{theorem:bethe_hessian_V1} is related to conditions of $\mathcal L_1$ and $\mathcal L_{SN},\mathcal L_{BN}$ through the inequality $\ppm + \qp < \pp + \qm$. This explains why the performance of the Bethe Hessian $H$ resembles the one of arithmetic Laplacians $L_{SN},L_{BN},L_1$. 
A more detailed comparison between the conditions of Theorems \ref{theorem:mp_in_expectation}, \ref{theorem:signedLaplacians} and \ref{theorem:bethe_hessian_V1} is detailed in the supplementary material.

Note that our analysis in expectation considers the dense regime where the average degree increases with the number of nodes and hence our results in expectation are verified under the SSBM setting here considered, showing that $L_{SN},L_{BN},L_1,H$ have a similar performance. 
However, in the case of sparse graphs, it is known that the Bethe Hessian is asymptotically optimal in the information-theoretic transition limit~\cite{saade:2014,saade:2015}.
Please see the supplementary material for an evaluation under the CBM.

We now zoom in on a particular setting of Fig.~\ref{fig:SBM:inExpectation:GENERAL}. Namely, the case where $G^+$ (resp.$G^-$) is fixed to be informative, whereas the remaining graph transitions from informative to uninformative. The corresponding results are in Fig.~\ref{fig:Wpos-Wneg-fixed}.
In Fig.~\ref{subfig:Wpos_fixed} we consider the case where $G^+$ is informative with parameters $\pp=0.09$ and $\qp=0.01$ (this corresponds to $\pp-\qp=0.08$ in Fig.~\ref{fig:SBM:inExpectation:GENERAL} ), and $G^-$ goes from being informative ($\ppm<\qm$) to non-informative ($\ppm\geq\qm$). We confirm that the power mean Laplacian $L_p$ presents smaller clustering errors for smaller values of $p$. Moreover, it is clear that in the case $p<0$, $L_p$ is able to recover clusters even in the case where $G^-$ is not informative, whereas for $p>0$, $L_p$ requires both $G^+$ and $G^-$ to be informative. We observe that the smallest (resp. largest) clustering errors correspond to $L_{-10}$ (resp. $L_{10}$), corroborating Corollary~\ref{corollary:contention}.
Further, we can observe that $L_{GM}$ and $L_{0}$ have a similar performance, as well as $L_{SN},L_{BN},L_1,H$, as observed before, confirming Theorem~\ref{theorem:geometricMeanLaplacian} and Theorem~\ref{theorem:bethe_hessian_V1}, respectively.
\newline
In Fig.~\ref{subfig:Wneg_fixed} similar observations hold for the case where $G^-$ is informative with parameters $\ppm=0.01$ and $\qm=0.09$ (this corresponds to $\ppm-\qm=-0.08$ in Fig.~\ref{fig:SBM:inExpectation:GENERAL}), and $G^+$ goes from being non-informative ($\pp\leq\qp$) to informative ($\pp>\qp$). 
Within this setting we present the eigenvector-based node embeddings of each method for the case 
$\pp=0.025,\qp=0.075,\ppm=0.01,\qm=0.09$,
in right hand side of Fig.~\ref{fig:Wpos-Wneg-fixed}.
For $L_{-10},L_{-1},L_{0}$ the embeddings split the clusters properly, whereas 
remaining embeddings are not informative, verifying the 
effectivity
of $L_p$ with 
$p< 0$.
\begin{figure*}[!htb]
 \centering
 \hfill
 \begin{subfigure}[b]{0.13\textwidth}
 \includegraphics[angle=-90,width=1\textwidth,trim=50 130 75 110]{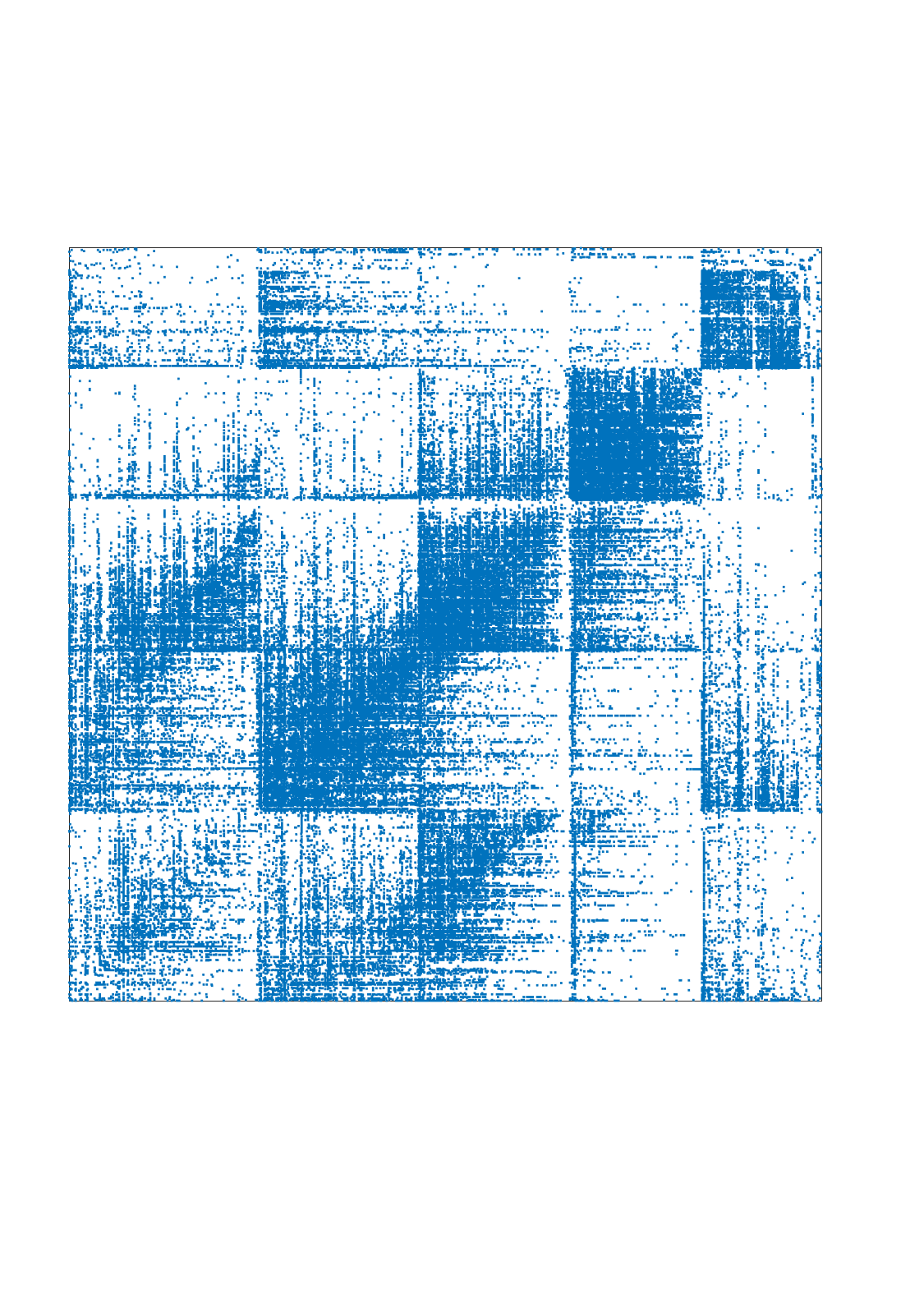}
 \label{fig:wikipedia:blue:Minus10}
 \end{subfigure}
 \hfill
 \begin{subfigure}[b]{0.13\textwidth}
 \includegraphics[angle=-90,width=1\textwidth,trim=50 130 75 110]{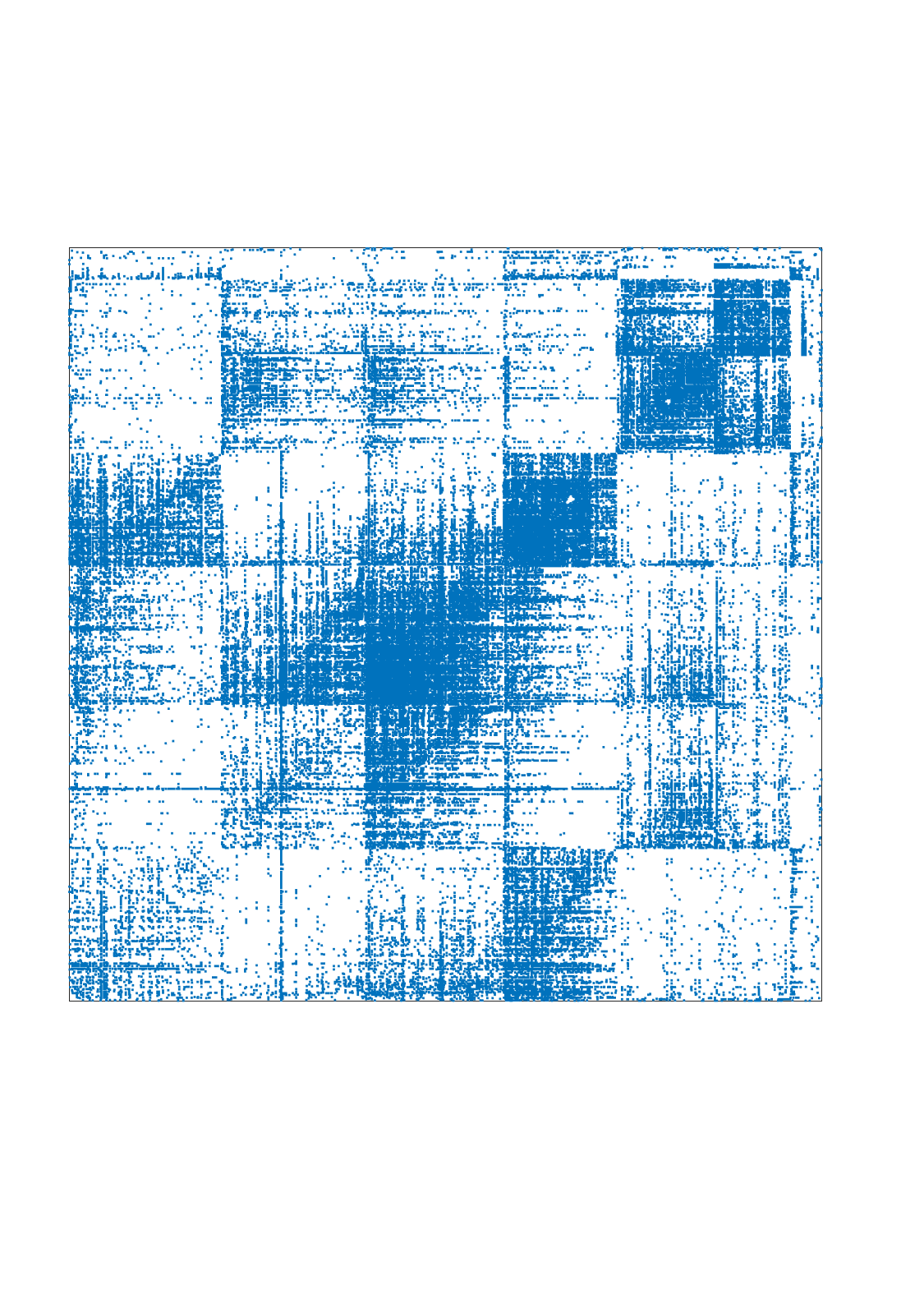}
 \label{fig:wikipedia:blue:Minus5}
 \end{subfigure}
 \hfill
 \begin{subfigure}[b]{0.13\textwidth}
 \includegraphics[angle=-90,width=1\textwidth,trim=50 130 75 110]{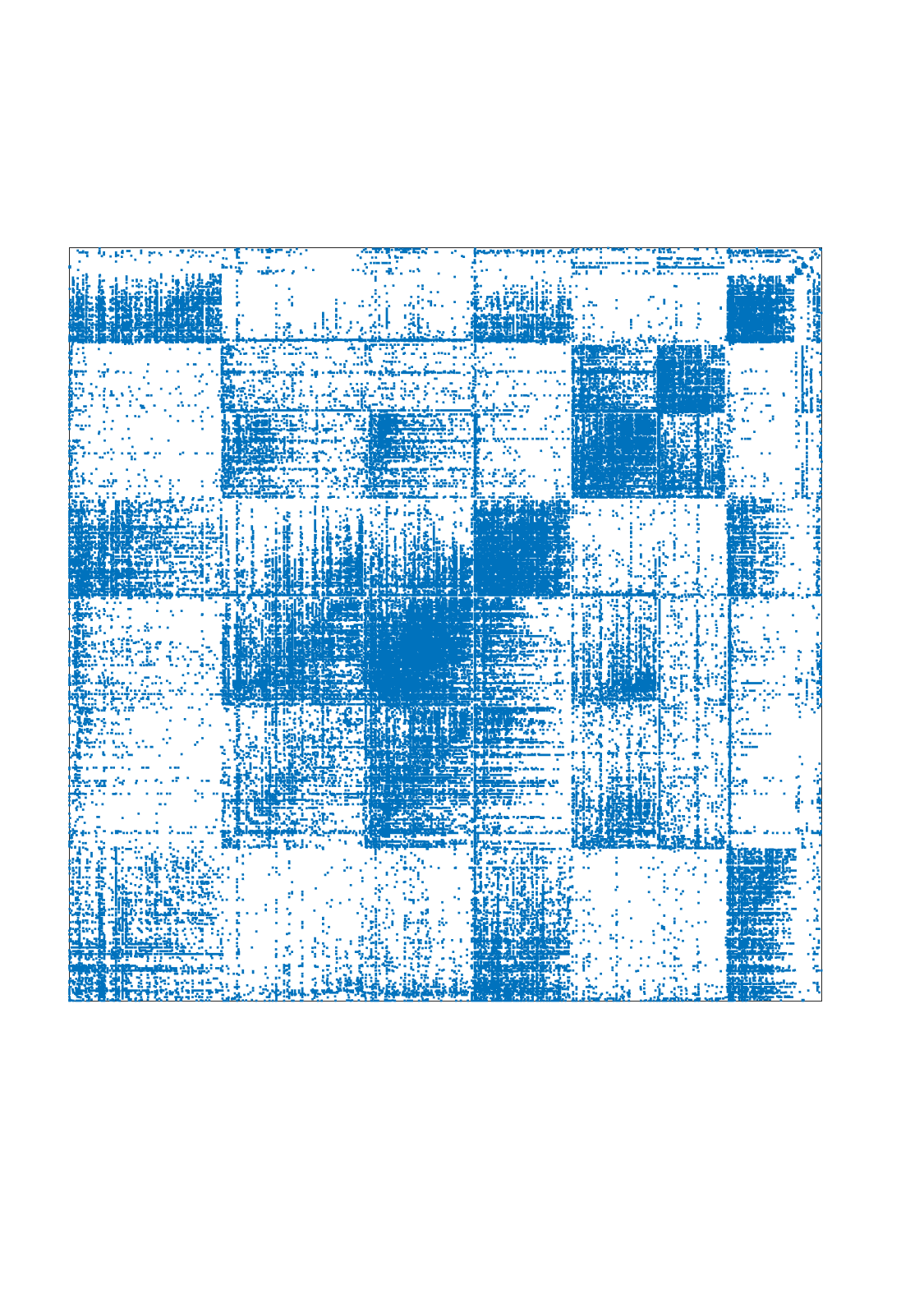}
 \label{fig:wikipedia:blue:Minus2}
 \end{subfigure}
 \hfill
 \begin{subfigure}[b]{0.13\textwidth}
 \includegraphics[angle=-90,width=1\textwidth,trim=50 130 75 110]{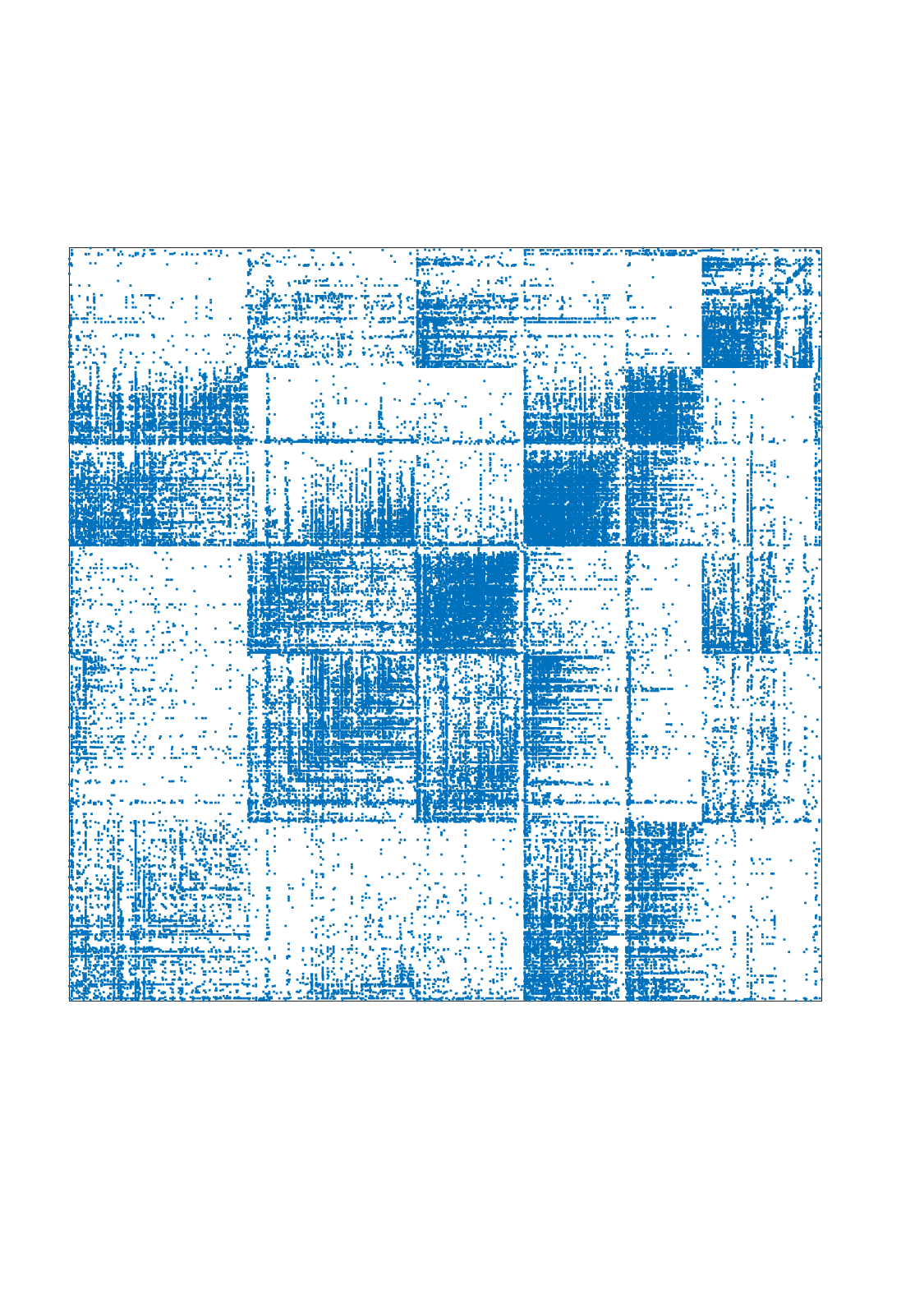}
 \label{fig:wikipedia:blue:Minus1}
 \end{subfigure}
 \hfill
  \begin{subfigure}[b]{0.13\textwidth}
 \includegraphics[angle=-90,width=1\textwidth,trim=50 130 75 110]{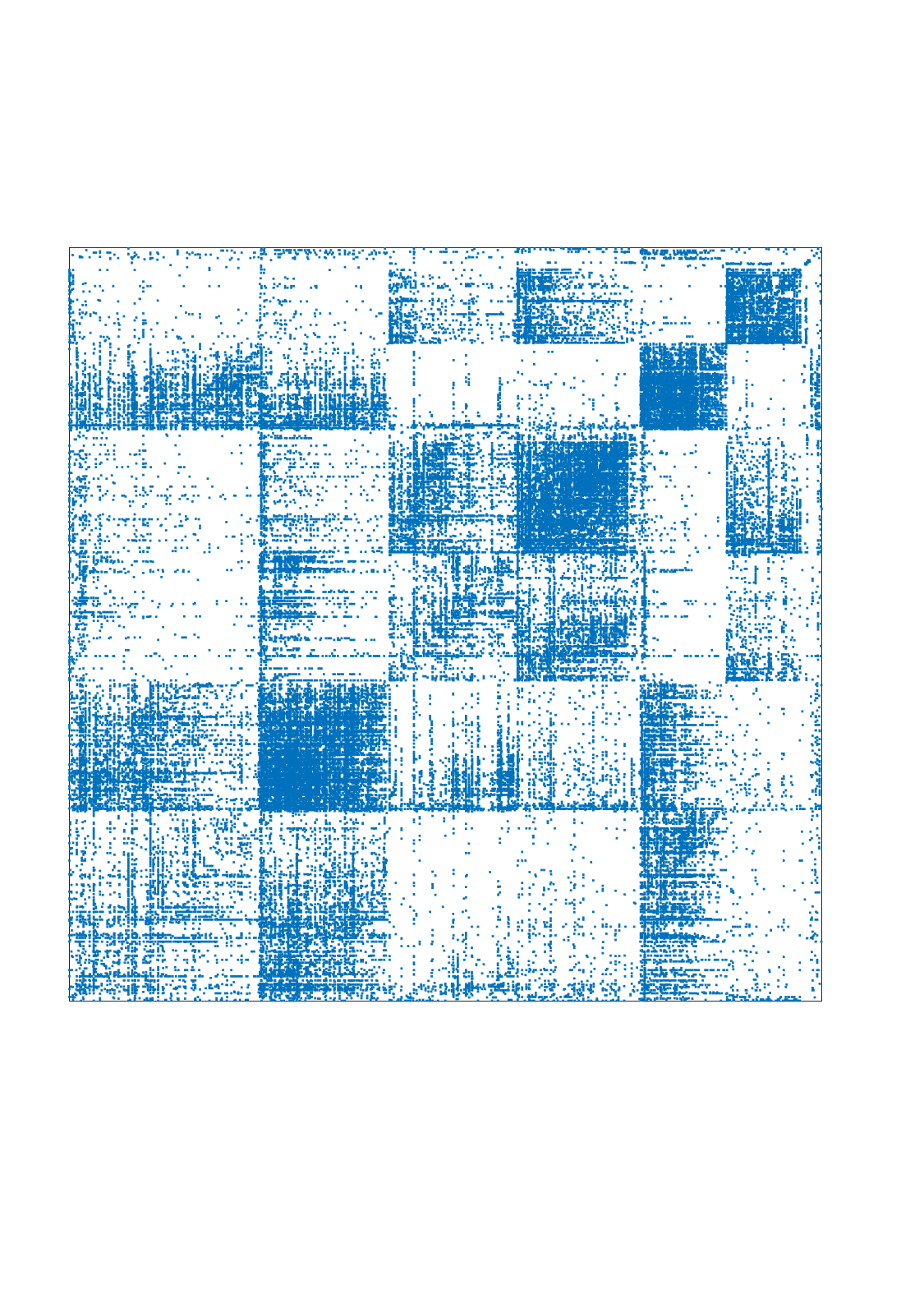}
 \label{fig:wikipedia:blue:zero}
 \end{subfigure}
 \hfill
  \begin{subfigure}[b]{0.13\textwidth}
 \includegraphics[angle=-90,width=1\textwidth,trim=50 130 75 110]{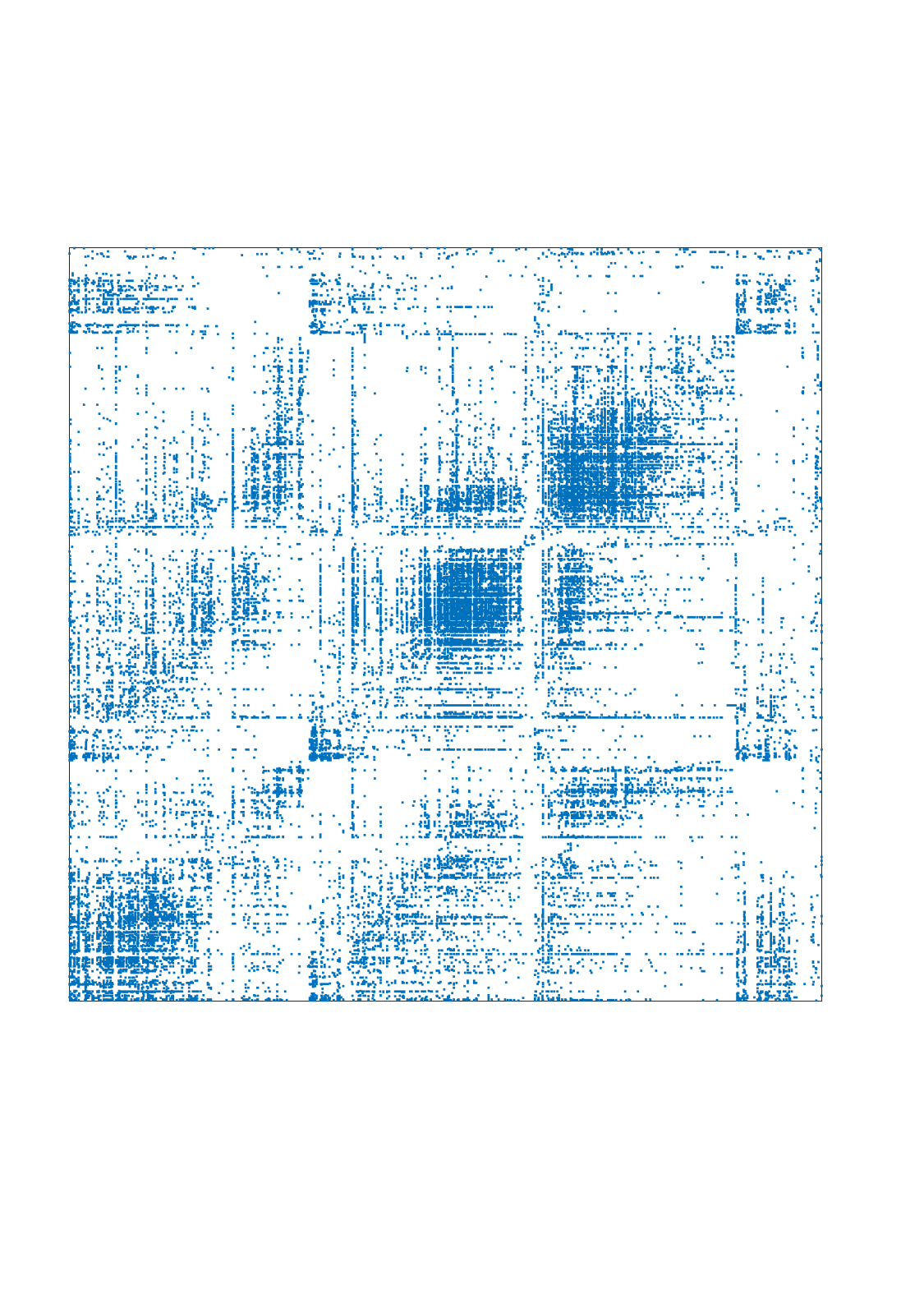}
 \label{fig:wikipedia:blue:Plus1}
 \end{subfigure}
 \hfill
 \hfill
\\
 \vspace{25pt}
 \hfill
 \begin{subfigure}[b]{0.13\textwidth}
 \includegraphics[angle=-90,width=1\textwidth,trim=50 130 75 110]{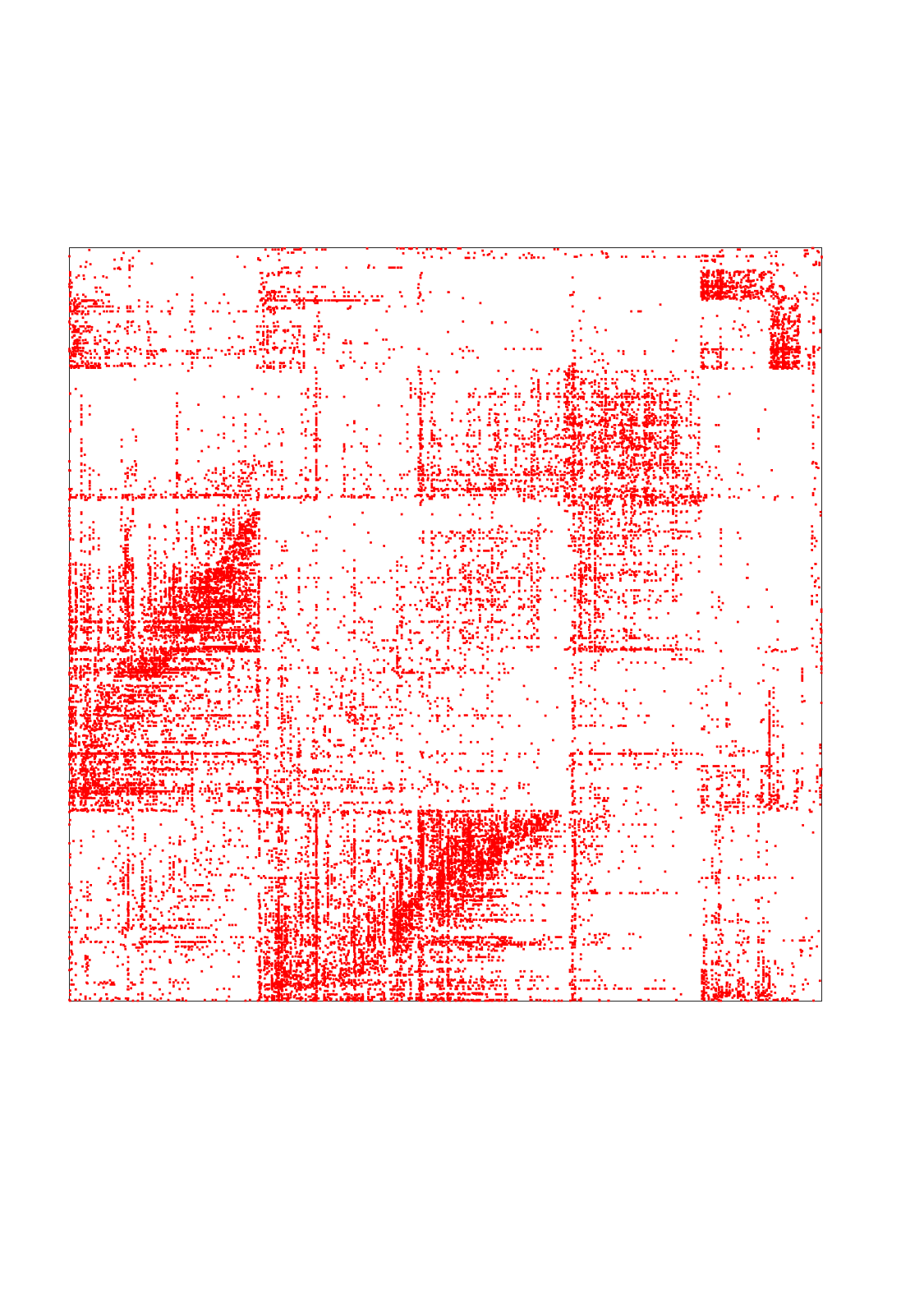}
 \caption{$L_{-10}$ }
 \label{fig:wikipedia:red:Minus10}
 \end{subfigure}
 \hfill
 \begin{subfigure}[b]{0.13\textwidth}
 \includegraphics[angle=-90,width=1\textwidth,trim=50 130 75 110]{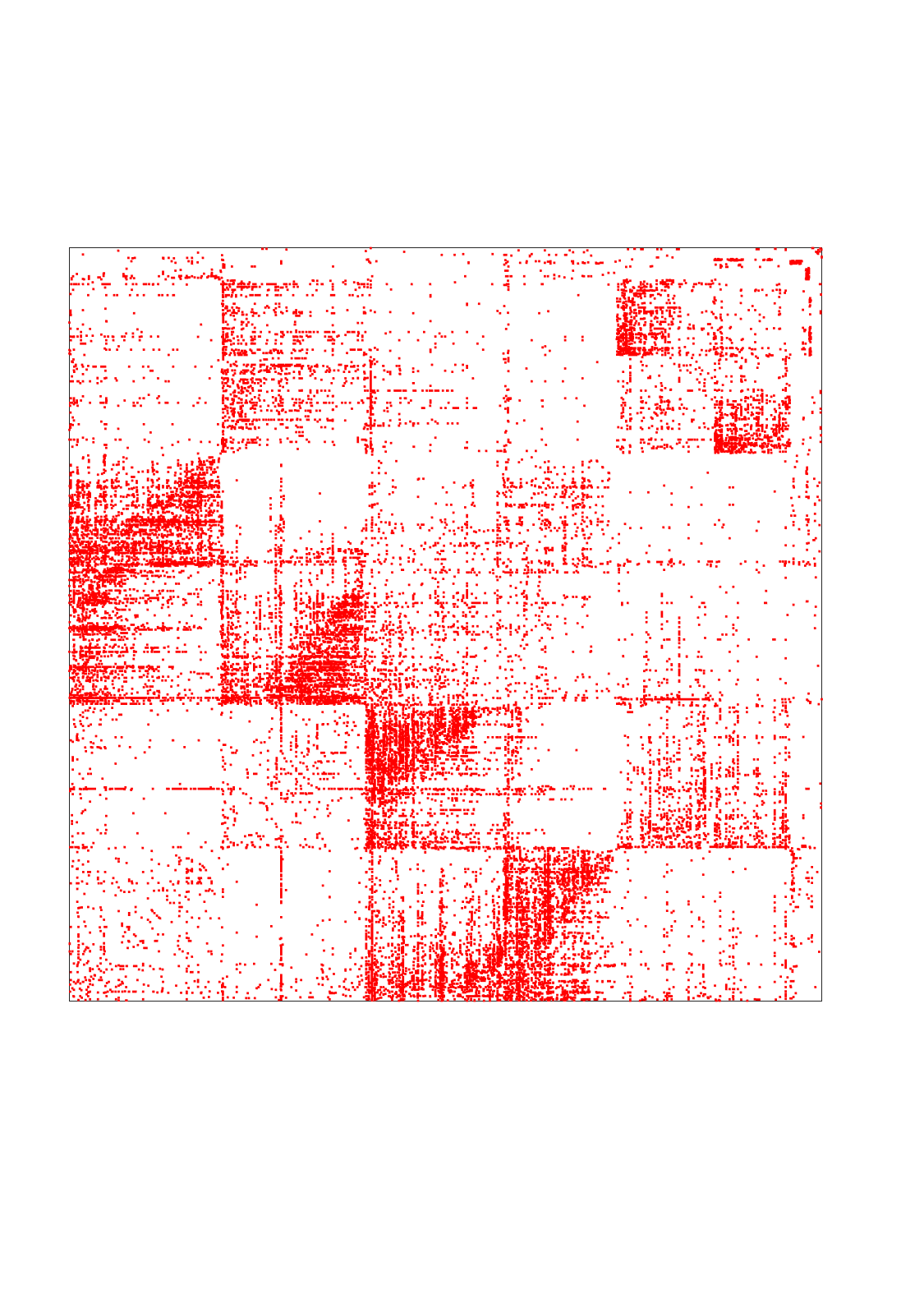}
 \caption{$L_{-5}$ }
 \label{fig:wikipedia:red:Minus5}
 \end{subfigure}
 \hfill
 \begin{subfigure}[b]{0.13\textwidth}
 \includegraphics[angle=-90,width=1\textwidth,trim=50 130 75 110]{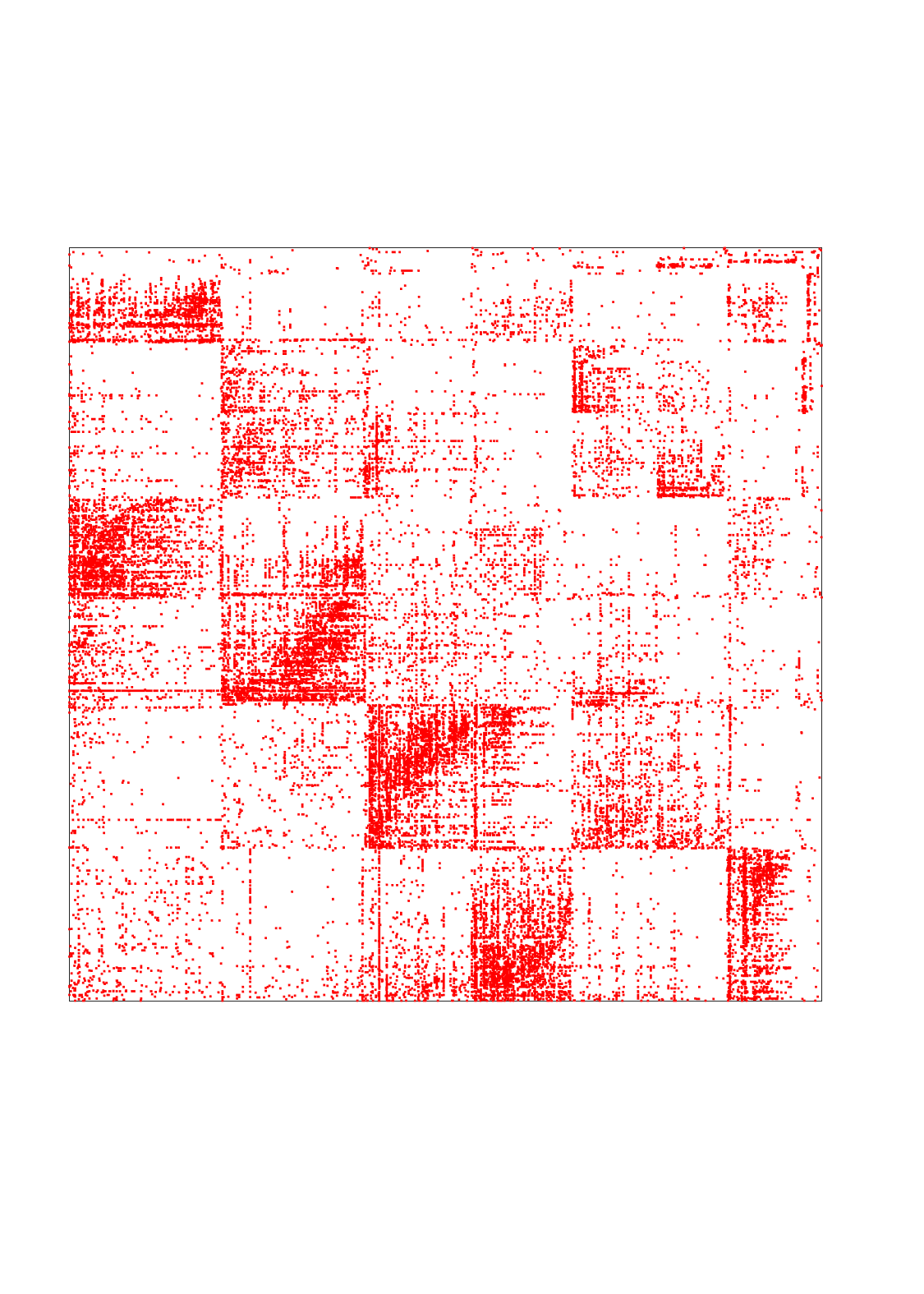}
 \caption{$L_{-2}$ }
 \label{fig:wikipedia:red:Minus2}
 \end{subfigure}
 \hfill
 \begin{subfigure}[b]{0.13\textwidth}
 \includegraphics[angle=-90,width=1\textwidth,trim=50 130 75 110]{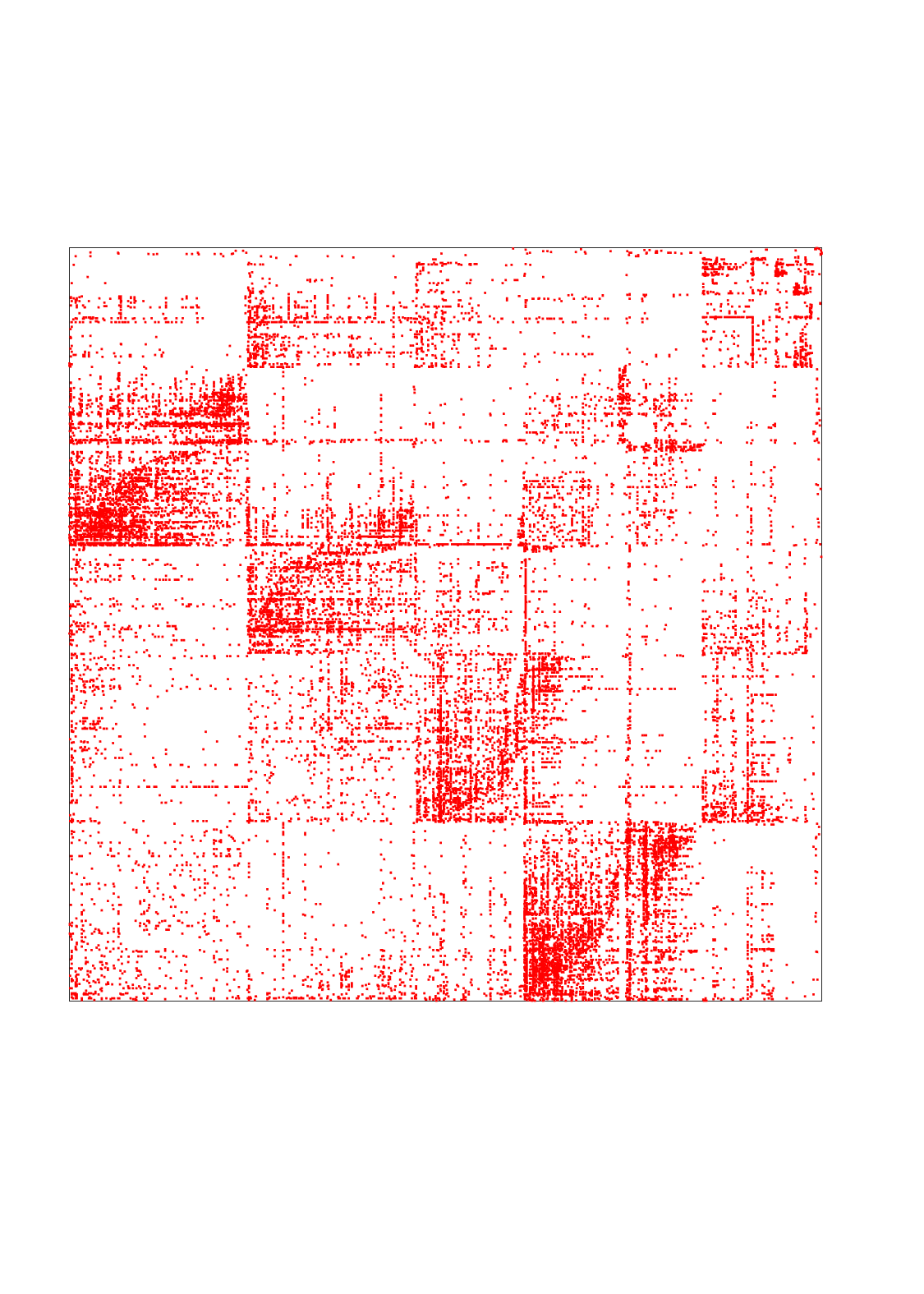}
 \caption{$L_{-1}$  }
 \label{fig:wikipedia:red:Minus1}
 \end{subfigure}
 \hfill
  \begin{subfigure}[b]{0.13\textwidth}
 \includegraphics[angle=-90,width=1\textwidth,trim=50 130 75 110]{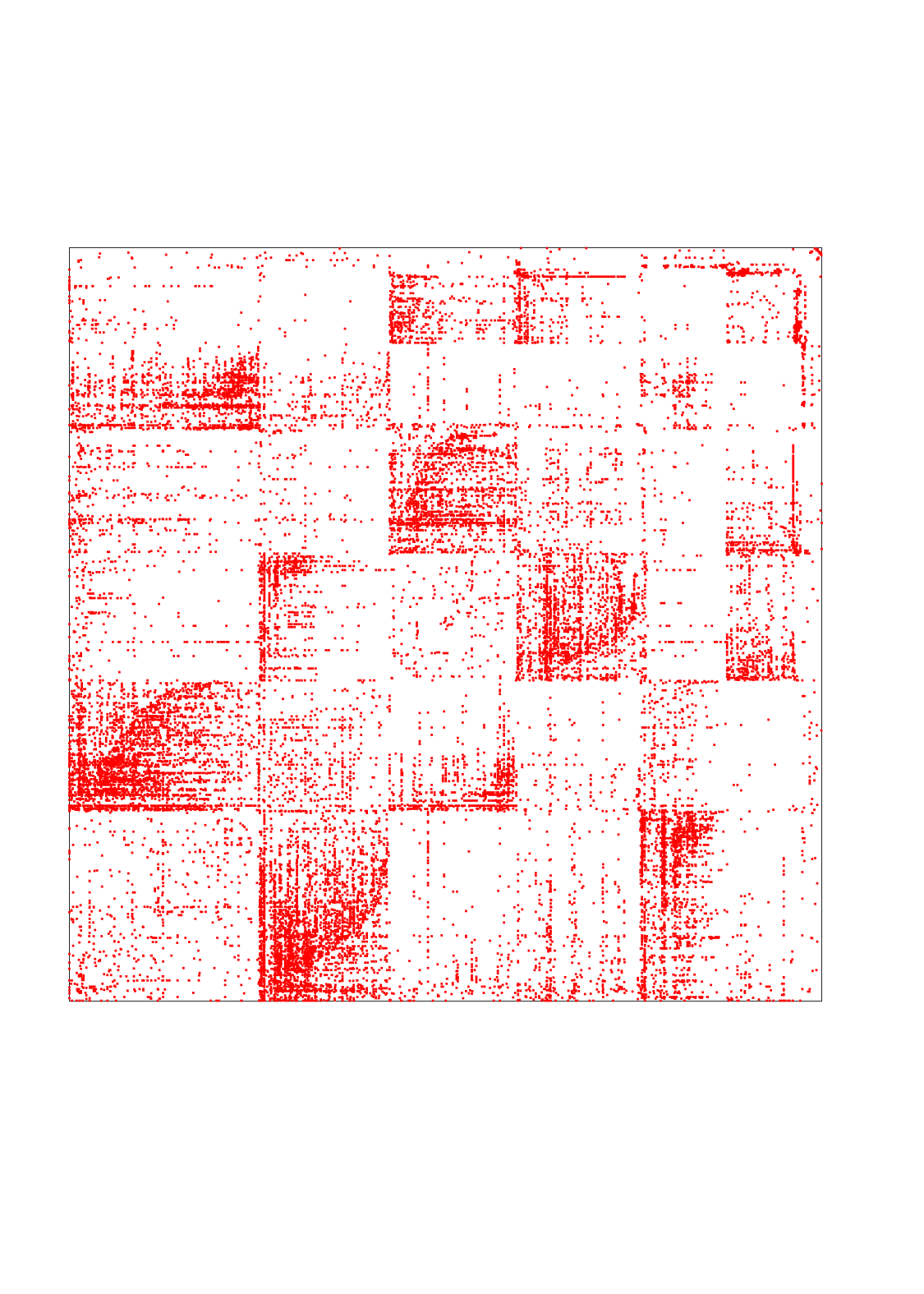}
 \caption{$L_{0}$  }
 \label{fig:wikipedia:red:zero}
 \end{subfigure}
 \hfill
  \begin{subfigure}[b]{0.13\textwidth}
 \includegraphics[angle=-90,width=1\textwidth,trim=50 130 75 110]{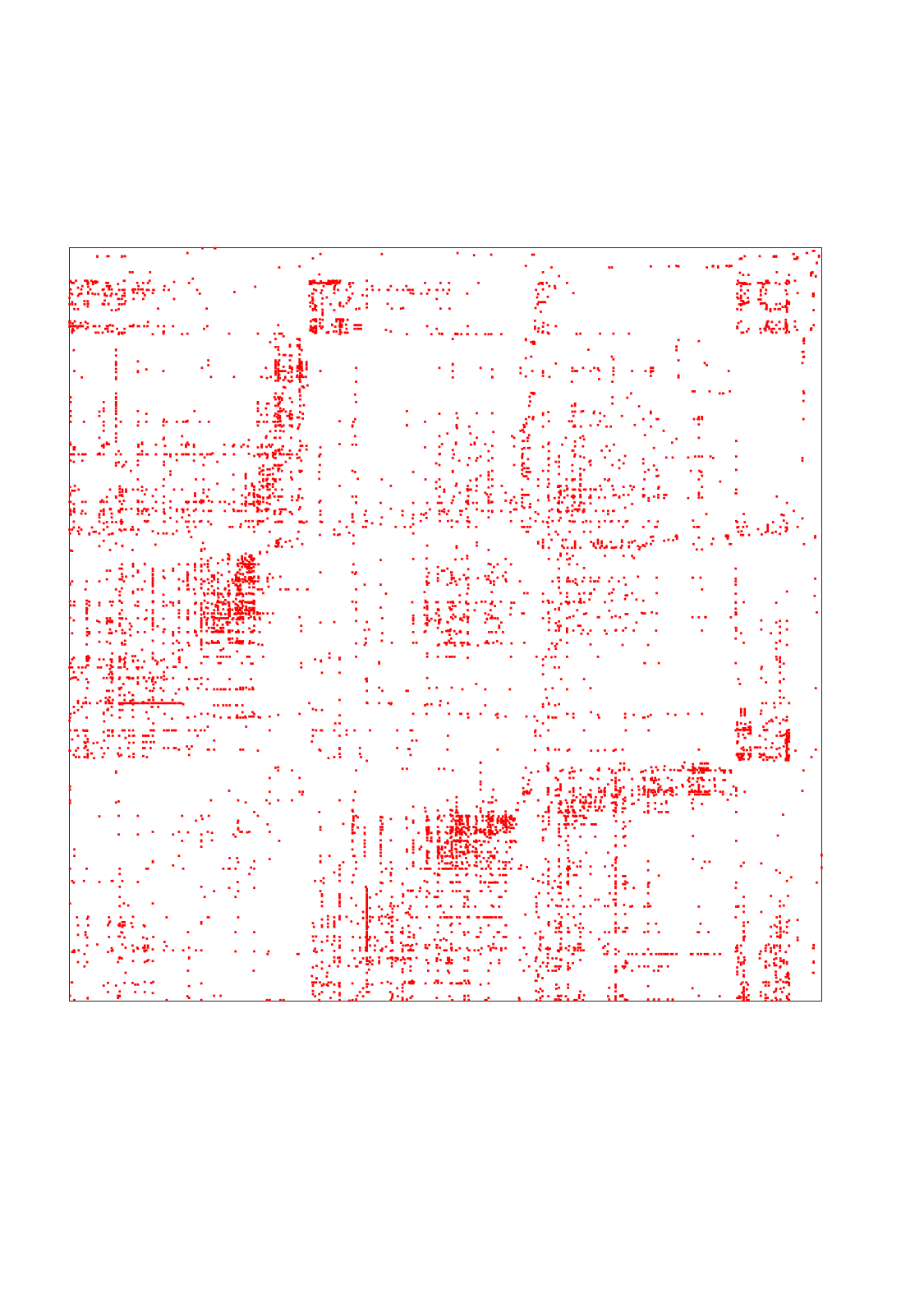}
 \caption{$L_{1}$  }
 \label{fig:wikipedia:red:Plus1}
 \end{subfigure}
 \hfill
\caption{
Adjacency matrices $W^+$ and $W^-$ sorted through clustering of Wikipedia Elections dataset by the proposed Signed Power mean Laplacians $L_{-10},L_{-5},L_{-2},L_{-1},L_{0},L_{1}$.
\textbf{Top row}: zoom-in visualization of positive edges $W^+$. \textbf{Bottom row}: zoom-in visualization of negative edges $W^-$. See supplementary material for more details.
 \vspace{-10pt}
}
 \label{fig:wikipedia}
\end{figure*}

%
%
\subsection{Consistency of the Signed Power Mean Laplacian for the Stochastic Block Model}\label{Concentration}
In this section we prove two novel concentration bounds for signed power mean Laplacians of signed graphs drawn from the SSBM. The bounds show that, for large graphs, our previous results in expectation transfer to sampled graphs with high probability. We first show in Theorem~\ref{theorem:ConcentrationBound-signedGraph-identity-SBM} that $L_p$ is close to $\mathcal{L}_p$. Then, in Theorem~\ref{theorem:concentration-eigenvectors}, we show that eigenvalues and eigenvectors of $L_p$ are close to those  of $\mathcal{L}_p$.
We derive this result by tracing back the consistency of the matrix power mean to the consistency of the standard and signless Laplacian established in~\cite{chung2011spectra}.

The consistency of spectral clustering on unsigned graphs for the SBM has been studied in~\cite{lei2015,sarkar2015,rohe2011spectral} and more recently
consistency of several variants of spectral clustering has been shown~\cite{Qin:2013:regularized, joseph2016, chaudhuri:2012:regularized,Le:2018:concentration,fasino2017modularity,davis2018}. 
Moreover, while the case of multilayer graphs under the SBM has been previously analyzed ~\cite{Han:2015:CED:3045118.3045279,heimlicher2012community,jog:2015a,Paul:2017a,xu:2014a,xu:2017a,NIPS2016_6196}, there are no consistency results for matrix power means
for multilayer graphs as studied in \cite{Mercado:2018:powerMean}.
While our main emphasis is on the analysis of the \SPMM{} Laplacian, our proofs are general enough to cover also the consistency of the matrix power means
for unsigned multilayer graphs \cite{Mercado:2018:powerMean}.
In Thm.~\ref{theorem:ConcentrationBound-signedGraph-identity-SBM} we show that the \SPMM{} Laplacian $L_p$ for the SSBM is concentrated around $\mathcal{L}_p$, with high probability for large $n$. 
The following results hold for general shifts $\varepsilon$.

\begin{theorem}\label{theorem:ConcentrationBound-signedGraph-identity-SBM}
Let $p$ be a non-zero integer, let

$\,$\hfill $C_p = \left\{\begin{array}{ll}
						(2p)^{1/p}(2+\varepsilon)^{1-1/p} & p\geq 1 \\ 
						\abs{2p}^{1/\abs{p}}\varepsilon^{-(3+1/\abs{p})} & p\leq-1
						 \end{array}\right.$\hfill$\,$

and choose $\epsilon>0$. 
If 
$\frac{n}{k}(\pp\!+\!(k\!-\!1)\qp)> 3\ln (8n/\epsilon)$, 
and 
$\frac{n}{k}(\ppm\!+\!(k\!-\!1)\qm)> 3\ln (8n/\epsilon)$, 
then with probability at least $1-\epsilon$, we have 

\resizebox{\columnwidth}{!}{
 $
 \norm{L_p - \mathcal{L}_p} \leq 
 C_{p} \, 
 m_{\abs{p}}^{1/\abs{p}}\!
 \bigg(\!\!
  \sqrt{\frac{3\ln(8n / \epsilon)}{\frac{n}{k}(\pp\!+\!(k-1)\qp)}}, \sqrt{\frac{3\ln(8n / \epsilon)}{\frac{n}{k}(\ppm\!+\!(k-1)\qm)}}
 \bigg)
 $
 }
\end{theorem}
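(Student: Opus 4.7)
The plan is to bound $\|L_p - \mathcal L_p\|$ by first concentrating each summand $L_\sym^+$ (resp.\ $Q_\sym^-$) around its expectation and then propagating these individual bounds through the matrix power mean $M_p$. For the first step, since $L_\sym^+ = I - (D^+)^{-1/2} W^+ (D^+)^{-1/2}$ and similarly for $Q_\sym^-$, the hypotheses on the expected minimum cluster degrees $\frac{n}{k}(\pp+(k-1)\qp) > 3\ln(8n/\epsilon)$ and $\frac{n}{k}(\ppm+(k-1)\qm) > 3\ln(8n/\epsilon)$ place us in the regime where a Chung--Radcliffe-type concentration for normalized graph matrices (see~\cite{chung2011spectra}) applies, yielding with probability at least $1-\epsilon/2$ each
\begin{equation*}
\|L_\sym^+ - \mathcal L_\sym^+\| \,\leq\, \alpha, \qquad \|Q_\sym^- - \mathcal Q_\sym^-\| \,\leq\, \beta,
\end{equation*}
with $\alpha,\beta$ equal to the two square-root terms appearing in the theorem; a union bound collects both into a single $1-\epsilon$ event.

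The second step propagates these bounds through $M_p(A,B) = ((A^p + B^p)/2)^{1/p}$, which is the composition of matrix powering by $p$, averaging, and matrix rooting by $1/p$. For integer $p\geq1$, the spectra of $L_\sym^+$ and $\mathcal L_\sym^+$ (and their signless counterparts, after the shift) are contained in $[0, 2+\varepsilon]$, so the telescoping identity $X^p - Y^p = \sum_{j=0}^{p-1} X^{p-1-j}(X-Y)Y^j$ gives $\|X^p - Y^p\| \leq p(2+\varepsilon)^{p-1}\|X-Y\|$; combined with the operator H\"older bound $\|Z^{1/p} - W^{1/p}\| \leq \|Z-W\|^{1/p}$ valid for positive semidefinite $Z,W$ and $p\geq 1$, this assembles into the prefactor $(2p)^{1/p}(2+\varepsilon)^{1-1/p}$ matching $C_p$, with the remaining dependence on $\alpha,\beta$ taking exactly the form stated via $(a^p+b^p)^{1/p} = 2^{1/p} m_p(a,b)$. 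For integer $p\leq -1$, one writes $X^p = (X^{-1})^{|p|}$ and uses $\|X^{-1} - Y^{-1}\| \leq \varepsilon^{-2}\|X-Y\|$, valid because the shifted matrices satisfy $X,Y \succeq \varepsilon I$; reapplying the polynomial bound on $(|p|)$-th powers contributes a factor $\varepsilon^{-(|p|+1)}$, and a second Lipschitz estimate for $Z\mapsto Z^{1/p}$ on matrices bounded below by a suitable power of $\varepsilon$ contributes an additional $\varepsilon^{-(1+1/|p|)}$; the product reproduces the $\varepsilon^{-(3+1/|p|)}$ factor defining $C_p$ in the negative-$p$ branch.

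The main obstacle is the $p \leq -1$ regime, where inversion, non-commutative powering, and rooting must be chained while carefully tracking the dependence on the spectral shift $\varepsilon$. In particular, one must argue that the two sampling errors $\alpha,\beta$ enter only through the combination $m_{|p|}^{1/|p|}(\alpha,\beta)$, and not through a worse additive or product bound; this is obtained by applying the powering/rooting estimates in a way that sees $\alpha^{|p|}$ and $\beta^{|p|}$ symmetrically before invoking the averaging step, so that the $1/|p|$-root at the end restores the power-mean form. Keeping the shift-dependent Lipschitz constants cleanly separated from the sample-size-dependent error is the delicate bookkeeping that ultimately produces the factorized bound $C_p \cdot m_{|p|}^{1/|p|}(\cdot,\cdot)$ stated in the theorem.
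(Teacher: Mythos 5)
Your proposal is correct and follows essentially the same route as the paper: concentrate $L^+_\sym$ and $Q^-_\sym$ individually via the Chung--Radcliffe bound of \cite{chung2011spectra} (noting $\norm{Q^-_\sym-\mathcal Q^-_\sym}=\norm{L^-_\sym-\mathcal L^-_\sym}$) with a union bound at level $\epsilon/2$, then propagate through $M_p$ by combining the polynomial estimate $\norm{X^m-Y^m}\le mM^{m-1}\norm{X-Y}$, the root inequality $\norm{Z^r-W^r}\le\norm{Z-W}^r$, and, for $p\le-1$, the inverse-Lipschitz bound on $\varepsilon$-bounded-below matrices. Only your constant bookkeeping is slightly off in the exposition: the $2$ in $(2p)^{1/p}$ actually enters through the factor $2$ in the Chung--Radcliffe deviation $2\sqrt{3\ln(4n/\hat\epsilon)/\delta}$, and in the negative-$p$ branch the exponents you list ($\varepsilon^{-(1+1/\abs{p})}$ from powering/rooting times a final inversion factor $\varepsilon^{-2}$) are what multiply to $\varepsilon^{-(3+1/\abs{p})}$, not the product you wrote.
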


\myComment{
\begin{proof}
 Please see Appendix~\ref{proof:concentration}.
\end{proof}
}
In Thm~\ref{theorem:ConcentrationBound-signedGraph-identity-SBM} we take the spectral norm.
A more general version of Theorem~\ref{theorem:ConcentrationBound-signedGraph-identity-SBM} for the 
inhomogeneous Erd\H{o}s-R\'{e}nyi model, where edges are formed independently with probabilities $p^+_{ij},p^-_{ij}$ is given in the supplementary material. Theorem~\ref{theorem:ConcentrationBound-signedGraph-identity-SBM} builds on top of concentration results of~\cite{chung2011spectra} proven for the unsigned case $\norm{L^+_\sym - \mathcal L^+_\sym}$.  
We can see that the deviation of $L_p$ from $\mathcal{L}_p$ depends on the power mean of the individual deviations of $L^+_\sym$ and $Q^-_\sym$ from $\mathcal L^+_\sym$ and $\mathcal Q^-_\sym$, respectively. 
Note that the larger the size $n$ of the graph is, the stronger is the concentration of $L_p$ around $\mathcal{L}_p$.

The next Theorem shows that the eigenvectors corresponding to the smallest eigenvalues of $L_p$ are close to the corresponding eigenvectors of $\mathcal{L}_p$. This is a key result showing consistency of our spectral clustering technique with $L_p$ for signed graphs drawn from the  SSBM.

\begin{theorem}\label{theorem:concentration-eigenvectors}
Let $p\neq 0$ be an integer. Let ${V_k,\mathcal{V}_k\in\mathbb{R}^{n\times k}}$ be orthonormal matrices whose columns are the eigenvectors of the $k$ smallest eigenvalues of 
$L_p$ and $\mathcal{L}_p$,
respectively. Let $\rho^+_\varepsilon$, $\rho^-_\varepsilon$ and $C_p$ be defined as in Theorems \ref{theorem:mp_in_expectation} and \ref{theorem:ConcentrationBound-signedGraph-identity-SBM}, respectively. Define $\tilde k = k-1$, if $p\geq 1$, and $\tilde k = k$, if $p\leq -1$ and choose $\epsilon >0$.  \\
If $m_p(\rho^+_\varepsilon, \rho^-_\varepsilon)<1+\varepsilon$, $\delta^+:=\frac{n}{k}(\pp\!+\!(k\!-\!1)\qp)> 3\ln (8n/\epsilon)$, 
and 
$\delta^-:=\frac{n}{k}(\ppm\!+\!(k\!-\!1)\qm)> 3\ln (8n/\epsilon)$, then there exists an orthogonal matrix $O_{\tilde k}\in\mathbb{R}^{\tilde{k}\times \tilde{k}}$ such that, with probability at least $1-\epsilon$, we have

 \resizebox{1.0\columnwidth}{!}{
 $
 \vectornorm{V_{\tilde k}-\mathcal{V}_{\tilde k}O_{\tilde k}}\leq \dfrac{\sqrt{8\tilde k} C_{p}\,  m_{\abs{p}}^{1/\abs{p}}\!
 \Big(\!\!
  \sqrt{\frac{3\ln(8n / \epsilon)}{\delta^+}}, \sqrt{\frac{3\ln(8n / \epsilon)}{\delta^-}}
 \Big)}{
(1+\varepsilon)-m_p(\rho^+_\varepsilon, \rho^-_\varepsilon)
}
 $
 }
\end{theorem}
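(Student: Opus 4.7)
The plan is to reduce the statement to a standard Davis--Kahan perturbation bound applied to $\mathcal{L}_p$, using Theorem~\ref{theorem:ConcentrationBound-signedGraph-identity-SBM} as the operator-norm perturbation input and the spectral structure of $\mathcal{L}_p$ to control the denominator. Observe that the numerator in the claimed bound is exactly $\sqrt{8\tilde k}$ times the high-probability bound on $\|L_p-\mathcal{L}_p\|$ from Theorem~\ref{theorem:ConcentrationBound-signedGraph-identity-SBM}, while the denominator $(1+\varepsilon)-m_p(\rho_\varepsilon^+,\rho_\varepsilon^-)$ is strictly positive by the hypothesis $m_p(\rho_\varepsilon^+,\rho_\varepsilon^-)<1+\varepsilon$. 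So the task is to identify this quantity as a genuine spectral gap of $\mathcal{L}_p$ separating the informative eigenspace from the remainder of the spectrum.

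The first step is to diagonalize $\mathcal{L}_p$ explicitly. In the SSBM both $\mathcal{W}^{+}$ and $\mathcal{W}^{-}$ are block-constant, hence of rank at most $k$, and all expected degrees within $G^+$ (resp.\ $G^-$) are equal. Consequently $\mathcal{L}_\sym^+$ and $\mathcal{Q}_\sym^-$ are simultaneously diagonal on the common $k$-dimensional invariant subspace $U=\mathrm{span}\{\mathbf 1,\boldsymbol\chi_2,\ldots,\boldsymbol\chi_k\}$, and they both act as the identity on $U^\perp$. A short computation gives eigenvalue pairs $(0,\,2)$ on $\mathbf 1$ and $(\rho_\varepsilon^+ - \varepsilon,\,\rho_\varepsilon^- - \varepsilon)$ on each $\boldsymbol\chi_i$, $i\ge 2$. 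Applying the shifted matrix power mean $M_p(\cdot+\varepsilon I,\cdot+\varepsilon I)$, which commutes with the simultaneous diagonalization, the spectrum of $\mathcal{L}_p$ is: $m_p(\varepsilon,2+\varepsilon)$ on $\mathbf 1$; $m_p(\rho_\varepsilon^+,\rho_\varepsilon^-)$ with multiplicity $k-1$ on $\mathrm{span}\{\boldsymbol\chi_i\}_{i=2}^k$; and $1+\varepsilon$ with multiplicity $n-k$ on $U^\perp$. Using monotonicity of the scalar power mean in $p$, one has $m_p(\varepsilon,2+\varepsilon)\le 1+\varepsilon$ iff $p\le 1$, which identifies precisely the $\tilde k$ smallest eigenvalues as in Theorem~\ref{theorem:mp_in_expectation}, and yields
\[
\lambda_{\tilde k+1}(\mathcal{L}_p)-\lambda_{\tilde k}(\mathcal{L}_p)\;\ge\;(1+\varepsilon)-m_p(\rho_\varepsilon^+,\rho_\varepsilon^-)\;>\;0.
\]

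The second step is an application of the Davis--Kahan $\sin\Theta$ theorem in the Yu--Wang--Samworth form: for symmetric $A,\tilde A$ with eigenvector matrices $V_{\tilde k},\tilde V_{\tilde k}$ of their $\tilde k$ smallest eigenvalues, there exists orthogonal $O_{\tilde k}$ with $\|V_{\tilde k}-\tilde V_{\tilde k}O_{\tilde k}\|_F\le 2^{3/2}\sqrt{\tilde k}\,\|A-\tilde A\|_{\mathrm{op}}/(\lambda_{\tilde k+1}(A)-\lambda_{\tilde k}(A))$. Applying this with $A=\mathcal{L}_p$, $\tilde A=L_p$, and the spectral gap computed above, followed by substituting the high-probability concentration bound from Theorem~\ref{theorem:ConcentrationBound-signedGraph-identity-SBM} (valid under the two degree conditions $\delta^\pm>3\ln(8n/\epsilon)$), produces the stated inequality with probability at least $1-\epsilon$.

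The main obstacle is the simultaneous spectral decomposition of $\mathcal{L}_\sym^+$ and $\mathcal{Q}_\sym^-$ and the careful verification that the noise eigenvalue $1+\varepsilon$ is indeed $\lambda_{\tilde k+1}$ in both regimes $p\ge 1$ and $p\le -1$; that is where the piecewise definition $\tilde k=k-1$ vs.\ $\tilde k=k$ enters, and where one must use $m_p(\varepsilon,2+\varepsilon)\gtrless 1+\varepsilon$ according to the sign of $p-1$. Once this spectral picture is in place, the combination of Davis--Kahan with Theorem~\ref{theorem:ConcentrationBound-signedGraph-identity-SBM} is essentially bookkeeping, and the case $p=0$ is naturally excluded from the hypothesis by the use of the limiting (geometric) mean, consistently with the restriction to non-zero integer $p$.
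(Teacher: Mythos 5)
Your proposal is correct and follows essentially the same route as the paper: explicit simultaneous diagonalization of $\mathcal L^+_\sym$ and $\mathcal Q^-_\sym$ under the SSBM to identify the spectral gap $(1+\varepsilon)-m_p(\rho^+_\varepsilon,\rho^-_\varepsilon)$, followed by the Yu--Wang--Samworth form of Davis--Kahan combined with the concentration bound of Theorem~\ref{theorem:ConcentrationBound-signedGraph-identity-SBM}. The only cosmetic difference is that you invoke the Frobenius-norm version of Davis--Kahan and pass to the operator norm via $\|\cdot\|\leq\|\cdot\|_F$, whereas the paper derives an operator-norm variant directly; both yield the stated bound.
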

\myComment{
\begin{proof}
 Please see Section~\ref{theorem:concentration-eigenvectors-PROOF}.
\end{proof}
}
Note that the main difference compared to Thm.~\ref{theorem:ConcentrationBound-signedGraph-identity-SBM} is the
spectral gap  $\gamma_p = (1+\varepsilon)-m_p(\rho^+_\varepsilon,\rho^-_\varepsilon)$
of $\mathcal{L}_p$, which is the difference of the eigenvalues corresponding to the informative versus non-informative eigenvectors of $\mathcal{L}_p$. Thus the stronger the clustering structure the tighter is the concentration of the eigenvectors. Moreover, from the monotonicity of $m_p$ we have $\gamma_p\geq\gamma_q$ for $p<q$,  and thus for $p\leq-1$ the spectral gap increases with $|p|$, ensuring a  stronger concentration of eigenvectors for smaller values of $p$. 

 \section{Experiments on Wikipedia-Elections}\label{section:Wikipedia-experiments}
We now evaluate the Signed Power Mean Laplacian $L_p$ with $p\in\{-10,-5,-2,-1,0,1 \}$ on Wikipedia-Elections dataset~\cite{snapnets}. 
In this dataset each node represents an editor requesting to become administrator 
and positive (resp. negative) edges represent supporting (resp. against) votes to the corresponding admin candidate.


While~\cite{Chiang:2012:Scalable} conjectured that this dataset has no clustering structure, recent works~\cite{Mercado:2016:Geometric,Cucuringu:2019:sponge} have shown that indeed there is clustering structure. As noted in~\cite{Mercado:2016:Geometric}, using the geometric mean Laplacian $L_{GM}$ and looking for $k$ clusters unveils the presence of a large non-informative cluster and $k-1$ remaining smaller clusters which show relevant clustering structure.

Our results verify these recent findings. We set the number of clusters to identify to $k=30$ and in Fig.~\ref{fig:wikipedia} we portray the portion of adjacency matrices of positive and negative edges $W^+$ and $W^-$ corresponding to $k-1$ clusters sorted according to the corresponding identified clusters. 
We can see that when $p\leq 0$ the Signed Power Mean Laplacian $L_p$ identifies clustering stucture, 
whereas this structure is overlooked by the arithmetic mean case $p=1$.
%
%
Moreover, we can see that different powers identify slightly different clusters: this happens as this dataset does not necessarily follow the Signed Stochastic Block Model, and hence we do not fully retrieve the same behaviour studied in Section~\ref{sec:SBM}.

Further experiments on UCI datasets are available in the supplementary material, suggesting that the $L_{GM}$ together with $L_p$ is a reasonable option under different settings.

\textbf{Acknowledgments}
%
The  work  of  F.T.  has been funded by the Marie Curie Individual Fellowship MAGNET n.  744014.

%

\clearpage
\newpage
\bibliographystyle{icml2019}
\bibliography{bibliographyUpdated,referencesUpdated}
\vfill
\newpage
\appendix

\appendixpage

This section contains all the proofs of results mention in the main paper. It is organized as follows:
\begin{itemize}[topsep=-3pt,leftmargin=*]\setlength\itemsep{-3pt}
 \item Section~\ref{appendix:theorem:mp_in_expectation-PROOF} contains the proof of Theorem~\ref{theorem:mp_in_expectation},
 \item Section~\ref{corollary:mp_limit_cases-PROOF} contains the proof of Corollary~\ref{corollary:mp_limit_cases},
 \item Section~\ref{corollary:contention-PROOF} contains the proof of Corollary~\ref{corollary:contention},
 \item Section~\ref{theorem:geometricMeanLaplacian-PROOF} contains the proof of Theorem~\ref{theorem:geometricMeanLaplacian},
 \item Section~\ref{section:resultsOnBetheHessian-Proofs} contains the proof of Theorem~\ref{theorem:bethe_hessian_V1}
 \item Section~\ref{proof:concentration} contains the proof of Theorem~\ref{theorem:ConcentrationBound-signedGraph-identity-SBM},
 \item Section~\ref{theorem:concentration-eigenvectors-PROOF} contains the proof of~\ref{theorem:concentration-eigenvectors}.
 \item Section~\ref{section:Experiments with the Censored Block Model} contains experiments on the Censored Block Model where the superiority of the Bethe Hessian on the sparse regime is verified,
 \item Section~\ref{section:experiments} contains experiments on UCI datasets,
 \item Section~\ref{sec:OnDiagonalShift} contains an analysis on diagonal shift of the signed power mean Laplacian,
 \item Section~\ref{section:computation} contains a description on the numerical scheme for the computation of eigenvectors withtout ever computing the power mean Laplacian matrix,
 \item Section~\ref{section:proportionOfCasesWhereConditionsHold} contains a further study on conditions of expectation of the power mean Laplacian and state of the art approaches.
 \item Section~\ref{sec:wikipedia-experiments-appendix} contains a more detailed inspection of results on Wikipedia-Elections dataset presented in Section~\ref{section:Wikipedia-experiments}.
\end{itemize}

Before starting with the general results, we first mention a couple of basic results.

The following theorem states the monotonocity of the scalar power mean.
\begin{theorem}[\cite{bullen2013handbook}, Ch.~3, Thm.~1]\label{theorem:mp_monotone}
 Let $p<q$ then $m_{p}(a,b)\leq m_{q}(a,b)$
with equality if and only if $a=b$.
\end{theorem}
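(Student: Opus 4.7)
The plan is to reduce the claim to a single Jensen-type inequality. Specifically, I would show that the map $p \mapsto \ln m_p(a,b)$ is non-decreasing on $\mathbb{R}$, with strict monotonicity unless $a = b$; the theorem then follows by exponentiating and using continuity to handle $p = 0$.

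For $p \neq 0$, direct differentiation of $\ln m_p(a,b) = \frac{1}{p} \ln \frac{a^p + b^p}{2}$ gives
\begin{equation*}
\frac{d}{dp} \ln m_p(a,b) = \frac{1}{p^2 (a^p + b^p)} \Big[\, a^p \ln a^p + b^p \ln b^p - (a^p + b^p) \ln \tfrac{a^p + b^p}{2} \,\Big].
\end{equation*}
Setting $u = a^p$ and $v = b^p$, the bracketed quantity equals $2\big[ \tfrac{\phi(u) + \phi(v)}{2} - \phi(\tfrac{u+v}{2})\big]$ where $\phi(x) = x \ln x$. Since $\phi$ is strictly convex on $(0,\infty)$, this Jensen gap is nonnegative and vanishes if and only if $u = v$.

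Because $a, b > 0$ and $p \neq 0$, the equality $a^p = b^p$ is equivalent to $a = b$. Hence the derivative is nonnegative everywhere and strictly positive whenever $a \neq b$, proving strict monotonicity of $\ln m_p(a,b)$ on each of $(-\infty, 0)$ and $(0, \infty)$. To stitch the two half-lines together through $p = 0$, I would invoke the standard limit $\lim_{p \to 0} m_p(a,b) = \sqrt{ab}$ (an immediate L'H\^opital computation on $\ln m_p$), which shows that $p \mapsto m_p(a,b)$ extends continuously at the origin, so monotonicity persists across $p = 0$.

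The main obstacle, such as it is, is spotting the Jensen-gap structure of the derivative; once that is in place, the argument is essentially one line. An alternative route would be to apply Jensen directly to the convex function $x \mapsto x^{q/p}$ when $0 < p < q$, and to the concave $x \mapsto x^{q/p}$ when $p < q < 0$, then bridging across zero via $m_0$; that avoids calculus but requires carefully tracking the direction of Jensen's inequality and the sign of $1/q$ across several cases, so I prefer the unified derivative approach above.
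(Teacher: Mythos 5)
The paper does not prove this statement at all; it is imported verbatim from Bullen's handbook (Ch.~3, Thm.~1), so there is no in-paper argument to compare against. Your proof is correct and is the standard one: the formula for $\frac{d}{dp}\ln m_p(a,b)$ checks out, the bracketed term is exactly the Jensen gap $2\bigl[\tfrac{\phi(u)+\phi(v)}{2}-\phi(\tfrac{u+v}{2})\bigr]$ for the strictly convex $\phi(x)=x\ln x$ with $u=a^p$, $v=b^p$, and the continuity bridge through $p=0$ via $m_0=\sqrt{ab}$ is handled properly. The only caveat worth recording is that your argument (and the equality clause "iff $a=b$") requires $a,b>0$, whereas the paper's definition allows non-negative scalars; this is harmless here since the theorem is only ever applied to the strictly positive quantities $\rho^{\pm}_{\varepsilon}$.
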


The following lemma shows the effect of the matrix power mean when matrices have a common eigenvector.
\begin{lemma}
[\cite{Mercado:2018:powerMean}]
\label{lemma:eigenvalues_and_eigenvectors_of_generalized_mean_V2}
Let $\u$ be an eigenvector of both $A$ and $B$,  with corresponding eigenvalues $\alpha$ and $\beta$.
Then 
$\u$ is an eigenvector of $M_p(A,B)$ with eigenvalue $m_p(\alpha,\beta)$.
\end{lemma}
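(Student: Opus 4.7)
The plan is to verify the two defining operations of the matrix power mean, namely the integer power and the $1/p$-th root, both preserve common eigenvectors with the expected scalar action. First I would establish that if $A\u=\alpha\u$ and $B\u=\beta\u$, then for any integer $p$ we have $A^p\u=\alpha^p\u$ and $B^p\u=\beta^p\u$. For $p\ge 1$ this is a one-line induction, and for $p\le -1$ it follows from the fact that $A$ (and $B$) is positive definite hence invertible, with $A^{-1}\u=\alpha^{-1}\u$ (apply $A^{-1}$ to both sides of $A\u=\alpha\u$).

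Next I would combine these to obtain that the symmetric positive-definite matrix $C:=(A^p+B^p)/2$ satisfies $C\u = \lambda\,\u$ with $\lambda=(\alpha^p+\beta^p)/2>0$. The remaining step is to argue that the unique positive-definite solution $C^{1/p}$ of $X^p=C$ also has $\u$ as eigenvector with eigenvalue $\lambda^{1/p}=m_p(\alpha,\beta)$. The cleanest route is the spectral theorem: write $C=\sum_i \mu_i \e_i \e_i^{\t}$, noting that $\u$ can be taken as one of the $\e_i$ (with $\mu_i=\lambda$) because $C\u=\lambda\u$ and the $\mu_j$-eigenspaces are mutually orthogonal. Then by the functional-calculus definition of $C^{1/p}$, we have $C^{1/p}=\sum_i \mu_i^{1/p} \e_i \e_i^{\t}$, so $C^{1/p}\u = \lambda^{1/p}\u = m_p(\alpha,\beta)\,\u$, which is the desired conclusion.

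The only delicate point is the middle step: to conclude $C^{1/p}\u=\lambda^{1/p}\u$ one must know that the positive-definite $p$-th root commutes with the spectral projection onto the eigenspace containing $\u$. This is automatic once $\u$ is placed in an orthonormal eigenbasis of $C$, which is possible precisely because $C$ is symmetric; so the argument reduces to the uniqueness of the positive-definite $p$-th root, already stated in Definition~\ref{definition:MatrixPowerMean}. No additional assumptions on $A,B$ beyond symmetric positive definiteness (together with the diagonal shift used in the paper for $p<0$) are needed. This completes the plan; the proof is essentially bookkeeping around the spectral theorem.
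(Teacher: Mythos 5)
Your argument is correct, and it is the standard simultaneous-diagonalization proof; the paper itself does not prove this lemma but imports it from the cited reference \cite{Mercado:2018:powerMean}, so there is no in-paper proof to compare against. The three steps — propagating the eigenvector through $A^p$ and $B^p$, through the average, and through the unique positive-definite $p$-th root via the spectral theorem — are exactly what is needed, and your observation that uniqueness of the root is what licenses the functional-calculus formula $C^{1/p}=\sum_i\mu_i^{1/p}\e_i\e_i^{\t}$ is the right delicate point to flag. One small remark: your first step (induction for $p\ge 1$, inversion for $p\le -1$) covers only integer $p$, whereas Definition~\ref{definition:MatrixPowerMean} and the lemma allow arbitrary real $p$; for non-integer $p$ the powers $A^p$, $B^p$ are themselves defined by functional calculus, and the same spectral-theorem argument you already use for $C^{1/p}$ gives $A^p\u=\alpha^p\u$ directly, so the gap is cosmetic and closes with the tools you have on the table.
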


\section{Proof of Theorem \ref{theorem:mp_in_expectation}}\label{appendix:theorem:mp_in_expectation-PROOF}

\begin{proof}(Proof of Theorem \ref{theorem:mp_in_expectation})
We first show that $\boldsymbol \chi_1,\ldots,\boldsymbol \chi_k$ are eigenvectors of $\mathcal W^{+}$ and $\mathcal W^{-}$. 
For $\boldsymbol \chi_1$ we have,
\begin{equation*}
 \mathcal W^{+} \boldsymbol \chi_1 = \mathcal W^{+} \one = \abs{\mathcal C}(\pp + (k-1)\qp)\one = d^+ \one = \lambda_1^+ \one
\end{equation*}
For the remaining vectors $\boldsymbol \chi_2,\ldots,\boldsymbol \chi_k$ we have
\begin{align*}
 \mathcal W^{+} \boldsymbol \chi_i 
 &= \mathcal W^{+} \big( (k-1)\one_{\mathcal{C}_i}-\one_{\overline{\mathcal{ C}_i}} \big)\\
 &= \mathcal W^{+} \big( k\one_{\mathcal{C}_i}-(\one_{\mathcal{C}_i} + \one_{\overline{\mathcal{ C}_i}}) \big)  \\
 &= \mathcal W^{+} \big( k\one_{\mathcal{C}_i}-\one \big)  \\
 &=  k\abs{\mathcal C}(\pp\one_{\mathcal C_i} + \qp\one_{\overline{\mathcal C_i}}) -d^+\one \\
 &=  k\abs{\mathcal C}(\pp\one_{\mathcal C_i} + \qp\one_{\overline{\mathcal C_i}}) -d^+(\one_{\mathcal C_i}+\one_{\overline{\mathcal{ C}_i}}) \\
 &=  \abs{\mathcal C}(k\pp-d^+)\one_{\mathcal C_i} + \abs{\mathcal C}(k\qp-d^+)\one_{\overline{\mathcal C_i}} \\
 &=  \abs{\mathcal C}(k-1)(\pp-\qp)\one_{\mathcal C_i} - \abs{\mathcal C}(\pp-\qp)\one_{\overline{\mathcal C_i}} \\
 &= \abs{\mathcal C} (\pp-\qp) \big( (k-1)\one_{\mathcal{C}_i}-\one_{\overline{\mathcal{ C}_i}} \big) \\
 &= \abs{\mathcal C} (\pp-\qp) \boldsymbol \chi_i  \\
 &= \lambda_i \boldsymbol \chi_i  \\
\end{align*}
The same procedure holds for $\mathcal W^{-}$.
Thus, we have shown that $\boldsymbol \chi_1,\ldots,\boldsymbol \chi_k$ are eigenvectors of both $\mathcal W^{+}$ and $\mathcal W^{-}$. 
In particular, we have seen that
\begin{align*}
 \lambda^+_1 = \abs{\mathcal{C}}(\pp+(k-1)\qp), \quad \lambda^+_i = \abs{\mathcal{C}}(\pp-\qp)\\
 \lambda^-_1 = \abs{\mathcal{C}}(\ppm+(k-1)\qm), \quad \lambda^-_i = \abs{\mathcal{C}}(\ppm-\qm)
\end{align*}
for $i=2,\ldots,k$.
Further, as both matrices $\mathcal{W}^+$ and $\mathcal{W}^-$ share all their eigenvectors, they are simultaneously diagonalizable, that is there exists a 
non-singular matrix $\Sigma$ such that $\Sigma^{-1}\mathcal{W}^{\pm}\Sigma =\Lambda^\pm$, 
where $\Lambda^+$ and $\Lambda^-$ are diagonal matrices  $\Lambda^\pm = \diag(\lambda_1^{\pm}, \dots, \lambda_k^\pm, 0, \dots, 0)$.

As we assume that all clusters are of the same size $\abs{C}$, the expected signed graph is a regular graph with degrees $d^+$ and $d^-$. Hence, the normalized Laplacian and normalized signless Laplacian of the expected signed graph can be expressed as 
\begin{align*}
 \mathcal L^{+}_\sym &= \Sigma(I-\frac{1}{d^+}\Lambda^{+})\Sigma^{-1}\\
 \mathcal Q^{-}_\sym &= \Sigma(I+\frac{1}{d^-}\Lambda^{-})\Sigma^{-1}\\
\end{align*}
Thus, we can observe that
\begin{align*}
 \lambda^+_1(\mathcal L^{+}_\sym) &= 0,       \qquad\qquad\,\, \lambda^-_1(\mathcal Q^{-}_\sym) = 2\\
 \lambda^+_i(\mathcal L^{+}_\sym) &=1-\rho^+, \qquad \lambda^-_i(\mathcal Q^{-}_\sym) =1+\rho^-\\
 \lambda^+_j(\mathcal L^{+}_\sym) &=1, \qquad\qquad\,\, \lambda^-_j(\mathcal Q^{-}_\sym) =1\\
\end{align*}
for $i=2,\ldots,k$, and $j=k+1,\ldots,\abs{V}$,
where
\begin{align*}
 \rho^+ &=(\pp-\qp)/(\pp+(k-1)\qp) \\
 \rho^- &=(\ppm-\qm)/(\ppm+(k-1)\qm) 
\end{align*}

By obtaining the signed power mean Laplacian on diagonally shifted matrices, 
$$\mathcal L_p = M_p(\mathcal L^{+}_\sym+\varepsilon I, \mathcal Q^{-}_\sym+\varepsilon I)$$
we have by Lemma~\ref{lemma:eigenvalues_and_eigenvectors_of_generalized_mean_V2}
\begin{equation}\label{eq:eigenvaluesOfPowerMeanInExpectation}
  \begin{aligned}
  \lambda_1(\mathcal L_p) &= m_p(\lambda^+_1+\varepsilon, \lambda^-_1+\varepsilon)=m_p(\varepsilon,2+\varepsilon) \\
  \lambda_i(\mathcal L_p) &= m_p(1-\rho^+ +\varepsilon, 1+\rho^- +\varepsilon)\\
  \lambda_j(\mathcal L_p) &= m_p(\lambda^+_j+\varepsilon, \lambda^-_j+\varepsilon)=1+\varepsilon
  \end{aligned}
\end{equation}
Observe that $\lambda_j(\mathcal L_p)$, with $j=k+1,\ldots,\abs{V}$, corresponds to eigenvectors that do not yield an informative embedding. Hence, we do not want this eigenvalue to belong to the bottom of the spectrum of $\mathcal L_p$. Thus, for the case of $\boldsymbol \chi_2,\ldots,\boldsymbol \chi_k$,
we can see that they will be located at the bottom of the spectrum if the following condition holds:
$$
\lambda_i(\mathcal L_p) = m_p(1-\rho^+ +\varepsilon, 1+\rho^- +\varepsilon) < 1+\varepsilon = \lambda_j(\mathcal L_p)
$$
It remains to analyze the case of the constant eigenvector $\boldsymbol \chi_1$.
Note that its associated eigenvalue 
$\lambda_1(\mathcal L_1)$ has the following relationship to the non-informative eigenvectors:
$$\lambda_1(\mathcal L_1) = m_1(\varepsilon, 2+\varepsilon)= 1+\varepsilon = \lambda_j(\mathcal L_p)$$
By Theorem~\ref{theorem:mp_monotone} we know that the scalar power mean is monotone in its parameter $p$, and thus, 
for the case $p<1$ we observe
\begin{align*}
 \lambda_1(\mathcal L_p)&=m_p(\varepsilon, 2\!+\!\varepsilon)<m_1(\varepsilon, 2\!+\!\varepsilon)=\lambda_1(\mathcal L_1)=\lambda_j(\mathcal L_p) 
\end{align*}
and for the case $p\geq 1$ we observe
\begin{align*}
 \lambda_1(\mathcal L_p)&=m_p(\varepsilon, 2\!+\!\varepsilon)\geq m_1(\varepsilon, 2\!+\!\varepsilon)=\lambda_1(\mathcal L_1)=\lambda_j(\mathcal L_p)
\end{align*}

This means that for positive powers 
$p\geq 1$
, the constant eigenvector $\boldsymbol \chi_1$ does not belong to the bottom of the spectrum, whereas for 
$p<1$
it always does.

With this in mind, we reach the desired result, namely
 \begin{itemize}[topsep=-3pt,leftmargin=*]
 \item Let 
 $p\geq 1$
. $\{ \boldsymbol \chi_i \}_{i=2}^{k}$ correspond to the $(k$-$1)$-smallest eigenvalues of 
 $\mathcal{L}_p$ if and only if $m_p(\boldsymbol{\mu}+\varepsilon\one)<1+\varepsilon$,
 \item Let 
 $p< 1$
. $\{ \boldsymbol \chi_i \}_{i=1}^{k}$ correspond to the $k$-smallest eigenvalues of 
 $\mathcal{L}_p$ if and only if $m_p(\boldsymbol{\mu}+\varepsilon\one)<1+\varepsilon$
 \end{itemize}

\end{proof}

\section{Proof of Corollary~\ref{corollary:mp_limit_cases}}\label{corollary:mp_limit_cases-PROOF}
\begin{proof}[Proof of Corollary~\ref{corollary:mp_limit_cases}]
Following the proof from Theorem~\ref{theorem:mp_in_expectation},
 recall that 
 $\lim_{p\to\infty}m_p(x)=\max \{x_1,\dots,x_T\}$ and 
 $\lim_{p\to -\infty}m_p(x)=\min \{x_1,\dots,x_T\}$.
 
 Thus, 
 $m_\infty(\boldsymbol\mu+\varepsilon I) = \max(1-\rho^+,1+\rho^-)$,
 and hence
 $m_\infty(\boldsymbol\mu+\varepsilon I)  < 1+\varepsilon$
 if and only if $\rho^+ > 0$ and $\rho^- < 0$, yielding the desired conditions.
 
 The case for $p\to -\infty$ is analogous:
 $m_{-\infty}(\boldsymbol\mu+\varepsilon I) = \min(1-\rho^+,1+\rho^-)$
 and thus
 $m_{-\infty}(\boldsymbol\mu+\varepsilon I)  < 1+\varepsilon$
 if and only if $\rho^+ > 0$ or $\rho^- < 0$, yielding the desired conditions.
\end{proof}
\section{Proof of Corollary~\ref{corollary:contention}}\label{corollary:contention-PROOF}

\begin{proof}[Proof of Corollary~\ref{corollary:contention}]
If $\lambda_1,\ldots,\lambda_k$ resp. ($\lambda_2,\ldots,\lambda_k$) are among the $k$ (resp. $k-1$)-smallest eigenvalues of 
$\mathcal L_{p}$, then by Theorem \ref{theorem:mp_in_expectation}, we have $m_p(\boldsymbol{\mu}+\epsilon\one)<1+\epsilon$. By Theorem~\ref{theorem:mp_monotone} we have $m_q(\boldsymbol{\mu}+\epsilon\one)\leq m_p(\boldsymbol{\mu}+\epsilon\one)$, Theorem \ref{theorem:mp_in_expectation} concludes the proof.
\end{proof}

\section{Proof of Theorem~\ref{theorem:geometricMeanLaplacian}}\label{theorem:geometricMeanLaplacian-PROOF}

\begin{proof}[Proof of Theorem~\ref{theorem:geometricMeanLaplacian}]
Following the proof from Theorem~\ref{theorem:mp_in_expectation} we can see that $\mathcal{L}_\sym^+$ and $\mathcal{Q}_\sym^{-}$ share all of their eigenvectors. 
Let $\u$ be an eigenvector of $\mathcal{L}_\sym^+$ and $\mathcal{Q}_\sym^{-}$ with eigenvalues $\alpha$ and $\beta$, respectively.

By Lemma~\ref{lemma:eigenvalues_and_eigenvectors_of_generalized_mean_V2} we have 
$$
\mathcal{L}_0\u = m_0(\alpha,\beta)\u
$$

Moreover, from Theorem 1 in~\cite{Mercado:2016:Geometric} we know that 
$$
(\mathcal{L}_\sym^+ \# \mathcal{Q}_\sym^{-}) \u = \sqrt{\alpha\beta}\u
$$

Further, $m_0(\alpha,\beta)=\sqrt{\alpha\beta}$. Hence, as $\mathcal{L}_0$ and $\mathcal{L}_\sym^+ \# \mathcal{Q}_\sym^{-}$ have in common all eigenvectors and eigenvalues, we conclude that $\mathcal{L}_0=\mathcal{L}_\sym^+ \# \mathcal{Q}_\sym^{-}$.
\end{proof}

\section{Proof of Theorem~\ref{theorem:ConcentrationBound-signedGraph-identity-SBM}}\label{proof:concentration}

For the proof of Theorem~\ref{theorem:ConcentrationBound-signedGraph-identity-SBM}, we first  present Theorem~\ref{theorem:ConcentrationBound-signedGraph-general} which is a general version that allows to choose different diagonal shifts of the Laplacians together with different edge probabilities.
\begin{theorem}\label{theorem:ConcentrationBound-signedGraph-general}
Let $G^+$ and $G^-$ be random  graphs with independent edges 
$\mathbb{P}(W^{+}_{i,j}=1)=p^{+}_{ij}$ 
and
$\mathbb{P}(W^{-}_{i,j}=1)=p^{-}_{ij}$.
Let $\delta^+$, $\delta^-$ be the minimum expected degrees of $G^{+}$ and $G^{-}$, respectively.
Let 
$C_{p}^+ =  p^{1/p} \beta^{1-1/p} $, and 
$C_{p}^- = \abs{p}^{1/\abs{p}} \alpha^{-(3+1/\abs{p})}  $.
Choose $\epsilon>0$. 
Then there exist constants $k^+ = k^+(\epsilon/2)$ and $k^- = k^-(\epsilon/2)$
such that if $\delta^+> k^+ \ln n$, and $\delta^-> k^- \ln n$ then with probability at least $1-\epsilon$, 

 $
 \norm{L_p - \mathcal{L}_p} \leq C_{p}^+ 
 m_p
 \bigg(2
 \sqrt{\frac{3\ln(8n / \epsilon)}{\delta^+}},2\sqrt{\frac{3\ln(8n / \epsilon)}{\delta^-}}
 \bigg)^{1/p}
 $
 
for $p\geq1$, with $p$ integer and

 $
 \norm{L_p - \mathcal{L}_p} \leq 
 C_{p}^- 
 m_{\abs{p}}
 \bigg(
 2 \sqrt{\frac{3\ln(8n / \epsilon)}{\delta^+}},2 \sqrt{\frac{3\ln(8n / \epsilon)}{\delta^-}}
 \bigg)^{1/\abs{p}}
 $
 
for $p\leq -1$, with $p$ integer,
where
$L_p=M_p(L^{+}_\sym+\alpha I, Q^{-}_\sym+\alpha I)$, 
and 
$\mathcal L_p=M_p(\mathcal L^{+}_\sym+\alpha I, \mathcal Q^{-}_\sym+\alpha I)$.
%
\end{theorem}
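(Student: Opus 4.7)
The plan is to combine two distinct ingredients: high-probability concentration bounds for the normalized standard and signless Laplacians individually, and a deterministic Lipschitz-type perturbation estimate for the matrix power mean. First, I would invoke the Chung--Radcliffe (2011) concentration result for the normalized Laplacian of an inhomogeneous Erd\H{o}s--R\'enyi graph. Applied to $G^+$ and $G^-$ separately with failure probability $\epsilon/2$ each, and combined by a union bound, this yields that under the stated minimum-degree assumptions we have simultaneously $\norm{L^+_\sym - \mathcal L^+_\sym}\leq 2\sqrt{3\ln(8n/\epsilon)/\delta^+}$ and $\norm{Q^-_\sym - \mathcal Q^-_\sym}\leq 2\sqrt{3\ln(8n/\epsilon)/\delta^-}$ with probability at least $1-\epsilon$.

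Second, I would prove the following deterministic perturbation bound: whenever $\alpha I\preceq A,A',B,B'\preceq \beta I$ with $\beta=2+\alpha$, the matrix power mean is bounded by the claimed $C_p^{\pm}$ times a scalar power mean of $\norm{A-A'},\norm{B-B'}$. For integer $p\geq 1$ I would chain two standard facts: (a) the telescoping identity $A^p-A'^p=\sum_{k=0}^{p-1}A^k(A-A')A'^{p-1-k}$ gives $\norm{A^p-A'^p}\leq p\beta^{p-1}\norm{A-A'}$, and (b) the Rotfel'd/Ando--Zhan inequality $\norm{X^{1/p}-Y^{1/p}}\leq\norm{X-Y}^{1/p}$ for $X,Y\succeq 0$, applied with $X=(A^p+B^p)/2$ and $Y=(A'^p+B'^p)/2$. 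This produces the bound $p^{1/p}\beta^{1-1/p}\,m_1(\norm{A-A'},\norm{B-B'})^{1/p}$, which is upgraded to $m_p^{1/p}$ form by the scalar monotonicity $m_1\leq m_p$ (Theorem~\ref{theorem:mp_monotone}). For integer $p\leq -1$ I would exploit the identity $M_p(A,B)=M_{|p|}(A^{-1},B^{-1})^{-1}$, which follows from the definition. This reduces the problem to the $p\geq 1$ bound applied to the inverses $A^{-1},B^{-1}$ (which satisfy $\norm{A^{-1}}\leq\alpha^{-1}$ thanks to the shift), combined with two applications of the matrix inversion Lipschitz inequality $\norm{X^{-1}-Y^{-1}}\leq\norm{X^{-1}}\norm{Y^{-1}}\norm{X-Y}$: once to pass from $A-A'$ to $A^{-1}-A'^{-1}$ contributing an $\alpha^{-2}$, and once on the outer inversion of $M_{|p|}$ contributing another inverse-norm factor. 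Propagating these $\alpha^{-1}$ factors produces the prefactor $\abs{p}^{1/|p|}\alpha^{-(3+1/|p|)}$ and the $m_{|p|}^{1/|p|}$ form.

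Finally, I would substitute the high-probability individual deviations from the first step into the deterministic perturbation bound of the second step, immediately yielding the stated inequality on the event where both concentration bounds hold, which has probability at least $1-\epsilon$. The hardest part will be the negative-$p$ perturbation estimate: nested matrix inversions and a matrix $|p|$-th power interact with the shift $\alpha$ at several places simultaneously, and operator monotonicity no longer applies directly to $x\mapsto x^{-1/|p|}$. Careful bookkeeping of which eigenvalue bound ($\alpha^{-1}$ as an upper bound on inverses versus $\beta^{-1}$ as a lower bound on $M_{|p|}(A^{-1},B^{-1})$) is used at each stage is required to recover the claimed exponent $-(3+1/|p|)$ cleanly; no technical tool beyond the Rotfel'd inequality, the matrix inversion Lipschitz bound, and the telescoping expansion for integer matrix powers is needed.
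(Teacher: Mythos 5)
Your proposal follows essentially the same route as the paper's proof: the Chung--Radcliffe concentration bound applied to $L^+_\sym$ and $Q^-_\sym$ separately at level $\epsilon/2$ with a union bound, combined with the same deterministic perturbation estimate for $M_p$ built from the telescoping bound $\norm{A^p-B^p}\le p\beta^{p-1}\norm{A-B}$, the inequality $\norm{X^{1/p}-Y^{1/p}}\le\norm{X-Y}^{1/p}$, scalar power-mean monotonicity, and, for $p\le -1$, the reduction $M_p(A,B)=M_{|p|}(A^{-1},B^{-1})^{-1}$ with two inversion-Lipschitz steps producing the factor $\alpha^{-(3+1/\abs{p})}$. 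The argument is correct and matches the paper's proof in structure and in every key lemma.
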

%
%
Before starting the proof of Theorem~\ref{theorem:ConcentrationBound-signedGraph-general}, we present an upper bound on the matrix power mean.
\begin{theorem}\label{theorem:UpperBoundOfMatrixPowerMean}
Let 
$A_1,\ldots,A_T,B_1,\ldots,B_T$ be symmetric matrices
where $\alpha\!\leq\!\lambda(A_i)\!\leq\!\beta $, $\alpha\!\leq\!\lambda(B_i)\!\leq\!\beta$ for $i=1,\ldots T$ and 
$\alpha,\beta>0$.

%
Let 
$C_p^+ = p^{1/p} \beta^{1-1/p}$
and
$C_p^- = \abs{p}^{1/\abs{p}} \alpha^{-(3+1/\abs{p})} $.
Then, for $p\geq1$, with $p$ integer
  \begin{align*}
  &\norm{M_p(A_1,\ldots,A_T) - M_p(B_1,\ldots,B_T)}
  \\
  &\qquad\leq C_p^+ m_p\big( \norm{A_1-B_1},\ldots,\norm{A_T-B_T} \big)^{1/p}
  \end{align*}
and, for $p\leq-1$, with $p$ integer
 \begin{align*}
  &\norm{M_p(A_1,\ldots,A_T) - M_p(B_1,\ldots,B_T)}
  \\
  &\qquad\leq C_p^- m_{\abs{p}}\big( \norm{A_1-B_1},\ldots,\norm{A_T-B_T} \big)^{1/\abs{p}}
 \end{align*}
 \end{theorem}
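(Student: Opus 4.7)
The proof naturally splits into the cases $p\geq 1$ and $p\leq -1$; both rely on three ingredients: the matrix telescoping identity $X^p - Y^p = \sum_{k=0}^{p-1} X^k(X-Y)Y^{p-1-k}$ for symmetric $X,Y$, the resolvent identity $X^{-1}-Y^{-1} = X^{-1}(Y-X)Y^{-1}$, and the Löwner--Heinz inequality $\|X^r - Y^r\|\leq \|X-Y\|^r$ valid for $X,Y\succ 0$ and $r\in(0,1]$. The final rewriting in terms of $m_{|p|}$ uses the monotonicity of scalar power means (Theorem~\ref{theorem:mp_monotone}): the telescoping naturally outputs an arithmetic mean $m_1$, which is then bounded above by $m_{|p|}$ for $|p|\geq 1$.

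For integer $p\geq 1$, the telescoping identity together with $\|A_i\|,\|B_i\|\leq \beta$ yields $\|A_i^p - B_i^p\|\leq p\,\beta^{p-1}\|A_i-B_i\|$. Averaging and using the triangle inequality gives $\|\tfrac{1}{T}\sum A_i^p - \tfrac{1}{T}\sum B_i^p\|\leq \frac{p\,\beta^{p-1}}{T}\sum_i \|A_i-B_i\|$. Writing $M_p(A) = (\tfrac{1}{T}\sum A_i^p)^{1/p}$ and applying Löwner--Heinz with $r=1/p\in(0,1]$ yields $\|M_p(A)-M_p(B)\|\leq \|\tfrac{1}{T}\sum A_i^p-\tfrac{1}{T}\sum B_i^p\|^{1/p}$. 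Chaining the two inequalities produces a factor $(p\,\beta^{p-1})^{1/p}=p^{1/p}\beta^{1-1/p}=C_p^+$ times $m_1(\|A_i-B_i\|)^{1/p}$, and $m_1\leq m_p$ furnishes the stated bound in terms of $m_p^{1/p}$.

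For integer $p\leq -1$ with $q=|p|$, write $M_p(A) = Q(A)^{-1/q}$ where $Q(A)=\tfrac{1}{T}\sum A_i^{-q}$. I would first control $\|Q(A)-Q(B)\|$: the resolvent identity and $\|A_i^{-1}\|\leq\alpha^{-1}$ give $\|A_i^{-1}-B_i^{-1}\|\leq \alpha^{-2}\|A_i-B_i\|$, and telescoping the $q$-th power of the inverses then gives $\|A_i^{-q}-B_i^{-q}\|\leq q\,\alpha^{-(q+1)}\|A_i-B_i\|$, so $\|Q(A)-Q(B)\|\leq \tfrac{q\,\alpha^{-(q+1)}}{T}\sum_i\|A_i-B_i\|$. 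To pass from $Q$ to $Q^{-1/q}$, I would use $Q^{-1/q}=(Q^{-1})^{1/q}$ and Löwner--Heinz with $r=1/q$ to obtain $\|M_p(A)-M_p(B)\|\leq \|Q(A)^{-1}-Q(B)^{-1}\|^{1/q}$. A final application of the resolvent identity to $Q(A),Q(B)$, combined with $\|Q(A)^{-1}\|,\|Q(B)^{-1}\|\leq \beta^q$ (which follows from the fact that $A_i\preceq \beta I$ implies $Q(A)\succeq \beta^{-q} I$), yields
\[
\|M_p(A)-M_p(B)\|\leq \bigl(\beta^{2q}\cdot q\,\alpha^{-(q+1)}\cdot m_1(\|A_i-B_i\|)\bigr)^{1/q},
\]
and $m_1\leq m_q$ then delivers the stated bound.

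The main obstacle lies in the $p\leq -1$ case, where one must carefully track the nested spectral ranges: $A_i\in[\alpha,\beta]$ gives $A_i^{-1}\in[\beta^{-1},\alpha^{-1}]$ and $A_i^{-q}\in[\beta^{-q},\alpha^{-q}]$, so the telescoping contributes powers of $\alpha^{-1}$ while the estimate on $\|Q(A)^{-1}\|$ contributes a power of $\beta$. Collecting the final constant into the form $|p|^{1/|p|}\alpha^{-(3+1/|p|)}$ of $C_p^-$ requires absorbing the $\beta^2$ factor into powers of $\alpha^{-1}$, which is immediate in the regime $\beta\leq\alpha^{-1}$ relevant to the shifted Laplacians $L^+_{\sym}+\varepsilon I$ and $Q^-_{\sym}+\varepsilon I$ for small shift $\varepsilon$.
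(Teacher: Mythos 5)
Your treatment of the case $p\ge 1$ is exactly the paper's argument: the telescoping bound $\norm{A_i^p-B_i^p}\le p\beta^{p-1}\norm{A_i-B_i}$ (Proposition~\ref{proposition:XmMinusYm}), the triangle inequality on the averaged $p$-th powers, the inequality $\norm{X^{1/p}-Y^{1/p}}\le\norm{X-Y}^{1/p}$ (Corollary~\ref{corollary:fA-FB-smallerThan-fAB}), and finally the monotonicity $m_1\le m_p$. That half is correct and needs no further comment.

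For $p\le -1$ you again follow the paper's route (pass to positive powers of the inverses, then control inverse differences), and every individual inequality you write is valid; the problem is that the constant you arrive at is $\beta^{2}\,\abs{p}^{1/\abs{p}}\alpha^{-(1+1/\abs{p})}$, which is \emph{not} the stated $C_p^-=\abs{p}^{1/\abs{p}}\alpha^{-(3+1/\abs{p})}$ unless $\alpha\beta\le 1$. That extra condition is not a hypothesis of the theorem, and your closing appeal to the ``regime $\beta\le\alpha^{-1}$'' does not hold in the paper's own application: for the shifted Laplacians one has $\alpha=\varepsilon$ and $\beta=2+\varepsilon$, so $\alpha\beta\le 1$ forces $\varepsilon\le\sqrt{2}-1\approx 0.414$, while the default shift $\log_{10}(1+\abs{p})+10^{-6}$ already exceeds this for $\abs{p}\ge 2$. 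So, as written, your proposal does not establish the $p\le -1$ inequality with the constant $C_p^-$. In fairness, your bookkeeping is the more careful one: the paper's Proposition~\ref{proposition:takeOutFractionalPower-negativePower} applies Corollary~\ref{corollary:theorem:fA-FB-inverse} with lower bound $\alpha$ to the matrices $M_{\abs{p}}(A_1^{-1},\ldots,A_T^{-1})$, but since $\lambda(A_i^{-1})\in[\beta^{-1},\alpha^{-1}]$ the only guaranteed lower bound on those means is $\beta^{-1}$, which yields exactly the factor $\beta^{2}$ you obtain. Your derivation therefore recovers what the method actually proves; to close the gap one must either add the hypothesis $\alpha\beta\le 1$ or replace the factor $\alpha^{-2}$ inside $C_p^-$ by $\beta^{2}$, with the corresponding change propagating to Theorems~\ref{theorem:ConcentrationBound-signedGraph-identity-SBM} and~\ref{theorem:concentration-eigenvectors}.
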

 \begin{proof}
 The proof is contained in Section~\ref{appendix:proof_upper_bound}.
 \end{proof}
 Observe that the upper bound in Theorem~\ref{theorem:UpperBoundOfMatrixPowerMean} is general in the sense that it is suitable for symmetric definite matrices with bounded spectrum, and for an arbitrary number of matrices.

We are now ready to prove Theorem~\ref{theorem:ConcentrationBound-signedGraph-general}.
\begin{proof}[Proof of Theorem~\ref{theorem:ConcentrationBound-signedGraph-general}]
Let
\begin{align*}
 A_1 &= L_\sym^{+}, \qquad\qquad\qquad\quad\, B_1 = \mathcal  L_\sym^{+} \\
 A_2 &= Q_\sym^{-}, \qquad\qquad\qquad\quad\, B_2 = \mathcal Q_\sym^{-}
 \end{align*}
with he corresponding signed power mean Laplacian
\begin{align*}
 L_p &= M_p(L_\sym^{+}, Q_\sym^{-}), \quad\quad\,\,\mathcal{L}_p = M_p(\mathcal{L}_\sym^{+}, \mathcal{Q}_\sym^{-})
\end{align*}
We start with the case $p\geq 1$, with $p$ integer.
Let $C_p^+ = p^{1/p} \beta^{1-1/p}$. By Theorem~\ref{theorem:UpperBoundOfMatrixPowerMean} we have
   \begin{equation*}
  \norm{L_p - \mathcal{L}_p} 
  \leq 
  C_p^+ 
  m_p(\norm{L_\sym^{+}-\mathcal{L}_\sym^{+}},\norm{Q_\sym^{-}-\mathcal{Q}_\sym^{-}})^{1/p}
  \end{equation*}
  
Let $\bm{\gamma} = (\gamma_1, \gamma_2)$
where

$\,$\hfill  $\gamma_1 = 2 \sqrt{\frac{3\ln(8n / \epsilon)}{\delta^+}} \quad\qquad\,\, \gamma_2 = 2 \sqrt{\frac{3\ln(8n / \epsilon)}{\delta^-}}$ \hfill $\,$
%
%

Define $a=c \, m_p(\bm{\gamma})$ and $c=C_p^+$.
Then,
%
\begin{align}
 &\mathbb{P} \big( \norm{L_p - \mathcal{L}_p} > a \big)\nonumber 
 \\
 \leq& \mathbb{P} \Bigg( c \, m_p(\norm{L_\sym^{+}-\mathcal L_\sym^{+}}, \norm{Q_\sym^{-}-\mathcal Q_\sym^{-}}) > a \Bigg)\nonumber
 \\
 =&
 \mathbb{P} \Bigg(  m_p(\norm{L_\sym^{+}-\mathcal L_\sym^{+}}, \norm{Q_\sym^{-}-\mathcal Q_\sym^{-}}) > \frac{a}{c} \Bigg)\nonumber
 \\
 =&
 \mathbb{P} \Bigg( \norm{L_\sym^{+}-\mathcal L_\sym^{+}}^p +
 \norm{Q_\sym^{-}-\mathcal Q_\sym^{-}}^p > 2\bigg(\frac{a}{c} \bigg)^p \Bigg)\nonumber
 \\
 =&
 \mathbb{P} \Bigg( \norm{L_\sym^{+}-\mathcal L_\sym^{+}}^p +
 \norm{Q_\sym^{-}-\mathcal Q_\sym^{-}}^p > \sum_{i=1}^2 \gamma_i^p \Bigg)\nonumber
 \\
   \leq&
 \mathbb{P} 
 \Bigg( 
 \bigg\{ \norm{L_\sym^{+}-\mathcal L_\sym^{+}}^p > \gamma_1^p \bigg\} 
 \cup\nonumber
 \\
 &\qquad\qquad\bigg\{ \norm{Q_\sym^{-}-\mathcal Q_\sym^{-}}^p > \gamma_2^p \bigg\} 
 \Bigg)\nonumber
 \\
 \leq&
 \mathbb{P}\bigg( \norm{L_\sym^{+}-\mathcal L_\sym^{+}}^p > \gamma_1^p \bigg)
 \nonumber
 \\
 &\qquad+\mathbb{P}\bigg( \norm{Q_\sym^{-}-\mathcal Q_\sym^{-}}^p > \gamma_2^p \bigg)
 \label{eq:ApplyBoolesInequality}
  \\
 =&
 \mathbb{P}\bigg( \norm{L_\sym^{+}-\mathcal L_\sym^{+}} > \gamma_1 \bigg)
 \nonumber
 \\
 &\qquad+
 \mathbb{P}\bigg( \norm{Q_\sym^{-}-\mathcal Q_\sym^{-}} > \gamma_2 \bigg)\nonumber
  \\
 =&
 \mathbb{P}\bigg( \norm{L_\sym^{+}-\mathcal L_\sym^{+}} > 2 \sqrt{\frac{3\ln(8n / \epsilon)}{\delta^+}} \bigg)\nonumber
 \\
 &\qquad+
 \mathbb{P}\bigg( \norm{Q_\sym^{-}-\mathcal Q_\sym^{-}} > 2 \sqrt{\frac{3\ln(8n / \epsilon)}{\delta^-}} \bigg)
 \nonumber\\
 =&
 \mathbb{P}\bigg( \norm{L_\sym^{+}-\mathcal L_\sym^{+}} > 2 \sqrt{\frac{3\ln(4n / \hat{\epsilon})}{\delta^+}} \bigg)\nonumber
 \\
 &\qquad+
 \mathbb{P}\bigg( \norm{Q_\sym^{-}-\mathcal Q_\sym^{-}} > 2 \sqrt{\frac{3\ln(4n / \hat{\epsilon})}{\delta^-}} \bigg)
 \nonumber\\
 =&
 \mathbb{P}\bigg( \norm{L_\sym^{+}-\mathcal L_\sym^{+}} > 2 \sqrt{\frac{3\ln(4n / \hat{\epsilon})}{\delta^+}} \bigg)\nonumber
 \\
 &\qquad+
 \mathbb{P}\bigg( \norm{L_\sym^{-}-\mathcal L_\sym^{-}} > 2 \sqrt{\frac{3\ln(4n / \hat{\epsilon})}{\delta^-}} \bigg)
 \nonumber\\
 \leq&
 \hat{\epsilon} + \hat{\epsilon}\label{eq:ApplyRadcliffeChungBound}
 \\
 =&
 \epsilon \nonumber
\end{align}
where $\hat{\epsilon}=\epsilon/2$. 
Inequality~\eqref{eq:ApplyBoolesInequality} follows from Boole's inequality.
Inequality~\eqref{eq:ApplyRadcliffeChungBound} comes from applying Theorem~\ref{thm:chung} from~\cite{chung2011spectra} to $G^{+}$ and $G^{-}$, with corresponding minimum expected degree $\delta^+$, and $\delta^-$, respectively, and $\hat{\epsilon}$,
and 
\begin{align*}
 \norm{Q_\sym^{-}-\mathcal Q_\sym^{-}} 
 &= \norm{(I+T)-(I+\mathcal{T})}\\
 &= \norm{(I-T)-(I-\mathcal{T})}\\
 &= \norm{L_\sym^{-}-\mathcal L_\sym^{-}}
\end{align*}
where
\begin{align*}
 T &= (D^{-})^{-1/2}W^{-}(D^{-})^{-1/2} \\
 \mathcal{T} &= (\mathcal D^{-})^{-1/2} \mathcal W^{-}(\mathcal D^{-})^{-1/2}
\end{align*}

Thus, 
$$
\mathbb{P} \bigg( \norm{L_p - \mathcal{L}_p} \geq a \bigg) < \epsilon
$$
and hence
$$
\mathbb{P} \bigg( \norm{L_p - \mathcal{L}_p} \leq a \bigg) < 1-\epsilon
$$
completing the proof for the case $p\geq 1$.

For the proof of the case $p\leq -1$ with $p$ integer, let 
 $c=\abs{p}^{1/\abs{p}} \alpha^{-(3+1/\abs{p})}$, and proceed as for the previous case with $\abs{p}$.
\end{proof}

We now finally give the proof for Theorem~\ref{theorem:ConcentrationBound-signedGraph-identity-SBM}.
\begin{proof}[Proof of Theorem~\ref{theorem:ConcentrationBound-signedGraph-identity-SBM}]

We will adapt to our particular case the general version presented in Theorem~\ref{theorem:ConcentrationBound-signedGraph-general}. We do this by showing that our Stochastic Block Model approach together with the shift of our model are particular cases of Theorem~\ref{theorem:ConcentrationBound-signedGraph-general}.

First,
note that the spectrum of the normalized Laplacians $L^+_\sym$ and $Q^-_\sym$ is upper bounded by two, i.e. $\lambda(L^+_\sym),\lambda(Q^-_\sym)\in [0,2]$. Hence, by adding a diagonal shift we get 
$\lambda(L^+_\sym+\alpha I),\lambda(Q^-_\sym+\alpha I)\in [\alpha,2+\alpha]$.
Letting $\alpha=\varepsilon$ and $\beta=2+\alpha$ we get the shift corresponding to the particular case from Theorem~\ref{theorem:ConcentrationBound-signedGraph-identity-SBM}. 
%

Further, observe that our SBM model is obtained by setting $p^{+}_{ij}=\pp$ and $p^{-}_{ij}=\ppm$ if $v_i,v_j$ belong to the same cluster and $p^{+}_{ij}=\qp$ and $p^{-}_{ij}=\qm$ if $v_i,v_j$ belong to different clusters.

Moreover,
under the Stochastic Block Model here considered, the induced expected graphs are regular, and thus all nodes have the same degree. 
Hence, the minimum expected degrees of $G^+$ and $G^-$ are
\begin{align*}
 \delta^+ &= \frac{n}{k}(\pp+(k-1)\qp)
 \\
 \delta^- &= \frac{n}{k}(\ppm+(k-1)\qm)
\end{align*}

Thus, taking these settings into Theorem~\ref{theorem:ConcentrationBound-signedGraph-general}
we get the desired result,
except that the condition on the minimum expected degrees is that there exists constants $k^+=k^+(\epsilon/2)$, and $k^-=k^-(\epsilon/2)$ such that the desired concentration holds.

To overcome this, observe in the proof of Theorem~\ref{thm:chung} (p.9) that the condition $\delta>k \ln(n)$ comes from the requirement 
$$
\sqrt{\frac{3\ln(4n/\epsilon)}{\delta}}<1
$$
Thus, by setting $\delta > 3\ln(4n/\epsilon)
$
the condition is fulfilled.
In our case, this yields to
$\delta^+ > 3\ln(4n/\hat{\epsilon}) = 3\ln(8n/\epsilon)$
and
$\delta^- > 3\ln(4n/\hat{\epsilon}) = 3\ln(8n/\epsilon)$
, leading to the desired result.
%


%
 
\end{proof}

\section{Proof of Theorem~\ref{theorem:UpperBoundOfMatrixPowerMean}}~\label{appendix:proof_upper_bound}

Before going into the proof, a set of preliminary results are necessary.

In what follows, for Hermitian matrices $A$ and $B$ we mean by $A\preceq B$ that $B-A$ is positive semidefinite (see \cite{Bhatia1997MatrixAnalysis}, Ch. 5, and \cite{Tropp:2015}, Ch. 2.1.8 for more details). We now proceed with the definition of a operator monotone function:

\begin{definition}[\cite{Tropp:2015} Ch. 8.4.2, \cite{Bhatia1997MatrixAnalysis} Ch. 5.]
Let $f$:$I\to\mathbb{R}$ be a function on an interval $I$ of the real line. The function $f$ is operator monotone on $I$ when $A\preceq B$ implies $f(A)\preceq f(B)$ for all Hermitian matrices $A$ and $B$ whose eigenvalues are contained in $I$.
\end{definition}

The following result states that the negative inverse is operator monotone.
\begin{proposition}[\cite{Bhatia1997MatrixAnalysis}, Prop. \uppercase\expandafter{\romannumeral 5\relax}.1.6, \cite{Tropp:2015}, Prop. 8.4.3]\label{proposition:operatorMonotone}
The function $f(t)=-\frac{1}{t}$ is operator monotone on $(0,\infty)$.
\end{proposition}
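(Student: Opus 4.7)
The plan is to reduce the claim to the elementary scalar fact that $t \mapsto t^{-1}$ is strictly decreasing on $(0,\infty)$, by using a congruence transformation to normalize one of the two matrices to the identity. Unwinding the definition, what I must show is: if $A, B$ are Hermitian with spectra in $(0,\infty)$ and $A \preceq B$, then $B^{-1} \preceq A^{-1}$, which is the same as $-A^{-1} \preceq -B^{-1}$.

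The key ingredient is that congruence by an invertible matrix preserves the Loewner order: for invertible $S$, one has $X \preceq Y$ if and only if $S^{*} X S \preceq S^{*} Y S$ (this is immediate from the definition, since $\langle v, (S^{*}(Y-X)S)v\rangle = \langle Sv, (Y-X)(Sv)\rangle$ and $S$ is a bijection). Since $A \succ 0$, its positive square root $A^{1/2}$ and inverse square root $A^{-1/2}$ exist and are themselves positive definite, by the spectral theorem. First I would apply congruence by $A^{-1/2}$ to the hypothesis $A \preceq B$, obtaining $I \preceq A^{-1/2} B A^{-1/2}$. Set $C := A^{-1/2} B A^{-1/2}$, which is Hermitian positive definite with $\lambda_{\min}(C) \geq 1$.

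Next I would invoke functional calculus: since every eigenvalue of $C$ satisfies $\lambda \geq 1$, every eigenvalue of $C^{-1}$ satisfies $\lambda^{-1} \leq 1$, hence $C^{-1} \preceq I$. Rewriting $C^{-1} = A^{1/2} B^{-1} A^{1/2}$, this reads $A^{1/2} B^{-1} A^{1/2} \preceq I$. Applying congruence by $A^{-1/2}$ once more to both sides yields $B^{-1} \preceq A^{-1}$, which is exactly the operator monotonicity of $f(t) = -1/t$ on $(0,\infty)$.

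There is essentially no serious obstacle; the proof amounts to two congruences sandwiching a scalar-to-matrix transfer of monotonicity via the spectral theorem. The only subtle points to check are that $A \succ 0$ legitimizes forming $A^{\pm 1/2}$, and that monotonicity of the scalar function $t \mapsto t^{-1}$ on the spectrum of $C$ lifts to the Loewner inequality $C^{-1} \preceq I$; both are standard, with the second being the reason why this argument does not extend to arbitrary monotone scalar functions (which is precisely why the notion of \emph{operator} monotone function is nontrivial in general, though for $-1/t$ it happens to hold by this direct route).
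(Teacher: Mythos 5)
Your argument is correct: the two congruences by $A^{\pm 1/2}$ combined with the observation that $C \succeq I$ forces $C^{-1} \preceq I$ constitute a complete proof that $0 \prec A \preceq B$ implies $B^{-1} \preceq A^{-1}$. The paper does not prove this proposition at all --- it imports it by citation from Bhatia (Prop.\ V.1.6) and Tropp (Prop.\ 8.4.3) --- and your proof is essentially the standard argument given in those references, so there is nothing further to reconcile.
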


The following result states that the effect of operator monotone functions can be upper bounded in a helpful way.

\begin{theorem}[\cite{Bhatia1997MatrixAnalysis}, Theorem. \uppercase\expandafter{\romannumeral 10\relax}.3.8]\label{theorem:fA-FB}
 Let $f$ be an operator monotone function on $(0,\infty)$ and let $A,B$ be two positive definite matrices that are bounded below by $a$; i.e. $A\geq aI$ and $B\geq aI$ for the positive number $a$. Then for every unitarily invariant norm
 \begin{equation*}
  \normtwo{f(A) - f(B)} \leq f'(a) \normtwo{A-B}
 \end{equation*}
\end{theorem}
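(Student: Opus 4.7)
The plan is to reduce the general operator monotone function case to an elementary matrix identity for the resolvent, via the integral (L\"owner) representation. The starting point is the classical result that every operator monotone $f$ on $(0,\infty)$ admits a representation of the form
\begin{equation*}
 f(t) = \alpha + \beta t + \int_0^\infty \phi_\lambda(t) \, d\mu(\lambda), \qquad \phi_\lambda(t) = \frac{-\lambda}{\lambda+t},
\end{equation*}
with $\beta\geq 0$ and $\mu$ a positive Borel measure on $(0,\infty)$ (normalized so that the integral converges). Differentiating under the integral yields $f'(a) = \beta + \int_0^\infty \tfrac{\lambda}{(\lambda+a)^2}\, d\mu(\lambda)$, which is the scalar quantity appearing in the conclusion.

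Next, I would apply this representation to $A$ and $B$ via the Borel functional calculus. Since the integrand depends linearly on the functional calculus at each $\lambda$, we obtain
\begin{equation*}
 f(A) - f(B) = \beta\,(A - B) + \int_0^\infty \bigl(\phi_\lambda(A) - \phi_\lambda(B)\bigr)\, d\mu(\lambda).
\end{equation*}
Then I would apply the (integral form of the) triangle inequality for the unitarily invariant norm $\normtwo{\cdot}$ to move the norm inside the integral. Justifying this requires checking measurability of $\lambda\mapsto \normtwo{\phi_\lambda(A)-\phi_\lambda(B)}$ and an integrable dominating function; both follow from continuity of the resolvent in $\lambda$ and the pointwise bound proved in the next step.

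The key estimate is the resolvent identity. For each $\lambda>0$,
\begin{equation*}
 \phi_\lambda(A) - \phi_\lambda(B) = -\lambda\bigl[(\lambda I + A)^{-1} - (\lambda I + B)^{-1}\bigr] = \lambda\,(\lambda I + A)^{-1}(A - B)(\lambda I + B)^{-1}.
\end{equation*}
Since $A, B\succeq a I$, the resolvents satisfy $(\lambda I + A)^{-1}, (\lambda I + B)^{-1} \preceq (\lambda+a)^{-1} I$, so their operator norms are at most $(\lambda+a)^{-1}$. Using the standard submultiplicative inequality $\normtwo{XYZ}\leq \|X\|\,\normtwo{Y}\,\|Z\|$ valid for any unitarily invariant norm and operator norm $\|\cdot\|$, I obtain
\begin{equation*}
 \normtwo{\phi_\lambda(A) - \phi_\lambda(B)} \leq \frac{\lambda}{(\lambda + a)^2}\, \normtwo{A - B} = \phi'_\lambda(a)\, \normtwo{A - B}.
\end{equation*}

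Finally, combining the triangle inequality with the resolvent estimate gives
\begin{equation*}
 \normtwo{f(A) - f(B)} \leq \Bigl[\beta + \int_0^\infty \phi'_\lambda(a)\, d\mu(\lambda)\Bigr] \normtwo{A - B} = f'(a)\, \normtwo{A - B},
\end{equation*}
which is the desired inequality. The main obstacle in this approach is the L\"owner integral representation itself: it is a nontrivial classical theorem and is the place where operator monotonicity of $f$ is genuinely used. Once that representation is in hand, the argument reduces to the elementary resolvent computation together with the submultiplicativity of unitarily invariant norms, both of which are standard. An alternative route would avoid L\"owner's theorem by first proving the inequality for the basic operator monotone functions $t\mapsto -1/(\lambda+t)$ directly (as in Proposition~\ref{proposition:operatorMonotone}) and then invoking density/approximation of operator monotone functions by non-negative combinations of these basic ones, but the conceptual structure of the proof is the same.
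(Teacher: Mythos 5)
The paper does not prove this statement at all: it is imported verbatim from Bhatia's \emph{Matrix Analysis} (Theorem X.3.8) and used as a black box in the proof of Corollary~\ref{corollary:theorem:fA-FB-inverse}, so there is no in-paper argument to compare against. Your reconstruction is correct and is essentially the standard textbook proof: the L\"owner integral representation reduces the claim to the rank-one family $\phi_\lambda(t)=-\lambda/(\lambda+t)$, the resolvent identity $\phi_\lambda(A)-\phi_\lambda(B)=\lambda(\lambda I+A)^{-1}(A-B)(\lambda I+B)^{-1}$ together with $\norm{(\lambda I+A)^{-1}}\leq(\lambda+a)^{-1}$ and the inequality $\normtwo{XYZ}\leq\norm{X}\,\normtwo{Y}\,\norm{Z}$ gives the pointwise bound $\phi'_\lambda(a)\normtwo{A-B}$, and integrating against $\mu$ recovers $f'(a)$. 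The only step you correctly flag as nontrivial, the integral representation itself, is exactly where operator monotonicity enters, and your normalization of the integrand (which differs from Bhatia's by a $\lambda$-dependent factor absorbed into $\mu$ and an additive constant that cancels in $f(A)-f(B)$) is harmless. No gap.
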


Applying this to the case of the negative inverse leads to the following Corollary.
\begin{corollary}\label{corollary:theorem:fA-FB-inverse}
 Let $A,B$ be two positive definite matrices that are bounded below by $a$; i.e. $A\geq aI$ and $B\geq aI$ for the positive number $a$. Then for every unitarily invariant norm
 \begin{equation*}
  \normtwo{A^{-1} - B^{-1}} \leq \frac{1}{a^2} \normtwo{A-B}
 \end{equation*} 
\end{corollary}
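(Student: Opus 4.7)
The plan is to deduce this directly from Theorem~\ref{theorem:fA-FB} applied to the specific function $f(t)=-1/t$. Concretely, I would proceed as follows.

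First, I would invoke Proposition~\ref{proposition:operatorMonotone} to note that $f(t)=-1/t$ is operator monotone on $(0,\infty)$. Since $A$ and $B$ are positive definite with $A\succeq aI$ and $B\succeq aI$ for some $a>0$, their spectra lie in $[a,\infty)\subset(0,\infty)$, so $f$ is well-defined on the spectra of both matrices. The functional calculus gives $f(A)=-A^{-1}$ and $f(B)=-B^{-1}$.

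Next, I would apply Theorem~\ref{theorem:fA-FB} to this $f$, yielding, for every unitarily invariant norm,
\begin{equation*}
\normtwo{f(A)-f(B)} \,\leq\, f'(a)\,\normtwo{A-B}.
\end{equation*}
A direct computation gives $f'(t)=1/t^2$, so $f'(a)=1/a^2$. On the left-hand side, I would use the fact that unitarily invariant norms are invariant under multiplication by $-1$ (since $-I$ is unitary), so
\begin{equation*}
\normtwo{f(A)-f(B)} \,=\, \normtwo{-A^{-1}+B^{-1}} \,=\, \normtwo{A^{-1}-B^{-1}}.
\end{equation*}
Combining these two observations yields the claimed inequality
\begin{equation*}
\normtwo{A^{-1}-B^{-1}} \,\leq\, \frac{1}{a^2}\,\normtwo{A-B}.
\end{equation*}

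There is no real obstacle here; the statement is essentially a two-line specialization of Theorem~\ref{theorem:fA-FB}. The only points requiring any care are (i) checking that the spectra of $A$ and $B$ really lie in the domain $(0,\infty)$ where $f$ is operator monotone (which is immediate from $A,B\succeq aI$ with $a>0$), and (ii) the sign-invariance of a unitarily invariant norm used to turn $\normtwo{f(A)-f(B)}$ into $\normtwo{A^{-1}-B^{-1}}$.
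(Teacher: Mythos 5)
Your proof is correct and follows exactly the paper's own argument: both apply Theorem~\ref{theorem:fA-FB} to the operator monotone function $f(t)=-1/t$ from Proposition~\ref{proposition:operatorMonotone} and compute $f'(a)=1/a^2$. The only (minor) difference is that you explicitly justify the sign flip $\normtwo{f(A)-f(B)}=\normtwo{A^{-1}-B^{-1}}$ via unitary invariance, which the paper leaves implicit.
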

\begin{proof}
 Let $f(t) = -\frac{1}{t}$. Then, by Proposition~\ref{proposition:operatorMonotone} we know that $f$ is operator monotone.
 Since $f'(t) = 1/t^2$, it follows from Theorem~\ref{theorem:fA-FB}
 \begin{align*}
  &\normtwo{A^{-1} - B^{-1}} = \normtwo{f(A) - f(B)} \leq \\
  & \qquad\qquad\qquad\qquad f'(a) \normtwo{A-B} = \frac{1}{a^2} \normtwo{A-B}
 \end{align*}
\end{proof}

The next results states a useful result on positive powers between zero and one.

\begin{corollary}[\cite{Bhatia1997MatrixAnalysis}, Eq. \uppercase\expandafter{\romannumeral 10\relax}.2]\label{corollary:fA-FB-smallerThan-fAB}
Let $A,B$ be two positive semidefinite matrices. Then, for $0\leq r \leq 1$
\begin{equation*}
 \norm{A^r-B^r}\leq \norm{A-B}^r
\end{equation*}
\end{corollary}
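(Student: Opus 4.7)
The plan is to prove the inequality by combining a homogeneity reduction with Löwner's operator monotonicity of $t\mapsto t^r$. The key observation is that both sides of the claim are homogeneous of degree $r$ under $A\mapsto cA$, $B\mapsto cB$ for $c>0$: indeed $(cA)^r-(cB)^r=c^r(A^r-B^r)$ and $\norm{cA-cB}^r=c^r\norm{A-B}^r$. Dismissing the trivial case $A=B$ and taking $c=1/\norm{A-B}$, it therefore suffices to prove the inequality under the normalization $\norm{A-B}=1$, in which case the claim becomes $\norm{A^r-B^r}\leq 1$.

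Under this normalization, $-I\preceq A-B\preceq I$ in the Löwner sense, or equivalently $A\preceq B+I$ and $B\preceq A+I$. The map $t\mapsto t^r$ is operator monotone on $[0,\infty)$ for $r\in[0,1]$ by Löwner's theorem (a sibling of Proposition~\ref{proposition:operatorMonotone}; see Ch.~V of \cite{Bhatia1997MatrixAnalysis}). Applying this to the two Löwner inequalities yields $A^r\preceq(B+I)^r$ and $B^r\preceq(A+I)^r$, so
\begin{equation*}
A^r-B^r\preceq (B+I)^r-B^r, \qquad B^r-A^r\preceq (A+I)^r-A^r.
\end{equation*}

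Finally, I would control the right-hand sides via functional calculus. Since $B$ commutes with $I$, one has $(B+I)^r-B^r=g(B)$, where $g(t)=(t+1)^r-t^r$. A one-line scalar estimate gives $g(t)\leq 1$ on $[0,\infty)$ for $r\in[0,1]$: $g$ is non-increasing since $g'(t)=r[(t+1)^{r-1}-t^{r-1}]\leq 0$ (as $s\mapsto s^{r-1}$ is non-increasing when $r\leq 1$), and $g(0)=1$. Hence $g(B)\preceq I$, and symmetrically $(A+I)^r-A^r\preceq I$. Combining with the previous display gives $-I\preceq A^r-B^r\preceq I$, so $\norm{A^r-B^r}\leq 1$ in the operator norm, which is the desired inequality after undoing the rescaling. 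The only non-routine ingredient is Löwner's operator monotonicity of $t^r$ on $[0,\infty)$; once that is identified as the right tool, the rest is functional calculus for commuting self-adjoint matrices and an elementary scalar check, so I do not anticipate serious technical obstacles.
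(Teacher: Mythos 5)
Your proof is correct. Note that the paper does not actually prove this statement: it is imported verbatim from Bhatia's \emph{Matrix Analysis} (it is a special case of Theorem X.1.1 there, which states $\norm{f(A)-f(B)}\leq f(\norm{A-B})$ for any operator monotone $f$ on $[0,\infty)$ with $f(0)=0$), so there is no internal proof to compare against. Your argument is essentially the standard one behind that theorem, specialized to $f(t)=t^r$: the homogeneity reduction to $\norm{A-B}=1$, the L\"owner--Heinz monotonicity of $t\mapsto t^r$ applied to $A\preceq B+I$ and $B\preceq A+I$, and the scalar estimate $(t+1)^r-t^r\leq 1$ converting $(B+I)^r\preceq B^r+I$ via functional calculus. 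One point worth making explicit is that the final step, passing from $-I\preceq A^r-B^r\preceq I$ to $\norm{A^r-B^r}\leq 1$, is specific to the spectral norm (which is indeed the norm used throughout the paper's concentration arguments); the general unitarily invariant version in Bhatia requires a different argument. The only fringe case is $r=0$, where the derivative computation degenerates, but there the claim is trivial under any reasonable convention for $A^0$, and the paper only invokes the corollary with $r=1/p$ for integer $p\geq 1$, so this does not matter.
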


Its equivalent to positive integer powers is stated in the following result.
\begin{proposition}[See~\cite{Bhatia1997MatrixAnalysis}, Eq. \uppercase\expandafter{\romannumeral 9\relax}.4]\label{proposition:XmMinusYm}
For any two matrices $X,Y$, and for $m=1,2,\ldots,$
\begin{equation*}
 \norm{X^m-Y^m}\leq mM^{m-1}\norm{X-Y}
\end{equation*}
where $M=\max(\norm{X},\norm{Y})$.
\end{proposition}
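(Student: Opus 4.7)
The plan is to prove the inequality by means of a standard telescoping identity followed by elementary norm bounds. The first step is to establish the algebraic identity
\[
X^m - Y^m \;=\; \sum_{k=0}^{m-1} X^{m-1-k}(X-Y)\,Y^{k},
\]
which can be verified directly by expanding the right-hand side as $\sum_{k=0}^{m-1} X^{m-k}Y^{k} - \sum_{k=0}^{m-1} X^{m-1-k}Y^{k+1}$ and reindexing $j=k+1$ in the second sum; all intermediate terms cancel, leaving only $X^{m}$ and $-Y^{m}$. Note that no commutativity between $X$ and $Y$ is assumed, which is essential since the statement holds for arbitrary matrices.

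With the identity in hand, the second step is to apply the triangle inequality together with submultiplicativity of the spectral norm and the elementary bound $\norm{A^{j}}\leq\norm{A}^{j}$. Term by term,
\[
\norm{X^{m-1-k}(X-Y)Y^{k}} \;\leq\; \norm{X}^{m-1-k}\,\norm{X-Y}\,\norm{Y}^{k} \;\leq\; M^{m-1}\,\norm{X-Y},
\]
and summing the $m$ identical upper bounds over $k=0,\ldots,m-1$ yields the advertised estimate $\norm{X^{m}-Y^{m}} \leq m M^{m-1}\norm{X-Y}$.

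I do not anticipate any genuine obstacle, since the argument is purely algebraic and requires no spectral-theoretic input beyond submultiplicativity. The only delicate point is the choice of norm. For the spectral (operator) norm used throughout the paper, both submultiplicativity and the power bound are immediate. For a general unitarily invariant norm one would instead invoke the standard inequality $|\!|\!|ABC|\!|\!| \leq \norm{A}_{\mathrm{op}}\cdot |\!|\!|B|\!|\!|\cdot \norm{C}_{\mathrm{op}}$ and follow the same telescoping scheme, with $M$ then interpreted as $\max(\norm{X}_{\mathrm{op}},\norm{Y}_{\mathrm{op}})$. In either case the constant $mM^{m-1}$ is tight in the scalar limit $X=xI,Y=yI$, confirming that the bound cannot be improved by this approach.
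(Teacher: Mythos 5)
Your proof is correct: the telescoping identity, the triangle inequality, and submultiplicativity give exactly the stated bound. The paper does not prove this proposition itself but cites it from Bhatia (Eq.~IX.4), where the argument is the same telescoping decomposition you use (with the refinement $\normtwo{ABC}\leq\norm{A}\,\normtwo{B}\,\norm{C}$ for general unitarily invariant norms, as you correctly note); since the paper only needs the spectral norm, your version suffices.
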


Next we show that the spectrum of the matrix power mean is well bounded for positive powers larger than one.

\begin{proposition}\label{proposition:upper-lower-bound-of-eigenvalues-of-power-mean}
Let $A_1,\ldots,A_T$ be symmetric positive definite matrices that are bounded below and above by $\alpha$ and $\beta$; i.e. $\alpha I \leq A_i \leq \beta I$ for positive numbers $\alpha$ and $\beta$. 
Then, for $p\geq1$, with $p$ integer
\begin{equation*}
 \alpha \leq \lambda(M_{p}(A_1 , \ldots , A_T)) \leq \beta
\end{equation*}
\end{proposition}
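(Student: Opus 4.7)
The plan is to bound the matrix $M_p(A_1,\ldots,A_T)$ in the Loewner order by $\alpha I$ and $\beta I$, and then read off the eigenvalue bounds. The argument proceeds in three steps.

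First, I would show that $\alpha^p I \preceq A_i^p \preceq \beta^p I$ for every $i$. Since the scalar matrices $\alpha I$ and $\beta I$ commute with $A_i$, the hypothesis $\alpha I \preceq A_i \preceq \beta I$ together with simultaneous diagonalization of $A_i$ with any constant multiple of the identity implies that each eigenvalue $\lambda$ of $A_i$ lies in $[\alpha,\beta]$. Raising to the positive integer power $p$ preserves this range since $x\mapsto x^p$ is monotone increasing on $(0,\infty)$, so every eigenvalue of $A_i^p$ lies in $[\alpha^p,\beta^p]$, i.e.\ $\alpha^p I \preceq A_i^p \preceq \beta^p I$. Note that we are \emph{not} invoking operator monotonicity of $x\mapsto x^p$ (which fails for $p>1$); we only use that $\alpha I$ and $\beta I$ are scalar.

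Second, since the Loewner order is preserved under convex combinations, averaging over $i=1,\dots,T$ yields
\begin{equation*}
\alpha^p I \preceq \frac{1}{T}\sum_{i=1}^T A_i^p \preceq \beta^p I.
\end{equation*}

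Third, I would apply the $p$-th root. Because $p\geq 1$ is an integer, the exponent $1/p$ lies in $(0,1]$, and by a classical result of Loewner (see Bhatia, Ch.~V) the map $x\mapsto x^{1/p}$ is operator monotone on $(0,\infty)$. Applied to the chain above, this gives
\begin{equation*}
\alpha I = (\alpha^p I)^{1/p} \preceq \Bigl(\tfrac{1}{T}\sum_{i=1}^T A_i^p\Bigr)^{1/p} = M_p(A_1,\ldots,A_T) \preceq (\beta^p I)^{1/p} = \beta I,
\end{equation*}
from which the eigenvalue bound $\alpha \leq \lambda(M_p(A_1,\ldots,A_T)) \leq \beta$ follows immediately. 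There is no real obstacle here; the only subtle point is to avoid incorrectly using operator monotonicity of $x^p$ in step one, which is why the bracketing by scalar matrices is essential.
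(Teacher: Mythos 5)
Your proof is correct and follows essentially the same route as the paper's: both arguments bound the spectrum of $\frac{1}{T}\sum_{i=1}^T A_i^p$ between $\alpha^p$ and $\beta^p$ (the paper via Rayleigh quotients, you via Loewner bracketing by scalar matrices, which is the same statement) and then take the $p$-th root. Your appeal to operator monotonicity of $x\mapsto x^{1/p}$ in the last step is slightly heavier than necessary --- since the comparison is with scalar multiples of the identity, the spectral mapping of eigenvalues under $S_p\mapsto S_p^{1/p}$ already suffices, which is what the paper implicitly uses --- but it is valid.
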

\begin{proof}
 Let $S_p(A_1 , \ldots , A_T) = \frac{1}{T}\sum _{{i=1}}^{T}A_{i}^{p}$. Then
 \begin{align*}
  \langle x, S_p(A_1 , \ldots , A_T)x \rangle &= \langle x, \Bigg( \frac{1}{T}\sum _{{i=1}}^{T}A_{i}^{p} \Bigg)x \rangle \\
  &= \frac{1}{T}\sum _{{i=1}}^{T} \langle x, A_{i}^{p} x \rangle 
 \end{align*}
 Thus, we obtain the following upper bound
 \begin{align*}
  \max_{\norm{x}=1} \langle x, S_p(A_1 , \ldots , A_T)x \rangle 
  &= 
  \max_{\norm{x}=1} \frac{1}{T}\sum _{{i=1}}^{T} \langle x, A_{i}^{p} x \rangle \\
  &\leq 
  \frac{1}{T}\sum _{{i=1}}^{T} \max_{\norm{x}=1}  \langle x, A_{i}^{p} x \rangle \\
  &\leq \beta^p
 \end{align*}
 Hence, $\lambda_{max}(S_p(A_1 , \ldots , A_T)) \leq \beta^p$, and thus we obtain the corresponding upper bound $\lambda_{max}(M_p(A_1 , \ldots , A_T)) \leq \beta$.
 \newline\newline
 In a similar way we obtain the following lower bound,
 \begin{align*}
  \min_{\norm{x}=1} \langle x, S_p(A_1 , \ldots , A_T)x \rangle 
  &= 
  \min_{\norm{x}=1} \frac{1}{T}\sum _{{i=1}}^{T} \langle x, A_{i}^{p} x \rangle \\
  &\geq 
  \frac{1}{T}\sum _{{i=1}}^{T} \min_{\norm{x}=1}  \langle x, A_{i}^{p} x \rangle \\
  &\geq \alpha^p
 \end{align*}
 Hence, $\lambda_{min}(S_p(A_1 , \ldots , A_T)) \geq \alpha^p$, and thus we obtain the corresponding lower bound $\lambda_{min}(M_p(A_1 , \ldots , A_T)) \geq \alpha$.
 \newline\newline
Therefore, $\alpha \leq \lambda(M_{p}(A_1 , \ldots , A_T)) \leq \beta$.
\end{proof}

We now present results for $p\geq 1$ of Theorem~\ref{theorem:UpperBoundOfMatrixPowerMean}.
\subsection{Results for the case $p\geq 1$}
The following two propositions are the main ingredients for the upper bound presented in Theorem~\ref{theorem:UpperBoundOfMatrixPowerMean} for the case $p\geq 1$.
\begin{proposition}\label{proposition:takeOutFractionalPower}
Let $A_1,\ldots,A_T,B_1,\ldots,B_T$ be symmetric positive semidefinite matrices. Then, for $p\geq~1$ with $p$ integer
 \begin{align*}
  &\norm{M_p(A_1,\ldots,A_T) - M_p(B_1,\ldots,B_T)} \\ 
  &\qquad\leq \norm{M_p^p(A_1,\ldots,A_T) - M_p^p(B_1,\ldots,B_T)}^\frac{1}{p}
 \end{align*}
 \begin{proof}
  Let $S_p(A_1,\ldots,A_T) = {\frac  {1}{T}}\sum _{{i=1}}^{T}A_{i}^{p}$
  and $r=1/p$. Then,
  \begin{align*}
   &\norm{M_p(A_1,\ldots,A_T) - M_p(B_1,\ldots,B_T)} \\
   &\qquad= \norm{S_p^{1/p}(A_1,\ldots,A_T) - S_p^{1/p}(B_1,\ldots,B_T)} \\
   &\qquad= \norm{S_p^{r}(A_1,\ldots,A_T) - S_p^{r}(B_1,\ldots,B_T)} \\
   &\qquad\leq \norm{S_p(A_1,\ldots,A_T) - S_p(B_1,\ldots,B_T)}^{r} \\
   &\qquad= \norm{S_p(A_1,\ldots,A_T) - S_p(B_1,\ldots,B_T)}^{1/p} \\
   &\qquad= \norm{M_p^p(A_1,\ldots,A_T) - M_p^p(B_1,\ldots,B_T)}^{1/p}
  \end{align*}
  where the inequality comes from Corollary~\ref{corollary:fA-FB-smallerThan-fAB},
giving the desired result.
 \end{proof}
\end{proposition}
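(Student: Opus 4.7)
The plan is to reduce the claim to the scalar-power-mean-free inequality $\norm{A^r - B^r}\leq\norm{A-B}^r$ for $0\leq r\leq 1$, which is exactly Corollary~\ref{corollary:fA-FB-smallerThan-fAB}. First I would introduce the shorthand $S_p(A_1,\ldots,A_T) = \tfrac{1}{T}\sum_{i=1}^T A_i^{p}$, so that by definition $M_p(A_1,\ldots,A_T) = S_p(A_1,\ldots,A_T)^{1/p}$ and $M_p^p(A_1,\ldots,A_T) = S_p(A_1,\ldots,A_T)$. The goal of the proposition can then be rewritten as
\begin{equation*}
\norm{S_p(A_1,\ldots,A_T)^{1/p} - S_p(B_1,\ldots,B_T)^{1/p}} \leq \norm{S_p(A_1,\ldots,A_T) - S_p(B_1,\ldots,B_T)}^{1/p}.
\end{equation*}

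The matrices $S_p(A_1,\ldots,A_T)$ and $S_p(B_1,\ldots,B_T)$ are symmetric positive semidefinite since each $A_i$ and $B_i$ is symmetric positive semidefinite and $p$ is a positive integer (so $A_i^p, B_i^p \succeq 0$, and positive linear combinations preserve this). Therefore, setting $r = 1/p$ and noting $0 < r \leq 1$ whenever $p\geq 1$, Corollary~\ref{corollary:fA-FB-smallerThan-fAB} applies directly to the pair $\big(S_p(A_1,\ldots,A_T),\, S_p(B_1,\ldots,B_T)\big)$ and yields exactly the desired bound.

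Since Corollary~\ref{corollary:fA-FB-smallerThan-fAB} is already stated as a preliminary, there is essentially no obstacle here: the entire proof is a rewrite of $M_p$ as the $(1/p)$-th power of a positive semidefinite matrix, followed by a single invocation of the sub-additivity of the fractional power. The only detail to verify carefully is that the hypotheses of Corollary~\ref{corollary:fA-FB-smallerThan-fAB} are met, i.e.\ that $S_p(A_1,\ldots,A_T)$ and $S_p(B_1,\ldots,B_T)$ are positive semidefinite, which as noted above follows immediately from $p\in\mathbb{Z}_{\geq 1}$ and the assumption that the $A_i,B_i$ are positive semidefinite.
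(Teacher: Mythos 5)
Your proof is correct and follows essentially the same route as the paper: rewrite $M_p$ as $S_p^{1/p}$ with $S_p$ positive semidefinite and apply Corollary~\ref{corollary:fA-FB-smallerThan-fAB} with $r=1/p\in(0,1]$. Your explicit check that $S_p(A_1,\ldots,A_T)$ and $S_p(B_1,\ldots,B_T)$ are positive semidefinite is a small but welcome addition that the paper leaves implicit.
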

\begin{proposition}\label{proposition:takeOutIntegerPower}
Let $A_1,\ldots,A_T,B_1,\ldots,B_T$ be symmetric positive semidefinite matrices such that 
${\lambda(A_i)\!\leq\!\beta}$ and ${\lambda(B_i)\!\leq\!\beta}$ for $i=1,\ldots T$. Then, for $p\geq~1$,
  \begin{align*}
  &\norm{M_p^p(A_1,\ldots,A_T) - M_p^p(B_1,\ldots,B_T)} \\
  &\qquad\qquad\leq p \beta^{p-1} m_p\big(\norm{A_1-B_1},\ldots,\norm{A_T-B_T} \big)
  \end{align*}
\end{proposition}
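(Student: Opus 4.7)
The plan is to reduce this inequality to a componentwise application of Proposition~\ref{proposition:XmMinusYm} followed by the monotonicity of the scalar power mean (Theorem~\ref{theorem:mp_monotone}). Recall that by definition $M_p^p(A_1,\ldots,A_T) = S_p(A_1,\ldots,A_T) = \frac{1}{T}\sum_{i=1}^T A_i^p$, so the left-hand side is simply the norm of a difference of two arithmetic averages of matrix powers.

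First I would apply the triangle inequality to pass from the norm of the sum to the sum of norms:
\begin{equation*}
\norm{M_p^p(A_1,\ldots,A_T) - M_p^p(B_1,\ldots,B_T)} = \norm{\tfrac{1}{T}\sum_{i=1}^T (A_i^p - B_i^p)} \leq \tfrac{1}{T}\sum_{i=1}^T \norm{A_i^p - B_i^p}.
\end{equation*}
Next, for each index $i$, I would invoke Proposition~\ref{proposition:XmMinusYm} with $X=A_i$, $Y=B_i$, and $M_i = \max(\norm{A_i}, \norm{B_i})$. Since $A_i$ and $B_i$ are symmetric positive semidefinite with spectra bounded above by $\beta$, we have $M_i \leq \beta$, and hence $\norm{A_i^p - B_i^p} \leq p M_i^{p-1} \norm{A_i - B_i} \leq p \beta^{p-1} \norm{A_i - B_i}$.

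Combining these two steps yields
\begin{equation*}
\norm{M_p^p(A_1,\ldots,A_T) - M_p^p(B_1,\ldots,B_T)} \leq p \beta^{p-1} \cdot \tfrac{1}{T} \sum_{i=1}^T \norm{A_i - B_i} = p \beta^{p-1} \, m_1\big(\norm{A_1-B_1},\ldots,\norm{A_T-B_T}\big).
\end{equation*}
Finally, since $p \geq 1$, the monotonicity of the scalar power mean (Theorem~\ref{theorem:mp_monotone}) gives $m_1(\cdot) \leq m_p(\cdot)$, which upgrades the arithmetic mean on the right to the desired power mean and completes the argument. No step looks like a serious obstacle: the proof is essentially a concatenation of the triangle inequality, the matrix power-difference bound already catalogued as Proposition~\ref{proposition:XmMinusYm}, and the monotonicity of $m_p$. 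The only mild subtlety is verifying that the hypothesis $M_i \leq \beta$ of Proposition~\ref{proposition:XmMinusYm} is met by our assumption $\lambda(A_i), \lambda(B_i) \leq \beta$ together with positive semidefiniteness, which ensures $\norm{A_i} = \lambda_{\max}(A_i) \leq \beta$ in spectral norm.
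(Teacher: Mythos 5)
Your proposal is correct and follows exactly the same route as the paper's proof: triangle inequality on the average of $A_i^p - B_i^p$, the bound $\norm{A_i^p - B_i^p}\leq p\,\beta_i^{p-1}\norm{A_i-B_i}$ from Proposition~\ref{proposition:XmMinusYm} with $\beta_i=\max(\norm{A_i},\norm{B_i})\leq\beta$, and finally the monotonicity $m_1\leq m_p$ for $p\geq 1$. Nothing is missing.
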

\begin{proof}
Let $\beta_i = \max(\norm{A_i}, \norm{B_i})$. Then,
\begin{align*}
 &\norm{M_p^p(A_1,\ldots,A_T) - M_p^p(B_1,\ldots,B_T)} \\
 &\qquad= \norm{\left({\frac  {1}{T}}\sum _{{i=1}}^{T}A_{i}^{p}\right) - \left({\frac  {1}{T}}\sum _{{i=1}}^{T}B_{i}^{p}\right)}\\
 &\qquad= \norm{ {\frac  {1}{T}}\sum _{{i=1}}^{T}A_{i}^{p} - B_{i}^{p} }\\
 &\qquad\leq \frac{1}{T}\sum_{{i=1}}^{T}\norm{A_{i}^{p} - B_{i}^{p} }\\
 &\qquad\leq \frac{1}{T}\sum_{{i=1}}^{T} p(\beta_i)^{p-1} \norm{A_{i} - B_{i} }\\
 &\qquad\leq \frac{1}{T}\sum_{{i=1}}^{T} p\beta^{p-1} \norm{A_{i} - B_{i} }\\
 &\qquad= p\beta^{p-1} \Bigg( \frac{1}{T} \sum_{i=1}^T \norm{A_i-B_i} \Bigg)\\
 &\qquad= p \beta^{p-1} m_1\big(\norm{A_1-B_1},\ldots,\norm{A_T-B_T} \big)\\
 &\qquad\leq p \beta^{p-1} m_p\big(\norm{A_1-B_1},\ldots,\norm{A_T-B_T} \big)
 \end{align*}
 where: 
the first inequality follows from the triangular inequality,
the second inequality follows from Proposition~\ref{proposition:XmMinusYm},
the third inequality follows as $\beta_i \leq \beta$, and
the last inequality comes from the monotonicity of the scalar power means.
\end{proof}
The next Lemma contains the proof corresponding to the case of positive powers of Theorem~\ref{theorem:UpperBoundOfMatrixPowerMean}.
\begin{lemma}[Theorem~\ref{theorem:UpperBoundOfMatrixPowerMean} for the case $p\geq 1$]\label{proposition:MpUpperBoundForPositivePowers}
Let $A_1,\ldots,A_T,B_1,\ldots,B_T$ be symmetric positive semidefinite matrices where $\lambda(A_i)\leq \beta$ and $\lambda(B_i)\leq \beta$ for $i=1,\ldots T$. 
Let $C_p^+=p^{1/p} \beta^{1-1/p}$.
Let $p\geq1$, with $p$ integer
Then,
  \begin{align*}
  &\norm{M_p(A_1,\ldots,A_T) - M_p(B_1,\ldots,B_T)} \\
  &\qquad\leq C_p^+ m_p\big(\norm{A_1-B_1},\ldots,\norm{A_T-B_T} \big)^{1/p}
  \end{align*}
\end{lemma}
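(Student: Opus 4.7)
The plan is to chain the two preceding propositions and simplify the resulting scalar constant; no new ingredients are needed beyond this bookkeeping.

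First I would apply Proposition~\ref{proposition:takeOutFractionalPower} to extract the outer $1/p$ exponent from the matrix power mean, yielding
\[
\norm{M_p(A_1,\ldots,A_T)-M_p(B_1,\ldots,B_T)} \leq \norm{M_p^p(A_1,\ldots,A_T)-M_p^p(B_1,\ldots,B_T)}^{1/p}.
\]
This step is where Corollary~\ref{corollary:fA-FB-smallerThan-fAB} is used (with exponent $r = 1/p \in (0,1]$), so that the fractional matrix power can be pulled outside the norm at the cost of the same exponent on the right-hand side.

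Next I would invoke Proposition~\ref{proposition:takeOutIntegerPower}, whose hypothesis $\lambda(A_i),\lambda(B_i) \leq \beta$ is exactly what we have assumed, to bound the inner norm by
\[
p\,\beta^{p-1}\, m_p\big(\norm{A_1-B_1},\ldots,\norm{A_T-B_T}\big).
\]
Substituting into the previous inequality and taking the $1/p$-th root, the scalar constant becomes $(p\,\beta^{p-1})^{1/p} = p^{1/p}\,\beta^{(p-1)/p} = p^{1/p}\,\beta^{1-1/p}$, which is precisely $C_p^+$, while the power-mean factor becomes $m_p(\norm{A_1-B_1},\ldots,\norm{A_T-B_T})^{1/p}$, yielding exactly the claimed inequality.

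No substantive obstacle remains at this stage; the conceptual work has been absorbed into Propositions~\ref{proposition:takeOutFractionalPower} and~\ref{proposition:takeOutIntegerPower}, whose proofs rely respectively on operator monotonicity of $t \mapsto t^r$ on $[0,\infty)$ for $r \in (0,1]$ and on the matrix inequality $\norm{X^m-Y^m} \leq m M^{m-1}\norm{X-Y}$ from Proposition~\ref{proposition:XmMinusYm}. The present lemma simply combines these two estimates and regroups the exponents into the compact constant $C_p^+$.
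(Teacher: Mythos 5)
Your proposal is correct and follows exactly the paper's own proof: chain Proposition~\ref{proposition:takeOutFractionalPower} (via Corollary~\ref{corollary:fA-FB-smallerThan-fAB}) with Proposition~\ref{proposition:takeOutIntegerPower}, then absorb $(p\,\beta^{p-1})^{1/p}=p^{1/p}\beta^{1-1/p}$ into $C_p^+$. Nothing further is needed.
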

\begin{proof}
 We can see that
 \begin{align*}
  &\norm{M_p(A_1,\ldots,A_T) - M_p(B_1,\ldots,B_T)} \\
  &\qquad\leq \norm{M_p^p(A_1,\ldots,A_T) - M_p^p(B_1,\ldots,B_T)}^\frac{1}{p}\\
  &\qquad\leq \Bigg( p \beta^{p-1} m_p\big(\norm{A_1-B_1},\ldots,\norm{A_T-B_T} \big)\Bigg)^\frac{1}{p}\\
  &\qquad= C_p^+ m_p\big(\norm{A_1-B_1},\ldots,\norm{A_T-B_T} \big)^{1/p}
 \end{align*}
where the first inequality comes from Proposition~\ref{proposition:takeOutFractionalPower}, and the second inequality comes from Proposition~\ref{proposition:takeOutIntegerPower}.
\end{proof}

\subsection{Results for the case $p\leq-1$}
The following two propositions are the main ingredients for the upper bound presented in Theorem~\ref{theorem:UpperBoundOfMatrixPowerMean} for the case $p\leq -1$.
\begin{proposition}\label{proposition:takeOutFractionalPower-negativePower}
Let $A_1,\ldots,A_T,B_1,\ldots,B_T$ be symmetric positive definite matrices
where
$\alpha  \leq \lambda(A_i)$ and $\alpha \leq \lambda(B_i)$ for $i=1,\ldots T$,
and $\alpha>0$.
Then, for $p\leq-1$, with $p$ integer
 \begin{align*}
  &\norm{M_p(A_1,\ldots,A_T) - M_p(B_1,\ldots,B_T)} \\ 
  &\leq \frac{1}{\alpha^2}\norm{M_{\abs{p}}^{\abs{p}}(A_1^{-1},\ldots,A_T^{-1}) - M_{\abs{p}}^{\abs{p}}(B_1^{-1},\ldots,B_T^{-1})}^{1/\abs{p}} \\
 \end{align*}
 \begin{proof}

Let $S_p(A_1,\ldots,A_T) = {\frac  {1}{T}}\sum _{{i=1}}^{T}A_{i}^{p}$
  . Then,
  \begin{align*}
  &\norm{M_p(A_1,\ldots,A_T) - M_p(B_1,\ldots,B_T)} \\
  &= \norm{S_p^{1/p}(A_1,\ldots,A_T) - S_p^{1/p}(B_1,\ldots,B_T)} \\
  &= \norm{S_p^{1/\abs{p}}(A_1,\ldots,A_T)^{-1} - S_p^{1/\abs{p}}(B_1,\ldots,B_T)^{-1}} \\
  &= \norm{S_{\abs{p}}^{1/\abs{p}}(A_1^{-1},\ldots,A_T^{-1})^{-1} - S_{\abs{p}}^{1/\abs{p}}(B_1^{-1},\ldots,B_T^{-1})^{-1}} \\
  &= \norm{M_{\abs{p}}(A_1^{-1},\ldots,A_T^{-1})^{-1} - M_{\abs{p}}(B_1^{-1},\ldots,B_T^{-1})^{-1}}\\
   &\leq  \frac{1}{\alpha^2}\norm{M_{\abs{p}}(A_1^{-1},\ldots,A_T^{-1}) - M_{\abs{p}}(B_1^{-1},\ldots,B_T^{-1})}\\
   &\leq \frac{1}{\alpha^2}\norm{M_{\abs{p}}^{\abs{p}}(A_1^{-1},\ldots,A_T^{-1}) - M_{\abs{p}}^{\abs{p}}(B_1^{-1},\ldots,B_T^{-1})}^{1/\abs{p}} \\
   \end{align*}
   where
   the first inequality follows from Corollary~\ref{corollary:theorem:fA-FB-inverse} and Proposition~\ref{proposition:upper-lower-bound-of-eigenvalues-of-power-mean},
   whereas the second inequality follows from Proposition~\ref{proposition:takeOutFractionalPower}.
 \end{proof}
\end{proposition}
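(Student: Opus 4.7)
The plan is to reduce the negative-power case $p \leq -1$ to the positive-power estimate already secured by Proposition~\ref{proposition:takeOutFractionalPower}, exploiting the natural involution $A \mapsto A^{-1}$. The key algebraic fact is the duality identity
\[ M_p(A_1, \ldots, A_T) = M_{\abs{p}}(A_1^{-1}, \ldots, A_T^{-1})^{-1} , \]
which follows immediately from $A^p = (A^{-1})^{\abs{p}}$ for positive definite $A$ together with the definition $M_q(\cdot) = S_q(\cdot)^{1/q}$ of the matrix power mean. Setting $X := M_{\abs{p}}(A_1^{-1}, \ldots, A_T^{-1})$ and $Y := M_{\abs{p}}(B_1^{-1}, \ldots, B_T^{-1})$, the left-hand side to be bounded rewrites cleanly as $\norm{X^{-1} - Y^{-1}}$.

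Next, I would invoke Corollary~\ref{corollary:theorem:fA-FB-inverse}, which converts $\norm{X^{-1} - Y^{-1}}$ into $\norm{X - Y}$ at the cost of a factor $a^{-2}$, provided $X, Y \geq a I$ in the positive semidefinite order. The right choice is $a = \alpha$, matching the advertised prefactor $\alpha^{-2}$; this lower bound on $X$ and $Y$ would come from Proposition~\ref{proposition:upper-lower-bound-of-eigenvalues-of-power-mean} fed with the spectral information on $A_i^{-1}$ and $B_i^{-1}$ inherited from the hypothesis $\alpha \leq \lambda(A_i), \lambda(B_i)$. Having arrived at $\frac{1}{\alpha^2}\norm{X - Y}$, I would finish by applying Proposition~\ref{proposition:takeOutFractionalPower} to $M_{\abs{p}}$, which pulls the $1/\abs{p}$-th root outside the norm and yields
\[ \norm{X - Y} \leq \norm{M_{\abs{p}}^{\abs{p}}(A_1^{-1}, \ldots, A_T^{-1}) - M_{\abs{p}}^{\abs{p}}(B_1^{-1}, \ldots, B_T^{-1})}^{1/\abs{p}} . \]
Chaining the three bounds delivers exactly the claimed inequality.

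The main subtlety will be in the step invoking Corollary~\ref{corollary:theorem:fA-FB-inverse}: one must carefully translate the hypothesis $\alpha \leq \lambda(A_i), \lambda(B_i)$ into a common positive definite lower bound for $X$ and $Y$ so that the clean prefactor $1/\alpha^2$ is really justified. The first and third steps are essentially algebraic and are direct invocations of already-established identities and propositions; it is the spectral bookkeeping at this middle step that carries the work.
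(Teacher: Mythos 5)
Your proposal follows the paper's proof step for step: the duality identity $M_p(A_1,\ldots,A_T)=M_{\abs{p}}(A_1^{-1},\ldots,A_T^{-1})^{-1}$, then Corollary~\ref{corollary:theorem:fA-FB-inverse} to pass from $\norm{X^{-1}-Y^{-1}}$ to $\norm{X-Y}$, then Proposition~\ref{proposition:takeOutFractionalPower} applied to $M_{\abs{p}}$. In structure it is the same argument.

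However, the one step you defer --- the ``spectral bookkeeping'' that is supposed to give $X,Y\geq \alpha I$ --- does not close the way you describe. From $\alpha\leq\lambda(A_i)$ you get $\lambda(A_i^{-1})\leq 1/\alpha$, so Proposition~\ref{proposition:upper-lower-bound-of-eigenvalues-of-power-mean} delivers only the \emph{upper} bound $X=M_{\abs{p}}(A_1^{-1},\ldots,A_T^{-1})\leq \alpha^{-1}I$ (equivalently $X^{-1}\geq\alpha I$), not the lower bound $X\geq\alpha I$ that Corollary~\ref{corollary:theorem:fA-FB-inverse} needs in order to produce $\norm{X^{-1}-Y^{-1}}\leq\alpha^{-2}\norm{X-Y}$. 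What the hypotheses actually yield, by applying the corollary to $X^{-1},Y^{-1}\geq\alpha I$, is the reverse inequality $\norm{X-Y}\leq\alpha^{-2}\norm{X^{-1}-Y^{-1}}$. Indeed, with only a lower spectral bound the stated inequality can fail: for $T=2$, $p=-1$, $\alpha=1$ and the $1\times 1$ matrices $A_1=A_2=10$, $B_1=B_2=11$, the left-hand side equals $1$ while the right-hand side equals $1/110$. To make the step legitimate one must also assume $\lambda(A_i),\lambda(B_i)\leq\beta$, which gives $X,Y\geq\beta^{-1}I$ and the prefactor $\beta^2$ in place of $\alpha^{-2}$; this is available in the intended application (shifted normalized Laplacians with $\beta=2+\varepsilon$) but changes the constant. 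Note that the paper's own proof cites the same two results at exactly this point and is open to the same objection, so your reconstruction is faithful to it --- but a self-contained correct argument must carry the upper spectral bound through this middle step explicitly.
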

\begin{proposition}\label{proposition:takeOutIntegerPower-negativePower}
Let $A_1,\ldots,A_T,B_1,\ldots,B_T$ be symmetric positive definite matrices such that 
${\alpha\!\leq\!\lambda(A_i)}$ and ${\alpha\!\leq\!\lambda(B_i)}$ for $i=1,\ldots T$. Then, for $p\leq~-1$, with $p$ integer
  \begin{align*}
  &\norm{M_{\abs{p}}^{\abs{p}}(A_1^{-1},\ldots,A_T^{-1}) - M_{\abs{p}}^{\abs{p}}(B_1^{-1},\ldots,B_T^{-1})} \\
  &\qquad\leq \abs{p} \alpha^{-(1+\abs{p})} m_{\abs{p}}\big(\norm{A_1-B_1},\ldots,\norm{A_T-B_T} \big)
  \end{align*}
\end{proposition}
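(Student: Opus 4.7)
The plan is to reduce the bound on the power-mean-to-the-$|p|$ difference to a sum of term-wise differences via the triangle inequality, and then bound each term-wise difference by combining three previously established tools: (i) Proposition~\ref{proposition:XmMinusYm} for the integer power of the inverses, (ii) Corollary~\ref{corollary:theorem:fA-FB-inverse} to convert a difference of inverses to a difference of the originals, and (iii) the monotonicity of scalar power means (Theorem~\ref{theorem:mp_monotone}) to finally upgrade the arithmetic mean on the right-hand side to $m_{|p|}$.

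First, using the definition $M_q^q(X_1,\ldots,X_T) = \tfrac{1}{T}\sum_{i=1}^T X_i^q$ with $q = |p|$ and the triangle inequality,
\begin{equation*}
\norm{M_{|p|}^{|p|}(A_1^{-1},\ldots,A_T^{-1}) - M_{|p|}^{|p|}(B_1^{-1},\ldots,B_T^{-1})} \leq \frac{1}{T}\sum_{i=1}^T \norm{A_i^{-|p|} - B_i^{-|p|}}.
\end{equation*}
For each summand, apply Proposition~\ref{proposition:XmMinusYm} with $X=A_i^{-1}$, $Y=B_i^{-1}$, and the integer exponent $m=|p|\geq 1$:
\begin{equation*}
\norm{A_i^{-|p|} - B_i^{-|p|}} \leq |p|\, \bigl(\max(\norm{A_i^{-1}},\norm{B_i^{-1}})\bigr)^{|p|-1}\,\norm{A_i^{-1}-B_i^{-1}}.
\end{equation*}

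Next, since $A_i, B_i \succeq \alpha I$ implies $\norm{A_i^{-1}}, \norm{B_i^{-1}} \leq 1/\alpha$, we obtain $\max(\norm{A_i^{-1}},\norm{B_i^{-1}})^{|p|-1} \leq \alpha^{-(|p|-1)}$. Moreover, by Corollary~\ref{corollary:theorem:fA-FB-inverse} applied with lower bound $\alpha$,
\begin{equation*}
\norm{A_i^{-1}-B_i^{-1}} \leq \frac{1}{\alpha^2}\,\norm{A_i-B_i}.
\end{equation*}
Combining these two estimates yields the clean term-wise bound
\begin{equation*}
\norm{A_i^{-|p|} - B_i^{-|p|}} \leq |p|\,\alpha^{-(|p|-1)}\cdot \alpha^{-2}\,\norm{A_i-B_i} = |p|\,\alpha^{-(1+|p|)}\,\norm{A_i-B_i}.
\end{equation*}

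Averaging over $i=1,\ldots,T$ gives
\begin{equation*}
\frac{1}{T}\sum_{i=1}^T \norm{A_i^{-|p|} - B_i^{-|p|}} \leq |p|\,\alpha^{-(1+|p|)}\,m_1\bigl(\norm{A_1-B_1},\ldots,\norm{A_T-B_T}\bigr),
\end{equation*}
and finally invoking monotonicity of scalar power means (Theorem~\ref{theorem:mp_monotone}), which gives $m_1(\cdot) \leq m_{|p|}(\cdot)$ since $|p|\geq 1$, delivers the claimed inequality. There is no real obstacle here: the proof is essentially a clean composition of the three preceding ingredients, in direct parallel to Proposition~\ref{proposition:takeOutIntegerPower} but with the extra factor $\alpha^{-2}$ arising from converting the differences of inverses back to differences of the original matrices, which explains the exponent $-(1+|p|)$ rather than the $(|p|-1)$ one sees in the positive-power case.
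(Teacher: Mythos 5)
Your proof is correct and follows essentially the same route as the paper's: triangle inequality on the averaged powers, Proposition~\ref{proposition:XmMinusYm} applied to $A_i^{-1},B_i^{-1}$ with exponent $|p|$, Corollary~\ref{corollary:theorem:fA-FB-inverse} to pass from $\norm{A_i^{-1}-B_i^{-1}}$ to $\alpha^{-2}\norm{A_i-B_i}$, and monotonicity of scalar power means at the end. If anything, your direct use of $A_i\succeq\alpha I\Rightarrow\norm{A_i^{-1}}\leq 1/\alpha$ is a slightly cleaner way to bound the constant $M^{|p|-1}$ than the paper's intermediate quantity $\alpha_i=\min(\norm{A_i},\norm{B_i})$.
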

\begin{proof}
Let $\alpha_i = \min(\norm{A_i}, \norm{B_i})$, 
then it clearly follows that
$\frac{1}{\alpha_i} = \max(\norm{A_i^{-1}}, \norm{B_i^{-1}})$.
Thus,
\begin{align*}
 &\norm{M_{\abs{p}}^{\abs{p}}(A_1^{-1},\ldots,A_T^{-1}) - M_{\abs{p}}^{\abs{p}}(B_1^{-1},\ldots,B_T^{-1})}\\
 &\qquad= \norm{\left({\frac  {1}{T}}\sum _{{i=1}}^{T}(A_{i}^{-1})^{\abs{p}}\right) - \left({\frac  {1}{T}}\sum _{{i=1}}^{T}(B_{i}^{-1})^{\abs{p}}\right)}\\
 &\qquad= \norm{ {\frac  {1}{T}}\sum _{{i=1}}^{T}(A_{i}^{-1})^{\abs{p}} - (B_{i}^{-1})^{\abs{p}} }\\
 &\qquad\leq \frac{1}{T}\sum_{{i=1}}^{T}\norm{(A_{i}^{-1})^{\abs{p}} - (B_{i}^{-1})^{\abs{p}} }\\
 &\qquad\leq \frac{1}{T}\sum_{{i=1}}^{T} \abs{p}\Big(\frac{1}{\alpha_i}\Big)^{\abs{p}-1} \norm{A_{i}^{-1} - B_{i}^{-1} }\\
 &\qquad\leq \abs{p}\Big(\frac{1}{\alpha}\Big)^{\abs{p}-1} \Bigg( \frac{1}{T} \sum_{i=1}^T \norm{A_i^{-1}-B_i^{-1}} \Bigg)\\
 &\qquad\leq \abs{p}\Big(\frac{1}{\alpha}\Big)^{\abs{p}-1} \Bigg( \frac{1}{T\alpha^2} \sum_{i=1}^T \norm{A_i-B_i} \Bigg)\\
 &\qquad= \abs{p}\Big(\frac{1}{\alpha}\Big)^{\abs{p}+1} \Bigg( \frac{1}{T} \sum_{i=1}^T \norm{A_i-B_i} \Bigg)\\
 &\qquad= \abs{p} \Big(\frac{1}{\alpha}\Big)^{\abs{p}+1} m_1\big(\norm{A_1-B_1},\ldots,\norm{A_T-B_T} \big)\\
 &\qquad\leq \abs{p} \Big(\frac{1}{\alpha}\Big)^{\abs{p}+1} m_{\abs{p}}\big(\norm{A_1-B_1},\ldots,\norm{A_T-B_T} \big)\\
 &\qquad= \abs{p} \alpha^{-(1+\abs{p})} m_{\abs{p}}\big(\norm{A_1-B_1},\ldots,\norm{A_T-B_T} \big)
 \end{align*}
 where: 
the first inequality follows from the triangular inequality,
the second inequality follows from Proposition~\ref{proposition:XmMinusYm},
the third inequality follows as $\alpha \leq \alpha_i$, and
the fourth inequality follows as Corollary~\ref{corollary:theorem:fA-FB-inverse}, and
the last inequality comes from the monotonicity of the scalar power means.
\end{proof}
The next Lemma contains the proof corresponding to the case of negative powers of Theorem~\ref{theorem:UpperBoundOfMatrixPowerMean}.
\begin{lemma}[Theorem~\ref{theorem:UpperBoundOfMatrixPowerMean} for the case $p\leq -1$]\label{proposition:MpUpperBoundForNegativePowers-2}
Let $A_1,\ldots,A_T,B_1,\ldots,B_T$ be symmetric positive definite matrices where 
$\alpha \leq \lambda(A_i)$
and 
$\alpha \leq \lambda(B_i)$
for $i=1,\ldots T$. 
Let $C_p^-=\abs{p}^{1/\abs{p}} \alpha^{-(3+1/\abs{p})}$.
Let $p\leq-1$ with $p$ integer.
Then,
  \begin{align*}
  &\norm{M_p(A_1,\ldots,A_T) - M_p(B_1,\ldots,B_T)} \\
  &\qquad\leq C_p^- m_{\abs{p}}\big(\norm{A_1-B_1},\ldots,\norm{A_T-B_T} \big)^{1/\abs{p}}
  \end{align*}
\end{lemma}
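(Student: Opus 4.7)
The plan is to derive the negative-power bound by directly chaining the two preparatory Propositions \ref{proposition:takeOutFractionalPower-negativePower} and \ref{proposition:takeOutIntegerPower-negativePower}, exactly mirroring the positive-power argument in Lemma \ref{proposition:MpUpperBoundForPositivePowers}. The conceptual work has already been done in those two propositions: one converts a matrix power mean with negative exponent to an inverse of a matrix power mean with positive exponent and absorbs the $1/p$ root, while the other bounds the difference of $|p|$-th powers through Proposition \ref{proposition:XmMinusYm} combined with the resolvent inequality of Corollary \ref{corollary:theorem:fA-FB-inverse}. All that remains is to compose these bounds cleanly and verify that the exponents on $\alpha$ and $|p|$ aggregate to the constant $C_p^-=|p|^{1/|p|}\alpha^{-(3+1/|p|)}$.

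First, I would invoke Proposition \ref{proposition:takeOutFractionalPower-negativePower}, which yields
\begin{equation*}
\norm{M_p(A_1,\ldots,A_T)-M_p(B_1,\ldots,B_T)} \leq \frac{1}{\alpha^2}\,\norm{M_{\abs{p}}^{\abs{p}}(A_1^{-1},\ldots,A_T^{-1})-M_{\abs{p}}^{\abs{p}}(B_1^{-1},\ldots,B_T^{-1})}^{1/\abs{p}}.
\end{equation*}
Next I would apply Proposition \ref{proposition:takeOutIntegerPower-negativePower} to the factor on the right, obtaining
\begin{equation*}
\norm{M_{\abs{p}}^{\abs{p}}(A_1^{-1},\ldots,A_T^{-1})-M_{\abs{p}}^{\abs{p}}(B_1^{-1},\ldots,B_T^{-1})} \leq \abs{p}\,\alpha^{-(1+\abs{p})}\,m_{\abs{p}}\bigl(\norm{A_1-B_1},\ldots,\norm{A_T-B_T}\bigr).
\end{equation*}
Taking the $1/|p|$-th power of this inequality (permissible since both sides are nonnegative and the map $t\mapsto t^{1/|p|}$ is monotone on $[0,\infty)$) and substituting back into the first bound gives
\begin{equation*}
\norm{M_p(A_1,\ldots,A_T)-M_p(B_1,\ldots,B_T)} \leq \alpha^{-2}\,\abs{p}^{1/\abs{p}}\,\alpha^{-(1+\abs{p})/\abs{p}}\,m_{\abs{p}}\bigl(\norm{A_1-B_1},\ldots,\norm{A_T-B_T}\bigr)^{1/\abs{p}}.
\end{equation*}

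The final step is bookkeeping on the exponent of $\alpha$: the aggregate exponent is $-2-(1+\abs{p})/\abs{p}=-2-1/\abs{p}-1=-(3+1/\abs{p})$, so the prefactor collapses to $\abs{p}^{1/\abs{p}}\alpha^{-(3+1/\abs{p})}=C_p^-$, yielding the stated inequality. No step here should be an obstacle since both supporting propositions are already established; the only care needed is to confirm that the hypothesis $\alpha\leq\lambda(A_i),\lambda(B_i)$ is strong enough to invoke Proposition \ref{proposition:takeOutFractionalPower-negativePower} (which requires positive definiteness and a uniform lower bound on the spectrum) and Proposition \ref{proposition:takeOutIntegerPower-negativePower} (same requirement), both of which hold verbatim.
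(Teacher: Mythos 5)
Your proof is correct and is essentially identical to the paper's own argument: both chain Proposition~\ref{proposition:takeOutFractionalPower-negativePower} and Proposition~\ref{proposition:takeOutIntegerPower-negativePower} in the same order, and your exponent bookkeeping $-2-(1+\abs{p})/\abs{p}=-(3+1/\abs{p})$ matches the constant $C_p^-$ exactly. Nothing further is needed.
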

\begin{proof}
%
%
 \begin{align*}
  &\norm{M_p(A_1,\ldots,A_T) - M_p(B_1,\ldots,B_T)} \\
  &\leq \frac{1}{\alpha^2}\norm{M_{\abs{p}}^{\abs{p}}(A_1^{-1},\ldots,A_T^{-1}) - M_{\abs{p}}^{\abs{p}}(B_1^{-1},\ldots,B_T^{-1})}^{1/\abs{p}} \\
  &\leq \frac{1}{\alpha^2}\Bigg( \abs{p} \alpha^{-(1+\abs{p})} m_{\abs{p}}\big(\norm{A_1-B_1},\ldots,\norm{A_T-B_T} \big)\Bigg)^{1/\abs{p}}\\
  &= C_p^- m_{\abs{p}}\big(\norm{A_1-B_1},\ldots,\norm{A_T-B_T} \big)^{1/\abs{p}}
 \end{align*}
where the first inequality comes from Proposition~\ref{proposition:takeOutFractionalPower-negativePower}, and the second inequality comes from Proposition~\ref{proposition:takeOutIntegerPower-negativePower}.
\end{proof}

We are now ready to prove the result of Theorem~\ref{theorem:UpperBoundOfMatrixPowerMean}.
\begin{proof}[Proof of Theorem~\ref{theorem:UpperBoundOfMatrixPowerMean}]
 For the case $p\geq1$ see Lemma~\ref{proposition:MpUpperBoundForPositivePowers}.
 For the case $p\leq -1$ see Lemma~\ref{proposition:MpUpperBoundForNegativePowers-2}.

\end{proof}
\section{Proof of Theorem~\ref{theorem:concentration-eigenvectors}}\label{theorem:concentration-eigenvectors-PROOF}
Before giving the proof of Theorem~\ref{theorem:concentration-eigenvectors} we need to present two auxiliary results. 

The following is an auxiliary technical result that extends an implicit result stated in~\cite{rohe2011spectral}(p.1908-1909) for the Frobenius norm to the case of the operator norm.
\begin{lemma}\label{lemma:rephraseKahanDavisOurs}
Let $X,\mathcal{X} \in \mathbb{R}^{n \times k}$ be matrices with orthonormal columns.
Let $U,V$ be orthonormal matrices and $\Sigma$ a diagonal matrix such that
\begin{equation*}
 \mathcal{X}^T X = U \Sigma V^T
\end{equation*}
where the diagonal entries of $\Sigma$ are the cosines of the principal
angles between the column space of $X$ and the column space of $\mathcal{X}$.
Let $O=UV^T$. Then,
\begin{equation*}
 \frac{1}{\sqrt{2}} \| X - \mathcal{X}O \| \leq \| \sin \Theta(\mathcal{X},X) \|
\end{equation*}
\end{lemma}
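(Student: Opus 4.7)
The plan is to reduce the inequality to a direct comparison of eigenvalues of a positive semidefinite matrix built from the principal angles, then use a simple trigonometric identity.

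First, I would compute $(X-\mathcal{X}O)^T(X-\mathcal{X}O)$ explicitly using orthonormality. Since $X^TX = I_k$, $\mathcal{X}^T\mathcal{X} = I_k$, and $O = UV^T$ is orthogonal, expanding the product gives
\begin{equation*}
(X-\mathcal{X}O)^T(X-\mathcal{X}O) = 2I_k - X^T\mathcal{X}\, UV^T - VU^T\,\mathcal{X}^TX.
\end{equation*}
Substituting $\mathcal{X}^TX = U\Sigma V^T$ (and hence $X^T\mathcal{X} = V\Sigma U^T$), both cross terms collapse to $V\Sigma V^T$, yielding the clean identity
\begin{equation*}
(X-\mathcal{X}O)^T(X-\mathcal{X}O) = 2V(I-\Sigma)V^T.
\end{equation*}

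Next, taking the spectral norm of both sides gives $\|X-\mathcal{X}O\|^2 = 2(1-\sigma_{\min}(\Sigma)) = 2(1-\cos\theta_{\max})$, where $\theta_{\max}$ is the largest principal angle between the column spaces of $X$ and $\mathcal{X}$. Since $\|\sin\Theta(\mathcal{X},X)\|^2 = \sin^2\theta_{\max} = (1-\cos\theta_{\max})(1+\cos\theta_{\max})$, dividing the two expressions yields
\begin{equation*}
\frac{\|X-\mathcal{X}O\|^2}{2\,\|\sin\Theta(\mathcal{X},X)\|^2} = \frac{1}{1+\cos\theta_{\max}}.
\end{equation*}
Because principal angles between subspaces lie in $[0,\pi/2]$, we have $\cos\theta_{\max}\in[0,1]$ and thus $1+\cos\theta_{\max}\geq 1$, so the ratio is at most $1$. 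Taking square roots gives the claimed inequality.

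I do not anticipate any real obstacle here: the proof is essentially algebraic, with the one subtlety being the observation that the singular values of $\mathcal{X}^TX$ are the cosines of the principal angles (which is built into the hypothesis) and that these cosines are nonnegative, so that the final factor $1+\cos\theta_{\max}$ is at least $1$ rather than possibly being small. The extension from the Frobenius-norm version in \cite{rohe2011spectral} to the operator norm is routine because the same matrix identity $(X-\mathcal{X}O)^T(X-\mathcal{X}O) = 2V(I-\Sigma)V^T$ drives both arguments; only the choice of norm on the right-hand side changes.
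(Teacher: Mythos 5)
Your proof is correct and follows essentially the same route as the paper's: both expand $(X-\mathcal{X}O)^T(X-\mathcal{X}O)$ into $2(I-V\Sigma V^T)$ using $X^T\mathcal{X}O = V\Sigma V^T$, and both reduce the inequality to the elementary comparison $1-\cos\theta \le 1-\cos^2\theta = \sin^2\theta$ for $\theta\in[0,\pi/2]$. Your phrasing via the ratio $1/(1+\cos\theta_{\max})$ is just a cosmetic repackaging of that same step.
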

\begin{proof}
 For the proof we will make use of the identity $X^T\mathcal{X}O = (V\Sigma U^T)UV^T = V \Sigma V^T$, 
 and the fact that $\|A\| = \sqrt{\lambda_{\max}(A^TA)}$. That is,
\begin{align*}
 (X - &\mathcal{X}O)^T (X - \mathcal{X}O) \\ &= (X^T - O^T \mathcal{X}^T) (X - \mathcal{X}O) \\
                                         &= X^T X - X^T\mathcal{X}O - O^T \mathcal{X}^T X + O^T \mathcal{X}^T \mathcal{X}O \\
                                         &= I - X^T\mathcal{X}O - O^T \mathcal{X}^T X + O^T O \\
                                         &= I - X^T\mathcal{X}O - O^T \mathcal{X}^T X + I \\
					 &= 2I - X^T\mathcal{X}O - O^T \mathcal{X}^T X \\
					 &= 2I - V \Sigma V^T - V \Sigma V^T \\
					 &= 2(I - V \Sigma V^T) \\
\end{align*}
Thus,
\begin{align*}
 \| X - \mathcal{X}O \|^2 &= \lambda_{\max}\Big((X - \mathcal{X}O)^T (X - \mathcal{X}O)\Big)\\
								 &=    2\lambda_{\max}(I - V \Sigma V^T)\\
                                                                 &=    2\max_{i}(1 - \cos{\Theta_i})\\
                                                                 &\leq 2\max_{i}(1 - \cos^2{\Theta_i})\\
                                                                 &=    2\max_{i}(\sin^2{\Theta_i})\\
                                                                 &=    2\|\sin{\Theta}\|^2 \\
\end{align*}
Hence, $\frac{1}{\sqrt{2}}\| X - \mathcal{X}O \| \leq \| \sin \Theta(\mathcal{X},X) \|$
\end{proof}

The next result is a useful representation of the Davis-Kahan theorem. It is a technical adaption from the Frobenius norm to the operator norm based on~Lemma~\ref{lemma:rephraseKahanDavisOurs} and Theorem~\ref{thm:davisKahan_Yu}.
 \begin{theorem}\label{theorem:davisKahan_ourVersion-operatorNorm}
Let $\Sigma,\hat{\Sigma} \in \mathbb{R}^{p \times p}$ be symmetric, with eigenvalues $\mu_1 \geq \ldots \geq \mu_p$ and $\hat{\mu}_1 \geq \ldots \geq \hat{\mu}_p$ respectively.  Fix $1 \leq r \leq s \leq p$ and assume that $\min(\mu_{r-1} - \mu_r,\mu_s - \mu_{s+1}) > 0$, where $\mu_0 := \infty$ and $\mu_{p+1} := -\infty$.  Let $d := s - r + 1$, and let $V = (v_r,v_{r+1},\ldots,v_s) \in \mathbb{R}^{p \times d}$ and $\hat{V} = (\hat{v}_r,\hat{v}_{r+1},\ldots,\hat{v}_s) \in \mathbb{R}^{p \times d}$ have orthonormal columns satisfying $\Sigma v_j = \mu_j v_j$ and $\hat{\Sigma}\hat{v}_j = \hat{\mu}_j \hat{v}_j$ for $j= r,r+1,\ldots,s$.  Then
there exists an orthogonal matrix $O \in \mathbb{R}^{d \times d}$ such that 
\begin{equation*}
   \frac{1}{\sqrt{2}} \| V - \hat{V}O \| \leq \frac{2d^{1/2} \vectornorm{ \hat{\Sigma} - \Sigma}}{\min(\mu_{r-1} - \mu_r,\mu_s - \mu_{s+1})}
\end{equation*} 
\end{theorem}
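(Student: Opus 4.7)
The plan is to derive this operator-norm Davis-Kahan bound by chaining together the two auxiliary results already established immediately before the statement, namely the Yu-Wang-Samworth Frobenius-norm version of Davis-Kahan (Theorem~\ref{thm:davisKahan_Yu}) and the subspace-rotation reformulation of Lemma~\ref{lemma:rephraseKahanDavisOurs}. The result is essentially a packaging step: Theorem~\ref{thm:davisKahan_Yu} is stated in Frobenius norm, and we need a form tailored to the spectral norm that is directly usable in the eigenvector concentration result (Theorem~\ref{theorem:concentration-eigenvectors}).

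The first step is to fix the orthogonal matrix $O$. I would take $O = UV^T$, where $U\Sigma V^T$ is the SVD of $\hat V^T V$; this is precisely the rotation used in the statement of Lemma~\ref{lemma:rephraseKahanDavisOurs}, and it is also the standard choice for which Theorem~\ref{thm:davisKahan_Yu} is formulated, so the two results can be combined consistently without any further adjustment.

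The second step is to invoke Lemma~\ref{lemma:rephraseKahanDavisOurs}, applied with the $d$-dimensional subspaces spanned by $V$ and $\hat V$, giving
$$\frac{1}{\sqrt{2}}\,\|V - \hat V O\|_{op} \;\leq\; \|\sin\Theta(\hat V, V)\|_{op}.$$
Since $\sin\Theta(\hat V, V)$ is the $d\times d$ diagonal matrix of canonical-angle sines, the elementary inequality between spectral and Frobenius norms gives $\|\sin\Theta(\hat V, V)\|_{op} \leq \|\sin\Theta(\hat V, V)\|_F$. Now I apply Theorem~\ref{thm:davisKahan_Yu} under the eigenvalue-gap assumption $\min(\mu_{r-1}-\mu_r,\mu_s-\mu_{s+1})>0$ to bound the Frobenius norm of $\sin\Theta$ by $\frac{2\sqrt{d}\,\|\hat\Sigma-\Sigma\|_{op}}{\min(\mu_{r-1}-\mu_r,\mu_s-\mu_{s+1})}$. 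Composing the three inequalities yields the stated bound.

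There is no genuine analytic difficulty here; the main task is careful bookkeeping of constants and verifying that the same orthogonal matrix $O$ can be used in Lemma~\ref{lemma:rephraseKahanDavisOurs} and in the Yu-Wang-Samworth theorem simultaneously. The only mild subtlety is the $\sqrt{d}$ factor, which appears because we pass through the Frobenius norm rather than applying a direct operator-norm $\sin\Theta$ theorem; this is unavoidable if we insist on using Theorem~\ref{thm:davisKahan_Yu} in its stated form, and it is harmless for the intended application to the SSBM, where $d=\tilde k$ is fixed while $n$ grows.
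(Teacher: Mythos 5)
Your proposal is correct and follows essentially the same route as the paper's own proof: fix $O=UV^T$ from Lemma~\ref{lemma:rephraseKahanDavisOurs}, pass from the operator norm of the diagonal $\sin\Theta$ matrix to its Frobenius norm, and then invoke the $\sin\Theta$ bound of Theorem~\ref{thm:davisKahan_Yu} together with $\min(d^{1/2}\|\hat\Sigma-\Sigma\|,\|\hat\Sigma-\Sigma\|_F)\le d^{1/2}\|\hat\Sigma-\Sigma\|$. The compatibility concern you raise about $O$ is in fact vacuous, since only the $O$-independent Frobenius $\sin\Theta$ inequality from Theorem~\ref{thm:davisKahan_Yu} is needed, exactly as in the paper.
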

\begin{proof}

  By theorem~\ref{thm:davisKahan_Yu} we have
\begin{equation*}
 \|\sin \Theta(\hat{V},V)\|_{\mathrm{F}} \leq \frac{2\min(d^{1/2}\|\hat{\Sigma} - \Sigma\|,\|\hat{\Sigma} - \Sigma\|_\mathrm{F})}{\min(\mu_{r-1} - \mu_r,\mu_s - \mu_{s+1})}. 
\end{equation*}

From lemma~\ref{lemma:rephraseKahanDavisOurs} we can see that 
\begin{equation*}
\frac{1}{\sqrt{2}}\| V - \hat{V}O \| \leq \| \sin \Theta(\hat{V},V) \|
\end{equation*}
Moreover, as $\sin \Theta(\hat{V},V)$ is a diagonal matrix, it holds that
  \begin{align*}
  \vectornorm{\sin \Theta(\hat{V},V)}^2 &= \max_i( \sin^2 \Theta_i) \\
  &\leq \sum_i^p \sin^2 \Theta_i \\
  &= \vectornorm{\sin \Theta(\hat{V},V)}^2_F
 \end{align*}
Thus, 
\begin{equation*}
 \frac{1}{\sqrt{2}}\| V - \hat{V}O \| \leq \| \sin \Theta(\hat{V},V) \| \leq \vectornorm{\sin \Theta(\hat{V},V)}_F
\end{equation*}

Further, it is straightforward to see that
\begin{equation*}
\min(d^{1/2}\|\hat{\Sigma} - \Sigma\|,\|\hat{\Sigma} - \Sigma\|_\mathrm{F}) \leq  d^{1/2}\|\hat{\Sigma} - \Sigma\|
\end{equation*}
Thus, all in all, we have
\begin{align*}
   \frac{1}{\sqrt{2}}\| V - \hat{V}O \| 
   &\leq \| \sin \Theta(\hat{V},V) \| \\
   &\leq \vectornorm{\sin \Theta(\hat{V},V)}_F \\
   &\leq \frac{2d^{1/2} \vectornorm{ \hat{\Sigma} - \Sigma}}{\min(\mu_{r-1} - \mu_r,\mu_s - \mu_{s+1})}
\end{align*}
which completes the proof.
\end{proof}

We are now ready to give the proof of Theorem~\ref{theorem:concentration-eigenvectors}.

\begin{proof}[Proof of Theorem~\ref{theorem:concentration-eigenvectors}]

The proof is an application of the Davis-Kahan theorem as presented in Theorem~\ref{theorem:davisKahan_ourVersion-operatorNorm}.
Observe that in Theorem~\ref{theorem:davisKahan_ourVersion-operatorNorm} the eigenvalues are sorted in a decreasing way i.e. $\mu_1 \geq \cdots \geq \mu_n$, whereas in our case they are sorted in an increasing manner i.e. $\lambda_1 \leq \cdots \leq \lambda_n$.

Notationally, let the variables $p,s,r$ from Theorem~\ref{theorem:davisKahan_ourVersion-operatorNorm} be defined as $p=s=n,r=p-k+1$.

We first focus in the case for $p\leq -1$. 
For this case we are interested in the $k$-smallest eigenvalues, i.e. $\lambda_1,\ldots,\lambda_k$, which correspond to $\mu_p,\ldots,\mu_{r}$, where $\mu_p=\lambda_1$ and $\mu_{r}=\lambda_k$.

By definition, in Theorem~\ref{theorem:davisKahan_ourVersion-operatorNorm} we have that $\mu_{p+1}=-\infty$. Thus, 
$\mu_p-\mu_{p+1}=\infty$.
Further, we can see
%
$\mu_{r-1}-\mu_{r}=\lambda_{k+1}-\lambda_{k}=(1+\varepsilon)-m_p(1-\rho^++\varepsilon,1+\rho^-+\varepsilon)$ 
and hence by Eq.\ref{eq:eigenvaluesOfPowerMeanInExpectation}
\begin{align*}
&\min(\mu_{r-1} - \mu_r,\mu_s - \mu_{s+1}) = 
\\
&\qquad\qquad(1+\varepsilon)-m_p(1-\rho^++\varepsilon,1+\rho^-+\varepsilon)
\end{align*}
which by Theorem~\ref{theorem:davisKahan_ourVersion-operatorNorm} leads to the following inequality

$\,$\hfill
$\vectornorm{V_k-\mathcal{V}_k O_k} 
\leq 
\frac{2^{3/2}k^{1/2}}{\gamma} \vectornorm{L_p-\mathcal{L}_p}
=
\frac{\sqrt{8k}}{\gamma} \vectornorm{L_p-\mathcal{L}_p}
$
\hfill $\,$

By applying Theorem~\ref{theorem:ConcentrationBound-signedGraph-identity-SBM}, we know that if
\begin{align*}
 \delta^+ &= \frac{n}{k}(\pp+(k-1)\qp)>3\ln(8n/\epsilon), \text{ and }\\
 \delta^- &= \frac{n}{k}(\ppm+(k-1)\qm)>3\ln(8n/\epsilon)
\end{align*}
then with probability at least $1-\epsilon$

 \resizebox{1.04\hsize}{!}{
 $
 \vectornorm{V_k-\mathcal{V}_k O_k} \leq 
 \frac{\sqrt{8k}}{\gamma} 
 C_{p}^-
 m_{\abs{p}}^{1/\abs{p}}
 \bigg(\!\!
  \sqrt{\frac{3\ln(8n / \epsilon)}{\delta^+}}, \sqrt{\frac{3\ln(8n / \epsilon)}{\delta^-}}
 \bigg)
 $
 }
 
yielding the desired result.
The case for $p\geq 1$ is similar, where instead of $k$ the value ${k'=k-1}$ is used.

\end{proof}

\section{Main building block for our results}
In this section present two results from~\cite{chung2011spectra} that are the main building blocks for our results.

\begin{theorem}[\cite{chung2011spectra}]\label{thm:chung}
  Let $G$ be a random graph, where $\textsl{pr}(v_i \sim v_j) = p_{ij}$,
  and each edges is independent of each other edge. 
  Let $A$ be the adjacency matrix of $G$, 
  so $A_{ij}=1$ if $v_i \sim v_j$ and $0$ otherwise,
  and $\bar{A} = E(A)$, so $\bar{A}_{ij}=p_{i,j}$.
  Let $D$ be the diagonal matrix with $D_{ii}=\textsl{deg}(v_i)$, and $\bar{D}=E(D)$.
  Let $\delta$ be the minimum expected degree of $G$, and $L=I - D^{-1/2}AD^{-1/2}$ the (normalized) Laplacian matrix for $G$.
  Choose $\epsilon>0$. Then there exists a constant $k = k(\epsilon)$ such that if $\delta > k \ln n$, 
  then the probability at least $1-\epsilon$, the eigenvalues of $L$ and $\bar{L}$ satisfy
  \begin{equation*}
   \abs{ \lambda_j (L) - \lambda_j (\bar{L}) } \leq 2 \sqrt{\frac{3\ln(4n / \epsilon)}{\delta}}
  \end{equation*}
  for all $1\leq j \leq n$, where $\bar{L} = I - \bar{D}^{-1/2}\bar{A}\bar{D}^{-1/2}$.
\end{theorem}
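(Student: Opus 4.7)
The plan is to reduce the per-eigenvalue statement to a spectral norm bound on $L - \bar L$ via Weyl's inequality, and then control $\|L - \bar L\|$ by decomposing it into a centered-adjacency term and a degree-normalization term, each handled by its own concentration inequality. So first I would invoke Weyl: $|\lambda_j(L) - \lambda_j(\bar L)| \leq \|L - \bar L\|$ for every $j$, reducing the entire theorem to proving $\|L - \bar L\| \leq 2\sqrt{3\ln(4n/\epsilon)/\delta}$ with probability at least $1 - \epsilon$.

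Second, using the telescoping identity
\begin{equation*}
\bar L - L = \bar D^{-1/2}(A - \bar A)\bar D^{-1/2} \;+\; \bigl(\bar D^{-1/2} A \bar D^{-1/2} - D^{-1/2} A D^{-1/2}\bigr),
\end{equation*}
I would estimate the two pieces separately. The first piece is bounded by $\|A - \bar A\|/\delta$, because $\bar D_{ii} \geq \delta$ for every $i$. The second piece is a diagonal perturbation and can be controlled, entry by entry, by $\max_i |D_{ii}^{-1/2} - \bar D_{ii}^{-1/2}|$ times $\|A\|$; a first-order Taylor estimate for $t\mapsto t^{-1/2}$ around $\bar D_{ii}$ reduces this to concentration of $D_{ii}$ around $\bar D_{ii}$, which is valid once $D_{ii} \geq \bar D_{ii}/2$.

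Third, the two scalar inputs that drive the bound are obtained by a matrix Bernstein inequality and a vertex-wise Chernoff bound. Writing
\begin{equation*}
A - \bar A = \sum_{i<j} (A_{ij}-p_{ij})\,(e_i e_j^{\!\top}+e_j e_i^{\!\top})
\end{equation*}
as a sum of independent, bounded, self-adjoint, zero-mean matrices whose variance proxy is of order $\Delta$ (the maximum expected degree), matrix Bernstein yields $\|A - \bar A\| \leq c\sqrt{\Delta \ln(n/\epsilon)}$ with probability at least $1-\epsilon/2$; a scalar Chernoff bound on each $D_{ii} = \sum_j A_{ij}$ together with a union bound over $v_1,\ldots,v_n$ gives $|D_{ii} - \bar D_{ii}| \leq \sqrt{3\bar D_{ii} \ln(4n/\epsilon)}$ for all $i$ simultaneously with probability at least $1-\epsilon/2$. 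Combining the two failure events via a union bound produces the overall $1-\epsilon$ guarantee.

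The main obstacle is not the strategy but the bookkeeping needed to land on exactly the constant $2\sqrt{3\ln(4n/\epsilon)/\delta}$ rather than some larger generic constant. Matrix Bernstein, applied naively, yields a bound involving $\Delta$ in the numerator and constants dictated by its variance-plus-uniform-bound form; to trade $\Delta$ for the smaller quantity $\delta$ appearing in the denominator and to absorb all lower-order terms into a single square root requires the hypothesis $\delta > k(\epsilon)\ln n$ for an appropriately large $k(\epsilon)$. This is exactly the route taken by Chung and Radcliffe, who sharpen the matrix Bernstein step using an $\varepsilon$-net argument on the unit sphere and a direct moment calculation tailored to normalized Laplacians; reproducing or citing this refinement is the place where the delicate work sits.
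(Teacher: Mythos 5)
This statement is not proved in the paper at all: Theorem~\ref{thm:chung} is quoted from \cite{chung2011spectra} as an external building block (see the appendix section ``Main building block for our results'', where the authors also record, as Theorem~\ref{thm:chung_normsVersion}, that what is really proved there is a bound on $\norm{L-\bar L}$). So there is no internal proof to compare against. Your outline follows the same overall route as the cited source --- Weyl's inequality to reduce to $\norm{L-\bar L}$, a split into a centered-adjacency term and a degree-renormalization term, a matrix concentration inequality for the former and a Chernoff-plus-union bound on the degrees for the latter --- and you correctly locate the delicate quantitative work in that reference.

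However, two of your concrete estimates are too lossy to recover the stated bound when the maximum expected degree $\Delta$ is much larger than the minimum expected degree $\delta$, which the theorem permits. First, bounding the centered piece by $\norm{A-\bar A}/\delta$ and applying matrix Bernstein to the unnormalized $A-\bar A$ gives $\norm{A-\bar A}\lesssim\sqrt{\Delta\ln(n/\epsilon)}$, hence a contribution of order $\sqrt{(\Delta/\delta)\cdot\ln(n/\epsilon)/\delta}$, off by a factor $\sqrt{\Delta/\delta}$ from the target; the fix, and what Chung--Radcliffe actually do, is to apply the concentration inequality directly to the pre-normalized sum $\sum_{i\le j}(A_{ij}-p_{ij})\,\bar D^{-1/2}(e_ie_j^{\top}+e_je_i^{\top})\bar D^{-1/2}$, whose summands have norm $O(1/\delta)$ and whose total variance is $O(1/\delta)$, so that $\delta$ rather than $\Delta$ governs the bound. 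Second, controlling the renormalization piece by $\max_i\abs{D_{ii}^{-1/2}-\bar D_{ii}^{-1/2}}\cdot\norm{A}$ suffers from the same defect, since $\norm{A}$ can be of order $\Delta$; the standard repair is to write that piece as $M-FMF$ with $M=D^{-1/2}AD^{-1/2}$ (so $\norm{M}\le 1$) and $F=\bar D^{-1/2}D^{1/2}$ diagonal, giving $\norm{M-FMF}\le\norm{I-F}\,(1+\norm{F})$ and reducing everything to the relative degree concentration $\max_i\abs{1-\sqrt{D_{ii}/\bar D_{ii}}}$. With these two repairs your outline matches the argument of \cite{chung2011spectra}; note also that the hypothesis $\delta>k(\epsilon)\ln n$ is what makes the degree concentration multiplicative, not a device for trading $\Delta$ for $\delta$.
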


Although this theorem is presented as the main result,
one can see in the proof of theorem~\ref{thm:chung} in~\cite{chung2011spectra},
that in deed what they proved was a concentration bound for $\norm{L - \bar{L}}$.

\begin{theorem}[\cite{chung2011spectra}]\label{thm:chung_normsVersion}
 Assume that conditions of Theorem~\ref{thm:chung} hold. 
 Choose $\epsilon>0$. Then there exists a constant $k = k(\epsilon)$ such that if $\delta > k \ln n$, 
 then 
 \begin{equation}
  \mathbb{P}\bigg( \norm{L - \bar{L}} \leq 2 \sqrt{\frac{3\ln(4n / \epsilon)}{\delta}} \bigg) > 1-\epsilon
 \end{equation}
 \end{theorem}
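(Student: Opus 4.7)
The plan is to control $\|L-\bar L\|=\|D^{-1/2}AD^{-1/2}-\bar D^{-1/2}\bar A\bar D^{-1/2}\|$ by decomposing it as $\bar D^{-1/2}(A-\bar A)\bar D^{-1/2}+R$, where $R$ collects the lower-order terms obtained by swapping $\bar D^{\pm 1/2}$ with $D^{\pm 1/2}$ around $A$ and $\bar A$. The first summand is the principal random term and will be controlled by a matrix Bernstein inequality, while $R$ will be shown to be negligible on the high-probability event that every vertex degree concentrates around its expectation.

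First I would establish degree concentration. Each $D_{ii}$ is a sum of independent Bernoullis with mean $\bar D_{ii}\ge\delta$, so a scalar Chernoff bound gives $\Pr(|D_{ii}-\bar D_{ii}|>\sqrt{3\bar D_{ii}\ln(4n/\epsilon)})\le \epsilon/(2n)$, and a union bound over $i$ yields an event $\mathcal E_1$ of probability $\ge 1-\epsilon/2$ on which $\|D^{1/2}\bar D^{-1/2}-I\|=O(\sqrt{\ln(4n/\epsilon)/\delta})$ uniformly; under $\delta>k\ln n$ this deviation is $o(1)$. A short computation on $\mathcal E_1$ then shows that $\|R\|=O(\sqrt{\ln(4n/\epsilon)/\delta})$ with a constant that can be made arbitrarily small by enlarging $k$.

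Next I would apply a matrix Bernstein inequality to the centered sum $\bar D^{-1/2}(A-\bar A)\bar D^{-1/2}=\sum_{i<j}Y_{ij}$, where $Y_{ij}=(A_{ij}-p_{ij})(\bar D_{ii}\bar D_{jj})^{-1/2}(e_ie_j^\top+e_je_i^\top)$ are independent, mean-zero, symmetric rank-two matrices satisfying $\|Y_{ij}\|\le 1/\sqrt{\bar D_{ii}\bar D_{jj}}\le 1/\delta$ and $\sigma^2:=\|\sum_{i<j}\mathbb E[Y_{ij}^2]\|\le 1/\delta$. Plugging $t=2\sqrt{3\ln(4n/\epsilon)/\delta}$ into the Bernstein tail bound $2n\exp(-(t^2/2)/(\sigma^2+t/(3\delta)))$ produces an event $\mathcal E_2$ of probability at least $1-\epsilon/2$ on which this spectral norm is bounded by $t$; the linear-order Bernstein correction $t/(3\delta)$ becomes negligible because $\delta>k\ln n$ with $k=k(\epsilon)$ chosen large. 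A union bound over $\mathcal E_1\cap\mathcal E_2$ then yields the claim, since on this intersection $\|L-\bar L\|\le t+\|R\|\le 2\sqrt{3\ln(4n/\epsilon)/\delta}$.

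The main obstacle is recovering the sharp prefactor $2\sqrt{3}$ in the final bound: the matrix Bernstein step must be invoked with enough slack in its sub-Gaussian regime so that both the linear correction $t/(3\delta)$ and the lower-order remainder $R$ are absorbed by the leading Gaussian term $\sigma^2 t^2/2$, which is precisely what forces the hypothesis $\delta>k(\epsilon)\ln n$ and restricts the result to a sufficiently dense regime. A subtler point is that the worst-case estimates $\|Y_{ij}\|\le 1/\delta$ and $\sigma^2\le 1/\delta$ already yield the correct leading constant; any looser estimate at this step would multiply the bound by a factor strictly larger than $2$, so the argument is tight at precisely the place where the stated constant comes from.
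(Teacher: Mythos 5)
This statement is not proved in the paper at all: it is quoted from \cite{chung2011spectra}, and the paper's only justification is the remark that the operator-norm bound is what the proof of Theorem~\ref{thm:chung} in that reference actually establishes. Your reconstruction follows the same architecture as that reference (split $\norm{L-\bar L}$ into a centered term $\bar D^{-1/2}(A-\bar A)\bar D^{-1/2}$ controlled by matrix Bernstein, plus a remainder controlled by Chernoff degree concentration), so the skeleton is right. However, there is a genuine gap in the bookkeeping of the constant. The remainder $R=D^{-1/2}AD^{-1/2}-\bar D^{-1/2}A\bar D^{-1/2}$ is \emph{not} negligible: writing $E=D^{-1/2}\bar D^{1/2}-I$ and $C=\bar D^{-1/2}A\bar D^{-1/2}$, one has $R=EC+CE+ECE$ with $\norm{C}\le 1+o(1)$ and, on your event $\mathcal E_1$, $\norm{E}\approx\tfrac12\sqrt{3\ln(4n/\epsilon)/\delta}$, so the best bound this route gives is $\norm{R}\le\sqrt{3\ln(4n/\epsilon)/\delta}\,(1+o(1))$ --- the same order and essentially the same size as the main term. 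Your claim that the implied constant in $\norm{R}=O(\sqrt{\ln(4n/\epsilon)/\delta})$ ``can be made arbitrarily small by enlarging $k$'' is false: enlarging $k$ shrinks $\sqrt{\ln(4n/\epsilon)/\delta}$ itself but not the ratio of $\norm{R}$ to that quantity, which stays of order one. Consequently your final inequality $\norm{L-\bar L}\le t+\norm{R}\le 2\sqrt{3\ln(4n/\epsilon)/\delta}$ with $t=2\sqrt{3\ln(4n/\epsilon)/\delta}$ is self-contradictory whenever $\norm{R}>0$: you have spent the entire budget on the main term and left nothing for the remainder.

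The correct allocation --- and the actual source of the prefactor $2$ --- is one factor of $\sqrt{3\ln(4n/\epsilon)/\delta}$ from each of the two pieces. With $\sigma^2\le 1/\delta$, matrix Bernstein already gives the centered term $\le\sqrt{3\ln(4n/\epsilon)/\delta}$ with probability $\ge 1-\epsilon/2$ (the exponent $\tfrac32\ln(4n/\epsilon)$ beats the union bound over $2n$ with room to spare, and the slack absorbs the linear Bernstein correction once $\delta>k(\epsilon)\ln n$); the remainder contributes the second $\sqrt{3\ln(4n/\epsilon)/\delta}$. Your closing diagnosis --- that the factor $2$ is forced by the tightness of the estimates $\norm{Y_{ij}}\le 1/\delta$ and $\sigma^2\le 1/\delta$ in the Bernstein step --- is therefore misplaced; the Bernstein step alone needs only the constant $\sqrt{3}$, and the doubling comes from the degree-fluctuation term you tried to discard.
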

 
\begin{theorem}[\!\!\cite{Yu:2015:DavisKahan}]\label{thm:davisKahan_Yu}
Let $\Sigma,\hat{\Sigma} \in \mathbb{R}^{p \times p}$ be symmetric, with eigenvalues ${\mu_1 \geq \ldots \geq \mu_p}$ and $\hat{\mu}_1 \geq \ldots \geq \hat{\mu}_p$ respectively.  
Fix $1 \leq r \leq s \leq p$ and assume that 
$\min(\mu_{r-1} - \mu_r,\mu_s - \mu_{s+1}) > 0$, where $\mu_0 := \infty$ and $\mu_{p+1} := -\infty$.  Let $d := s - r + 1$, and let $V = (v_r,v_{r+1},\ldots,v_s) \in \mathbb{R}^{p \times d}$ and $\hat{V} = (\hat{v}_r,\hat{v}_{r+1},\ldots,\hat{v}_s) \in \mathbb{R}^{p \times d}$ have orthonormal columns satisfying $\Sigma v_j = \mu_j v_j$ and $\hat{\Sigma}\hat{v}_j = \hat{\mu}_j \hat{v}_j$ for $j= r,r+1,\ldots,s$.  Then
\begin{equation*}
\label{Eq:OurSinTheta}
\|\sin \Theta(\hat{V},V)\|_{\mathrm{F}} \leq \frac{2\min(d^{1/2}\|\hat{\Sigma} - \Sigma\|,\|\hat{\Sigma} - \Sigma\|_\mathrm{F})}{\min(\mu_{r-1} - \mu_r,\mu_s - \mu_{s+1})}. 
\end{equation*}
Moreover, there exists an orthogonal matrix $\hat{O} \in \mathbb{R}^{d \times d}$ such that 
\begin{equation*}
\label{Eq:OurDifference}
\|\hat{V}\hat{O} - V\|_{\mathrm{F}} \leq \frac{2^{3/2}\min(d^{1/2}\|\hat{\Sigma} - \Sigma\|,\|\hat{\Sigma} - \Sigma\|_\mathrm{F})}{\min(\mu_{r-1} - \mu_r,\mu_s - \mu_{s+1})}.
\end{equation*}
\end{theorem}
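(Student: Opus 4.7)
The plan is to derive both bounds from a single matrix Sylvester equation, and then construct the orthogonal matrix $\hat O$ via a polar-type rounding of $V^T\hat V$. I would organize the argument into two constructive steps, preceded by an identification of the correct Sylvester identity.

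First, I would set up the Sylvester identity. Let $V_\perp\in\mathbb{R}^{p\times (p-d)}$ have orthonormal columns spanning the orthogonal complement of the column space of $V$, so $\Sigma V_\perp = V_\perp\Lambda_\perp$, where $\Lambda_\perp$ is the diagonal matrix collecting the eigenvalues $\mu_i$ with $i\notin\{r,\ldots,s\}$. Combined with $\hat\Sigma\hat V = \hat V\hat\Lambda$ for $\hat\Lambda:=\mathrm{diag}(\hat\mu_r,\dots,\hat\mu_s)$, this yields
\[
\Lambda_\perp\bigl(V_\perp^T\hat V\bigr)-\bigl(V_\perp^T\hat V\bigr)\hat\Lambda \;=\; -V_\perp^T(\hat\Sigma-\Sigma)\hat V.
\]
The eigenvalues of $\Lambda_\perp$ lie in $(-\infty,\mu_{s+1}]\cup[\mu_{r-1},\infty)$, while those of $\hat\Lambda$ cluster around the $\mu_j$ for $j\in\{r,\ldots,s\}$, so the Sylvester operator $X\mapsto \Lambda_\perp X - X\hat\Lambda$ is invertible with inverse norm at most $1/\delta$ where $\delta:=\min(\mu_{r-1}-\mu_r,\mu_s-\mu_{s+1})$. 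Inverting gives both $\|V_\perp^T\hat V\|\leq\|\hat\Sigma-\Sigma\|/\delta$ and the same inequality with Frobenius norms. Identifying $V_\perp^T\hat V$ with $\sin\Theta(V,\hat V)$ up to unitary rotation, and noting that $\sin\Theta\in\mathbb{R}^{(p-d)\times d}$ so that $\|\sin\Theta\|_F\leq\sqrt{d}\|\sin\Theta\|$, produces the first displayed bound after taking the minimum of the two resulting inequalities.

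Second, I would construct $\hat O$ from the singular value decomposition. Writing $V^T\hat V = U_1\cos\Theta\,U_2^T$ as an SVD and setting $\hat O := U_2 U_1^T$, which is orthogonal, a direct expansion using $\mathrm{Tr}(V^T\hat V\hat O) = \mathrm{Tr}(\cos\Theta)$ yields
\[
\|\hat V\hat O - V\|_F^2 \;=\; 2\,\mathrm{Tr}(I - \cos\Theta)\;\leq\; 2\,\|\sin\Theta\|_F^2,
\]
using the elementary inequality $1-\cos\theta\leq\sin^2\theta$ for $\theta\in[0,\pi/2]$. Combining the factor $\sqrt{2}$ with the previously derived sin-Theta bound produces the $2^{3/2}$ constant in the second displayed inequality.

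The main obstacle, and the refinement of Yu-Wang-Samworth over the classical Davis-Kahan-Wedin formulation, is controlling the Sylvester operator when the perturbed eigenvalues $\hat\mu_j$ stray outside the open gap $(\mu_{s+1},\mu_{r-1})$. The resolution is a case split: if the perturbation $\|\hat\Sigma-\Sigma\|$ is already comparable to $\delta$, then the claimed right-hand side exceeds the \emph{a priori} bound $\sqrt{d}$ on $\|\sin\Theta(\hat V,V)\|_F$, so the inequality is trivially true; otherwise Weyl's inequality $|\hat\mu_j-\mu_j|\leq\|\hat\Sigma-\Sigma\|$ forces all $\hat\mu_j$ to remain well inside the gap and the spectral separation $\delta$ controls the Sylvester inverse. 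Carefully formulating this dichotomy so the constants $2$ and $2^{3/2}$ come out sharp is the delicate step.
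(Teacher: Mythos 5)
This statement is not proved in the paper at all: it is imported verbatim from Yu--Wang--Samworth (2015) and used purely as a black box (the paper's only original contribution in this neighbourhood is Lemma~\ref{lemma:rephraseKahanDavisOurs} and Theorem~\ref{theorem:davisKahan_ourVersion-operatorNorm}, which derive an operator-norm corollary \emph{from} the cited result). So there is no internal proof to compare against; what you have written is a reconstruction of the cited theorem's own proof, and as such it is essentially the standard and correct argument. Your Sylvester identity is right, the Procrustes step is right (the SVD choice $\hat O=U_2U_1^T$, the expansion $\|\hat V\hat O-V\|_F^2=2\mathrm{Tr}(I-\cos\Theta)$, and the inequality $1-\cos\theta\le\sin^2\theta$ valid because principal angles lie in $[0,\pi/2]$), and you correctly identify the dichotomy via Weyl's inequality as the step that distinguishes this variant from the classical Davis--Kahan formulation. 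Incidentally, your second step is the same computation the paper redoes in Lemma~\ref{lemma:rephraseKahanDavisOurs} for the operator norm.

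Two points of care in the step you flag as delicate. First, after Weyl the spectra of $\Lambda_\perp$ and $\hat\Lambda$ are separated by $\delta-\|\hat\Sigma-\Sigma\|\ge\delta/2$ (not $\delta$, as your phrasing suggests); the Sylvester inverse is therefore bounded by $2/\delta$, and this is exactly where the constant $2$ originates. Second, in the ``trivial'' branch of the dichotomy the a priori bound $\|\sin\Theta\|_F\le\sqrt{d}$ immediately disposes of the $d^{1/2}\|\hat\Sigma-\Sigma\|$ term in the minimum, but the bound must hold for \emph{both} arguments of the minimum, and the Frobenius term does not follow from $\|\hat\Sigma-\Sigma\|_F\ge\|\hat\Sigma-\Sigma\|>\delta/2$ alone. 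The clean way to get both terms at once is to apply the classical $\sin\Theta$ theorem with the residual $R=(\hat\Sigma-\Sigma)\hat V$ on the right-hand side of your Sylvester equation and then use $\|R\|_F\le\min\bigl(d^{1/2}\|\hat\Sigma-\Sigma\|,\ \|\hat\Sigma-\Sigma\|_F\bigr)$, which holds because $\hat V$ has $d$ orthonormal columns; this is how the single minimum in the statement arises from one application of the inequality rather than two. With that adjustment your outline closes.
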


\section{Results on Bethe Hessian}\label{section:resultsOnBetheHessian-Proofs}

%

The following Lemma~\ref{lemma:BetheHessianAndKunegis} states that for the case where $\alpha=1$ the Bethe Hessian is equal to the arithmetic mean of Laplacians, i.e. the signed ratio Laplacian $L_{SR}$.

\begin{lemma}\label{lemma:BetheHessianAndKunegis}
Let $\alpha=1$. Then the Bethe Hessian is two times the arithmetic mean of $L^+$ and $Q^-$.
\end{lemma}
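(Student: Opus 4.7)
The plan is to proceed by direct substitution, using the explicit definitions of all the matrices involved. First I would write out twice the arithmetic mean of $L^+$ and $Q^-$ using the definitions $L^+ = D^+ - W^+$ and $Q^- = D^- + W^-$, obtaining
\begin{equation*}
L^+ + Q^- = (D^+ + D^-) - W^+ + W^- = \overline{D} - W^+ + W^-.
\end{equation*}
Then I would substitute $\alpha = 1$ into the definition of the Bethe Hessian
\begin{equation*}
H = (\alpha-1)I - \sqrt{\alpha}(W^+ - W^-) + \overline{D},
\end{equation*}
which collapses the first term to $0$ and the coefficient $\sqrt{\alpha}$ to $1$, yielding $H = \overline{D} - W^+ + W^-$. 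Comparing the two expressions gives $H = L^+ + Q^-$, which is exactly twice the arithmetic mean $\tfrac{1}{2}(L^+ + Q^-)$.

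There is no real obstacle here — the claim is a one-line verification and does not require any of the spectral or concentration machinery developed elsewhere in the paper. The only mild subtlety worth flagging is the factor of $2$ in the statement: the claim is that $H$ equals \emph{two times} the arithmetic mean, not the arithmetic mean itself, which is consistent with the earlier observation in the paper that $L_{SR} = L^+ + Q^-$ already absorbs the factor $2$ (so that $L_{SR}$ itself is twice the arithmetic mean of $L^+$ and $Q^-$, and $H$ coincides with $L_{SR}$ when $\alpha = 1$).
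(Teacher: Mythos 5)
Your proposal is correct and follows essentially the same direct substitution as the paper's own proof: the paper writes $J = W^+ - W^-$, sets $\alpha=1$ to get $H = \overline{D} - J$, and regroups as $(D^+ - W^+) + (D^- + W^-) = L^+ + Q^-$, exactly as you do. Your remark about the factor of $2$ being absorbed into $L_{SR} = L^+ + Q^-$ matches the paper's convention as well.
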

\begin{proof}[Proof of Lemma~\ref{lemma:BetheHessianAndKunegis}]
 Let $J^+,J^-$ be the positive and negative part of $J$, i.e.
 $J^+_{ij}=\max\{0,J_{ij}\}$ and $J^-_{ij}=-\min\{0,J_{ij}\}$.
 Let $D^+$  and $D^-$ be degree diagonal matrices of $J^+$ and $J-$ respectively, i.e. $\overline{D}=D^+ + D^-$.
 Then, 
 \begin{align*}
  H 
  &= (\alpha-1)I - \sqrt{\alpha}J + \overline{D} \\
  &= -J + \overline{D} \\
  &= -J^+  + J^- + D^+ + D^- \\
  &= (D^+ -J^+) + (D^- + J^-) \\
  &= L^+ + Q^- \\
  &= L_{SR}
 \end{align*}

\end{proof}

\begin{lemma}\label{lemma:bethe_hessian_V1:supplementaryMaterial}
 Let $\mathcal{H}$ be the Bethe hessian of the expected signed graph. Then 
 $\{ \boldsymbol \chi_i \}_{i=2}^{k}$ are the eigenvectors corresponding to the $(k-1)$-smallest negative eigenvalues of $\mathcal{H}$ if and only if the following conditions hold:
 \begin{enumerate}
  \item  $\max\{ 0,\frac{2(d^+ + d^-)-1}{\sqrt{d^+ + d^-}\abs{\mathcal{C}}}\}  < (\pp-\qp) - (\ppm-\qm)$
  \item  $\qp<\qm$ 
 \end{enumerate}
\end{lemma}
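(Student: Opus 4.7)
The plan is to leverage the explicit diagonalization of the expected adjacency matrices $\mathcal{W}^+,\mathcal{W}^-$ derived in the proof of Theorem~\ref{theorem:mp_in_expectation}, compute the spectrum of $\mathcal{H}$ in closed form, and then translate the statement ``$\{\bchi_i\}_{i=2}^{k}$ are the eigenvectors of the $(k-1)$ smallest negative eigenvalues of $\mathcal{H}$'' into a handful of scalar inequalities that collapse exactly to Conditions~1 and~2.

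First, since under the SSBM the expected graph is regular with common node degree $\overline{\mathcal{D}}_{ii} = d^+ + d^-$, the scaling parameter is $\alpha = d^+ + d^-$, and the expected Bethe Hessian simplifies to
\[
\mathcal{H} = (2\alpha-1)I - \sqrt{\alpha}\,(\mathcal{W}^+ - \mathcal{W}^-).
\]
From the proof of Theorem~\ref{theorem:mp_in_expectation}, $\bchi_1,\ldots,\bchi_k$ are common eigenvectors of $\mathcal{W}^+$ and $\mathcal{W}^-$ with eigenvalues $\lambda_1^{\pm} = d^{\pm}$ and $\lambda_i^{\pm} = \abs{\mathcal C}(p_{\mathrm{in}}^{\pm} - p_{\mathrm{out}}^{\pm})$ for $i\geq 2$; both matrices have rank $k$, so any vector orthogonal to $\mathrm{span}(\bchi_1,\ldots,\bchi_k)$ lies in their common nullspace. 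Writing $\Delta := (\pp - \qp) - (\ppm - \qm)$, the spectrum of $\mathcal{H}$ therefore consists of $\mu_1 = (2\alpha-1) - \sqrt{\alpha}(d^+ - d^-)$, $\mu_i = (2\alpha-1) - \sqrt{\alpha}\abs{\mathcal C}\,\Delta$ for $i=2,\ldots,k$, and a degenerate bulk eigenvalue $\mu_{\mathrm{bulk}} = 2\alpha - 1$ of multiplicity $n-k$.

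Next I would translate the target conclusion into three scalar inequalities: (a) $\mu_i < 0$, so the informative eigenvalues are negative; (b) $\mu_i < \mu_{\mathrm{bulk}} = 2\alpha-1$, so the bulk does not encroach on the bottom of the spectrum; and (c) $\mu_i < \mu_1$, so the constant vector $\bchi_1$ is not one of the $(k-1)$ most negative eigenvectors. Inequality (a) reads $\tfrac{2\alpha-1}{\sqrt{\alpha}\abs{\mathcal C}} < \Delta$, and (b) reads $\Delta > 0$; their conjunction is exactly $\max\{0,\tfrac{2\alpha-1}{\sqrt{\alpha}\abs{\mathcal C}}\} < \Delta$, which is Condition~1 after substituting $\alpha = d^+ + d^-$. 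Expanding (c) with $d^\pm = \abs{\mathcal C}(p_{\mathrm{in}}^\pm + (k-1)p_{\mathrm{out}}^\pm)$ cancels the $(\pp-\ppm)$ terms and reduces it to $k(\qp-\qm) < 0$, i.e.\ Condition~2. For the limit $\abs{V} \to \infty$, $\alpha$ grows linearly in $\abs{\mathcal C}$ while $\sqrt{\alpha}\abs{\mathcal C}$ grows as $\abs{\mathcal C}^{3/2}$, so $\tfrac{2\alpha-1}{\sqrt{\alpha}\abs{\mathcal C}} = O(\abs{\mathcal C}^{-1/2}) \to 0$, and Condition~1 collapses to $\Delta > 0$, i.e.\ $\ppm + \qp < \pp + \qm$.

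The main bookkeeping lies in fusing (a) and (b) into the single $\max$-form of Condition~1 --- essentially a case split on the sign of $2\alpha-1$ --- and verifying the cancellation that reduces (c) to $\qp < \qm$. The only subtlety is confirming that the orthogonal complement of $\mathrm{span}(\bchi_1,\ldots,\bchi_k)$ really produces a single degenerate eigenvalue at $2\alpha-1$, which follows at once from the rank-$k$ structure of $\mathcal{W}^\pm$ under the SSBM.
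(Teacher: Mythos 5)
Your proposal is correct and follows essentially the same route as the paper's proof: diagonalize $\mathcal{W}^{\pm}$ via the $\bchi_i$, use regularity to write $\mathcal{H}=(2\alpha-1)I-\sqrt{\alpha}(\mathcal{W}^{+}-\mathcal{W}^{-})$, and reduce the eigenvalue-ordering requirements to the three inequalities $\mu_i<0$, $\mu_i<2\alpha-1$, and $\mu_i<\mu_1$, which collapse to Conditions~1 and~2 exactly as in the paper. Your explicit rank-$k$ justification of the bulk eigenvalue and the cancellation to $k(\qp-\qm)<0$ are correct refinements of steps the paper treats more tersely.
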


\begin{proof}
In our framework the we can see that $J=W^+-W^-$. In Section~\ref{appendix:theorem:mp_in_expectation-PROOF} we can see that expected adjacency matrices $\mathcal{W}^+$ and $\mathcal{W}^-$ have three distinct eigenvalues:
\begin{align*}
 \lambda^+_1 = \abs{\mathcal{C}}(\pp+(k-1)\qp), \quad \lambda^+_i = \abs{\mathcal{C}}(\pp-\qp)\\
 \lambda^-_1 = \abs{\mathcal{C}}(\ppm+(k-1)\qm), \quad \lambda^-_i = \abs{\mathcal{C}}(\ppm-\qm)
\end{align*}
for $i=2,\ldots,k$, with corresponding eigenvectors $\boldsymbol{\chi}_1,\ldots,\boldsymbol{\chi}_k$. Remaining eigenvalues are equal to zero.
Further, as both matrices $\mathcal{W}^+$ and $\mathcal{W}^-$ share all their eigenvectors, then the expected matrix $\mathcal{J}=\mathcal{W}^+ - \mathcal{W}^-$ has the same eigenvectors with eigenvalues being the difference between the positive and negative counterparts, i.e. $\mathcal{J}\boldsymbol{\chi}_i=\mu_i \boldsymbol{\chi}_i$ where
\begin{align*}
 \mu_i = \lambda^+_i - \lambda^-_i\,.
\end{align*}
As we assume that all clusters are of the same size $\abs{C}$, the expected signed graph is a regular graph with degrees $d^+$ and $d^-$. Thus, in expectation $\widehat{\alpha} = d^+ + d^-$, where
$d^+=\abs{\mathcal{C}}(\pp+(k-1)\qp)$ and $d^-=\abs{\mathcal{C}}(\ppm+(k-1)\qm)$.
Hence, the Bethe hessian of the expected signed graph can be expressed as
\begin{align*}
 \mathcal{H} 
 &= (\widehat{\alpha}-1)I - \sqrt{\widehat{\alpha}}\mathcal{J} + \overline{\mathcal D}\\
 &= (\widehat{\alpha}-1)I - \sqrt{\widehat{\alpha}}\mathcal{J} + \widehat{\alpha}I\\
 &= (2\widehat{\alpha}-1)I - \sqrt{\widehat{\alpha}}\mathcal{J}
\end{align*}
It is easy to see that the matrix $\mathcal{H}$ is some sort of a diagonal shift of $\mathcal{J}$, and thus they have the same eigenvectors. 
In particular we can observe that:
\begin{align*}
 \mathcal{H}\boldsymbol\chi_i 
 &= \big((2\widehat{\alpha}-1) I - \sqrt{\widehat{\alpha}}\mathcal{J}\big) \boldsymbol\chi_i\\
 &= (2\widehat{\alpha}-1)\boldsymbol\chi_i - \sqrt{\widehat{\alpha}}\mathcal{J} \boldsymbol\chi_i \\
 &= (2\widehat{\alpha}-1)\boldsymbol\chi_i - \sqrt{\widehat{\alpha}}\mu_i \boldsymbol\chi_i \\
 &= \big( (2\widehat{\alpha}-1)\boldsymbol - \sqrt{\widehat{\alpha}}\mu_i \big)\boldsymbol\chi_i 
\end{align*}
Hence, the corresponding eigenvalues of $\mathcal{H}$ are:
\begin{equation}\label{eq:general_eigval_bethe_hessian}
 \lambda_i = (2\widehat{\alpha}-1)\boldsymbol - \sqrt{\widehat{\alpha}}\mu_i\,.
\end{equation}

All in all, the corresponding eigenvalues of the expected Bethe hessian matrix $\mathcal{H}$ are:
\begin{align*}
\lambda_1 &=  (2\widehat{\alpha}-1) - \sqrt{\widehat{\alpha}}(d^+ - d^-) \,,
\\
\lambda_i &=  (2\widehat{\alpha}-1) - \sqrt{\widehat{\alpha}}\abs{\mathcal{C}}\big((\pp-\qp) - (\ppm-\qm)\big) \,,
\\
\lambda_j &=  (2\widehat{\alpha}-1) \,.
\end{align*}
for $i=2,\ldots,k$ and $j=k+1,\ldots,n$.

We now focus on the conditions that are necessary so that eigenvectors $\boldsymbol\chi_2,\ldots,\boldsymbol\chi_k$ have the smallest negative eigenvalues. This is based on the fact that informative eigenvectors of the Bethe Hessian $H$ have the smallest negative eigenvalue. From  Eq.\ref{eq:general_eigval_bethe_hessian} we can see that the general condition for eigenvalues of the Bethe Hessian in expectation $\mathcal{H}$ to be negative is
\begin{equation}
 \lambda_i < 0 \iff \frac{2\widehat{\alpha}-1}{\sqrt{\widehat{\alpha}}} < \mu_i\,.
\end{equation}
Hence the conditions to be analyzed are:
\begin{align*}
 &\lambda_i <\lambda_1, \,\,\,\,\text{ for }i=2,\ldots,k \\
 &\lambda_i <0, \,\,\,\,\text{ for }i=2,\ldots,k \\
 &\lambda_i <\lambda_j, \,\,\,\,\text{ for }i=2,\ldots,k \,\,\text{ and }j=k+1,\ldots,n \\
\end{align*}
Therefore we can easily see that the corresponding condition $\lambda_i <\lambda_1$ boils down to
\[
\qp<\qm
\]
whereas condition $\lambda_i <0$ is equivalent to 
\[
 \frac{2(d^+ + d^-)-1}{\sqrt{d^+ + d^-}\abs{\mathcal{C}}} < \big((\pp-\qp) - (\ppm-\qm)\big)
\]
and for the remaining condition $\lambda_i <\lambda_j$ the equivalent condition is
\[
 0 < (\pp-\qp) - (\ppm-\qm) 
\]
by putting together conditions for $\lambda_i <0$ and $\lambda_i <\lambda_j$ we get the desired result.
\end{proof}

\begin{lemma}\label{lemma:bethe_hessian_V1_limit}
 Let $\mathcal{H}$ be the Bethe hessian of the expected non-empty signed graph. 
 Let $\abs{V}\rightarrow\infty$. Then 
 $\{ \boldsymbol \chi_i \}_{i=2}^{k}$ are the eigenvectors corresponding to the $(k-1)$ smallest negative eigenvalues of $\mathcal{H}$ if and only if the following conditions hold:
 \begin{enumerate}
  \item  $\ppm + \qp < \pp + \qm$
  \item  $\qp<\qm$
 \end{enumerate}
\end{lemma}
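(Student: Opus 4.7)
The plan is to derive Lemma~\ref{lemma:bethe_hessian_V1_limit} directly from Lemma~\ref{lemma:bethe_hessian_V1:supplementaryMaterial} by analyzing the asymptotic behavior of the first condition as $\abs{V}\to\infty$. Since cluster sizes satisfy $\abs{\mathcal{C}}=n/k$, the limit $\abs{V}\to\infty$ is equivalent to $\abs{\mathcal{C}}\to\infty$ with $k$ fixed. The expected degrees are $d^+=\abs{\mathcal{C}}(\pp+(k-1)\qp)$ and $d^-=\abs{\mathcal{C}}(\ppm+(k-1)\qm)$, so setting $S=\pp+\ppm+(k-1)(\qp+\qm)>0$ (positive because the graph is non-empty) we have $d^++d^-=\abs{\mathcal{C}}S$.

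Next I would plug this into the left-hand side of the first condition of Lemma~\ref{lemma:bethe_hessian_V1:supplementaryMaterial} and compute:
$$
\frac{2(d^++d^-)-1}{\sqrt{d^++d^-}\,\abs{\mathcal{C}}}=\frac{2\abs{\mathcal{C}}S-1}{\sqrt{S}\,\abs{\mathcal{C}}^{3/2}}=\frac{2\sqrt{S}}{\sqrt{\abs{\mathcal{C}}}}-\frac{1}{\sqrt{S}\,\abs{\mathcal{C}}^{3/2}}\xrightarrow{\abs{\mathcal{C}}\to\infty}0 .
$$
For all $\abs{\mathcal{C}}$ large enough this quantity is positive, so the $\max$ equals the expression itself, and in the limit the first condition of Lemma~\ref{lemma:bethe_hessian_V1:supplementaryMaterial} reduces to the strict inequality $0<(\pp-\qp)-(\ppm-\qm)$, which rearranges to $\ppm+\qp<\pp+\qm$. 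The second condition $\qp<\qm$ carries over unchanged. Combining the two yields the desired equivalence.

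The argument is essentially a routine asymptotic computation with no real obstacle: the only care needed is the non-emptiness hypothesis, which guarantees $S>0$ and therefore the finiteness and eventual positivity of the left-hand side so that the $\max$ collapses to the explicit fraction before passing to the limit.
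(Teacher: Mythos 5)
Your proposal is correct and follows essentially the same route as the paper: substitute $d^++d^-=\abs{\mathcal{C}}\,(\pp+(k-1)\qp+\ppm+(k-1)\qm)$ into the first condition of the finite-size lemma, observe that the left-hand side is $O(\abs{\mathcal{C}}^{-1/2})$ and hence vanishes in the limit, and conclude that the condition reduces to $\ppm+\qp<\pp+\qm$ while $\qp<\qm$ is unchanged. Your explicit remark that the expression is eventually positive so the $\max$ collapses is a small point the paper glosses over, but the argument is otherwise identical.
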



\begin{proof}
From Lemma~\ref{lemma:bethe_hessian_V1:supplementaryMaterial} we have the following conditions for the recovery of informative eigenvectors on finite graphs:
\begin{enumerate}
  \item  $\max\{ 0,\frac{2(d^+ + d^-)-1}{\sqrt{d^+ + d^-}\abs{\mathcal{C}}}\}  < (\pp-\qp) - (\ppm-\qm)$
  \item  $\qp<\qm$ 
 \end{enumerate}
 Let 
 \begin{align*}
  c_2 &= \pp+(k-1)\qp + \ppm+(k-1)\qm
  \\
  c_3 &= (\pp-\qp) - (\ppm-\qm)
 \end{align*}
 
For the first condition of Lemma~\ref{lemma:bethe_hessian_V1:supplementaryMaterial} can be expressed as follows:
 \begin{align*}
  \frac{2(d^+ + d^-)-1}{\sqrt{d^+ + d^-}\abs{\mathcal{C}}}  &< \big((\pp-\qp) - (\ppm-\qm)\big) \iff 
  \\
  \frac{2c_2^{1/2}}{\abs{\mathcal C}^{1/2}} - \frac{1}{\abs{\mathcal C}^{3/2}c_2^{1/2}} 
  &< c_3\,.
 \end{align*}
Hence, in the limit where $\abs{C}\rightarrow\infty$ the above condition turns into
\begin{equation}\label{eq:2:lemma:bethe_hessian_V1_limit}
 0 < c_3 \iff \ppm + \qp < \pp + \qm\,.
\end{equation}
yielding the desired conditions.

\end{proof}

The following Lemma states the interesting fact that the Bethe Hessian works better for large graphs

\begin{lemma}\label{lemma:bethe_hessian_good_for_large_graphs}
Let $\mathcal{H}_n$ be the Bethe Hessian of the expected signed graph under the SBM with $n$ nodes. Let $\boldsymbol \chi^n=\{ \boldsymbol \chi_i \}_{i=2}^{k}$ where $\boldsymbol \chi_2,\ldots,\boldsymbol \chi_k \in\mathbb{R}^n$. 
Let $\frac{3}{2}<d^++d^-$.
Let $n<m$. If $\boldsymbol \chi^n$ are eigenvectors corresponding to the $(k-1)$-smallest negative eigenvalues of $\mathcal{H}_n$, then $\boldsymbol \chi^m$ are eigenvectors corresponding to the $(k-1)$-smallest negative eigenvalues of $\mathcal{H}_m$.
 \end{lemma}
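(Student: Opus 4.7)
The plan is to reduce everything to Lemma~\ref{lemma:bethe_hessian_V1:supplementaryMaterial}, which characterizes recovery of the informative eigenvectors of $\mathcal{H}$ through the two conditions $\qp<\qm$ and
\[
\max\Bigl\{0,\,\tfrac{2(d^+ + d^-)-1}{\sqrt{d^+ + d^-}\,|\mathcal{C}|}\Bigr\} < (\pp-\qp)-(\ppm-\qm).
\]
The first condition $\qp<\qm$ is independent of $n$ and is therefore inherited from size $n$ to size $m$ for free. So the entire argument reduces to showing that the left-hand side of the second condition, viewed as a function of the number of nodes (through $|\mathcal{C}|=n/k$ and $d^\pm$), is non-increasing in $n$ once $d^+ + d^- > 3/2$.

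To carry this out I would set $s := d^+ + d^-$ and exploit the fact that under the SSBM with equally sized clusters one has $s = \alpha\,|\mathcal{C}|$ for a constant $\alpha := (\pp+(k-1)\qp)+(\ppm+(k-1)\qm)$ that does not depend on $n$. This allows elimination of $|\mathcal{C}|$ and rewriting the left-hand side purely in terms of $s$ as $\alpha\,(2s-1)/s^{3/2}$. The monotonicity then follows from a one-variable calculus check: differentiating $g(s) := (2s-1)/s^{3/2}$ yields $g'(s) = (3/2 - s)/s^{5/2}$, which is strictly negative exactly when $s > 3/2$. Since $s_n = \alpha\, n/k$ is strictly increasing in $n$ and $s_n > 3/2$ by hypothesis, passing from $n$ to any $m > n$ can only decrease the left-hand side of the bound, preserving the required inequality.

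One point to double-check is the behaviour of the outer $\max$: the truncation at zero can only activate when $2s-1<0$, which is ruled out by the assumption $s>3/2>1/2$. Hence the $\max$ can be dropped throughout and the monotonicity argument applies uniformly. Putting the two pieces together, both conditions of Lemma~\ref{lemma:bethe_hessian_V1:supplementaryMaterial} at size $m$ follow from those at size $n$, and the conclusion on $\boldsymbol\chi^m$ is immediate. I do not anticipate a serious obstacle; the only care needed is the bookkeeping between the scalings of $d^\pm$ and $|\mathcal{C}|$ in $n$, and noting that the threshold $3/2$ in the hypothesis is precisely the one produced by the calculus computation, which suggests it is tight for this approach.
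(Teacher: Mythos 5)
Your proposal is correct and follows essentially the same route as the paper: reduce to the characterization in Lemma~\ref{lemma:bethe_hessian_V1:supplementaryMaterial}, note that $\qp<\qm$ and the right-hand side are independent of $n$, and show by a one-variable derivative computation that the left-hand side is decreasing in the graph size precisely when $d^++d^->\tfrac{3}{2}$ (the paper parametrizes by $x=\abs{\mathcal{C}}$ rather than $s=d^++d^-$, but the function and the threshold are identical). Your explicit check that the outer $\max$ never truncates under the hypothesis is a small point the paper leaves implicit.
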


 \begin{proof}
 
  In this proof we show that if for a given signed graph with $n$ nodes the conditions of Lemma~\ref{lemma:bethe_hessian_V1:supplementaryMaterial}, then conditions of Lemma~\ref{lemma:bethe_hessian_V1:supplementaryMaterial} hold for expected signed graphs with a larger number of nodes.
 
  By Lemma~\ref{lemma:bethe_hessian_V1:supplementaryMaterial}, we know that for a given graph in expectation with $n$ nodes, eigenvectors $\boldsymbol \chi^n=\{ \boldsymbol \chi_i \}_{i=2}^{k}$ correspond to the $(k-1)$-smallest negative eigenvalues of $\mathcal H_n$ if and only the following conditions hold:
 \begin{enumerate}
  \item  $\max\{ 0,\frac{2(d^+ + d^-)-1}{\sqrt{d^+ + d^-}\abs{\mathcal{C}}}\}  < (\pp-\qp) - (\ppm-\qm)$
  \item  $\qp<\qm$
 \end{enumerate}
 Observe that the right hand side of the above conditions does not depend on the number of nodes in the graph.
We proceed by analyzing the left hand side of the first condition:
\begin{equation}\label{lemma:bethe-hessian-lhs}
  \frac{2(d^+ + d^-)-1}{\abs{\mathcal C}\sqrt{d^+ + d^-}}\,.
 \end{equation}
 Note that under the Stochastic Block Model in consideration, all $k$ clusters are of size $\abs{\mathcal C}=\frac{n}{k}$.
 We now identify conditions such that the Equation~\ref{lemma:bethe-hessian-lhs} decreases with larger values of $\abs{\mathcal C}$.
 
 Let $x,\alpha\in\mathbb{R}$. Define the scalar function $g:\mathbb{R}_{>0}\to\mathbb{R}$ as
 \[
  g(x) = \frac{2\alpha x-1}{\sqrt{\alpha x^3}}
 \]
 Observe that we recover Equation~\ref{lemma:bethe-hessian-lhs} by letting $x=\abs{\mathcal{C}}$ and $\alpha=\pp+(k-1)\qp+\ppm+(k-1)\qm$ where $\alpha x = d^++d^-$.
 
The corresponding derivative is
 \[
  g'(x) = \frac{3-2\alpha x}{2x\sqrt{\alpha x^3}}
 \]
 Then 
\begin{equation}\label{lemma:bethe-hessian-lhs-derivative}
  g'(x)<0 \iff \frac{3}{2} < \alpha x\,.
\end{equation}
Hence, if $\frac{3}{2} < \alpha x$ then $g(y) < g(x)$ if and only if $x<y$. We now apply this result to our setting.

Let $\abs{\mathcal{C}_n}:=\abs{\mathcal{C}}=\frac{n}{k}$ and $\abs{\mathcal{C}_m}=\frac{m}{k}$ denote the cluster size of the expected signed graphs with $n$ and $m$ nodes, respectively. Let $\alpha=\pp+(k-1)\qp+\ppm+(k-1)\qm$.
Let $\frac{3}{2}<d^+ + d^-$. Then
\begin{equation}
 g(\abs{\mathcal{C}_m}) < g(\abs{\mathcal{C}_n})=\frac{2(d^+ + d^-)-1}{\abs{\mathcal C}\sqrt{d^+ + d^-}}
\end{equation}
if and only if $n<m$.
Hence, if conditions 1 and 2 hold for the expected graph $G$ with $n$ nodes and its expected absolute degree is larger than $\frac{3}{2}$, i.e. $\frac{3}{2}<d^+ + d^- $, then conditions~1~and~2 hold for expected graphs with a larger number of nodes,
leading to the desired result.
 \end{proof}
 
%

 \begin{figure}[!h]
\centering
\vskip.6em

%
\begin{subfigure}[]{0.48\linewidth}
\includegraphics[width=1\linewidth, clip,trim=130 40 170 40]{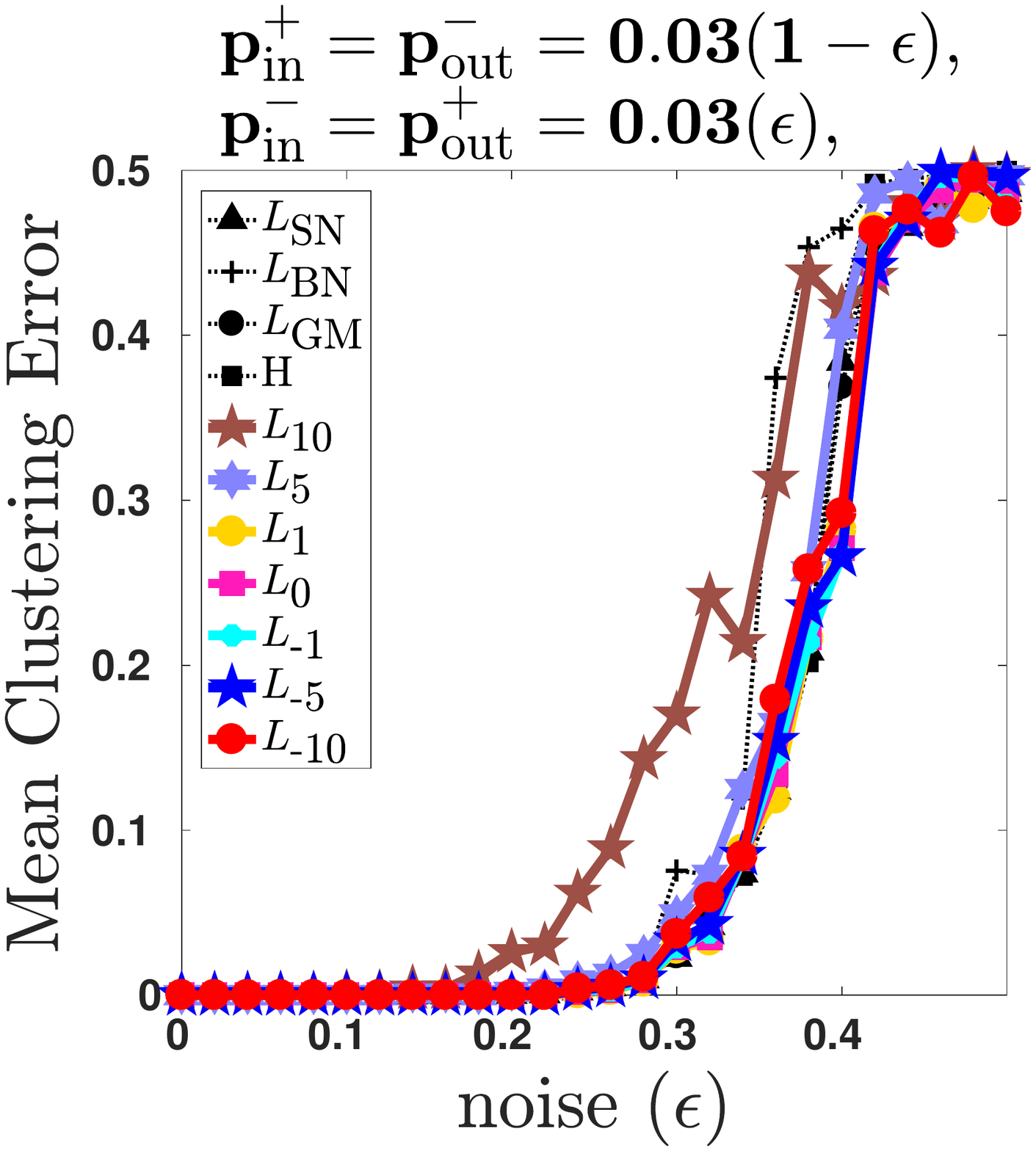}\hspace*{\fill}
\caption{$p=0.03,\epsilon\in[0,0.5]$}
\label{subfig:CBM:p_fixed}
\end{subfigure}%
\hfill
\begin{subfigure}[]{0.48\linewidth}
 \includegraphics[width=1\linewidth, clip,trim=130 40 170 40]{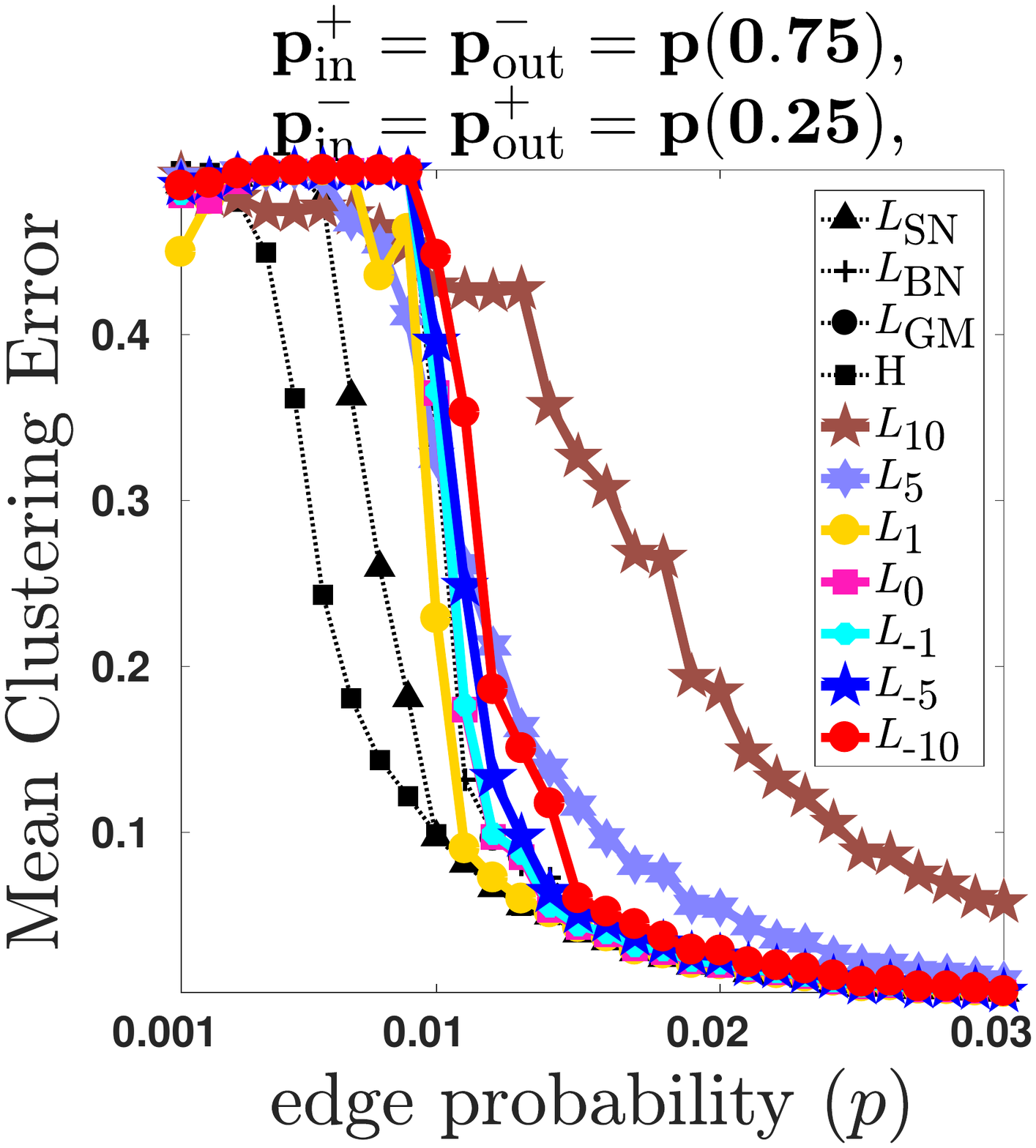}\hspace*{\fill}
\caption{$p\in[0.001,0.03],\epsilon=0.25$}
\label{subfig:CBM:epsilon_fixed}
\end{subfigure}%
%
\caption{
Mean clustering error under the Censored Block Model~\cite{saade:2015},
with two clusters of size 500 and 20 runs.
Fig.~\ref{subfig:CBM:p_fixed}: probability of observing and edge is fixed to $p=0.03$, and $\epsilon\in[0,0.5]$.
Fig.~\ref{subfig:CBM:epsilon_fixed}: probability of flipping sign of an edge is fixed to $\epsilon=0.25$, and $p\in[0.001,0.03]$
}
\label{fig:CBM}
\end{figure}
\begin{table*}[!h]    
\centering    
\small  
\adjustbox{max width=\textwidth}{
\begin{tabular}{cccccccccccccccccc}                                                                                                                                        
 \specialrule{1.5pt}{.1pt}{2pt}                                                                                                                                                                
 & & iris & wine & ecoli & australian & cancer & vehicle & german & image & optdig & isolet & USPS & pendig & 20new & MNIST \\                                                                                        
 \specialrule{1pt}{.1pt}{0pt}                                                                                                                                                                          
&\footnotesize{\# vertices} & 150 & 178 & 336 & 690 & 699 & 846 & 1000 & 2310 & 5620 & 7797 & 9298 & 10992 & 18846 & 70000 \\                                                            
&\footnotesize{\# classes} & 3 & 3 & 8 & 2 & 2 & 4 & 2 & 7 & 10 & 26 & 10 & 10 & 20 & 10 \\  
\specialrule{1pt}{-1pt}{1pt} 
\multirow{3}{*}{$H$} & \scriptsize{Best (\%)} & 14.1 & 14.1 & 10.9 & 7.8 & 0.0 & 20.3 & 34.4 & 0.0 & 3.1 & 0.0 & 0.0 & 0.0 & \textbf{45.3} & 0.0 \\                                                                 
& \scriptsize{Str.\ best (\%)} & 4.7 & 3.1 & 7.8 & 3.1 & 0.0 & 14.1 & 17.2 & 0.0 & 3.1 & 0.0 & 0.0 & 0.0 & \textbf{45.3} & 0.0 \\                                                                                    
& \scriptsize{Avg. error} & 16.8 & 32.2 & 23.9 & 41.7 & 13.2 & 58.4 & \textbf{29.5} & 64.0 & 32.5 & 54.0 & 41.3 & 39.2 & 88.5 & 48.2 \\                                                                              
\specialrule{1pt}{-1pt}{1pt} 
\multirow{3}{*}{$L_{SN}$} & \scriptsize{Best (\%)} & 10.9 & 10.9 & 14.1 & 4.7 & 15.6 & 12.5 & 12.5 & 17.2 & 26.6 & 7.8 & 14.1 & 7.8 & 26.6 & 14.1 \\                                                                 
& \scriptsize{Str.\ best (\%)} & 1.6 & 3.1 & 9.4 & 1.6 & 15.6 & 7.8 & 0.0 & 17.2 & 26.6 & 7.8 & 14.1 & 7.8 & 26.6 & 14.1 \\                                                                                          
& \scriptsize{Avg. error} & 17.5 & 32.2 & 24.5 & 42.8 & 8.8 & 57.2 & 29.9 & 53.9 & 24.9 & 51.2 & 38.6 & 37.8 & 89.0 & 45.8 \\                                                                                        
\specialrule{1pt}{-1pt}{1pt} 
\multirow{3}{*}{$L_{BN}$} & \scriptsize{Best (\%)} & 4.7 & 12.5 & 0.0 & 6.3 & 1.6 & 6.3 & 40.6 & 0.0 & 0.0 & 0.0 & 0.0 & 0.0 & 1.6 & 0.0 \\                                                                          
& \scriptsize{Str.\ best (\%)} & 1.6 & 1.6 & 0.0 & 0.0 & 1.6 & 4.7 & 17.2 & 0.0 & 0.0 & 0.0 & 0.0 & 0.0 & 1.6 & 0.0 \\                                                                                               
& \scriptsize{Avg. error} & 26.6 & 33.6 & 30.5 & 42.5 & 10.2 & 61.6 & 29.6 & 57.2 & 41.1 & 67.4 & 50.1 & 50.5 & 92.5 & 58.6 \\                                                                                       
\specialrule{1pt}{-1pt}{1pt} 
\multirow{3}{*}{$L_{AM}$} & \scriptsize{Best (\%)} & 6.3 & 20.3 & 7.8 & 6.3 & 0.0 & 20.3 & 15.6 & 9.4 & 0.0 & 0.0 & 0.0 & 0.0 & 4.7 & 0.0 \\                                                                         
& \scriptsize{Str.\ best (\%)} & 1.6 & 9.4 & 6.3 & 1.6 & 0.0 & 7.8 & 1.6 & 6.3 & 0.0 & 0.0 & 0.0 & 0.0 & 4.7 & 0.0 \\                                                                                                
& \scriptsize{Avg. error} & 19.0 & 32.7 & 24.4 & 42.7 & 11.6 & 58.1 & 29.7 & 47.7 & 33.5 & 49.6 & 44.7 & 48.3 & 89.7 & 56.1 \\                                                                                       
\specialrule{1pt}{-1pt}{1pt} 
\multirow{3}{*}{$L_{GM}$} & \scriptsize{Best (\%)} & 32.8 & 35.9 & 34.4 & 32.8 & 7.8 & 17.2 & \textbf{46.9} & 6.3 & 29.7 & 28.1 & 12.5 & 0.0 & 1.6 & \textbf{82.8} \\                                                
& \scriptsize{Str.\ best (\%)} & 1.6 & 7.8 & \textbf{21.9} & 23.4 & 6.3 & 14.1 & \textbf{25.0} & 6.3 & 28.1 & 28.1 & 9.4 & 0.0 & 1.6 & \textbf{82.8} \\                                                              
& \scriptsize{Avg. error} & 14.1 & 31.9 & 20.4 & 39.3 & 11.3 & 57.6 & \textbf{29.5} & 46.8 & 13.0 & 42.6 & 27.6 & 45.0 & 89.9 & \textbf{26.7} \\                                                                     
\specialrule{1pt}{-1pt}{1pt} 
\multirow{3}{*}{$L_{-1}$} & \scriptsize{Best (\%)} & 25.0 & \textbf{45.3} & \textbf{39.1} & \textbf{42.2} & 0.0 & 12.5 & 15.6 & \textbf{39.1} & 4.7 & \textbf{37.5} & 4.7 & 9.4 & 12.5 & 1.6 \\                      
& \scriptsize{Str.\ best (\%)} & 0.0 & \textbf{14.1} & 18.8 & \textbf{31.3} & 0.0 & 9.4 & 1.6 & 29.7 & 4.7 & \textbf{37.5} & 4.7 & 9.4 & 12.5 & 1.6 \\                                                               
& \scriptsize{Avg. error} & 13.8 & \textbf{29.8} & \textbf{20.3} & \textbf{38.2} & 8.3 & 56.2 & 29.8 & 39.7 & 16.3 & \textbf{42.1} & 25.2 & 32.9 & \textbf{88.3} & 32.3 \\                                           
\specialrule{1pt}{-1pt}{1pt} 
\multirow{3}{*}{$L_{-10}$} & \scriptsize{Best (\%)} & \textbf{73.4} & 43.8 & 25.0 & 34.4 & \textbf{76.6} & \textbf{31.3} & 20.3 & \textbf{39.1} & \textbf{37.5} & 26.6 & \textbf{71.9} & \textbf{82.8} & 7.8 & 1.6 \\
& \scriptsize{Str.\ best (\%)} & \textbf{42.2} & 7.8 & 10.9 & 18.8 & \textbf{75.0} & \textbf{25.0} & 4.7 & \textbf{31.3} & \textbf{35.9} & 26.6 & \textbf{68.8} & \textbf{82.8} & 7.8 & 1.6 \\                       
& \scriptsize{Avg. error} & \textbf{12.7} & 30.2 & 20.8 & 38.6 & \textbf{5.7} & \textbf{55.9} & 29.7 & \textbf{39.4} & \textbf{12.1} & 42.3 & \textbf{21.9} & \textbf{26.9} & 89.8 & 28.6                                                                                       
\end{tabular}  
}
\caption{
Experiments on UCI datasets. 
Positive edges generated by $k$-nearest neighbours, and 
negative edges generated by $k$-farthest neighbours.
Reported is the percentage of cases where each method achieves the smallest and stricly smallest clustering error, and the average clustering error.      \vspace{-.7em}
}  
\label{table:UCI-experiments}   
\end{table*} 
\vspace{-.5em}
\begin{table*}  
\adjustbox{max width=\textwidth}{
\centering                                                                                                                                                                                                   
\begin{tabular}{ccccccccccccccccccccc}                                                                                                                                                             
\hline                                                                                                                                                                                                       
 & & iris & wine & ecoli & australian & cancer & vehicle & german & image & optdig & isolet & USPS & pendigits & 20new & MNIST \\                                                                             
\specialrule{1pt}{.1pt}{0pt}                                                                                                                                                                                                                                                                                                                                                        
&\footnotesize{\# vertices} & 150  & 178  & 336  & 690  & 699  & 846  & 1000  & 2310  & 5620  & 7797  & 9298  & 10992  & 18846  & 70000  \\                                                    
&\footnotesize{\# classes} & 3  & 3  & 8  & 2  & 2  & 4  & 2  & 7  & 10  & 26  & 10  & 10  & 20  & 10  \\                                                                                        
\specialrule{1pt}{-1pt}{1pt} 
\multirow{3}{*}{$H$} & \scriptsize{Best (\%)} & 54.7 & \textbf{51.6} & 20.3 & 43.8 & 40.6 & 26.6 & 45.3 & 12.5 & 4.7 & 3.1 & 4.7 & 1.6 & 15.6 & 4.7 \\                                                      
& \scriptsize{Str.\ best (\%)} & 0.0 & 9.4 & 9.4 & 1.6 & 0.0 & 7.8 & 0.0 & 0.0 & 3.1 & 3.1 & 4.7 & 1.6 & 15.6 & 3.1 \\                                                                                       
& \scriptsize{Avg. error} & 3.6 & \textbf{14.3} & 15.9 & 8.6 & 0.8 & 32.0 & 8.5 & 16.9 & 6.5 & 47.8 & 15.7 & 10.4 & 87.2 & 10.7 \\                                                                           
\hline 
\multirow{3}{*}{$L_{SN}$} & \scriptsize{Best (\%)} & 59.4 & 42.2 & 17.2 & 42.2 & 45.3 & 37.5 & 48.4 & 17.2 & 15.6 & 20.3 & 12.5 & 12.5 & 28.1 & 9.4 \\                                                       
& \scriptsize{Str.\ best (\%)} & 0.0 & 4.7 & 9.4 & 0.0 & 0.0 & 17.2 & 0.0 & 3.1 & 10.9 & 18.8 & 12.5 & 12.5 & 28.1 & 7.8 \\                                                                                  
& \scriptsize{Avg. error} & 4.0 & 15.5 & 15.9 & 11.9 & 5.3 & 30.5 & 11.1 & 13.8 & 4.9 & 44.1 & 11.9 & 7.2 & 86.1 & 7.8 \\                                                                                    
\hline                                                                                                                                                                                                       
\multirow{3}{*}{$L_{BN}$} & \scriptsize{Best (\%)} & \textbf{68.8} & \textbf{51.6} & \textbf{45.3} & 42.2 & 53.1 & \textbf{50.0} & 50.0 & \textbf{50.0} & 37.5 & 28.1 & 35.9 & 35.9 & \textbf{35.9} & 42.2 \\
& \scriptsize{Str.\ best (\%)} & 3.1 & 12.5 & \textbf{40.6} & 0.0 & 1.6 & \textbf{28.1} & 0.0 & 35.9 & 34.4 & 26.6 & 35.9 & 35.9 & \textbf{35.9} & 42.2 \\                                                   
& \scriptsize{Avg. error} & 2.5 & 14.6 & \textbf{14.9} & 11.0 & 0.8 & \textbf{30.1} & 10.1 & 13.6 & 6.6 & 47.0 & 12.5 & 8.8 & \textbf{84.6} & 9.2 \\                                                         
\hline                                                                                                                                                                                                       
\multirow{3}{*}{$L_{AM}$} & \scriptsize{Best (\%)} & 42.2 & 25.0 & 26.6 & 0.0 & 0.0 & 23.4 & 0.0 & 45.3 & \textbf{45.3} & \textbf{35.9} & \textbf{46.9} & \textbf{48.4} & 18.8 & \textbf{45.3} \\            
& \scriptsize{Str.\ best (\%)} & \textbf{21.9} & \textbf{21.9} & 23.4 & 0.0 & 0.0 & 21.9 & 0.0 & \textbf{43.8} & \textbf{45.3} & \textbf{35.9} & \textbf{46.9} & \textbf{48.4} & 18.8 & \textbf{45.3} \\     
& \scriptsize{Avg. error} & \textbf{2.1} & 22.9 & 17.0 & 11.3 & \textbf{0.5} & 44.2 & 12.5 & \textbf{13.0} & \textbf{3.2} & \textbf{40.2} & \textbf{8.2} & \textbf{3.9} & 88.1 & \textbf{4.2} \\             
\hline                                                                                                                                                                                                       
\multirow{3}{*}{$L_{GM}$} & \scriptsize{Best (\%)} & 3.1 & 4.7 & 0.0 & \textbf{87.5} & \textbf{87.5} & 1.6 & \textbf{100.0} & 0.0 & 0.0 & 7.8 & 0.0 & 0.0 & 1.6 & 0.0 \\                                     
& \scriptsize{Str.\ best (\%)} & 0.0 & 0.0 & 0.0 & \textbf{56.3} & \textbf{46.9} & 1.6 & \textbf{48.4} & 0.0 & 0.0 & 7.8 & 0.0 & 0.0 & 1.6 & 0.0 \\                                                          
& \scriptsize{Avg. error} & 5.6 & 28.8 & 19.4 & \textbf{2.2} & 4.3 & 55.9 & \textbf{0.0} & 35.9 & 9.4 & 41.9 & 21.1 & 25.0 & 89.5 & 25.4 \\                                                                  
\hline                                                                                                                                                                                                                                                                                                                                                                                                             
\multirow{3}{*}{$L_{-1}$} & \scriptsize{Best (\%)} & 7.8 & 9.4 & 1.6 & 0.0 & 0.0 & 1.6 & 0.0 & 1.6 & 1.6 & 6.3 & 0.0 & 1.6 & 0.0 & 0.0 \\                                                                    
& \scriptsize{Str.\ best (\%)} & 0.0 & 4.7 & 0.0 & 0.0 & 0.0 & 1.6 & 0.0 & 1.6 & 1.6 & 6.3 & 0.0 & 1.6 & 0.0 & 0.0 \\                                                                                        
& \scriptsize{Avg. error} & 3.9 & 28.3 & 19.1 & 10.8 & 4.2 & 54.9 & 22.5 & 27.3 & 9.1 & 41.5 & 19.2 & 15.9 & 89.5 & 19.5 \\                                                                                  
\hline                                                                                                                                                                                                       
\multirow{3}{*}{$L_{-10}$} & \scriptsize{Best (\%)} & 6.3 & 0.0 & 4.7 & 0.0 & 0.0 & 0.0 & 0.0 & 0.0 & 0.0 & 0.0 & 0.0 & 0.0 & 0.0 & 0.0 \\                                                                   
& \scriptsize{Str.\ best (\%)} & 4.7 & 0.0 & 3.1 & 0.0 & 0.0 & 0.0 & 0.0 & 0.0 & 0.0 & 0.0 & 0.0 & 0.0 & 0.0 & 0.0 \\                                                                                        
& \scriptsize{Avg. error} & 8.1 & 29.3 & 20.0 & 5.6 & 3.6 & 56.4 & 7.8 & 38.5 & 11.2 & 42.1 & 21.4 & 26.3 & 89.9 & 25.9 
\end{tabular}
}
\caption{
Experiments on UCI datasets. 
Positive edges generated by $k$-nearest neighbours.
Negative edges generated are cannot links between nodes of different classes.
Reported is the percentage of cases where each method achieves the smallest and stricly smallest clustering error, and the average clustering error.      
\vspace{-.7em}
}                                                                                                                                                                                                  
\label{table:UCI-experiments-cannotLinks}   
\end{table*}    
\section{Experiments with the Censored Block Model}\label{section:Experiments with the Censored Block Model}
In this section we present a numerical evaluation of different methods under the Stochastic Block Model following the parameters corresponding to the Censored Block Model (\textbf{CBM}), following~\cite{saade:2015}.
Observe that the CBM is a particular case of the Stochastic Block Model for signed graphs as introduced in Section~\ref{sec:SBM}.
Following~\cite{saade:2015}, the CBM is has two parameters: probability of observing an edge ($\overline{p}$), and the probability of flipping the sign of an edge ($\epsilon$).
The CBM can be recovered from the SSBM introduced in Section~\ref{sec:SBM} by setting $\pp=\qm=\overline{p}(1-\epsilon)$ and $\ppm=\qp=\overline{p}\epsilon$. Observe that the parameter $\epsilon$ works as a noise parameter:
the noiseless setting corresponds to $\epsilon=0$, where positive and negative edges are only inside and between clusters, respectively. The case where $\epsilon=0.5$ corresponds to the case where no clustering structure is conveyed by the sign of the edges.

We present a numerical evaluation under the SSBM with parameters from CBM in Fig.~\ref{fig:CBM}. We consider two clusters and fix a priori its size to be of 500 nodes each. We present the clustering error out of 20 realizations from the SSM with parameters following the CBM. We consider two settings:
\textbf{First setting:} we fix the probability of observing an edge to $\overline{p}=0.03$, and evaluate over different values of $\epsilon\in[0,0.5]$. In Fig.~\ref{subfig:CBM:p_fixed} we can observe that there is no relevant difference in clustering error between methods. Further, as expected we can see that for small values of $\epsilon$ all methods perform well, and for larger values of $\epsilon$ the clustering error increases;
\textbf{Second setting:} we fix the probability of flipping the sign of an edge to $\epsilon=0.25$, and evaluate over different values of $\overline{p}\in[0.001,0.03]$. 
In Fig.~\ref{subfig:CBM:epsilon_fixed} we can observe that the performance of the Bethe Hessian is best for small values of $\overline{p}$, i.e. for sparser graphs. Following the Bethe Hessian are the arithmetic mean Laplacian $L_{1}$ together with the signed normalized Laplacian $L_{SN}$. 

Hence we have observed that for sufficiently dense graphs following the Censored Block Model, the performance of different methods is rather similar, whereas for sparser graphs the Bethe Hessian performs best, confirming the analysis presented in~\cite{saade:2015}. 
%
 \section{Experiments on UCI datasets}\label{section:experiments}
We evaluate the signed power mean Laplacian with $L_{-10},L_{-1}$ against
$L_{SN}$, $L_{BN}$, $L_{AM}$, $L_{GM}$ and $H$
using datasets from the UCI repository. We build $W^+$ from the $k^+$ nearest neighbor graph, whereas $W^-$ is obtained from the $k^-$ farthest neighbor graph. For each dataset we evaluate all clustering methods over all possible choices of $k^+,k^-\in\{3,5,7,10,15,20,40,60\}$, yielding in total 64 cases. 
We present the following statistics:
Best($\%$): proportion of cases where a method yields the smallest clustering error. 
Strictly~Best($\%$): proportion of cases where a method is the \textit{only one} yielding the smallest clustering error. 
Results are shown  in Table~\ref{table:UCI-experiments}. 

Observe that in 4 datasets $H$ and $L_{GM}$ present a competitive performance. For the remaining cases we can see that the best performance are obtained by the signed power mean Laplacians $L_{-1},L_{-10}$. This verifies the superiority of negative powers 
($p<0$)
to positive ($p>0$) powers of $L_p$ and related approaches like $L_{SN},L_{BN}$. Moreover, although the Bethe Hessian is known to be optimal under the sparse transition theoretic limit under the Censored Block Model~\cite{saade:2015}, in the context where graphs unlikely follow a SBM distribution we can see that it is outperformed by the signed power mean Laplacian $L_p$.

We consider a second setting  where we generate $k^-$~noiseless negative edges via cannot link constraints between nodes of different classes. The corresponding results are shown in Table~\ref{table:UCI-experiments-cannotLinks}. We observe in this setting that the arithmetic mean Laplacian $L_{AM}$ presents the best performance, followed by the geometric mean Laplacian $L_{GM}$ and the Balance Normalized Laplacian $L_{BN}$. This suggests that from the family of non-arithmetic based Laplacians the case of $L_{GM}$ is a reasonable option showing certain robustness to different signed graph regimes.

We emphasize that the eigenvectors of $L_p$ are calculated without ever computing the matrix itself, by adapting the method proposed in~\cite{Mercado:2018:powerMean}, described in Sec.\ref{section:computation}.
Also, please see Section~\ref{sec:OnDiagonalShift} for a performance comparison with respect to changes in the diagonal shift on UCI datasets.

%
 \begin{figure*}[t]
\centering
\vskip.0em

%
\begin{subfigure}[]{0.24\linewidth}
\includegraphics[width=1\linewidth, clip,trim=130 40 170 40]{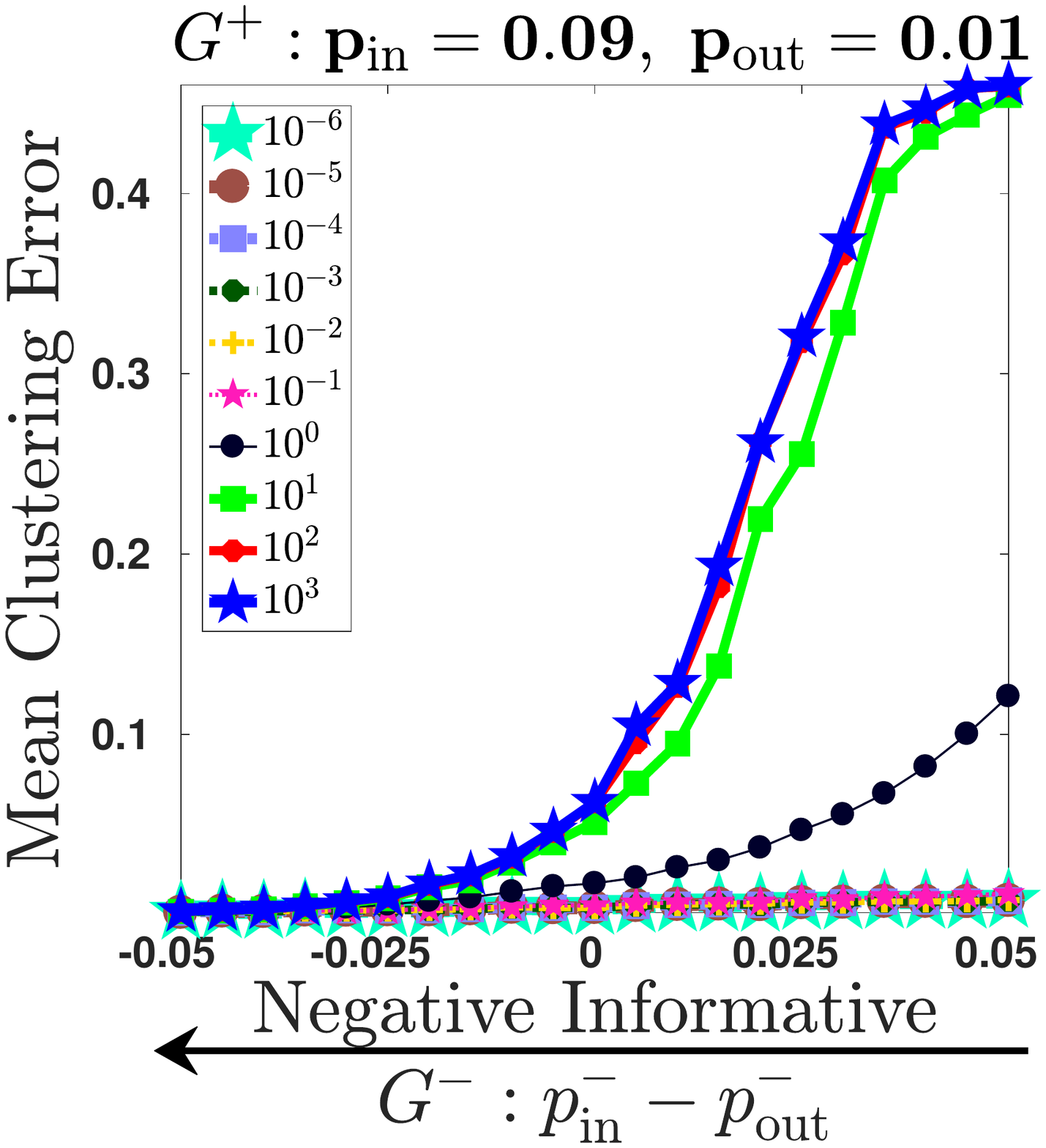}\hspace*{\fill}
\caption{$L_{-1}$}
\label{subfig:fig:SBM:diagonal_shift:fix_Wpos:L_{-1}}
\end{subfigure}%
\hfill
\begin{subfigure}[]{0.24\linewidth}
\includegraphics[width=1\linewidth, clip,trim=130 40 170 40]{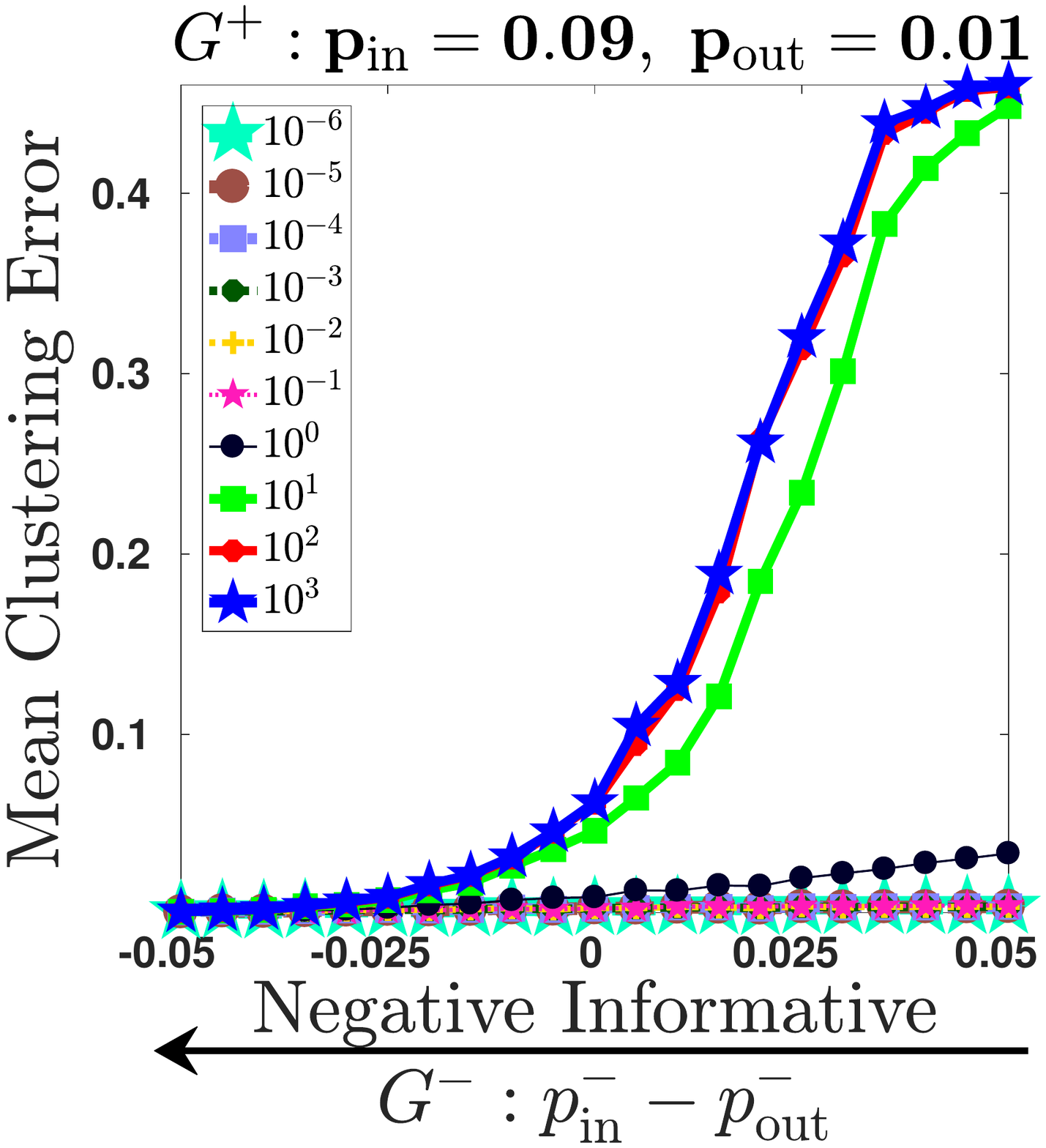}\hspace*{\fill}
\caption{$L_{-2}$}
\label{subfig:fig:SBM:diagonal_shift:fix_Wpos:L_{-2}}
\end{subfigure}%
\hfill
\begin{subfigure}[]{0.24\linewidth}
\includegraphics[width=1\linewidth, clip,trim=130 40 170 40]{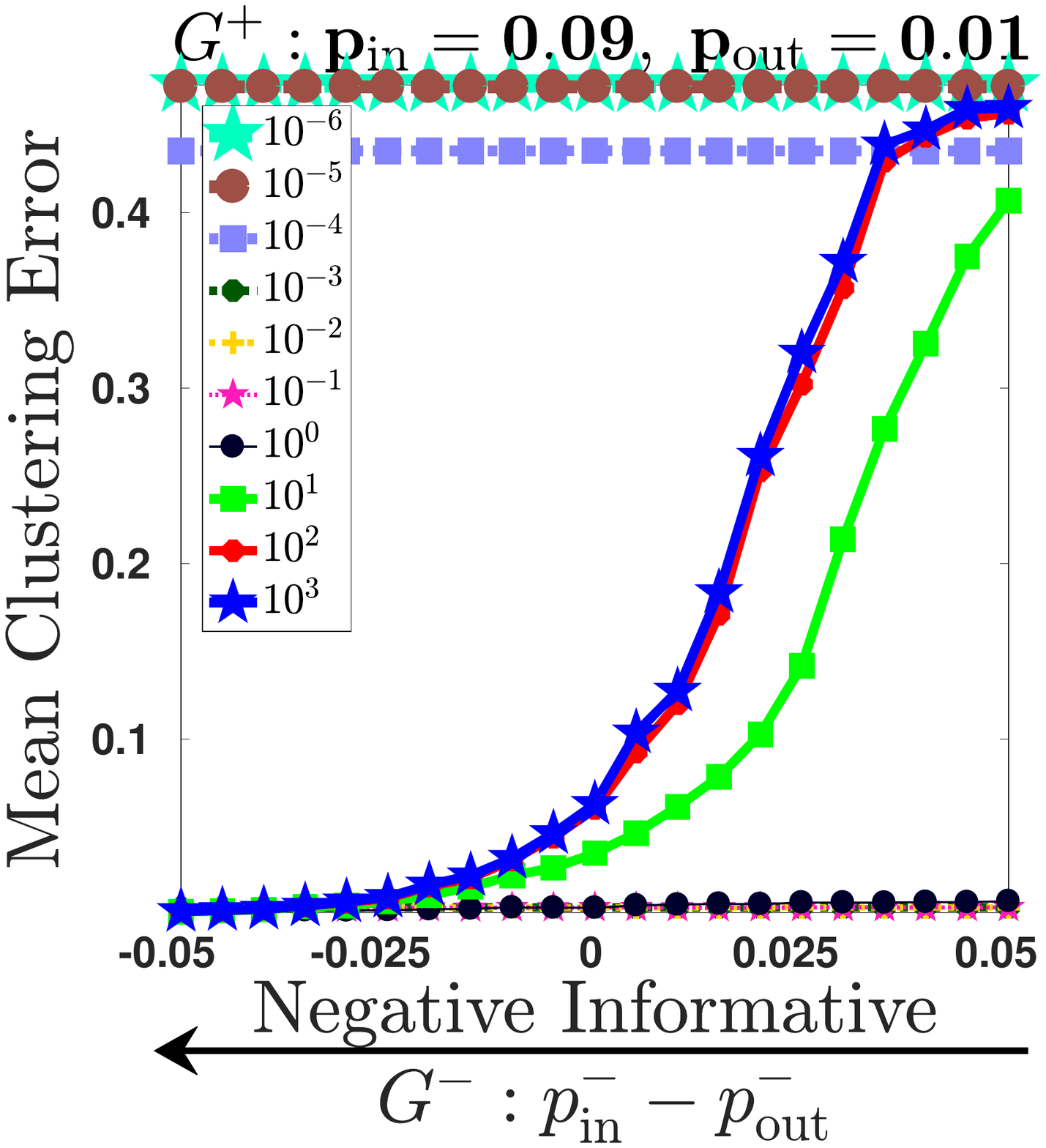}\hspace*{\fill}
\caption{$L_{-5}$}
\label{subfig:fig:SBM:diagonal_shift:fix_Wpos:L_{-5}}
\end{subfigure}%
\hfill
\begin{subfigure}[]{0.24\linewidth}
\includegraphics[width=1\linewidth, clip,trim=130 40 170 40]{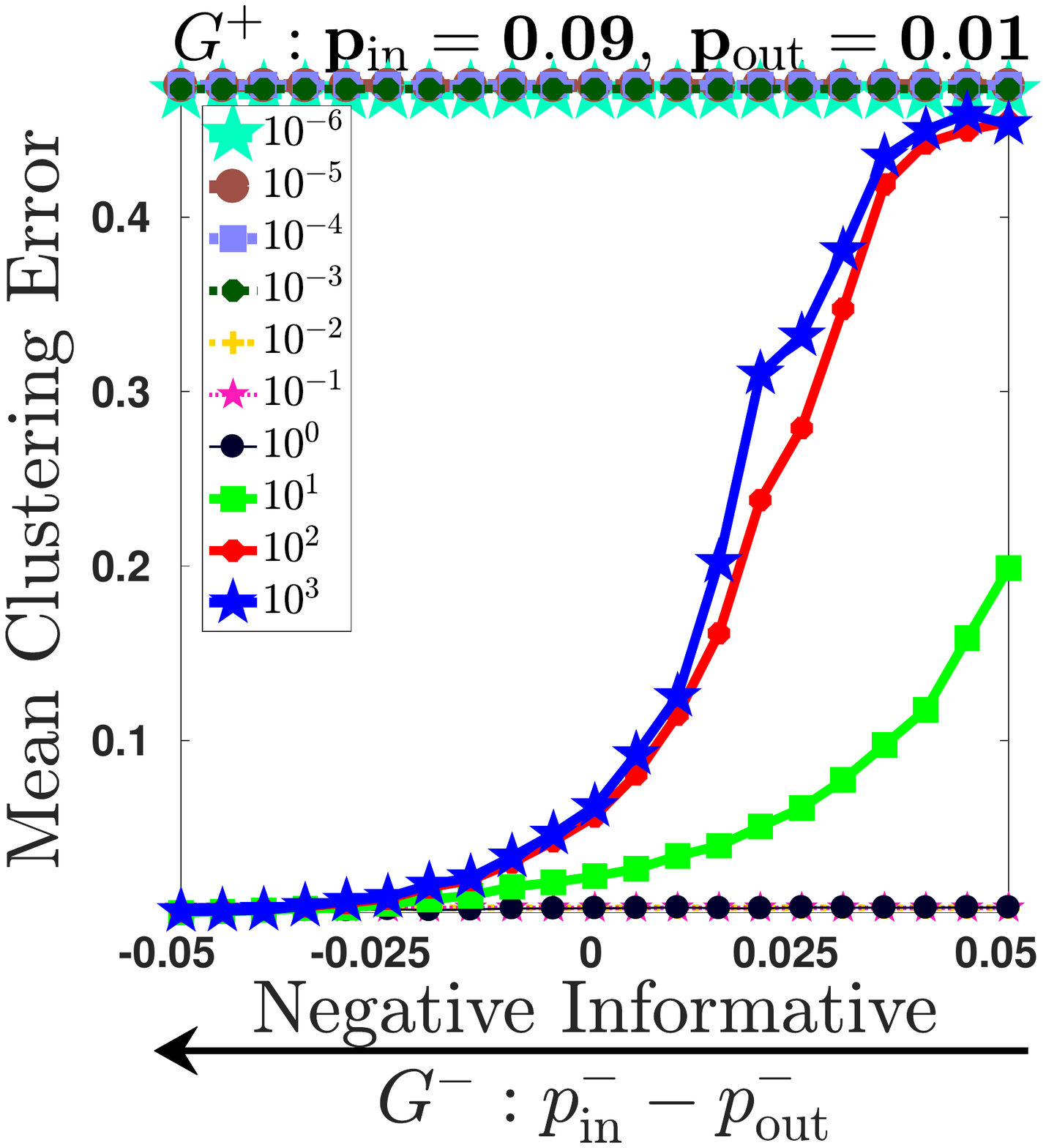}\hspace*{\fill}
\caption{$L_{-10}$}
\label{subfig:fig:SBM:diagonal_shift:fix_Wpos:L_{-10}}
\end{subfigure}
\hfill
\\
 \vspace{5pt}
\begin{subfigure}[]{0.24\linewidth}
\includegraphics[width=1\linewidth, clip,trim=130 40 170 40]{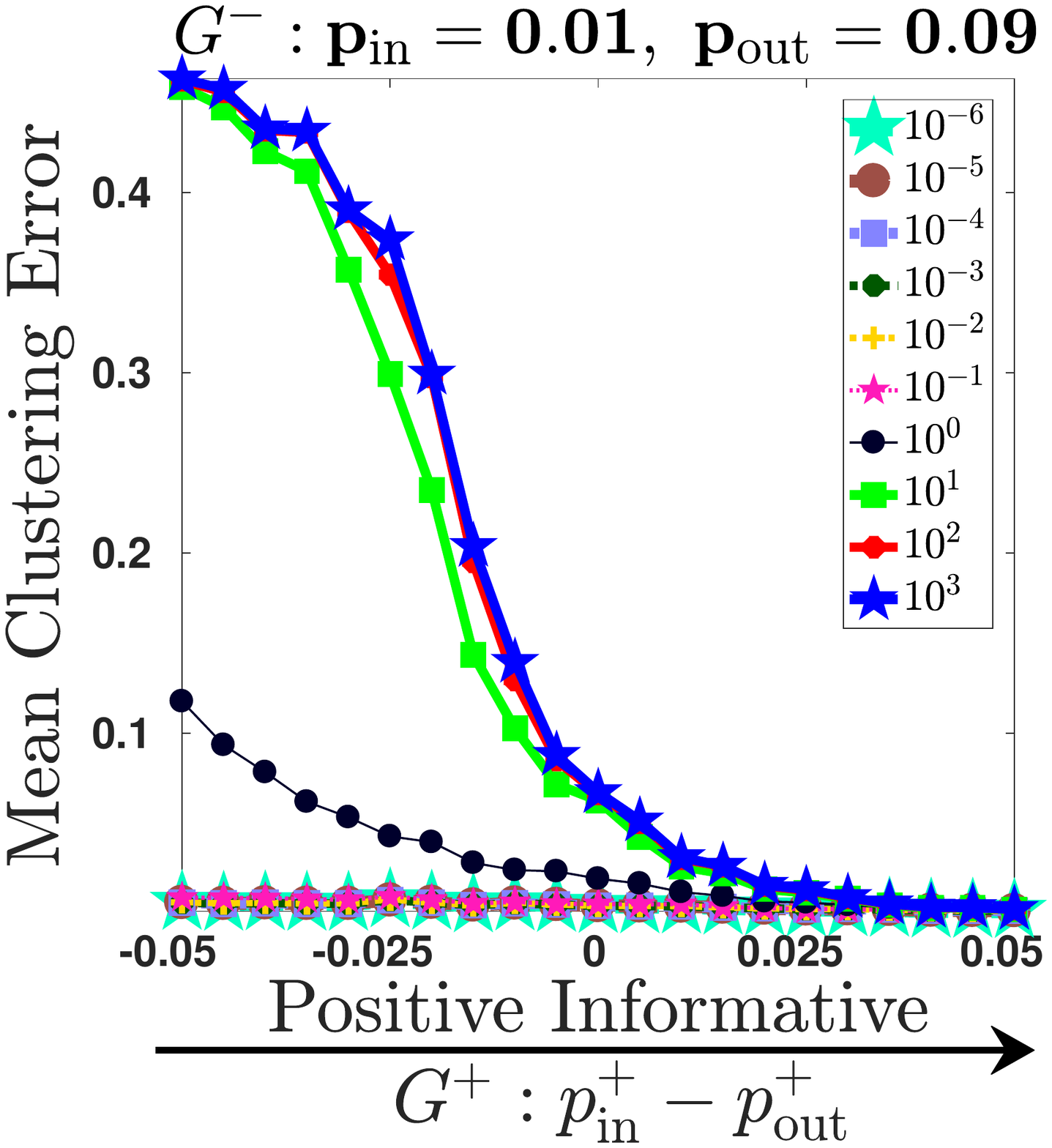}\hspace*{\fill}
\caption{$L_{-1}$}
\label{subfig:fig:SBM:diagonal_shift:fix_Wneg:L_{-1}}
\end{subfigure}%
\hfill
\begin{subfigure}[]{0.24\linewidth}
\includegraphics[width=1\linewidth, clip,trim=130 40 170 40]{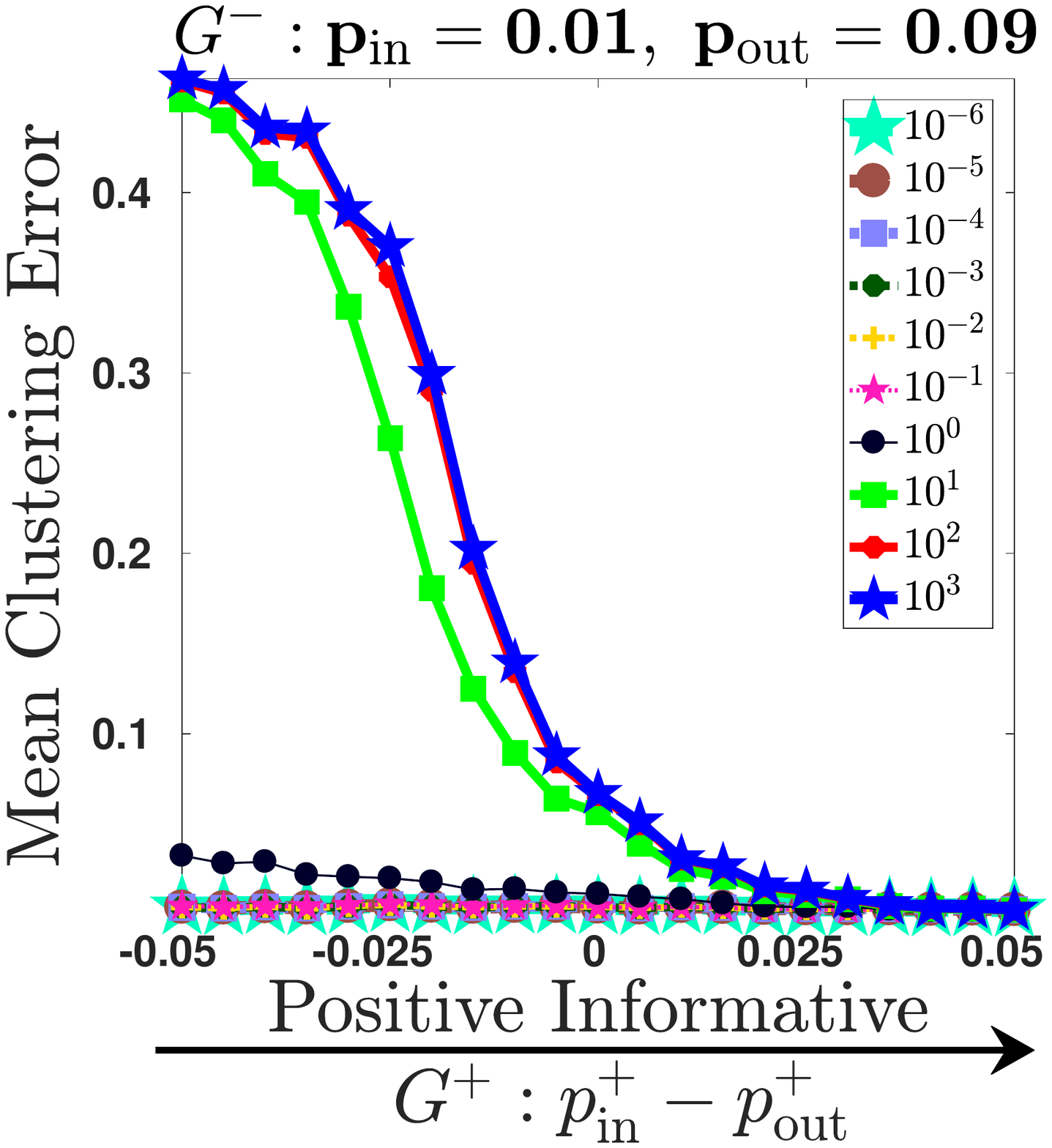}\hspace*{\fill}
\caption{$L_{-2}$}
\label{subfig:fig:SBM:diagonal_shift:fix_Wneg:L_{-2}}
\end{subfigure}%
\hfill
\begin{subfigure}[]{0.24\linewidth}
\includegraphics[width=1\linewidth, clip,trim=130 40 170 40]{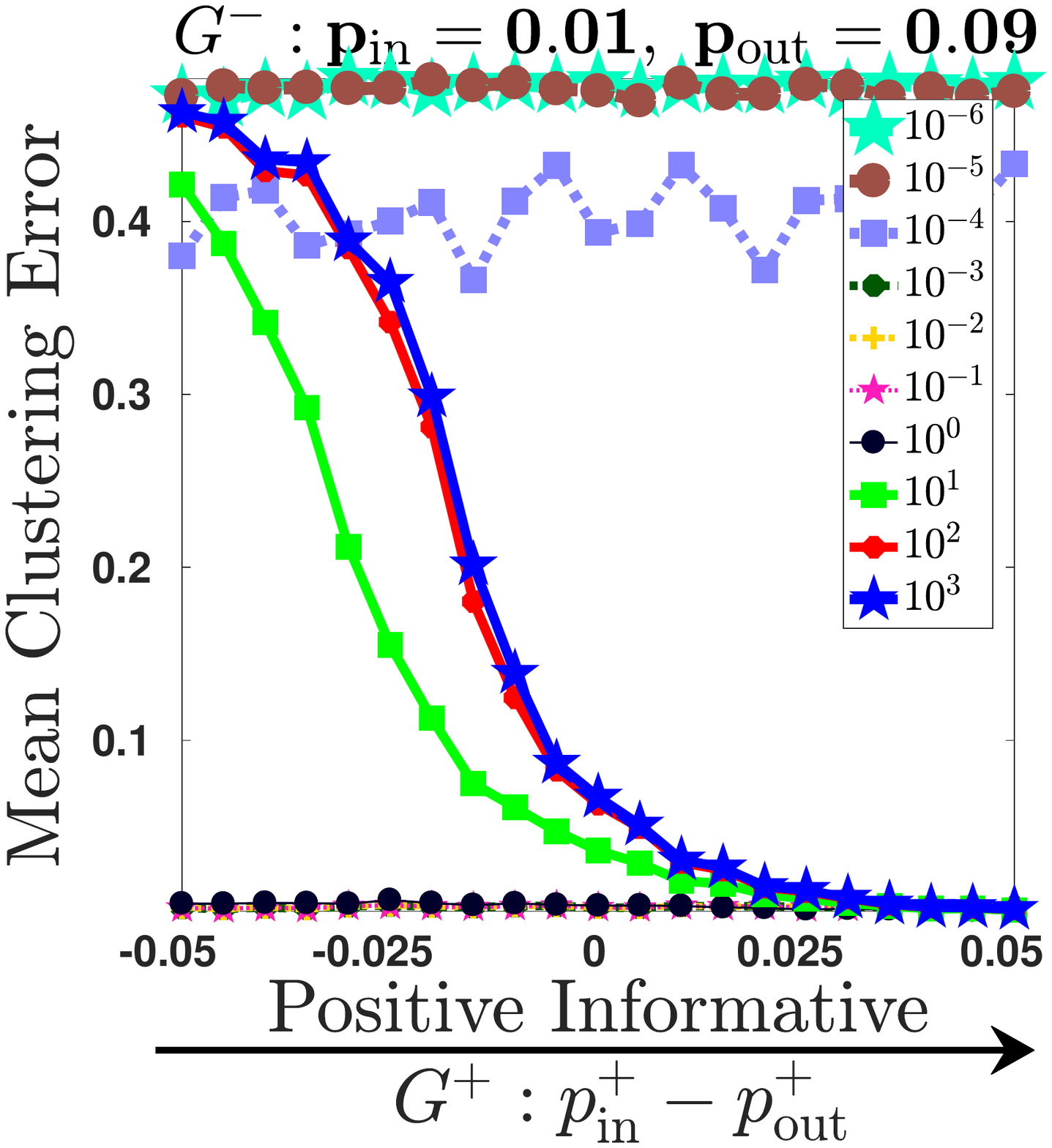}\hspace*{\fill}
\caption{$L_{-5}$}
\label{subfig:fig:SBM:diagonal_shift:fix_Wneg:L_{-5}}
\end{subfigure}%
\hfill
\begin{subfigure}[]{0.24\linewidth}
\includegraphics[width=1\linewidth, clip,trim=130 40 170 40]{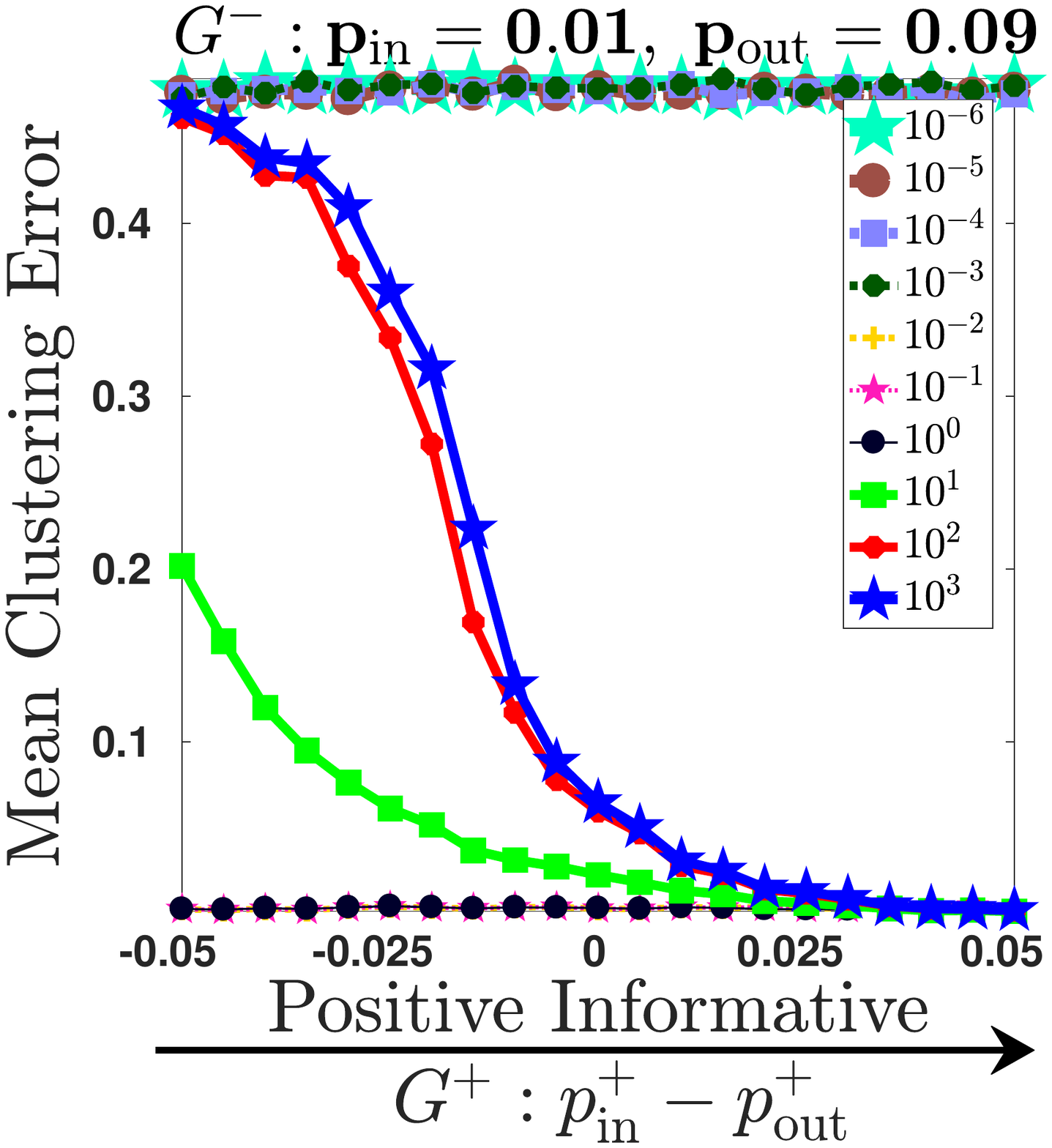}\hspace*{\fill}
\caption{$L_{-10}$}
\label{subfig:fig:SBM:diagonal_shift:fix_Wneg:L_{-10}}
\end{subfigure}%
\hfill
%
\caption{
Mean clustering error under SBM for different diagonal shifts with sparsity $0.1$. Details in Sec.~\ref{sec:OnDiagonalShift}.
}
\label{fig:SBM:diagonal_shift}
\end{figure*}
%
%
 \begin{figure*}[!h]
\centering
\vskip.0em
\begin{subfigure}[]{0.24\linewidth}
\includegraphics[width=1\linewidth, clip,trim=130 40 170 40]{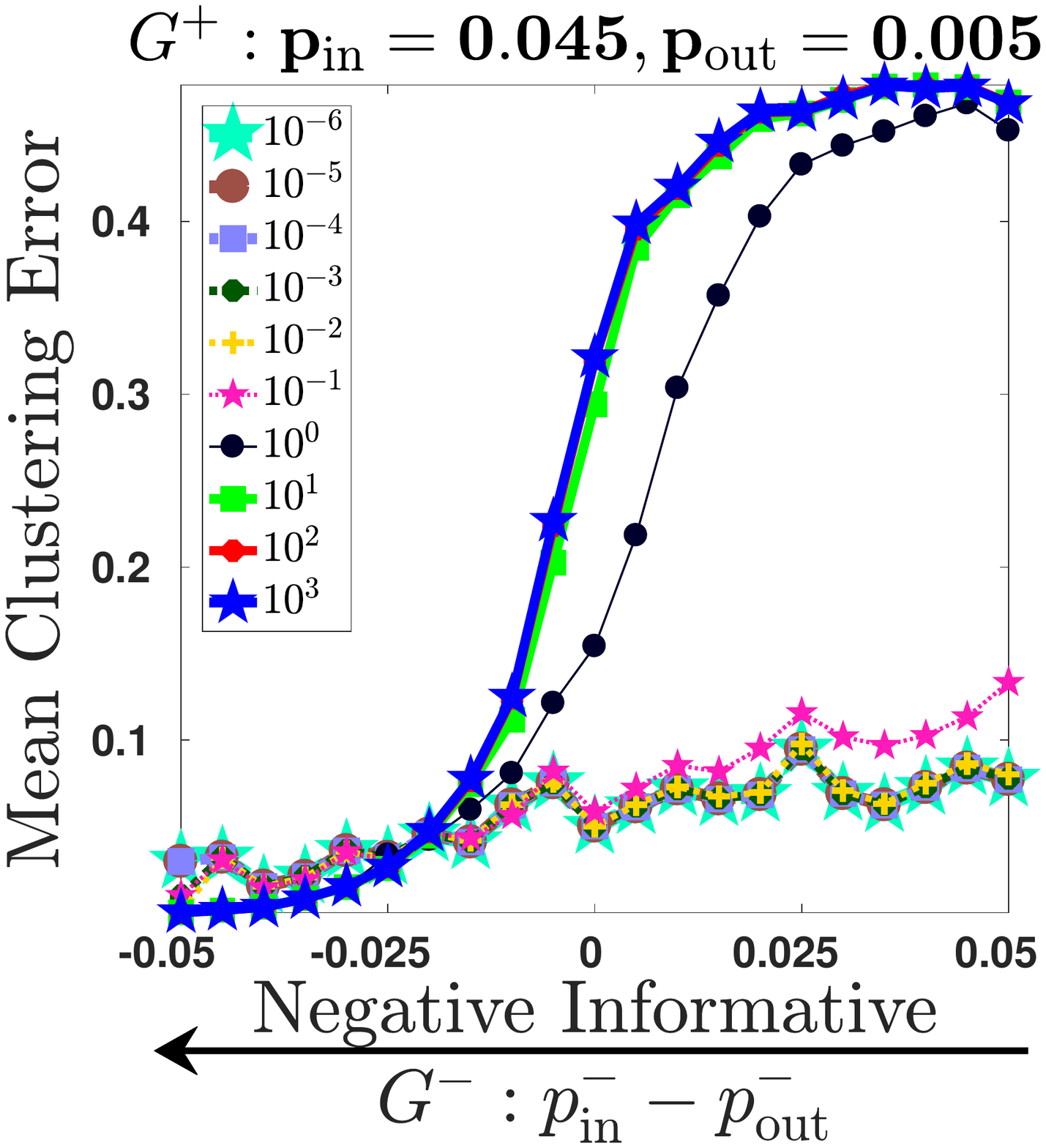}\hspace*{\fill}
\caption{$L_{-1}$}
\label{subfig:fig:SBM:diagonal_shift:sparsity_5:fix_Wpos:L_{-1}}
\end{subfigure}%
\hfill
\begin{subfigure}[]{0.24\linewidth}
\includegraphics[width=1\linewidth, clip,trim=130 40 170 40]{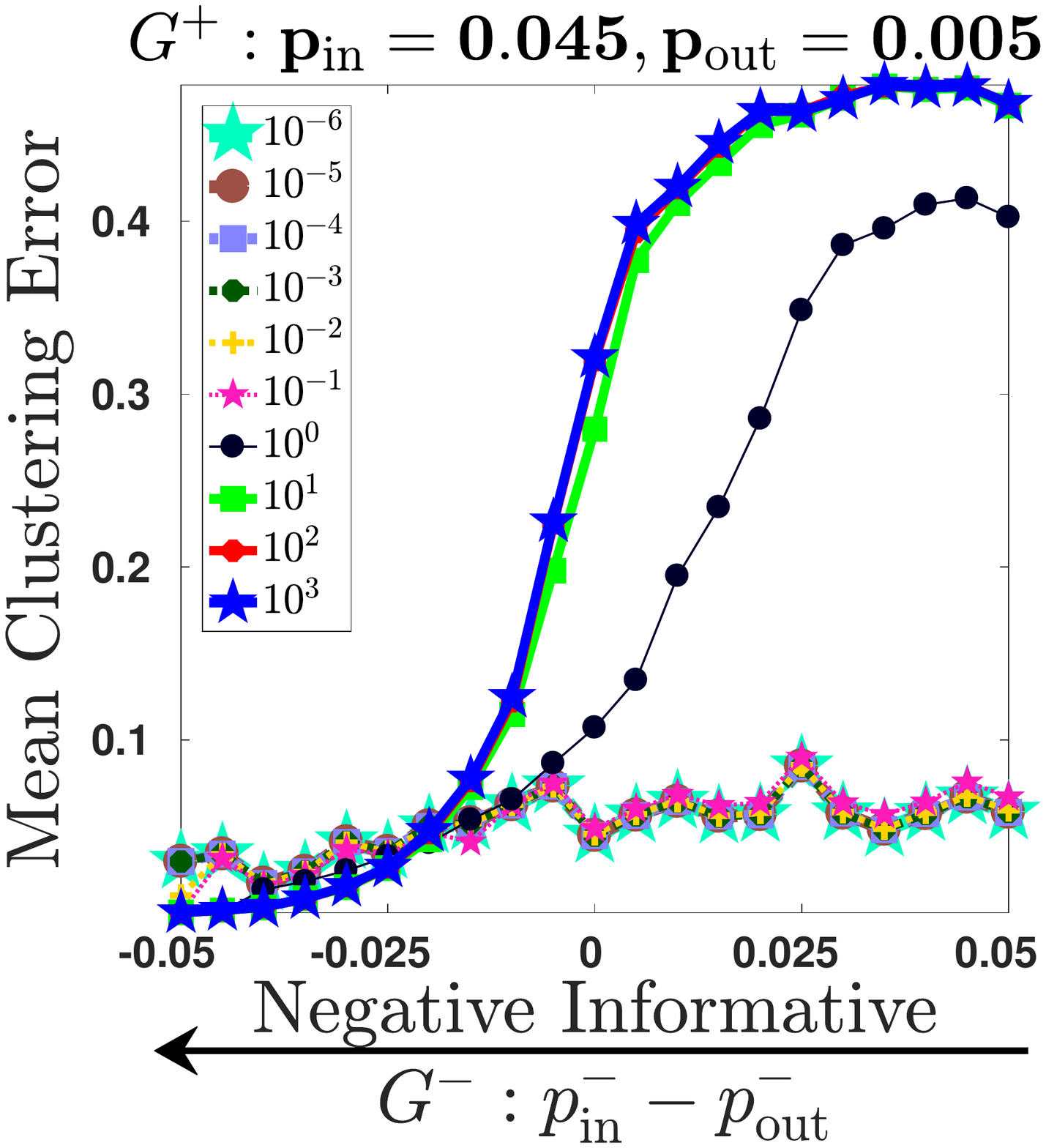}\hspace*{\fill}
\caption{$L_{-2}$}
\label{subfig:fig:SBM:diagonal_shift:sparsity_5:fix_Wpos:L_{-2}}
\end{subfigure}%
\hfill
\begin{subfigure}[]{0.24\linewidth}
\includegraphics[width=1\linewidth, clip,trim=130 40 170 40]{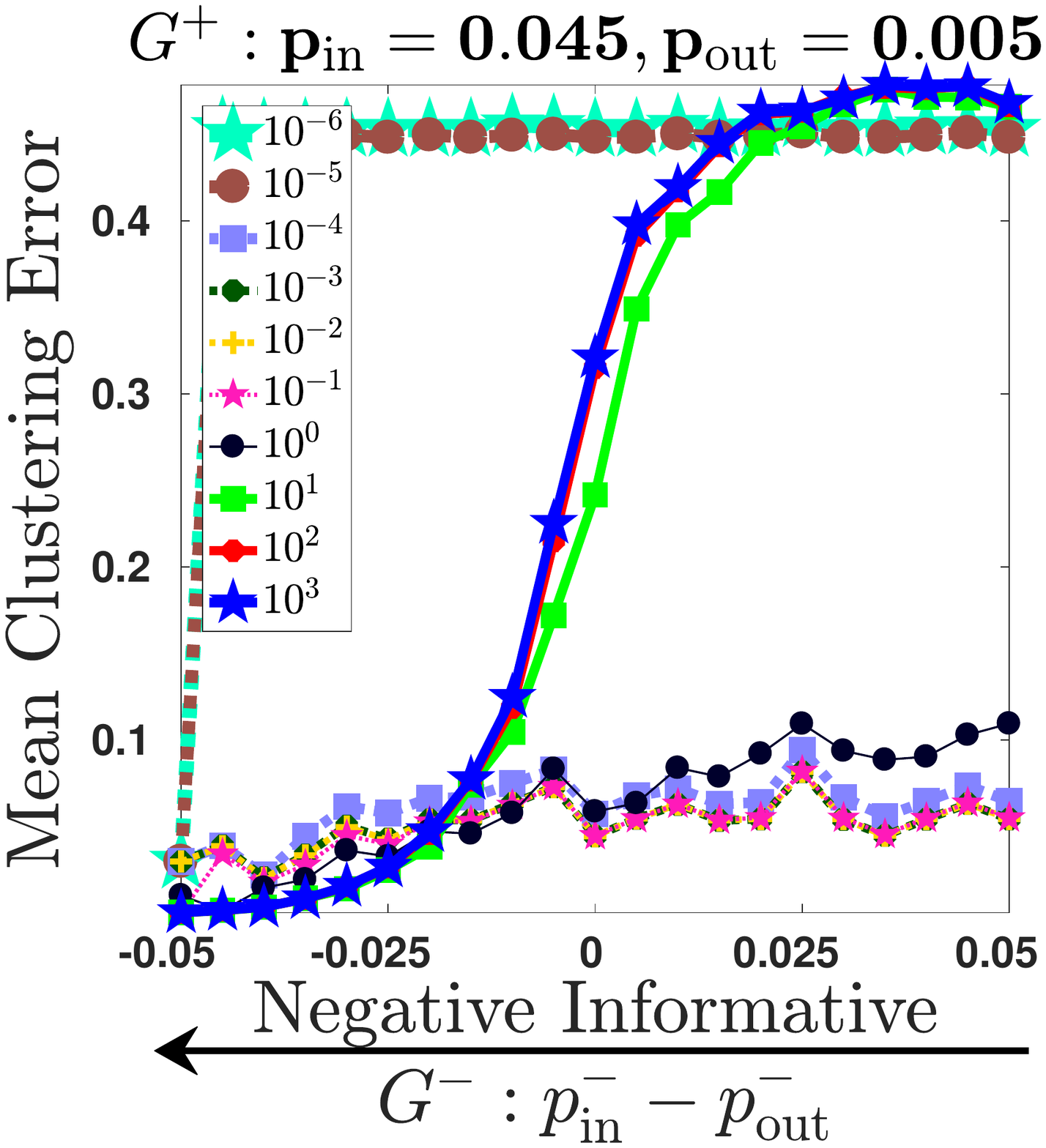}\hspace*{\fill}
\caption{$L_{-5}$}
\label{subfig:fig:SBM:diagonal_shift:sparsity_5:fix_Wpos:L_{-5}}
\end{subfigure}%
\hfill
\begin{subfigure}[]{0.24\linewidth}
\includegraphics[width=1\linewidth, clip,trim=130 40 170 40]{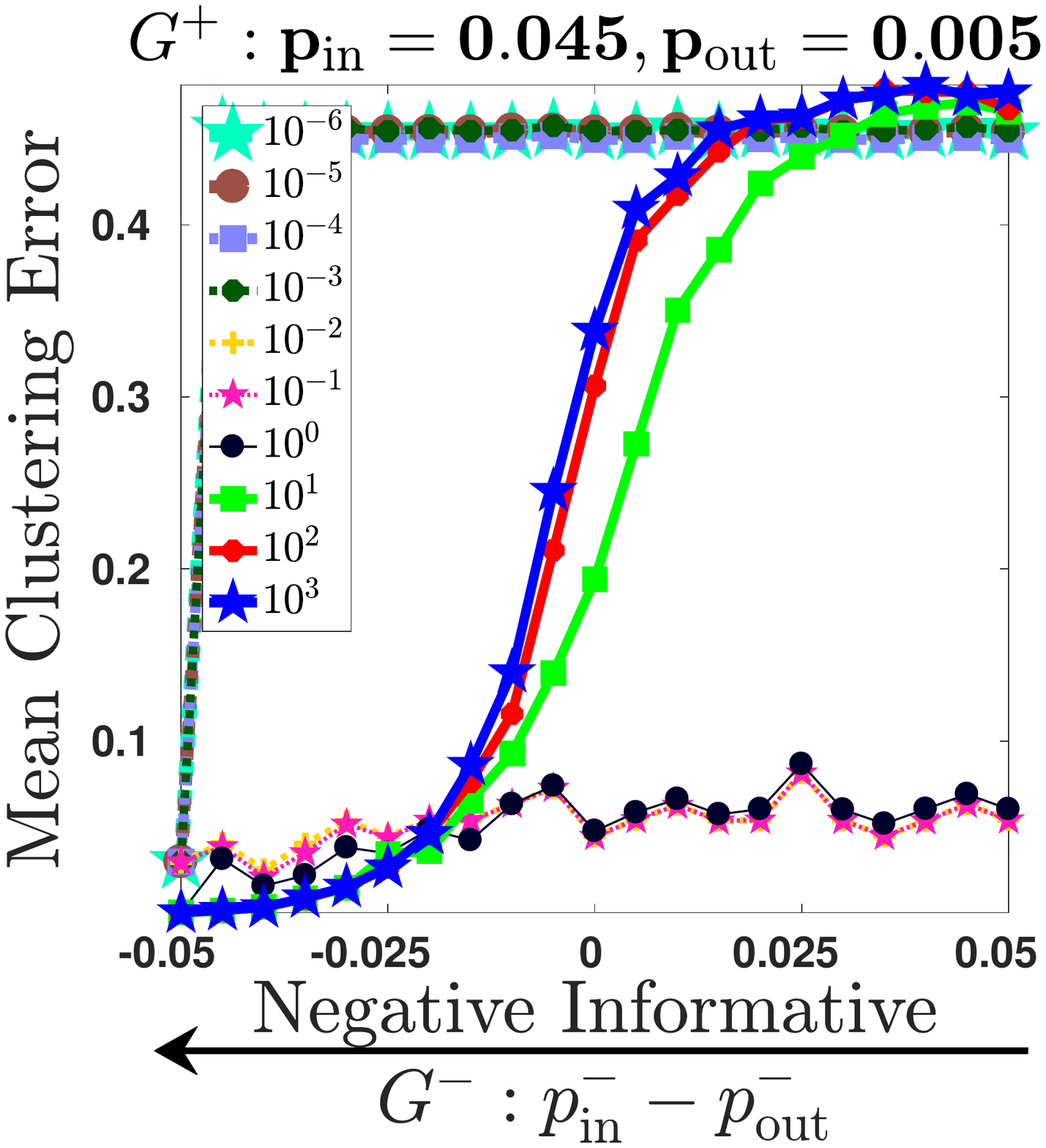}\hspace*{\fill}
\caption{$L_{-10}$}
\label{subfig:fig:SBM:diagonal_shift:sparsity_5:fix_Wpos:L_{-10}}
\end{subfigure}
\hfill
\\
 \vspace{5pt}
\begin{subfigure}[]{0.24\linewidth}
\includegraphics[width=1\linewidth, clip,trim=130 40 170 40]{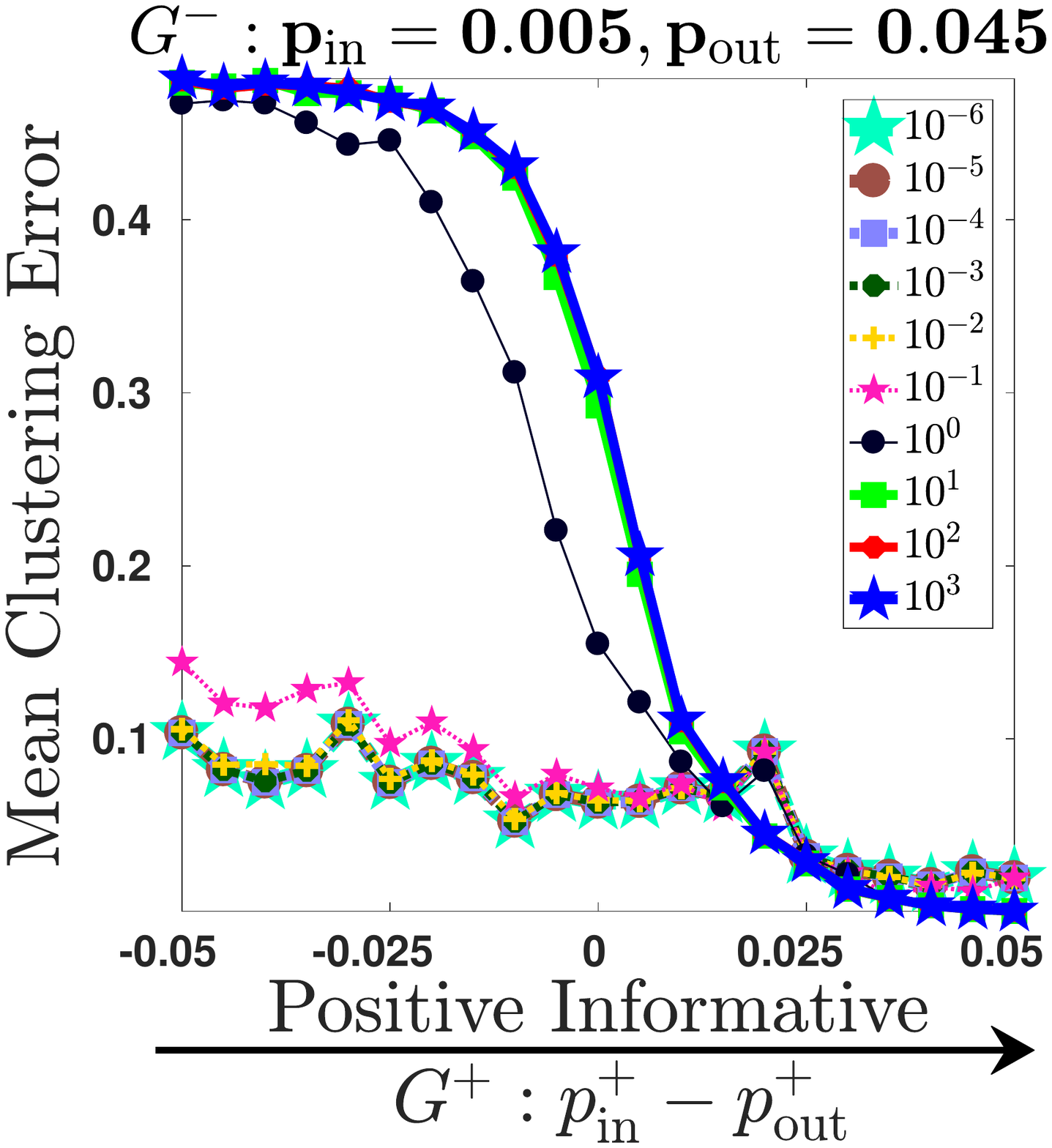}\hspace*{\fill}
\caption{$L_{-1}$}
\label{subfig:fig:SBM:diagonal_shift:sparsity_5:fix_Wneg:L_{-1}}
\end{subfigure}%
\hfill
\begin{subfigure}[]{0.24\linewidth}
\includegraphics[width=1\linewidth, clip,trim=130 40 170 40]{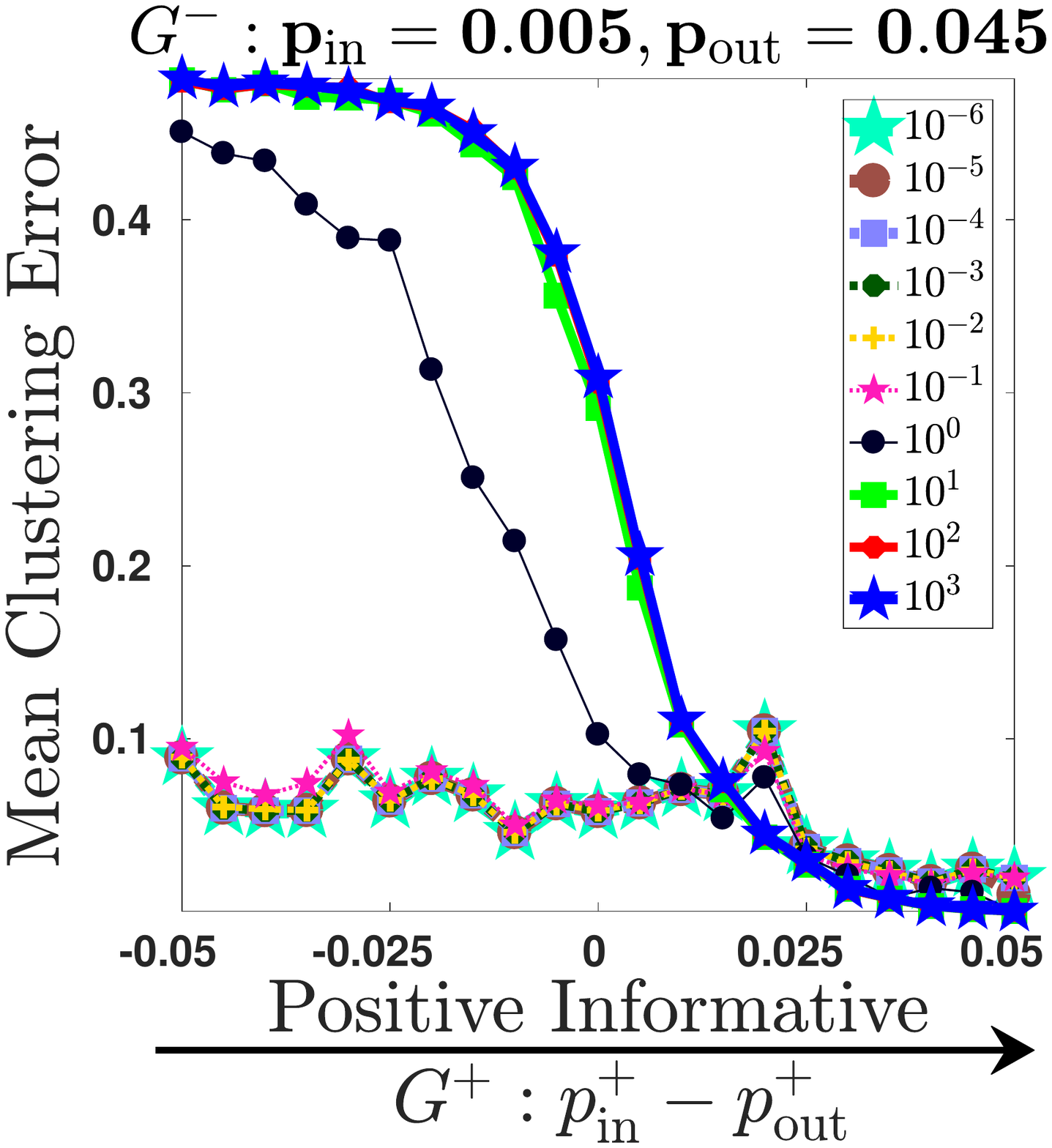}\hspace*{\fill}
\caption{$L_{-2}$}
\label{subfig:fig:SBM:diagonal_shift:sparsity_5:fix_Wneg:L_{-2}}
\end{subfigure}%
\hfill
\begin{subfigure}[]{0.24\linewidth}
\includegraphics[width=1\linewidth, clip,trim=130 40 170 40]{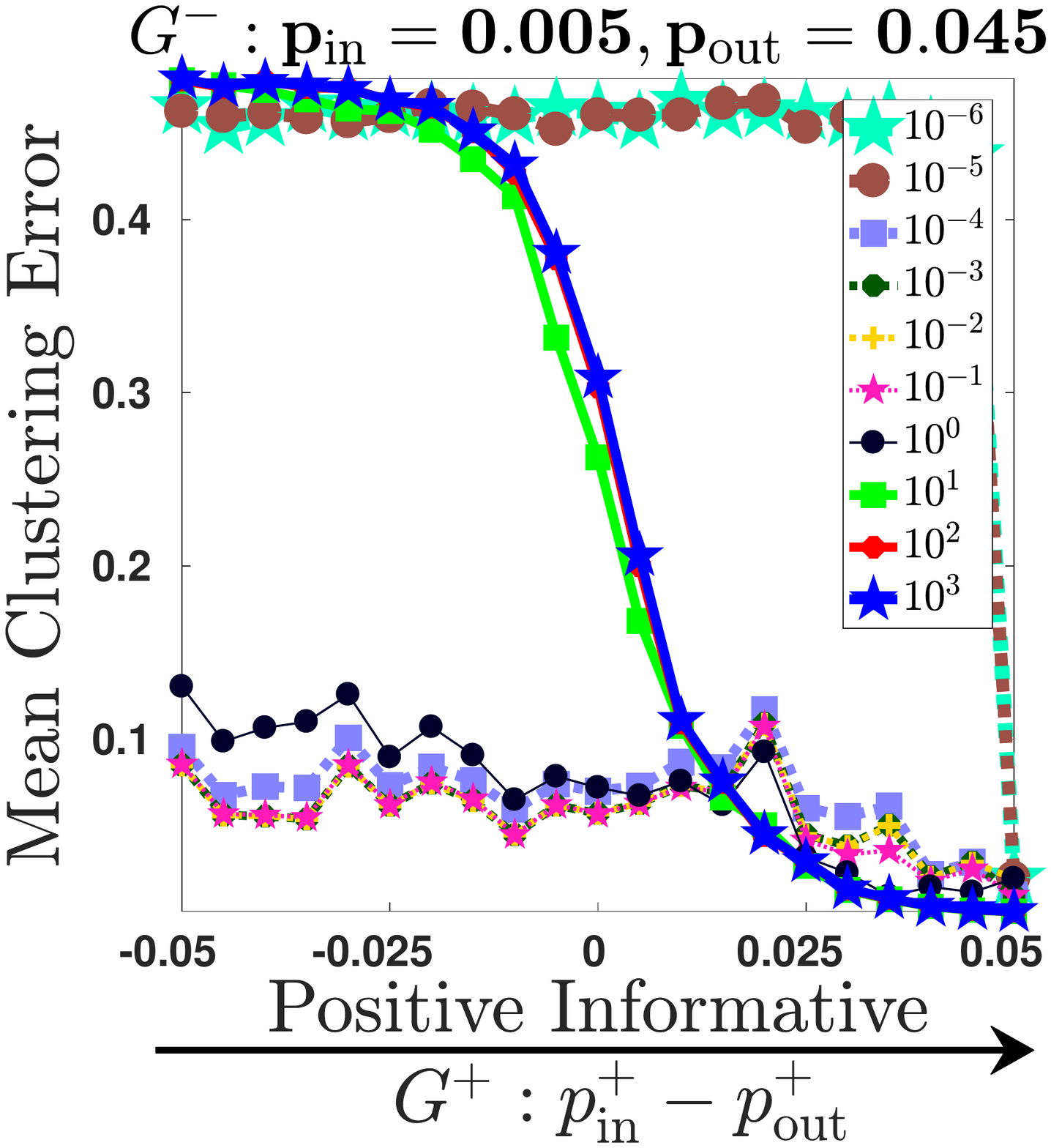}\hspace*{\fill}
\caption{$L_{-5}$}
\label{subfig:fig:SBM:diagonal_shift:sparsity_5:fix_Wneg:L_{-5}}
\end{subfigure}%
\hfill
\begin{subfigure}[]{0.24\linewidth}
\includegraphics[width=1\linewidth, clip,trim=130 40 170 40]{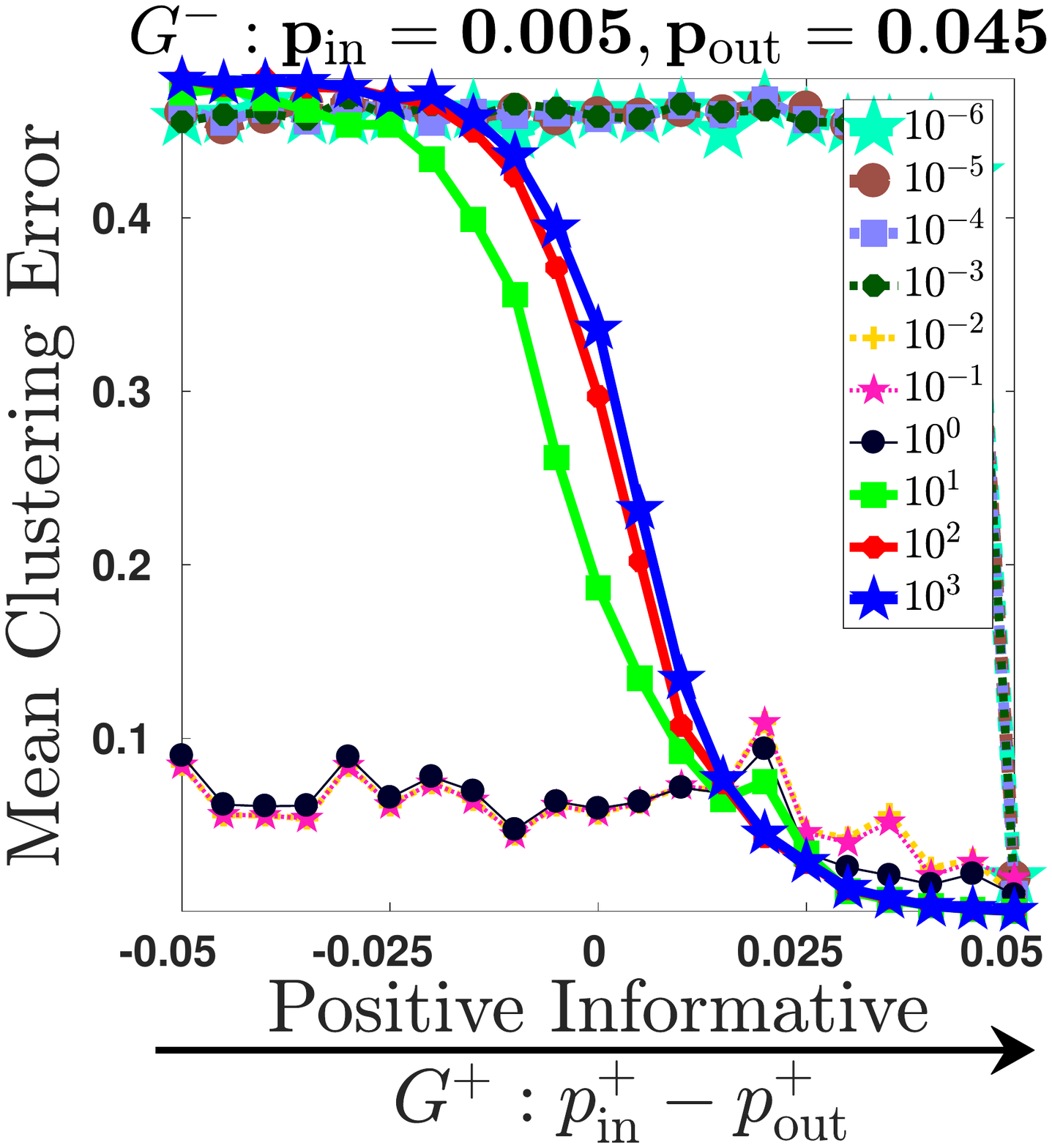}\hspace*{\fill}
\caption{$L_{-10}$}
\label{subfig:fig:SBM:diagonal_shift:sparsity_5:fix_Wneg:L_{-10}}
\end{subfigure}%
\hfill
%
\caption{
Mean clustering error under SBM for different diagonal shifts with sparsity $0.05$. Details in Sec.~\ref{sec:OnDiagonalShift}.
}
\label{fig:SBM:diagonal_shift_sparsity_5}
\end{figure*}
\begin{figure*}[t]
\centering
\vskip.6em

%
\begin{subfigure}[]{0.24\linewidth}
\includegraphics[width=1\linewidth, clip,trim=105 40 160 40]{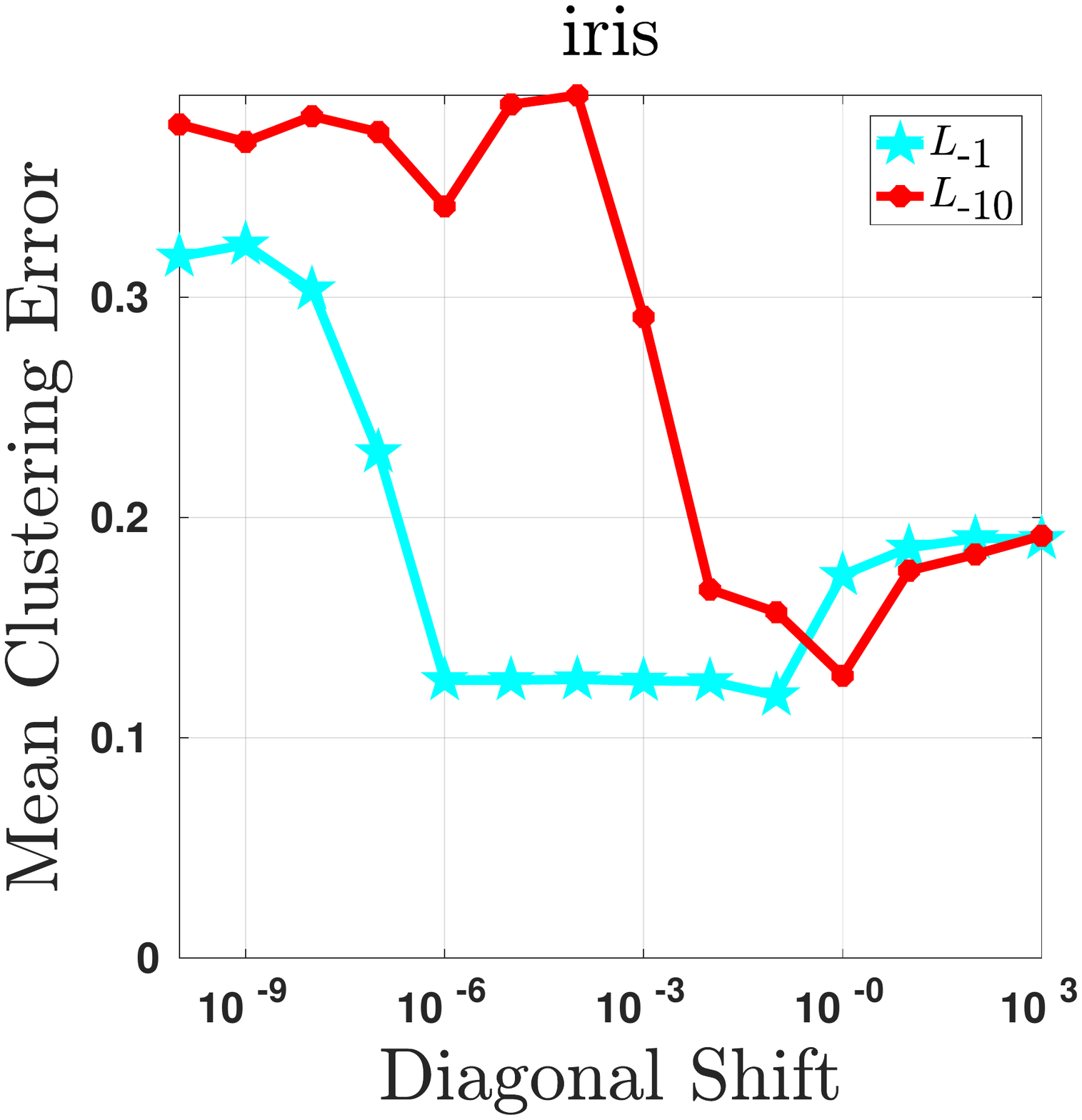}\hspace*{\fill}
\end{subfigure}%
\hfill
\begin{subfigure}[]{0.24\linewidth}
\includegraphics[width=1\linewidth, clip,trim=105 40 160 40]{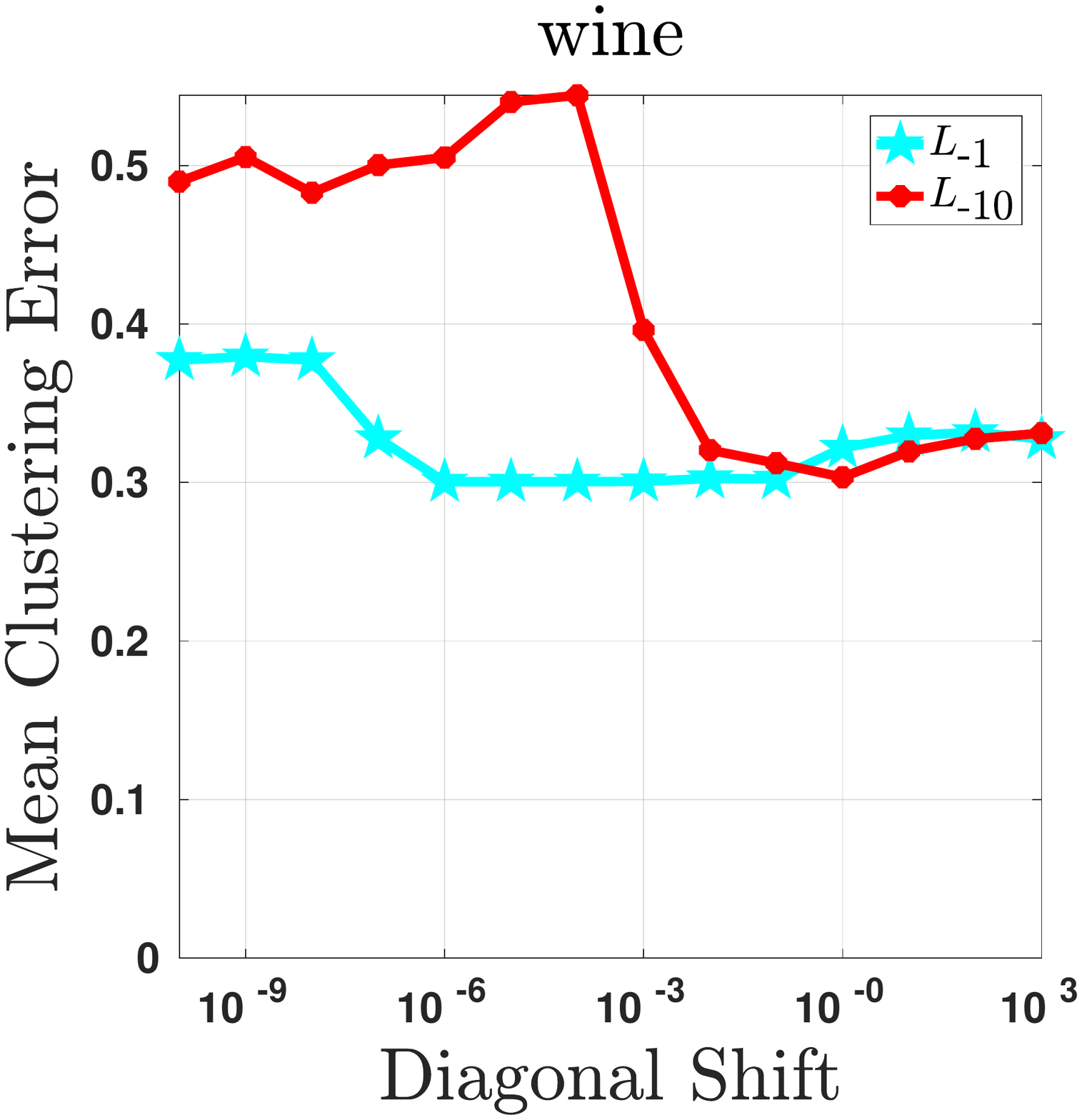}\hspace*{\fill}
\end{subfigure}%
\hfill
\begin{subfigure}[]{0.24\linewidth}
\includegraphics[width=1\linewidth, clip,trim=105 40 160 40]{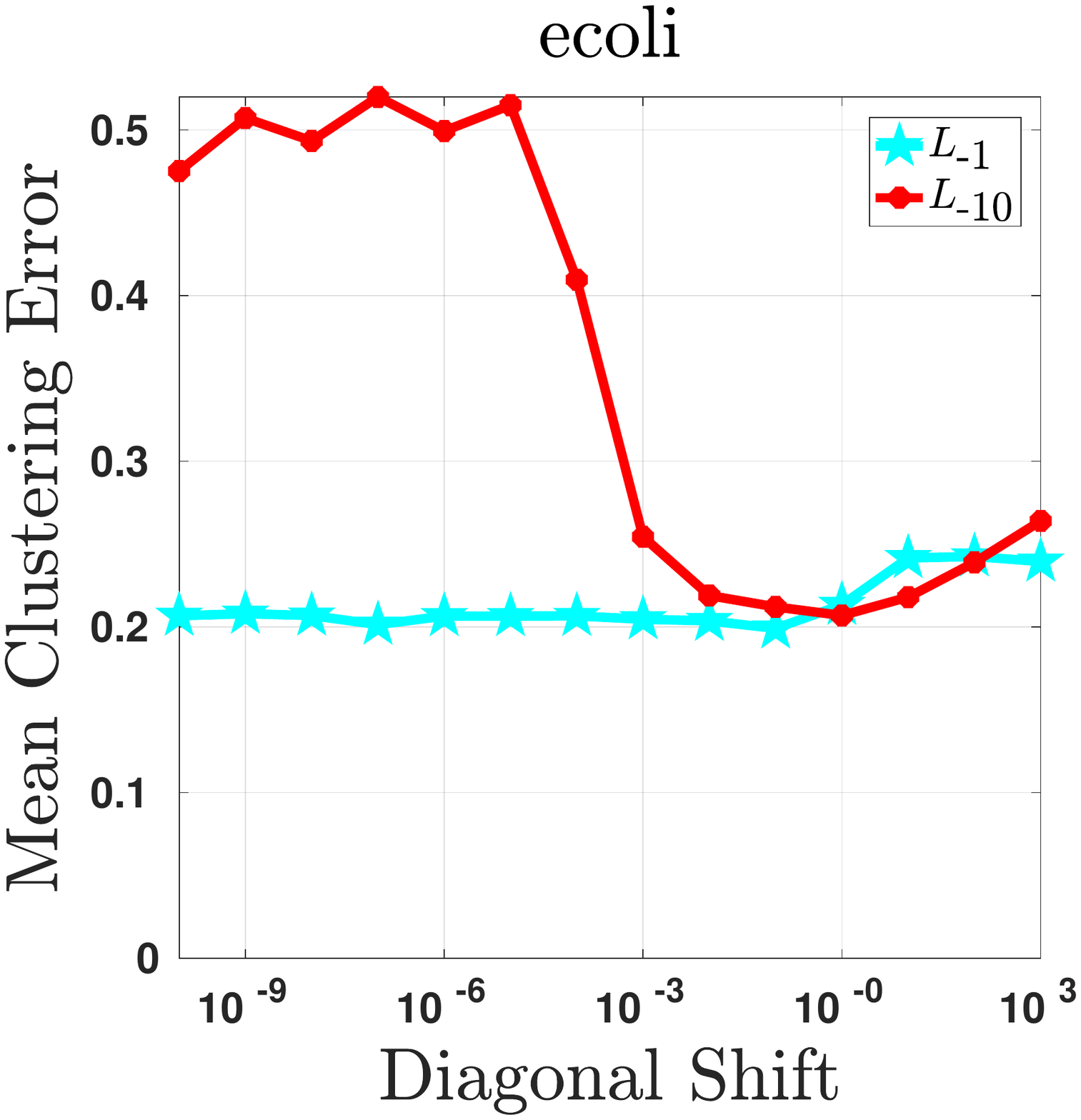}\hspace*{\fill}
\end{subfigure}%
\hfill
\begin{subfigure}[]{0.24\linewidth}
\includegraphics[width=1\linewidth, clip,trim=105 40 160 40]{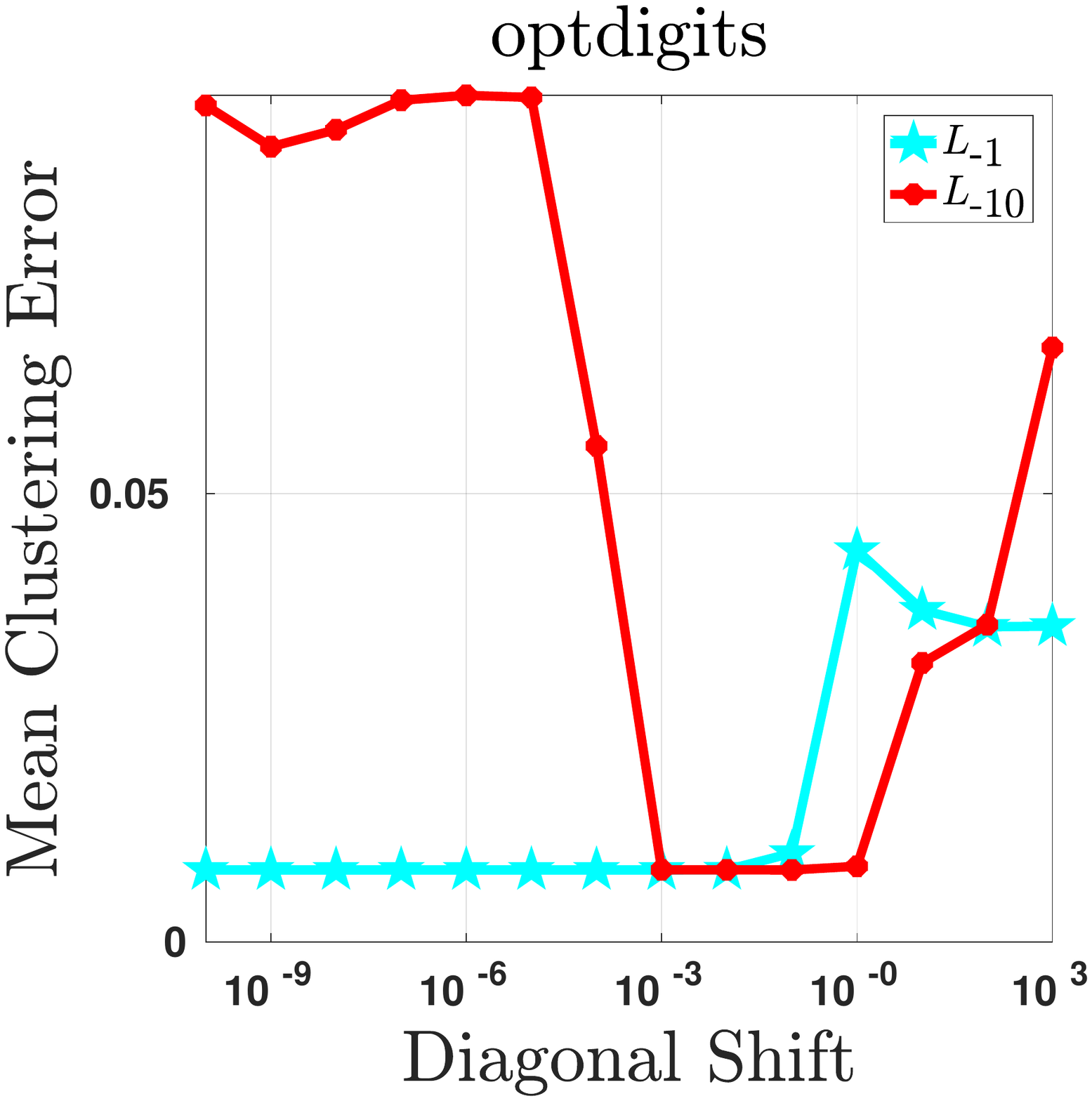}\hspace*{\fill}
\end{subfigure}%
\caption{
Mean clustering error of the power mean Laplacians $L_{-1}$ and $L_{-10}$ with diagonal shifts $\{10^{-10},10^{-9},\ldots,10^{3}\}$.
\vspace{-10pt}
}
\label{fig:diagonal_shift:real_datasets}
\end{figure*}
\section{On Diagonal Shift}\label{sec:OnDiagonalShift}
In this section we briefly discuss the effect of the diagonal shift on the power mean Laplacian $L_p$ for $p\leq 0$.
In the definition of power mean Laplacian in Eq.~\ref{eq:signedMatrixMeanLaplacian} it is mentioned that for negative powers $p\leq 0$ a diagonal shift is necessary. 
To evaluate the influence of the magnitude of the diagonal shift we perform numerical evaluations on two different kinds of signed graphs: on one side we consider signed graphs generated through the Signed Stochastic Block Model introduced in Section~\ref{sec:SBM}, and on the otherside we consider signed graphs built from standard machine learning benchmark datasets following Section~\ref{section:experiments}.

\textbf{Experiments with SSBM.}
We begin with experiments based on signed graphs following the SSBM. The corresponding results are presented in Fig.~\ref{fig:SBM:diagonal_shift}. 
We study the performance of the power mean Laplacians $L_{-1},L_{-2},L_{-5},L_{-10}$ with diagonal shifts $\{10^{-10},10^{-9},\ldots,10^{3}\}$.
Moreover, the case where either $G^+$ or $G^-$ are informative i.e. assortative and disassortative, respectively. In particular, in top (resp. bottom) row of Fig.~\ref{fig:SBM:diagonal_shift} the results correspond to the case where $G^+$ (resp.$G^-$) is fixed to be assortative (resp. disassortative). 

We can observe that the larger the value of $p$, the more robust the performance of the corresponding power mean Laplacian $L_p$ to the values of the diagonal shift. For instance, we can see for $L_{-1}$ (see Figs.~\ref{subfig:fig:SBM:diagonal_shift:fix_Wpos:L_{-1}} and~\ref{subfig:fig:SBM:diagonal_shift:fix_Wneg:L_{-1}}) that the smaller the diagonal shift, the better the smaller the clustering error, whereas for diagonal shifts $10^{0},10^{1},10^{2},10^{3}$ its performance clearly deteriorates.

On the other side we can see that the power mean Laplacian $L_{-10}$ presents a high sensibility towards the value of the diagonal shift (see Figs.~\ref{subfig:fig:SBM:diagonal_shift:fix_Wpos:L_{-10}} and~\ref{subfig:fig:SBM:diagonal_shift:fix_Wneg:L_{-10}}) where the diagonal shift should be neither too large nor too small, being the values $\{10^{-2},10^{-1},10^{0}\}$ the more suitable for this particular case. 
\makeatletter
\newcommand{\removelatexerror}{\let\@latex@error\@gobble}
\makeatother
\removelatexerror
\begin{figure*}[h]
    \makeatletter
    \def\@captype{algocf}
    \makeatother
    \begin{minipage}{0.45\textwidth}
    \vspace{-10pt}
          \begin{algorithm2e}[H]
          \DontPrintSemicolon
		\caption{\footnotesize{PM applied to $M_p(L_\sym^+,Q_\sym^-).$}}\label{alg:PM}
		{\footnotesize
		\KwIn{$\x_0$, $p<0$}
		\KwOut{Eigenpair  $(\lambda, \x)$ of $M_p(L_\sym^+,Q_\sym^-)$}
		\Repeat{tolerance reached}{
			   $\u^{(1)}_k$ $\gets$ $(L_\sym^+)^{p} \x_k$ \hfill(\text{Compute with Alg.~\ref{alg:kyrlov})}\;
		        $\u^{(2)}_k$ $\gets$ $(Q_\sym^-)^{p} \x_k$ \hfill(\text{Compute with Alg.~\ref{alg:kyrlov})}\;
			$\y_{k+1}$ $\gets$ ${\frac  {1}{2}} (\u^{(1)}_k + \u^{(2)}_k)$\;
			$\x_{k+1}$ $\gets$ $\y_{k+1} / \|\y_{k+1} \|_2$\;
			}
		$\lambda$ $\gets$ $(\x_{k+1}^T \x_k)^{1/p}$,\quad  $\x$ $\gets$ $\x_{k+1}$
		}
		\hspace{100pt}
		\end{algorithm2e}
		\hspace{100pt}
    \end{minipage}\hfill
    \begin{minipage}{0.45\textwidth}
         \begin{algorithm2e}[H]
          \DontPrintSemicolon
		\caption{\footnotesize{PKSM for the computation of $A^p\y$}}\label{alg:kyrlov}
		{\footnotesize
		\KwIn{$\u_0 = \y$, $V_0 = [\,\cdot\,  ], p<0$ }
		\KwOut{$\x=A^p\y$}
		$\v_0$ $\gets$ $\y/\vectornorm{\y}_2$\;
		\For{$s=0,1,2,\dots,n$}{
			$\tilde V_{s+1}$ $\gets$ $[V_s,\v_{s}]$\;
			$V_{s+1}$ $\gets$ Orthogonalize columns of $\tilde V_{s+1}$\;
			$H_{s+1}$ $\gets$ $V_{s+1}^T AV_{s+1}$\;
			$\x_{s+1}$ $\gets$ $V_{s+1} (H_{s+1})^p\e_1 \vectornorm{\y}_2$\;
			\textbf{if} {\it tolerance reached} \textbf{then} {\it break} \;
			$\v_{s+1}$ $\gets$  $A\v_{s}$	\;	
		}
		$\x$ $\gets$ $\x_{s+1}$\;
		}
		\end{algorithm2e}
    \end{minipage}
\end{figure*}
This observations are confirmation for the setting with sparse graphs, as it is observed in Fig.~\ref{fig:SBM:diagonal_shift_sparsity_5}.

\textbf{Experiments with benchmark datasets.}
We now perform a numerical evaluation on different real world networks, following the procedure of Section~\ref{section:experiments}. 
Moreover, we perform this analysis for $p\in\{-1,-10\}$ and diagonal shifts $\{10^{-10},10^{-9},\ldots,10^{3}\}$. The corresponding results are presented in Fig.~\ref{fig:diagonal_shift:real_datasets},
where we present the average clustering error taken across all values of $k^+$ and $k^-$ (for more details on the construction of the corresponding signed graphs please see Section~\ref{section:experiments}).

We can observe a general behaviour for $L_{-10}$ across datasets, where for a small diagonal shift, the clustering error is high, and decreases for larger shifts, generally reaching its mininum clustering error around diagonal shifts equal to one, to later present a slight increase in clustering error. This confirms the proposed approach to set the diagonal shift to $log_{10}(1+\abs{p})+10^{-6}$ which for the case of $p=-10$ is $\approx 1.04$.
For the case of the harmonic mean Laplacian $L_{-1}$ we can observe that it presents a more stable behaviour that slightly resembles the one of $L_{-10}$. In particular, we can observe that there is a region from $10^{-6}$ to $10^{-1}$ where the smallest average clustering error is achieved. Hence, $L_{-1}$ is relatively more robust to different diagonal shifts. This confirms the observations made based on signed graphs following the SBM.

\textbf{On condition number.} We now consider a condition number approach to study the effect of the diagonal shift. Recall that the eigenvalue computation scheme considered in this paper is described in Section~\ref{section:computation} with the corresponding Algorithm~\ref{alg:PM}. We can observe that the main computation steps are related to the matrix vector operations $(L_\sym^+)^{p} \x_k$ and $(Q_\sym^-)^{p} \x_k$ with $p<0$. We highlight that this framework considers only the case where $p<0$.

Observe that in the operation $(L_\sym^+)^{p} \x_k$, with $p<0$, the condition number plays a influential place due to the inverse operation implied by the negativity of $p$. Note that the eigenvalues of the normalized Laplacians $L_\sym^+$ are contained in the interval $[0,2]$, hence, it is a singular matrix. As mentioned in definition of the power mean Laplacian in Eq.~\ref{eq:signedMatrixMeanLaplacian}, a suitable diagonal shift is necessary for the case where $p<0$. Hence, the eigenvalues of the shifted Laplacian $L_\sym^+ + \mu I$ are contained in the interval $[\mu,2+\mu]$, therefore, condition number is equal to $\frac{\lambda_{\textrm{max}}(L_\sym^+)}{\lambda_{\textrm{min}}(L_\sym^+)}$ which in this case reduces to $\frac{2+\mu}{\mu}$. Thus, it follows that the condition number of $(L_\sym^+ + \mu I)^{p}$ is 
$g(\mu,p):=\Big( \frac{2+\mu}{\mu} \Big)^{\abs{p}}$. 
It is easy to see that $\frac{2+\mu}{\mu} > 1$ and hence $g(\mu,p)$ grows with larger values of $\abs{p}$, hence the condition number is larger for smaller values of the power mean Laplacian. Moreover, the growth rate of $g(\mu,p)$ is larger for smaller values of $\mu$, suggesting that the shift $\mu$ should be set as large as possible. Yet, very large values of $\mu$ overcome the information contained in the Laplacian matrix. Hence, the diagonal shift should not be too small (due to numerical stability) and should not be too large (due to information ofuscation). This confirms the behaviour presented in Figs.~\ref{fig:SBM:diagonal_shift},~\ref{fig:SBM:diagonal_shift_sparsity_5} and ~\ref{fig:diagonal_shift:real_datasets}.

%


\section{Computation Of the Smallest Eigenvalues and Eigenvectors of $L_p$}\label{section:computation}
For the computation of the eigenvectors corresponding to the smallest eigenvalues of the signed power mean Laplacian $L_p$ with $p<0$, we take the Polynomial Krylov Subspace Method for multilayer graphs presented in~\cite{Mercado:2018:powerMean} and apply it to our case. 
The corresponding adaption 
is presented in Algorithms~\ref{alg:PM} and~\ref{alg:kyrlov}. 

We briefly explain Algorithm~\ref{alg:PM}. Let $\lambda_1\leq\cdots\leq\lambda_n$ be the eigenvalues of $L_p=M_p(L_\sym^+, Q_\sym^+)$. Let $p<0$. Then the eigenvalues of $L_p^p$ are $\lambda_1^p\geq\cdots\geq\lambda_n^p$, that is, the eigenvectors corresponding to the smallest eigenvalues of $L_p$ correspond to the largest eigenvalues of $L_p^p$. 
Thus, in order to obtain the eigenvectors corresponding to the smallest eigenvalues of $L_p$ we have to apply the power method to $L_p^p$. This is depicted in Algorithm~\ref{alg:PM} . 
However, the main computational task now is the matrix-vector multiplications $(L_\sym^+)^p\x$ and $(Q_\sym^-)^p\x$. This is approximated through the Polynomial Krylov Subspace Method (PKSM). This approximation method allows to obtain $(L_\sym^+)^p\x$ and $(Q_\sym^-)^p\x$ without ever computing the matrices $(L_\sym^+)^p$ and $(Q_\sym^-)^p$, respectively. This is depicted in Algorithm~\ref{alg:kyrlov}. 

The main idea of PKSM $s$-step is to project a given matrix $A$ onto the space $\mathbb{K}^s(A,\y)=\{ \y, A\y,\ldots,A^{s-1}\y\}$ and solve the corresponding problem there. The projection on to $\mathbb{K}^s(A,\y)$ is done by means of the Lanczos process, producing a sequence of matrices $V_s$ with orthogonal columns where the first column of $V_s$ is $\y/\norm{\y}$ and $\text{range}(V_s)=\mathbb{K}^s(A,\y)$. Moreover, at each step we have $AV_s=V_s H_s + \v_{s+1}\e^T_s$ where $H_s$ is $s \times s$ symmetric tridiagonal, and $\e_i$ is the $i$-th canonical vector. 
The matrix product vector ${\x=A^p\y}$ is the approximated by $\x_s=V_s(H_s)^p\e_1\norm{\y}\approx A^p\y$.

 \textbf{Time Execution Analysis}. We present a time execution analysis in Fig.~\ref{fig:time}. We depict the mean time execution out of 10 runs of the power mean Laplacian $L_p$ with $p\in\{-1,-2,-5-10\}$. In particular $L_{-1}(\text{ours})$, $L_{-2}(\text{ours})$, $L_{-5}(\text{ours})$ and $L_{-10}(\text{ours})$ depict the time execution using our proposed method based on Algorithm~\ref{alg:PM} together with the polynomial Krylov subspace method described in Algorithm~\ref{alg:kyrlov}. For comparison we consider $L_{-1}(\text{eigs})$ which is computed with the function \texttt{eigs} from MATLAB instead of using Algorithm~\ref{alg:kyrlov}. All experiments are performed using one thread. For evaluation random signed graphs following the SSBM are generated, with parameters $\pp=\qm=0.05$ and $\ppm=\qp=0.025$ with two equal sized clusters, and graph size $\abs{V}\in\{10000,20000,30000,40000\}$.
 We can observe that our computational matrix-free approach based on the polynomial Krylov subspace method systematically outperforms the natural approach based on the explicity computation of power matrices per layer.
\begin{figure}[t]
 \centering\hfill
 \includegraphics[width=0.8\textwidth,trim=160 40 50 60]{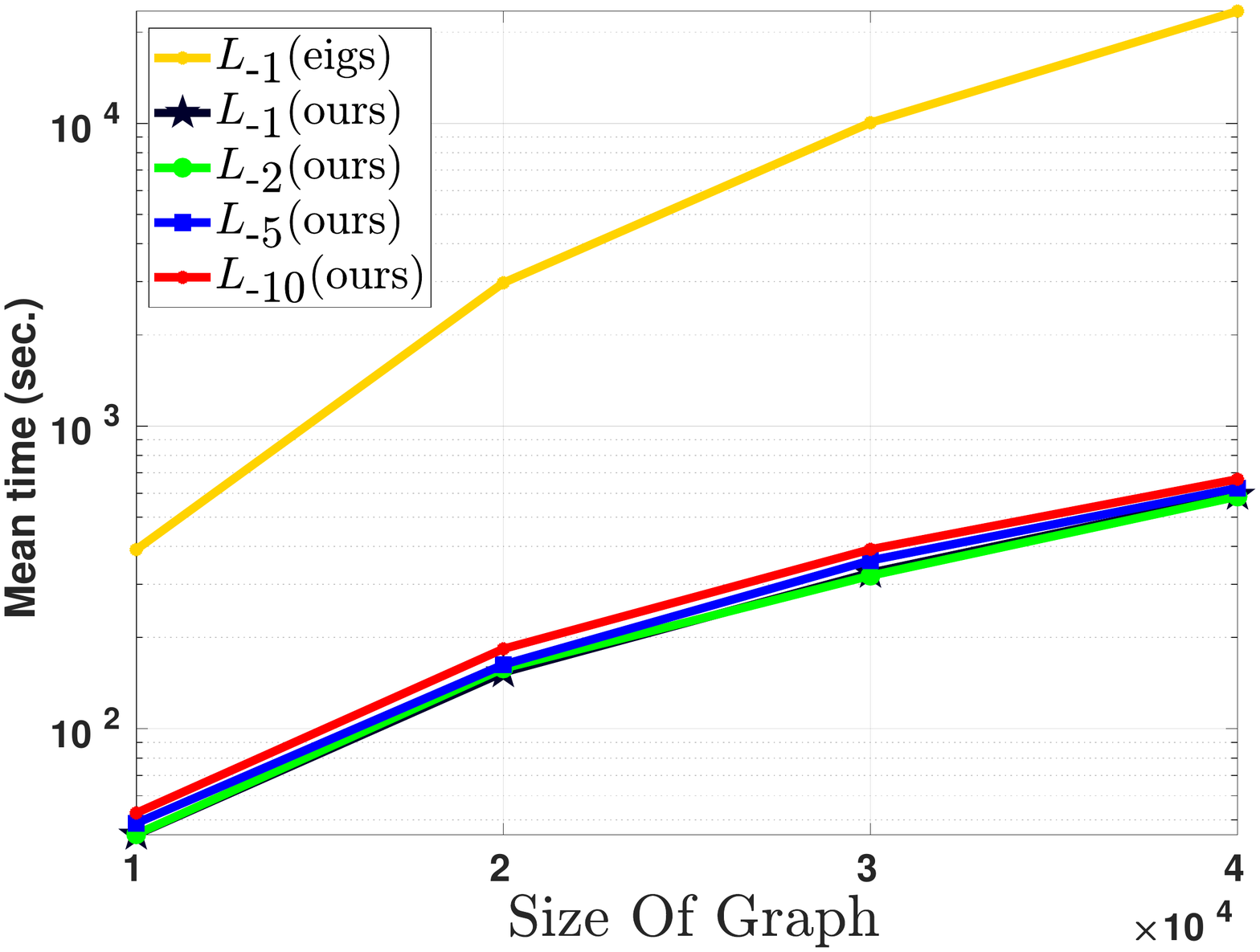}
 \hfill
 \caption{Time execution analysis }
 \label{fig:time}
\end{figure}
%
%
%

 \begin{figure*}[ht]
 \centering
 \includegraphics[width=0.5\linewidth, clip,trim=0 180 0 390]{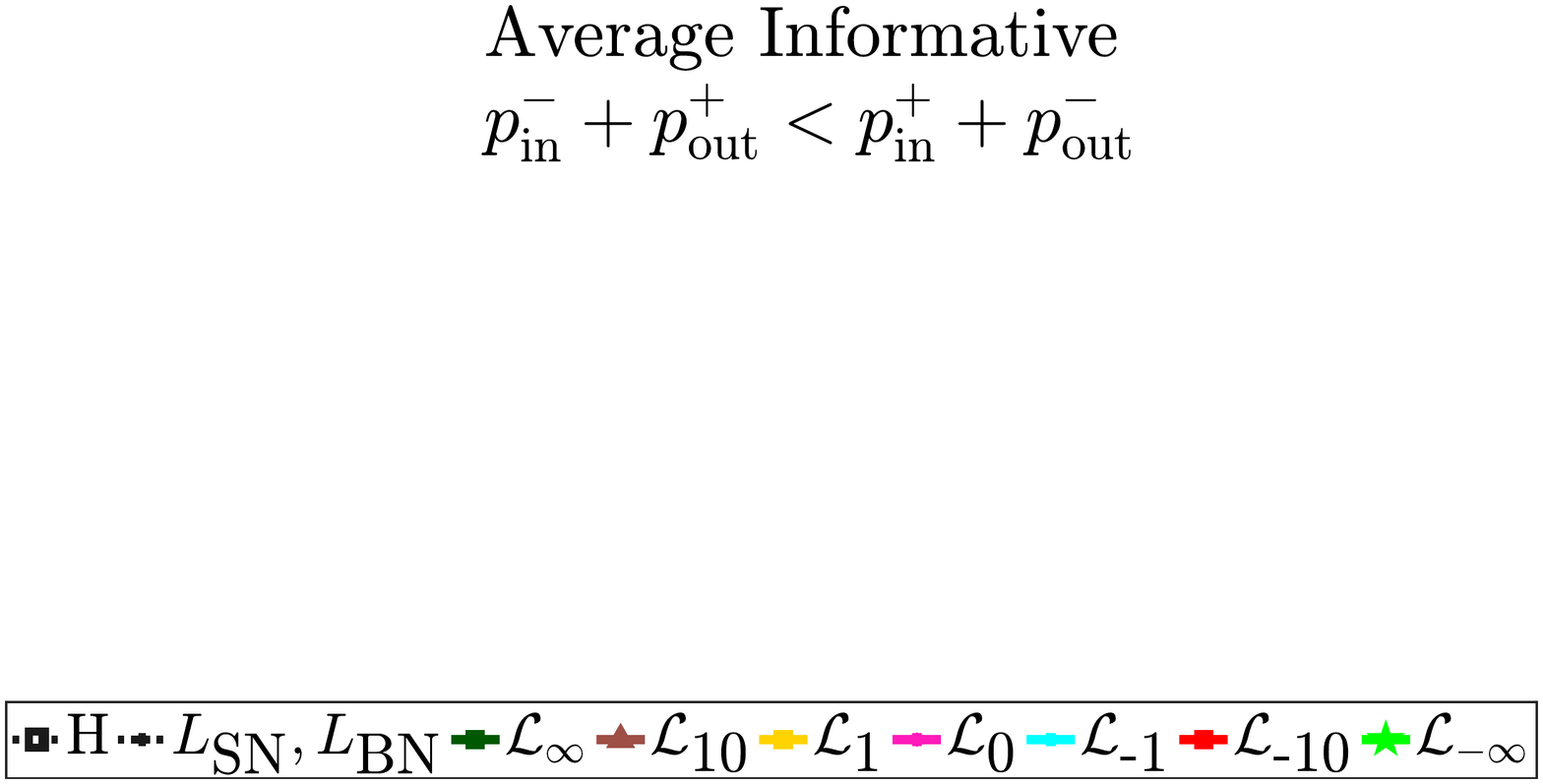}\\
 \vspace{5pt}
 \hspace{10pt}
 \begin{subfigure}[b]{0.32\textwidth}
 \includegraphics[width=1\textwidth,trim=100 60 20 60]{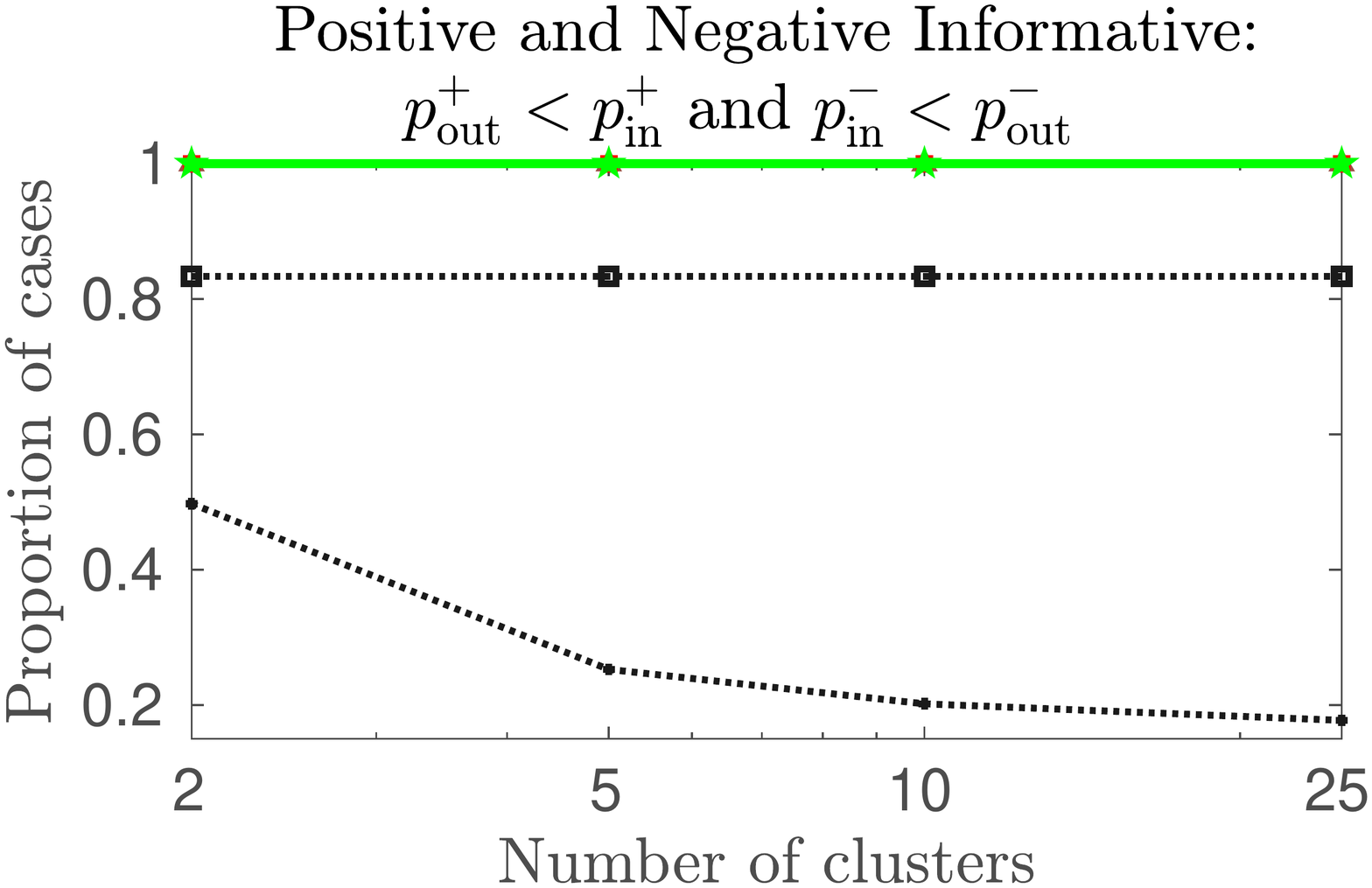}\hfill
 \vspace{-10pt}
 \caption{}
  \vspace{-10pt}
 \label{fig:numEigsVsLabeledNodes:AND}
 \end{subfigure}
 \hfill
  \begin{subfigure}[b]{0.32\textwidth}
 \includegraphics[width=1\textwidth,trim=100 60 20 60]{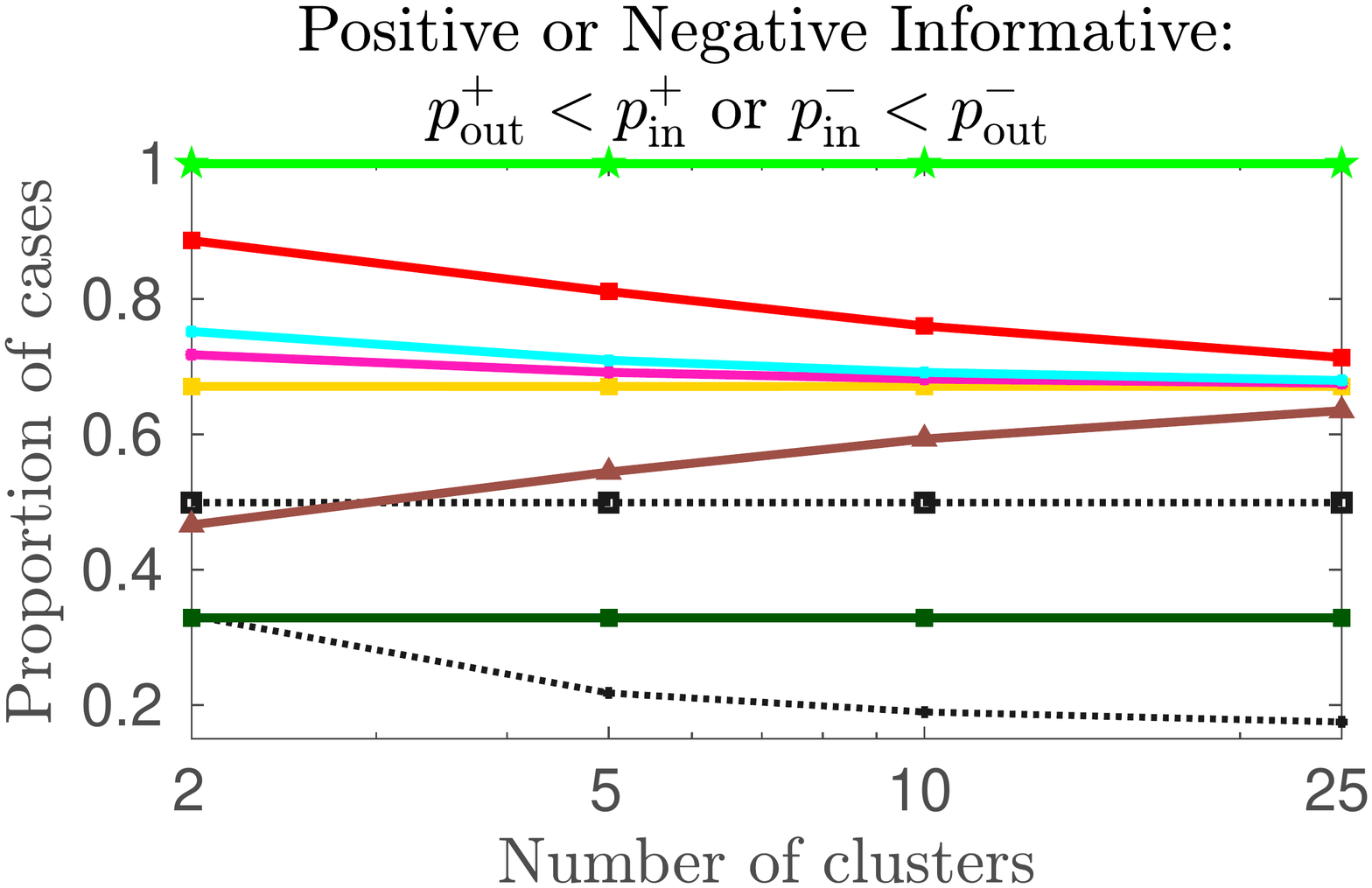}\hfill
  \vspace{-10pt}
 \caption{}
  \vspace{-10pt}
 \label{fig:numEigsVsLabeledNodes:OR}
 \end{subfigure}
 \hfill
 \begin{subfigure}[b]{0.32\textwidth}
 \includegraphics[width=1\textwidth,trim=100 60 20 60]{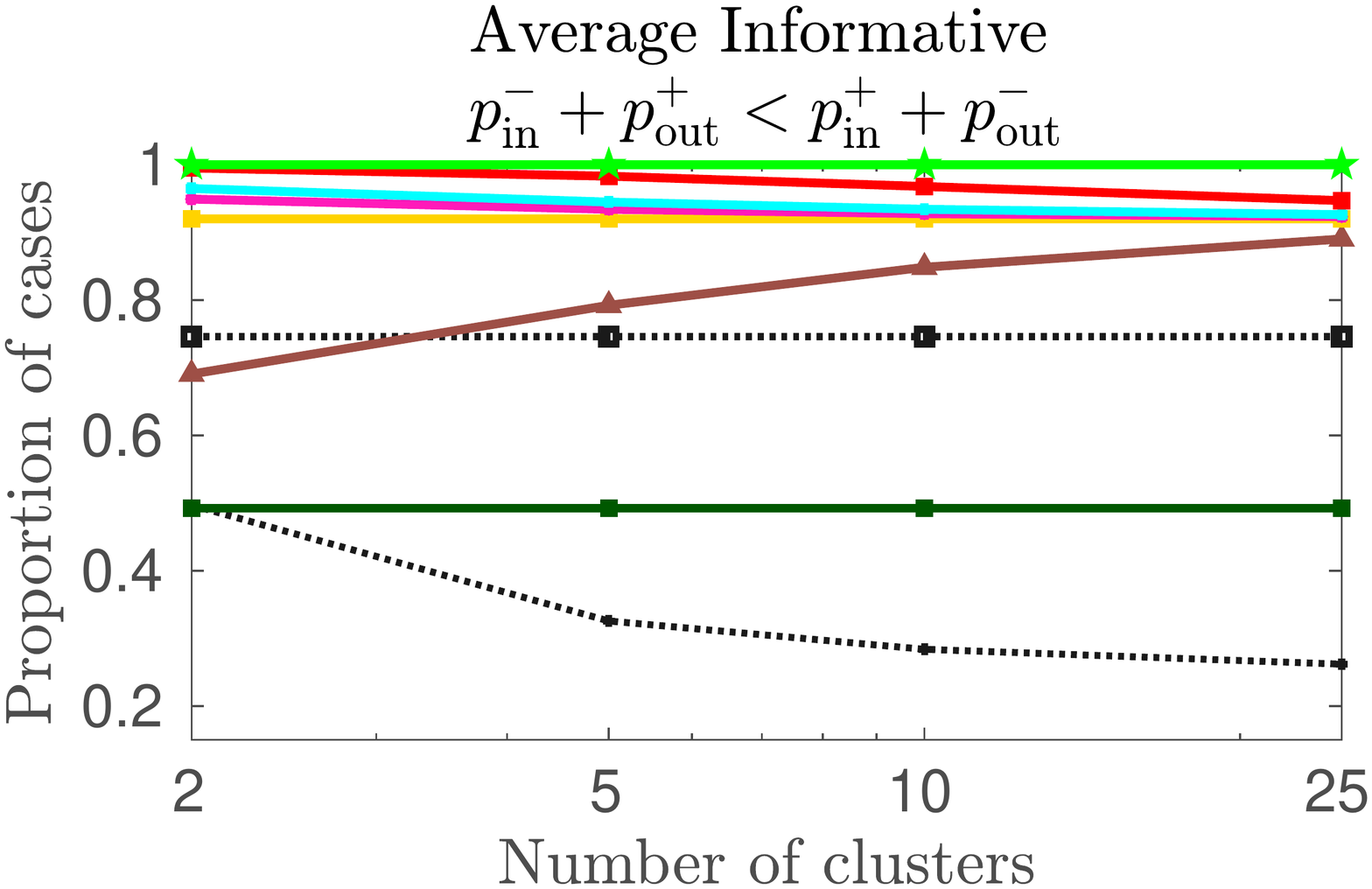}\hfill
 \vspace{-10pt}
  \caption{}
   \vspace{-10pt}
 \label{fig:numEigsVsLabeledNodes:AVERAGE}
 \end{subfigure}
\hspace{-15pt}
\caption{Proportion of cases where conditions of Theorems~\ref{theorem:mp_in_expectation},~\ref{theorem:signedLaplacians} and~\ref{theorem:bethe_hessian_V1}
hold under different settings.}
 \label{fig:numEigsVsLabeledNodes}
\end{figure*}
%
%
\section{Proportion of cases where conditions hold}\label{section:proportionOfCasesWhereConditionsHold}
%
%

In order to understand how often the conditions from 
Theorems~\ref{theorem:mp_in_expectation},~\ref{theorem:signedLaplacians} and~\ref{theorem:bethe_hessian_V1}, 
we perform a series of experiments. 

For the Bethe Hessian we take the limit result when $\abs{V}\rightarrow\infty$ as the corresponding conditions do not have as a parameter the size of graph.
The corresponding results are depicted in Fig.~\ref{fig:numEigsVsLabeledNodes}. 
We discretize each of the parameters $\pp,\qp,\pp,\qm$ in $[0,1]$ in one hundred steps and count how many times the conditions of  Theorems~\ref{theorem:mp_in_expectation},~\ref{theorem:signedLaplacians} and ~\ref{theorem:bethe_hessian_V1} hold under different settings.
In Fig.~\ref{fig:numEigsVsLabeledNodes:AND} we analyze the case when both $G^+$ and $G^-$ are informative ($\pp>\qp$ and $\ppm<\qm$). We can see that the conditions for the signed power mean Laplacian $\mathcal L_p$ are always fulfilled, whereas those of $L_{SN}$ and $L_{BN}$ hold in a significantly smaller fraction of cases, whereas the case of the Bethe Hessian $H$ are closer to the power mean Laplacians than to $L_{SN}$ and $L_{BN}$.
 In Fig.~\ref{fig:numEigsVsLabeledNodes:OR} we analyze the case when $G^+$ or $G^-$ is informative ($\pp>\qp$ or $\ppm<\qm$). Now we see an ordering between different $\mathcal L_p$ where the smaller the value of $p$ the larger the proportion of cases leading to recovery of the clusters in expectation. In particular, $\mathcal L_{-\infty}$ always fulfills the conditions, whereas $\mathcal L_{\infty}$ realizes the smallest proportion of cases where its conditions hold comparable to the one of $L_{SN}$ and $L_{BN}$, while the Bethe Hessian holds for $50\%$ of the cases. 
In Fig.~\ref{fig:numEigsVsLabeledNodes:AVERAGE} we treat the case where on average $G^+$ and $G^-$ are informative ($\ppm+\qp<\pp+\qm$). We observe the same ordering
as in the previous case and again all signed power mean Laplacians outperform $L_{SN}$ and $L_{BN}$,while the Bethe Hessian holds for around $75\%$ of the cases. 
In Figs~\ref{fig:numEigsVsLabeledNodes:OR} and~\ref{fig:numEigsVsLabeledNodes:AVERAGE} we observe that the difference between the signed power mean Laplacians with finite $p$ gets smaller as the number of clusters $k$ increases. The reason is that the eigenvalues of $\mathcal{L}_\sym$ and $\mathcal{Q}_\sym$ are of the form $1\pm\rho(k)$, where $\lim_{k\to\infty}\rho(k) = 0$. Thus, as $k$ increases the eigenvalues become equal and thus the gap vanishes.
\section{On Wikipedia Experiments}\label{sec:wikipedia-experiments-appendix}
We provide a more detailed inspection of the results from Sec.~\ref{section:Wikipedia-experiments}. In Fig.~\ref{fig:wikipedia-supplementary} we present the sorted adjacency  matrices according to the identified clusters. In the first two clumns, (left to right), we can see that there is a large cluster (upper-left corner of each adjacency matrix) that does not resemble any structure, whereas the remaining part of the graph does present certain clustering structure. The following third and fourth columns zoom in into this region, which corresponds to results presented in Fig.~\ref{fig:wikipedia}.
\begin{figure*}[!htb]
 \centering
 \begin{subfigure}[b]{0.23\textwidth}
 \includegraphics[angle=-90,width=1\textwidth,trim=160 40 10 60]{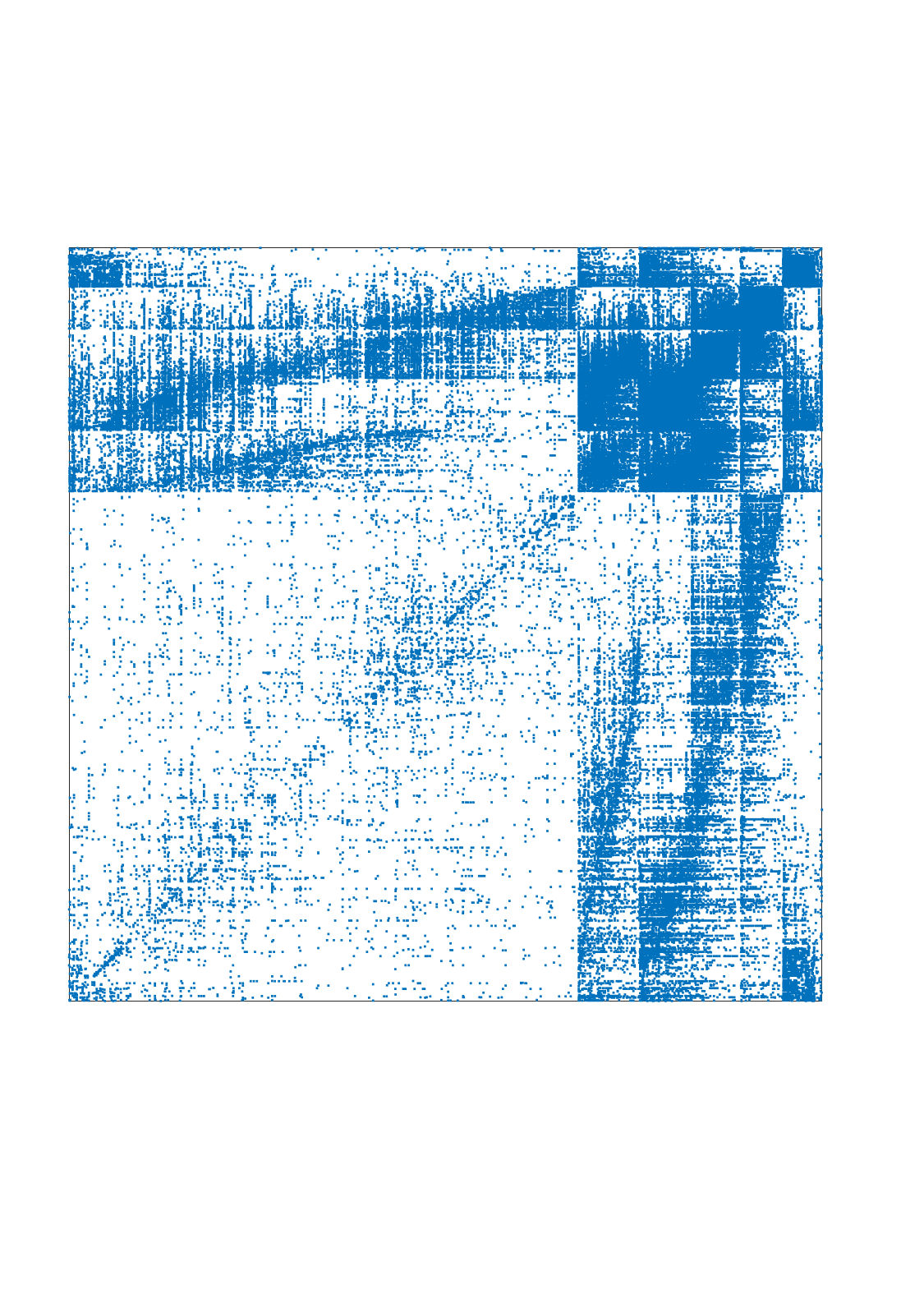}
 \vspace{-10pt}\caption{$L_{-10}$ : $W^+$}
 \end{subfigure}
 \hfill
 \begin{subfigure}[b]{0.23\textwidth}
 \includegraphics[angle=-90,width=1\textwidth,trim=160 40 10 60]{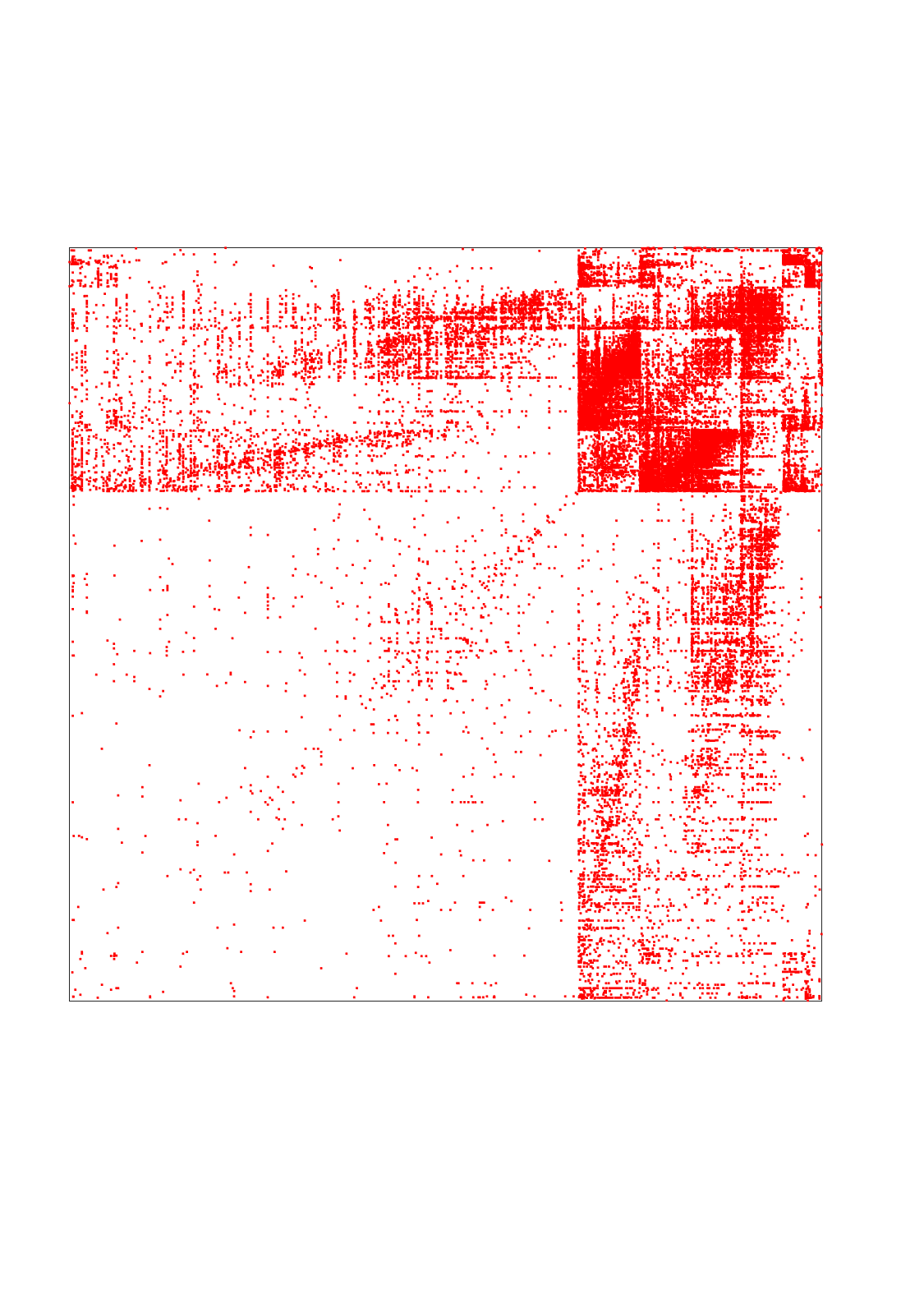}
 \vspace{-10pt}\caption{$L_{-10}$ : $W^-$}
 \end{subfigure}
 \hfill
 \begin{subfigure}[b]{0.23\textwidth}
 \includegraphics[angle=-90,width=1\textwidth,trim=160 40 10 60]{p_minus_10_blue_Zoom.png}
 \vspace{-10pt}\caption{$L_{-10}$ : $W^+$ (Zoom)}
 \label{fig:wikipedia:blue:Minus2}
 \end{subfigure}
 \hfill
 \begin{subfigure}[b]{0.23\textwidth}
 \includegraphics[angle=-90,width=1\textwidth,trim=160 40 10 60]{p_minus_10_red_Zoom.png}
 \vspace{-10pt}\caption{$L_{-10}$ : $W^-$ (Zoom)}
 \label{fig:wikipedia:blue:Minus2}
 \end{subfigure}
 \hfill
\\
\vspace{25pt}
 \begin{subfigure}[b]{0.23\textwidth}
 \includegraphics[angle=-90,width=1\textwidth,trim=160 40 10 60]{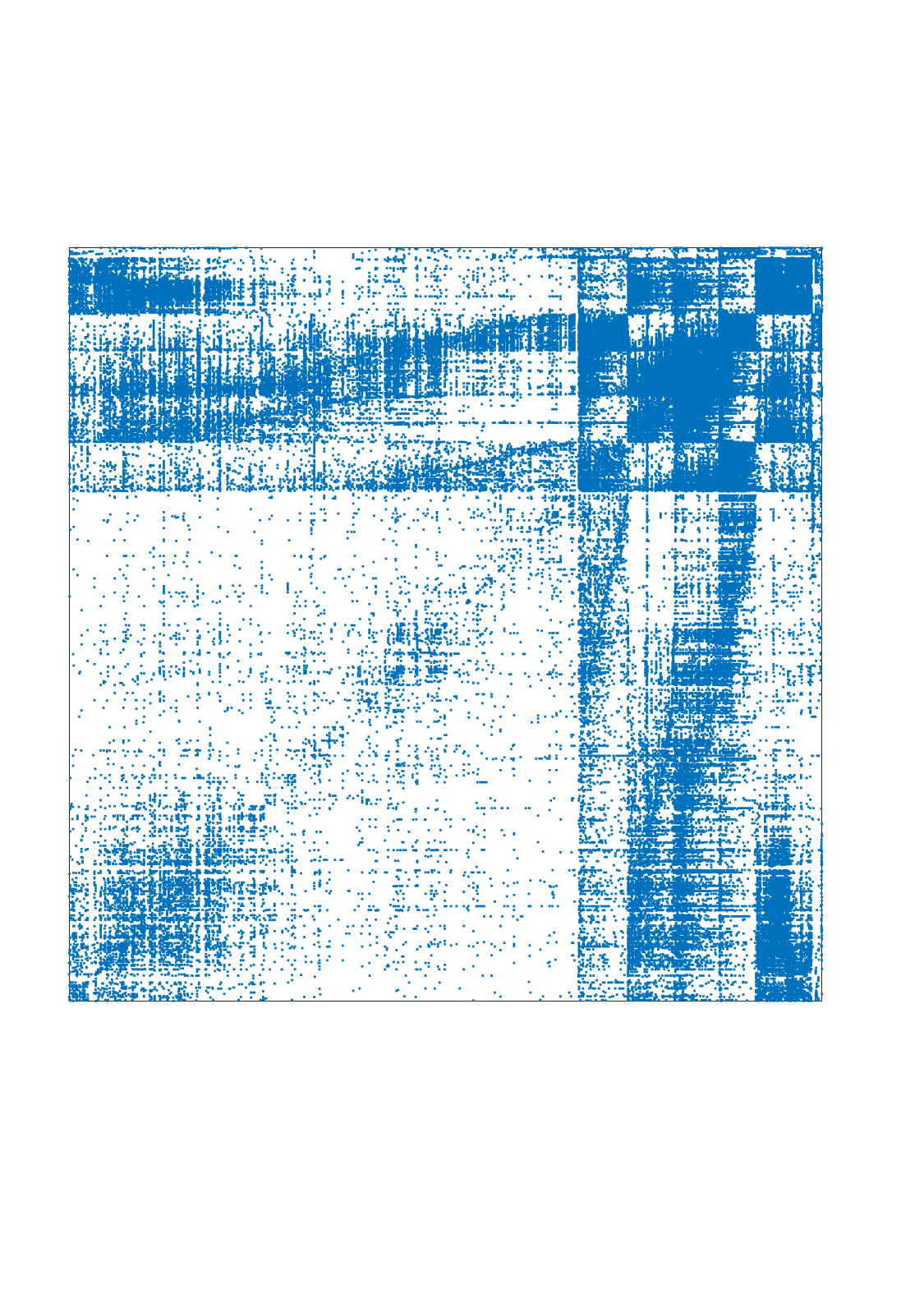}
 \vspace{-10pt}\caption{$L_{-5}$ : $W^+$}
 \end{subfigure}
 \hfill
 \begin{subfigure}[b]{0.23\textwidth}
 \includegraphics[angle=-90,width=1\textwidth,trim=160 40 10 60]{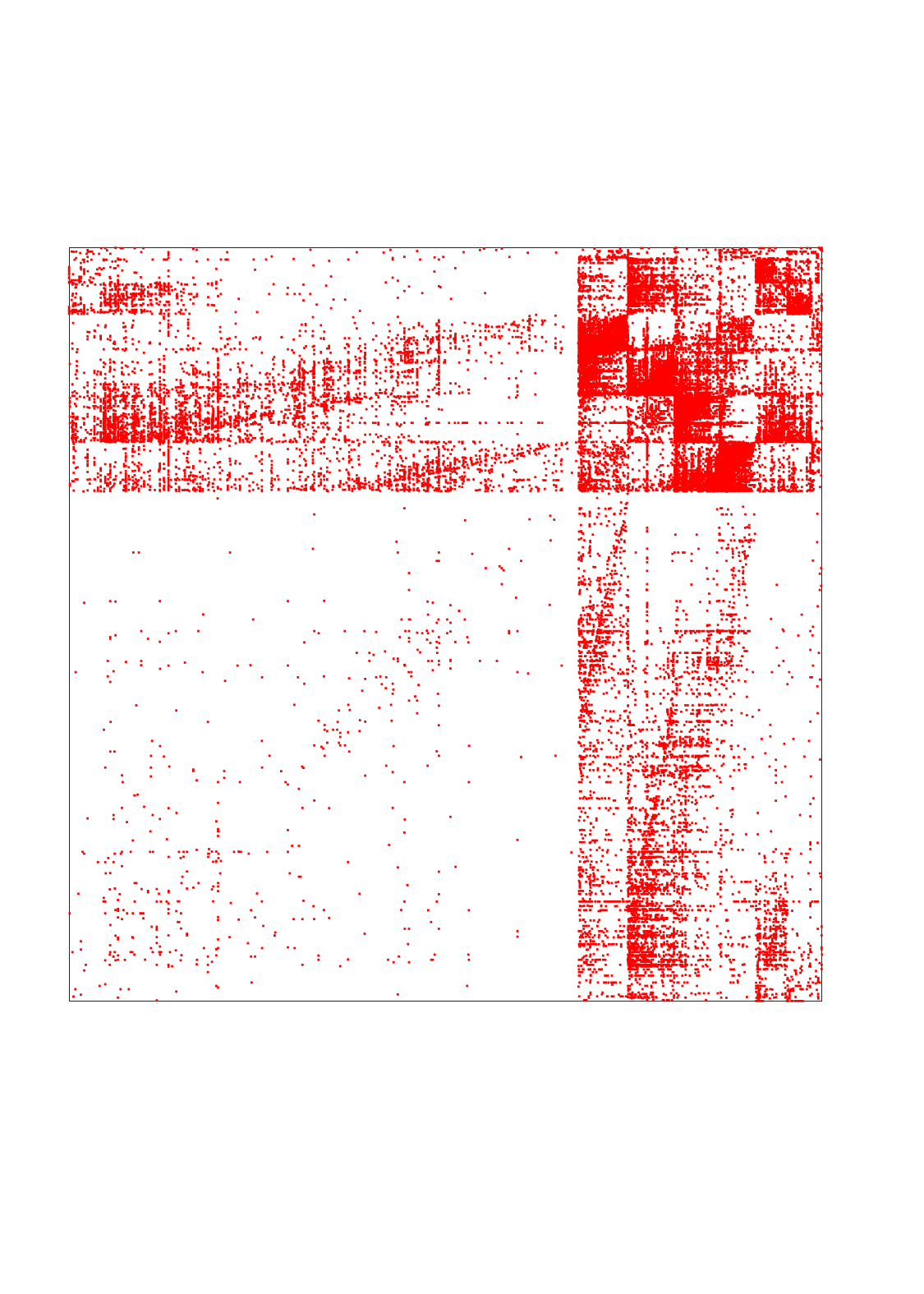}
 \vspace{-10pt}\caption{$L_{-5}$ : $W^-$}
 \end{subfigure}
 \hfill
 \begin{subfigure}[b]{0.23\textwidth}
 \includegraphics[angle=-90,width=1\textwidth,trim=160 40 10 60]{p_minus_5_blue_Zoom.png}
 \vspace{-10pt}\caption{$L_{-5}$ : $W^+$ (Zoom)}
 \label{fig:wikipedia:blue:Minus2}
 \end{subfigure}
 \hfill
 \begin{subfigure}[b]{0.23\textwidth}
 \includegraphics[angle=-90,width=1\textwidth,trim=160 40 10 60]{p_minus_5_red_Zoom.png}
 \vspace{-10pt}\caption{$L_{-5}$ : $W^-$ (Zoom)}
 \label{fig:wikipedia:blue:Minus2}
 \end{subfigure}
 \hfill
\\
\vspace{25pt}
 \begin{subfigure}[b]{0.23\textwidth}
 \includegraphics[angle=-90,width=1\textwidth,trim=160 40 10 60]{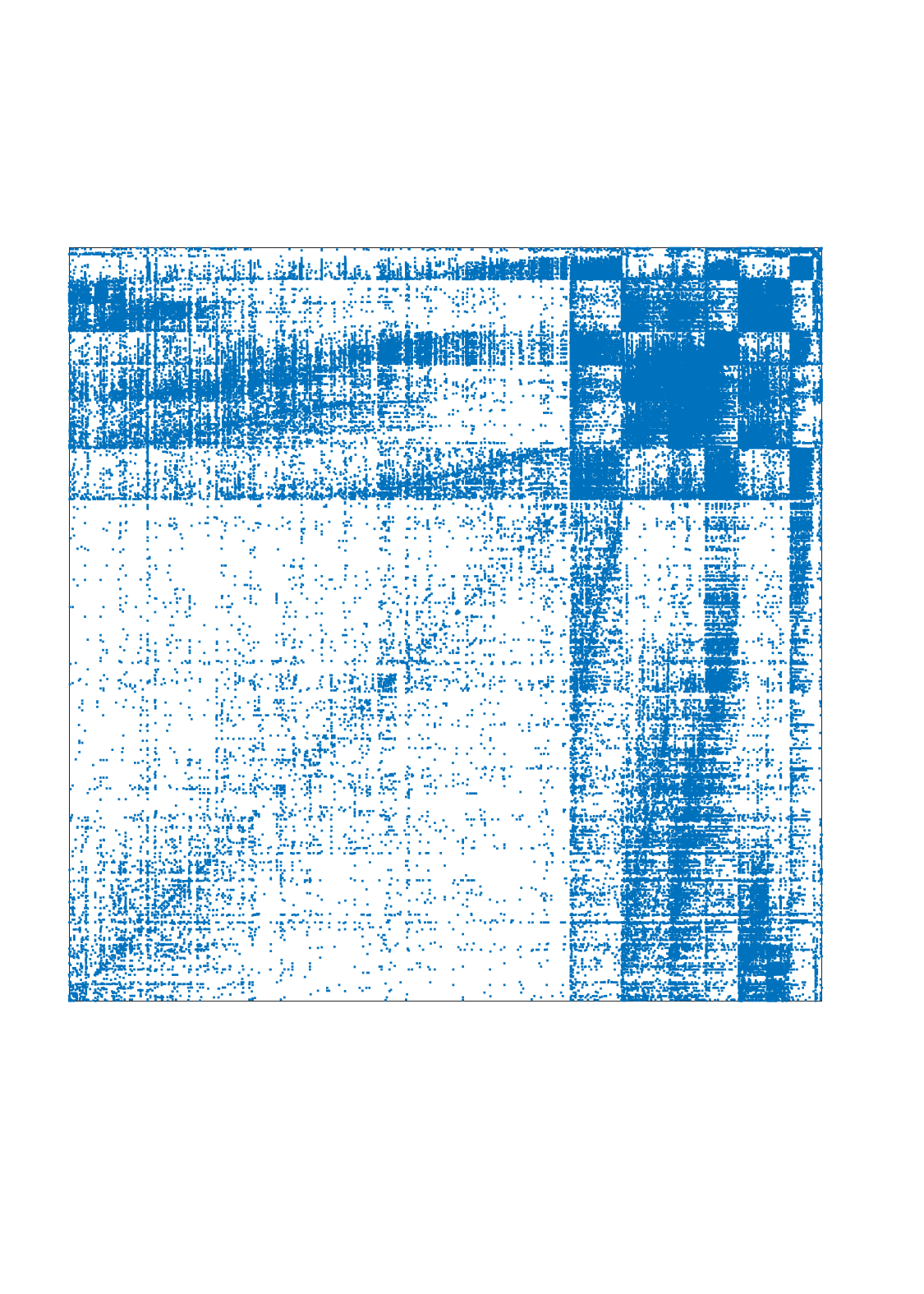}
 \vspace{-10pt}\caption{$L_{-2}$ : $W^+$}
 \end{subfigure}
 \hfill
 \begin{subfigure}[b]{0.23\textwidth}
 \includegraphics[angle=-90,width=1\textwidth,trim=160 40 10 60]{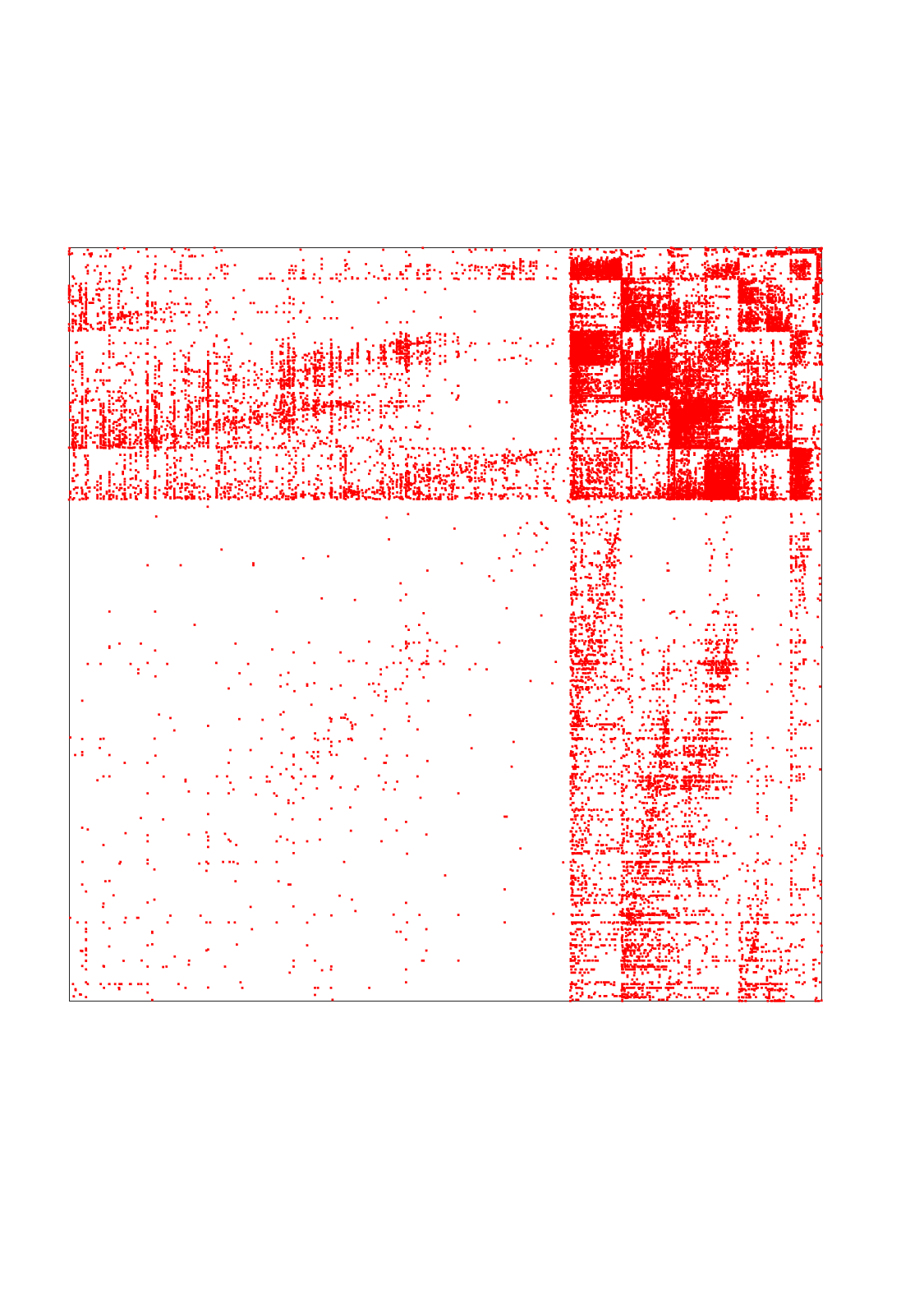}
 \vspace{-10pt}\caption{$L_{-2}$ : $W^-$}
 \end{subfigure}
 \hfill
 \begin{subfigure}[b]{0.23\textwidth}
 \includegraphics[angle=-90,width=1\textwidth,trim=160 40 10 60]{p_minus_2_blue_Zoom.png}
 \vspace{-10pt}\caption{$L_{-2}$ : $W^+$ (Zoom)}
 \label{fig:wikipedia:blue:Minus2}
 \end{subfigure}
 \hfill
 \begin{subfigure}[b]{0.23\textwidth}
 \includegraphics[angle=-90,width=1\textwidth,trim=160 40 10 60]{p_minus_2_red_Zoom.png}
 \vspace{-10pt}\caption{$L_{-2}$ : $W^-$ (Zoom)}
 \label{fig:wikipedia:blue:Minus2}
 \end{subfigure}
 \hfill
\\
\vspace{25pt}
 \begin{subfigure}[b]{0.23\textwidth}
 \includegraphics[angle=-90,width=1\textwidth,trim=160 40 10 60]{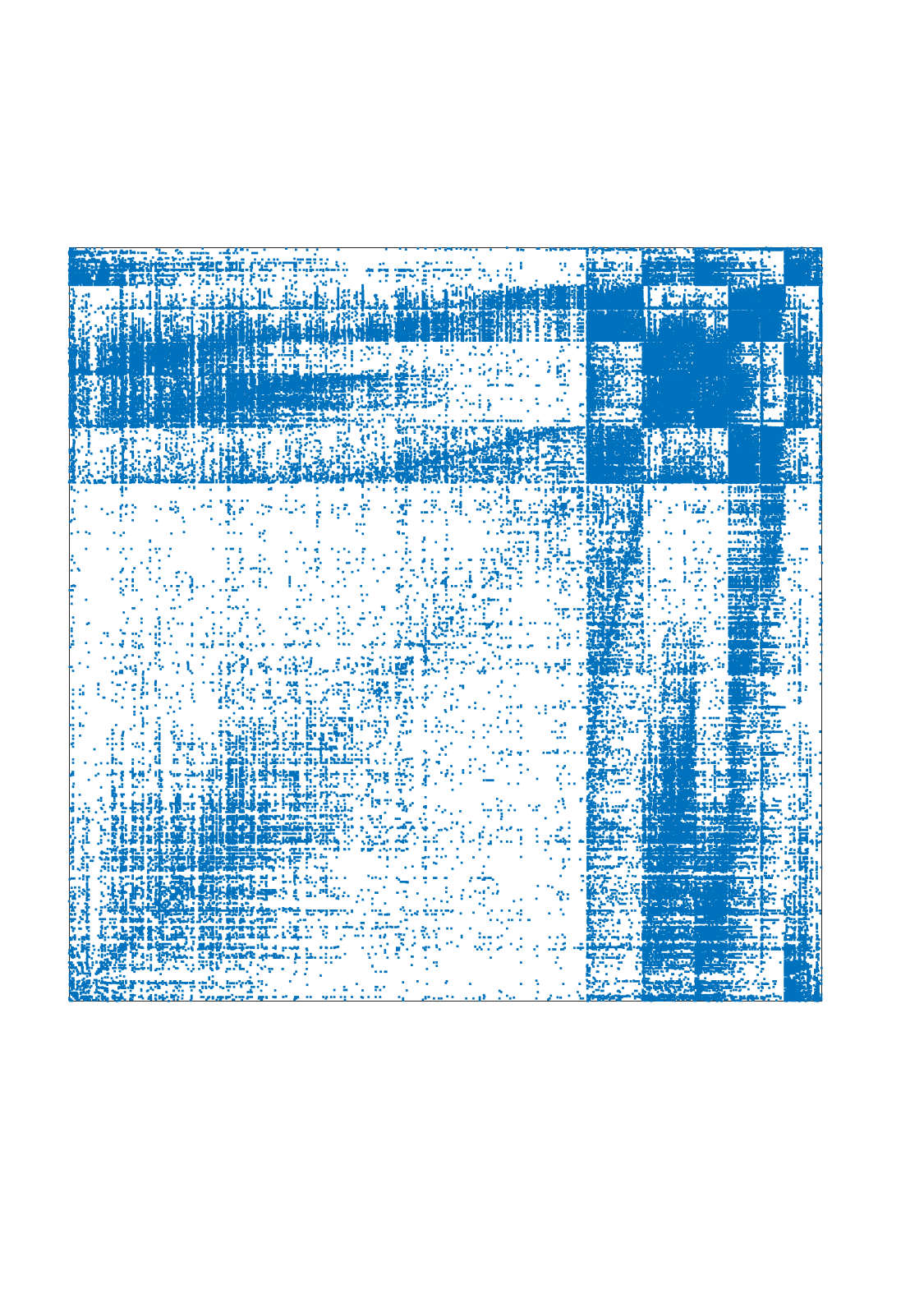}
 \vspace{-10pt}\caption{$L_{-1}$ : $W^+$}
 \end{subfigure}
 \hfill
 \begin{subfigure}[b]{0.23\textwidth}
 \includegraphics[angle=-90,width=1\textwidth,trim=160 40 10 60]{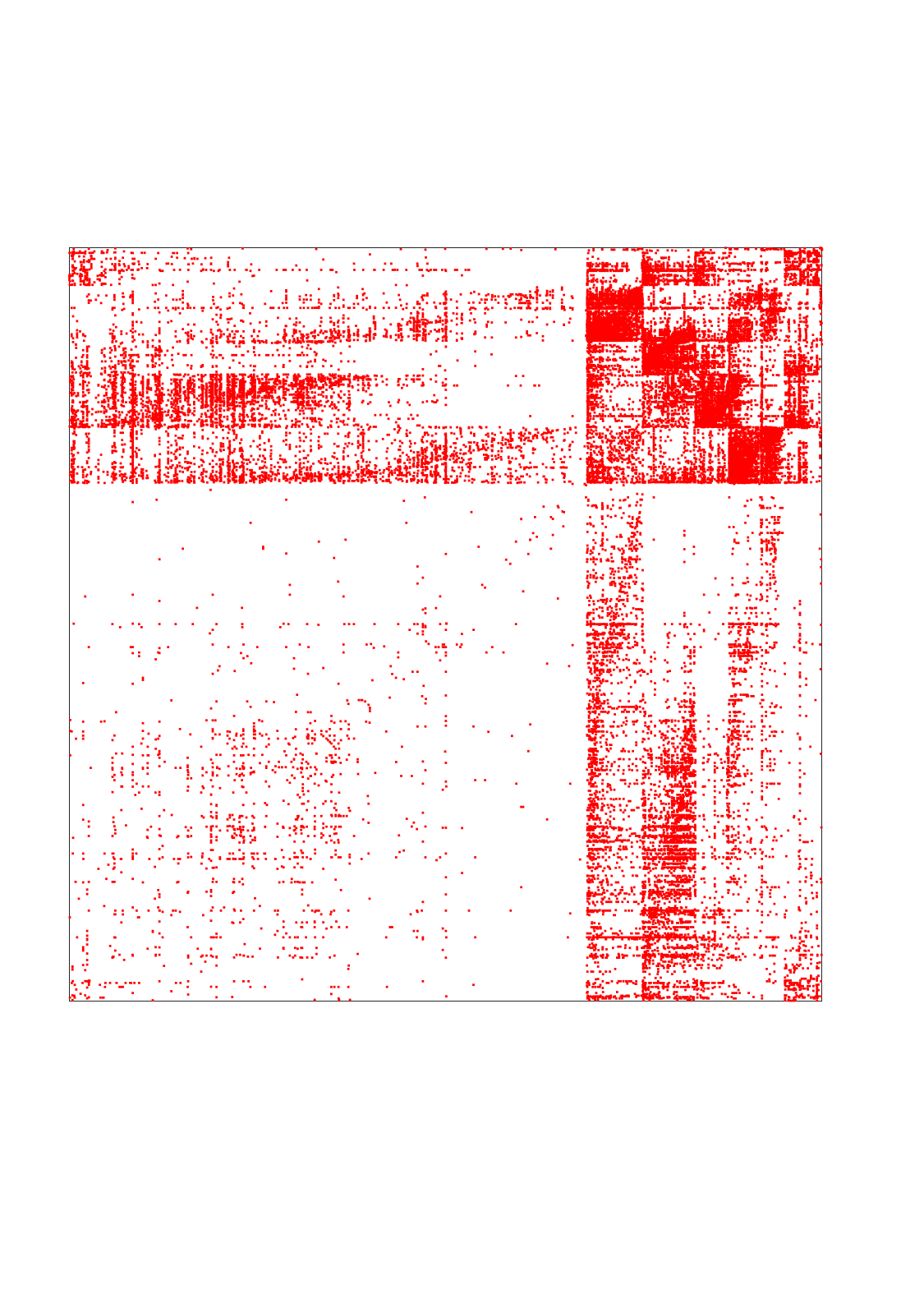}
 \vspace{-10pt}\caption{$L_{-1}$ : $W^-$}
 \end{subfigure}
 \hfill
 \begin{subfigure}[b]{0.23\textwidth}
 \includegraphics[angle=-90,width=1\textwidth,trim=160 40 10 60]{p_minus_1_blue_Zoom.png}
 \vspace{-10pt}\caption{$L_{-1}$ : $W^+$ (Zoom)}
 \label{fig:wikipedia:blue:Minus2}
 \end{subfigure}
 \hfill
 \begin{subfigure}[b]{0.23\textwidth}
 \includegraphics[angle=-90,width=1\textwidth,trim=160 40 10 60]{p_minus_1_red_Zoom.png}
 \vspace{-10pt}\caption{$L_{-1}$ : $W^-$ (Zoom)}
 \label{fig:wikipedia:blue:Minus2}
 \end{subfigure}
 \hfill
\\
\vspace{25pt}
 \begin{subfigure}[b]{0.23\textwidth}
 \includegraphics[angle=-90,width=1\textwidth,trim=160 40 10 60]{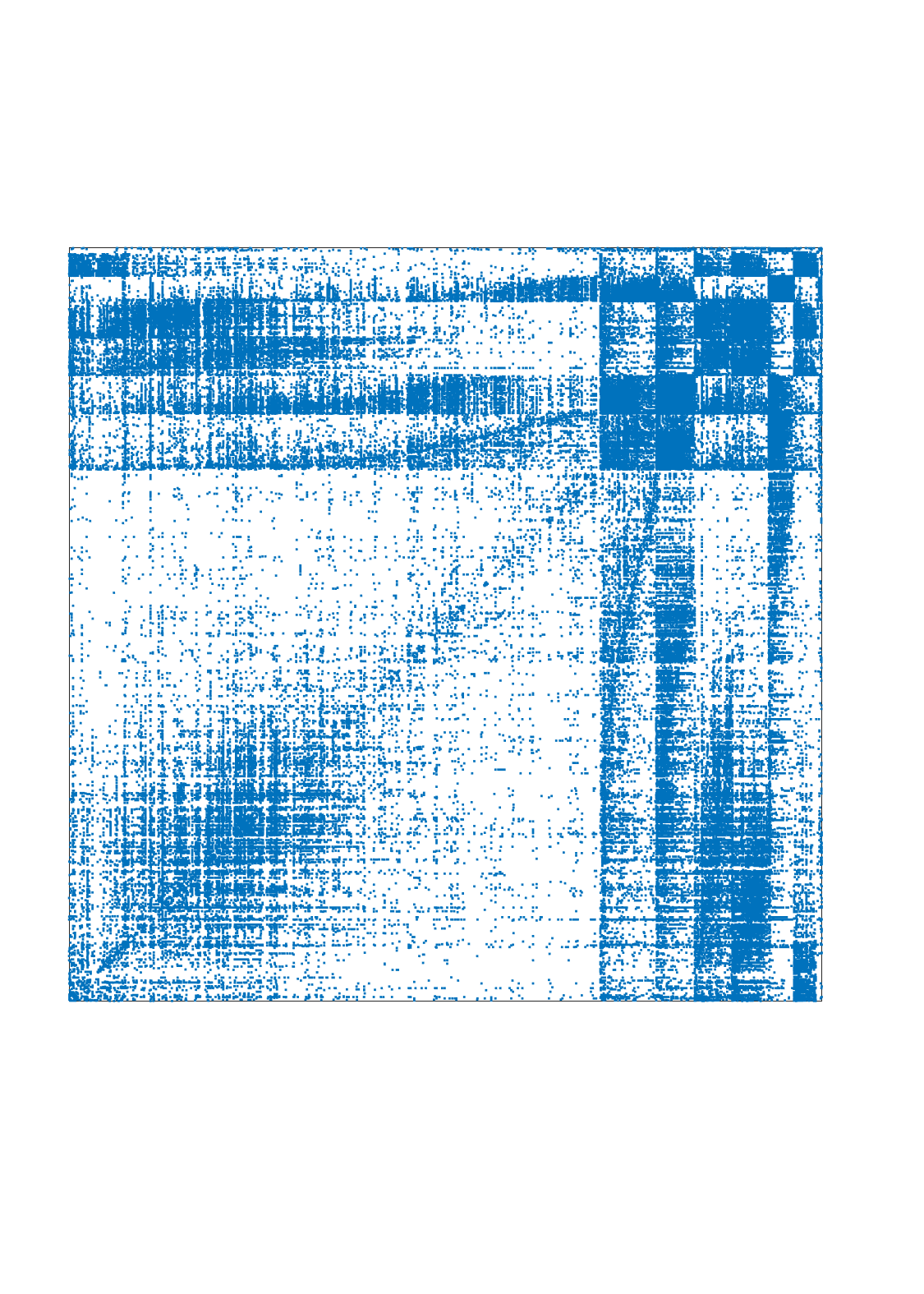}
 \vspace{-10pt}\caption{$L_{0}$ : $W^+$}
 \end{subfigure}
 \hfill
 \begin{subfigure}[b]{0.23\textwidth}
 \includegraphics[angle=-90,width=1\textwidth,trim=160 40 10 60]{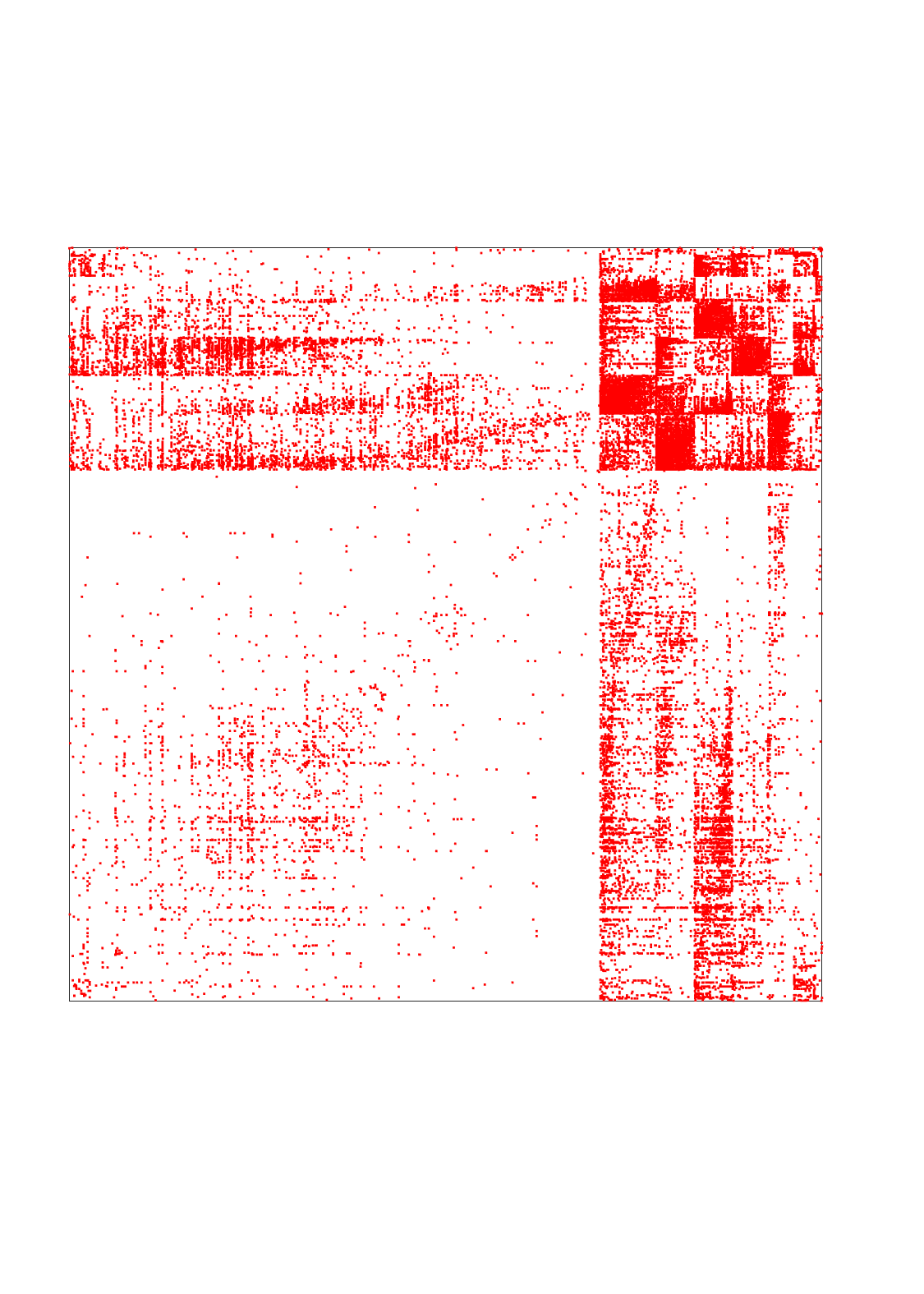}
 \vspace{-10pt}\caption{$L_{0}$ : $W^-$}
 \end{subfigure}
 \hfill
 \begin{subfigure}[b]{0.23\textwidth}
 \includegraphics[angle=-90,width=1\textwidth,trim=160 40 10 60]{p_plus_0_blue_Zoom.png}
 \vspace{-10pt}\caption{$L_{0}$ : $W^+$ (Zoom)}
 \label{fig:wikipedia:blue:Minus2}
 \end{subfigure}
 \hfill
 \begin{subfigure}[b]{0.23\textwidth}
 \includegraphics[angle=-90,width=1\textwidth,trim=160 40 10 60]{p_plus_0_red_Zoom.png}
 \vspace{-10pt}\caption{$L_{0}$ : $W^-$ (Zoom)}
 \label{fig:wikipedia:blue:Minus2}
 \end{subfigure}
 \hfill
\\
\vspace{25pt}
 \begin{subfigure}[b]{0.23\textwidth}
 \includegraphics[angle=-90,width=1\textwidth,trim=160 40 10 60]{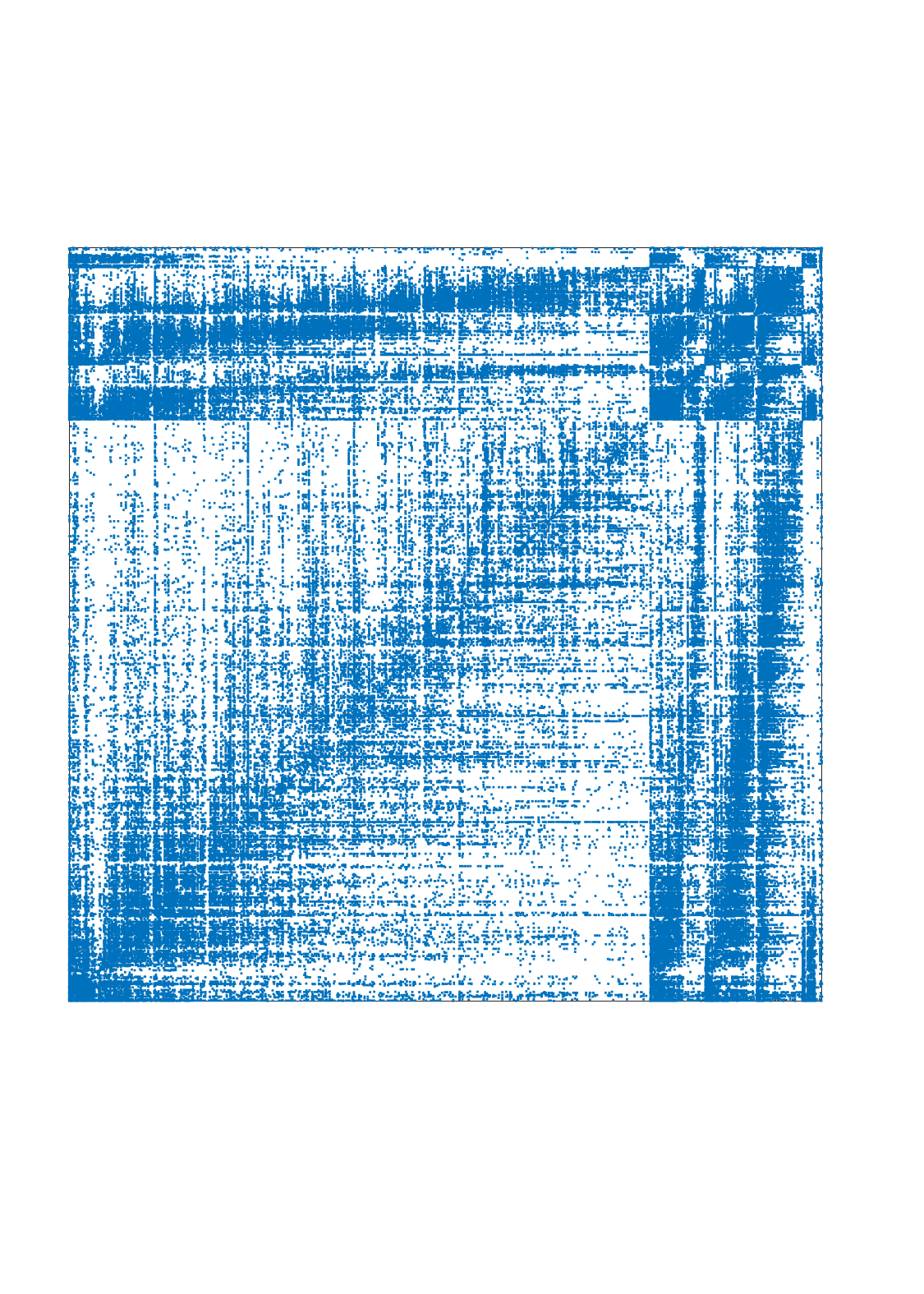}
 \vspace{-10pt}\caption{$L_{1}$ : $W^+$}
 \end{subfigure}
 \hfill
 \begin{subfigure}[b]{0.23\textwidth}
 \includegraphics[angle=-90,width=1\textwidth,trim=160 40 10 60]{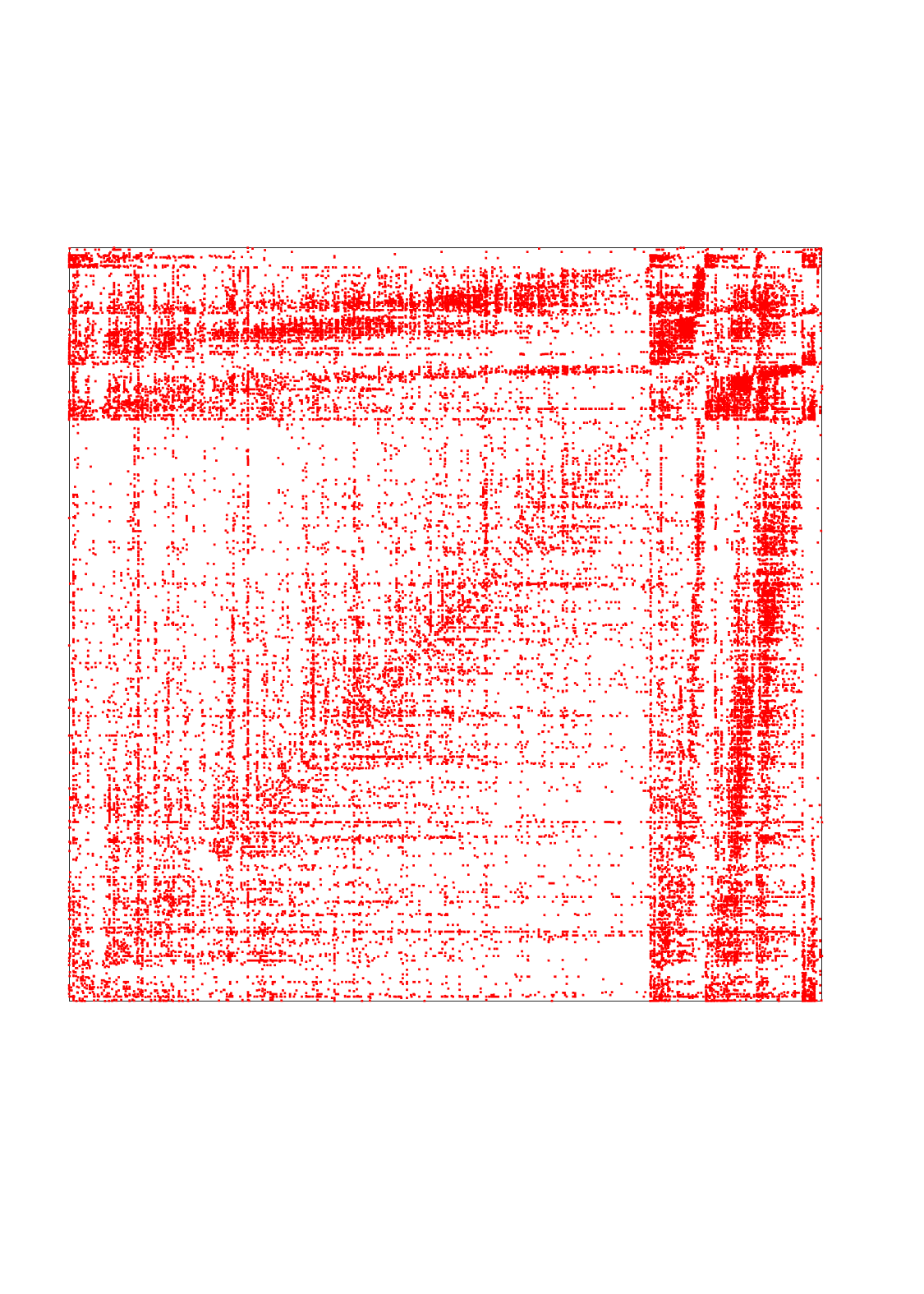}
 \vspace{-10pt}\caption{$L_{1}$ : $W^-$}
 \end{subfigure}
 \hfill
 \begin{subfigure}[b]{0.23\textwidth}
 \includegraphics[angle=-90,width=1\textwidth,trim=160 40 10 60]{p_plus_1_blue_Zoom.png}
 \vspace{-10pt}\caption{$L_{1}$ : $W^+$ (Zoom)}
 \label{fig:wikipedia:blue:Minus2}
 \end{subfigure}
 \hfill
 \begin{subfigure}[b]{0.23\textwidth}
 \includegraphics[angle=-90,width=1\textwidth,trim=160 40 10 60]{p_plus_1_red_Zoom.png}
 \vspace{-10pt}\caption{$L_{1}$ : $W^-$ (Zoom)}
 \label{fig:wikipedia:blue:Minus2}
 \end{subfigure}
 \hfill
 \caption{
 Sorted adjacency matrices according to clusters identified by the Power Mean Laplacian $L_p$ with $p\in\{-10,-5,-2,-1,0,1\}$.
 Columns from left to right: First two columns depict adjacency matrices $W^+$ and $W^-$ sorted through the corresponding clustering.
 Third and fourth columns depict the portion of adjacency matrices $W^+$ and $W^-$ corresponding to the $k-1$ identified clusters.
 Rows from top to bottom: Clustering corresponding to $L_{-10},L_{-5},L_{-2},L_{-1},L_{0},L_{-1}$.
 \vspace{-30pt}
}
 \label{fig:wikipedia-supplementary}
\end{figure*}
%

\end{document}